\RequirePackage{fix-cm}
\documentclass[smallextended]{svjour3}      
\smartqed  
\usepackage[utf8]{inputenc}
\usepackage{algorithmicx}
\usepackage{algpseudocode}
\usepackage{parskip}
\usepackage{amsmath,amssymb,amstext,amsfonts}
\usepackage{float}
\usepackage{graphicx}
\usepackage{xcolor}
\usepackage{comment}
\usepackage{mathrsfs}

\usepackage{mathtools}
\DeclarePairedDelimiter\ceil{\lceil}{\rceil}
\DeclarePairedDelimiter\floor{\lfloor}{\rfloor}

\newcommand{\W}{\mathrm{W}}
\newcommand{\OT}{\mathrm{OT}}

\newcommand{\ADM}{\mathrm{Joint}}
\newcommand{\Tr}{\mathrm{Tr}}
\newcommand{\E}{\mathbb{E}}
\newcommand{\Probs}{\mathcal{P}}
\newcommand{\N}{\mathcal{N}}

\newcommand{\KL}{{\mathrm{KL}}}
\newcommand{\trace}{\mathrm{Tr}}
\newcommand{\approach}{\ensuremath{\rightarrow}}

\renewcommand{\H}{\mathcal{H}}
\newcommand{\Sym}{\mathrm{Sym}}
\newcommand{\HS}{\mathrm{HS}}
\newcommand{\tr}{\mathrm{tr}}
\newcommand{\Xbf}{\mathbf{X}}
\newcommand{\Ybf}{\mathbf{Y}}
\newcommand{\Ncal}{\mathcal{N}}
\newcommand{\1}{\mathbf{1}}
\newcommand{\X}{\mathcal{X}}
\newcommand{\R}{\mathbb{R}}
\newcommand{\la}{\langle}
\newcommand{\ra}{\rangle}
\newcommand{\mapto}{\ensuremath{\rightarrow}}
\renewcommand{\b}{\mathbf{b}}

\newcommand{ \ep}{\epsilon}

\newcommand{\bP}{\mathbb{P}}

\newcommand{\restr}[1]{\lower3pt\hbox{$|_{#1}$}}

\newcommand{\MMD}{\mathrm{MMD}}

\newcommand{\logdet}{\mathrm{logdet}}

\newcommand{\Gauss}{\mathrm{Gauss}}

\newcommand{\bE}{\mathbb{E}}

\newcommand{\equivalent}{\ensuremath{\Longleftrightarrow}}
\newcommand{\imply}{\ensuremath{\Rightarrow}}

\newcommand{\Lcal}{\mathcal{L}}

\newcommand{\Pcal}{\mathcal{P}}
\newcommand{\Srm}{\mathrm{S}}
\newcommand{\Nbb}{\mathbb{N}}

\newcommand{\Csc}{\mathscr{C}}

\newcommand{\myspan}{\mathrm{span}}

\newcommand{\Cbf}{\mathbf{C}}

\title{
Convergence and finite sample approximations of 
entropic regularized Wasserstein distances in Gaussian and RKHS settings}
\titlerunning{
	Regularized Wasserstein distance in Gaussian and RKHS settings}
\author{H\`a Quang Minh}
\institute{
	\at
	RIKEN Center for Advanced Intelligence Project, 1-4-1 Nihonbashi, Chuo-ku,
	Tokyo 103-0027, JAPAN
	\\
	\email{minh.haquang@riken.jp}
}
\date{\today}

\begin{document}

\maketitle

\begin{abstract}
This work studies the convergence and finite sample approximations of entropic regularized Wasserstein distances 
in the Hilbert space setting. Our first main result is that for Gaussian measures
on an infinite-dimensional Hilbert space, convergence in the 2-Sinkhorn divergence is {\it strictly weaker} than 
convergence in the exact 2-Wasserstein distance. Specifically, a sequence of centered Gaussian measures
converges in the 2-Sinkhorn divergence if the corresponding covariance operators converge in the Hilbert-Schmidt norm.
This is in contrast to the previous known result that a sequence of centered Gaussian measures converges in the exact 2-Wasserstein distance if and only if the covariance operators converge in the trace class norm.
In the reproducing kernel Hilbert space (RKHS) setting, the {\it kernel Gaussian-Sinkhorn divergence}, which is the Sinkhorn divergence between Gaussian measures defined on an RKHS, defines a semi-metric on the set of Borel probability measures on a Polish space, given a characteristic kernel on that space.
With the Hilbert-Schmidt norm convergence,
we obtain {\it dimension-independent} convergence rates for 
finite sample approximations of the kernel Gaussian-Sinkhorn divergence, with the same order as the Maximum Mean Discrepancy.
These convergence rates apply in particular to Sinkhorn divergence between Gaussian measures on Euclidean and infinite-dimensional Hilbert spaces. The sample complexity for the 2-Wasserstein distance between Gaussian measures on Euclidean space, while dimension-dependent and larger than that of the Sinkhorn divergence, is exponentially faster than the worst case scenario in the literature.
\end{abstract}

\section{Introduction}

This work studies the entropic regularization formulation of the $2$-Wasserstein distance in the infinite-dimensional Hilbert space setting, with a focus on convergence properties
and finite sample approximations of the entropic Wasserstein distances/divergences. It  
builds upon the
previous work \cite{Minh2020:EntropicHilbert}, which formulated these distances/divergences in the infinite-dimensional Gaussian setting, along with the corresponding barycenter problems.

Entropic regularization in optimal transport has recently attracted much attention in various research fields, including 
in particular machine learning and statistics
\cite{cuturi13,Sommerfeld2017WassersteinDO,feydy18,genevay16,genevay17,GigTam18,MalMonGer19,ramdas2017,RipThesis,mena2019samplecomplexityEntropicOT}. It has found applications fields ranging from computer vision to density functional theory and inverse problems (e.g.~\cite{genevay17,GerGroGor19,Lunz18,patrini18}). This line of research is also closely connected
with the {\it Schr\"odinger bridge problem}~\cite{Schr31}, which has been studied extensively
~\cite{BorLewNus94,Csi75,peyre17,FraLor89,RusIPFP,Zam15,galsal,LeoSurvey,rus93,rus98}.

In \cite{Mallasto2020entropyregularized,Janati2020entropicOT,barrio2020entropic}, the authors
studied the entropic regularized 2-Wasserstein distance for Gaussian measures on Euclidean space, 
providing explicit formulas for the entropic Wasserstein distance and Sinkhorn divergence, along with
the fixed point equations for the corresponding barycenter problems, and many other properties.
These studies exploit in particular the Maximum Entropy property of Gaussian densities in 
$\R^n$ and the connection between the entropic regularization framework with the {Schr\"odinger bridge problem}~\cite{Schr31}.

The above results were subsequently generalized to the setting of infinite-dimensional  
Gaussian measures and covariance operators on Hilbert spaces in \cite{Minh2020:EntropicHilbert},
where the entropic formulation is shown to be valid for both settings of {\it singular} and {\it nonsingular} covariance operators. The infinite-dimensional Gaussian setting reveals several important properties of the entropic regularization formulation, including
(i) the failed generalization of the finite-dimensional entropic Wasserstein distance barycenter equation; (ii)
the uniqueness of solution of the Sinkhorn barycenter equation, in contrast to both the 
entropic and exact 2-Wasserstein barycenter equations; (iii) the Fr\'echet differentiability of
the entropic Wasserstein distance and Sinkhorn divergence, in contrast to the exact $2$-Wasserstein distance, which is 
{\it not} Fr\'echet differentiable.
 
In the current work, we study the convergence and finite sample approximations of the entropic regularized Wasserstein distances/divergences, with a particular focus on the Sinkhorn divergence. It is well-known that the exact Wasserstein distances metrize the weak convergence of probability measures on Polish spaces (see e.g. \cite{villani2016}). In \cite{feydy18}, it is shown that the Sinkhorn divergence metrizes weak convergence
of probability measures on compact metric spaces. In this work, we show that
for infinite-dimensional Hilbert space Gaussian measures, the Sinkhorn divergence gives {\it weaker} convergence
than the exact Wasserstein distances, thus allowing a {\it broader } set of converging sequences.

Another well-known but undesirable property of the Wasserstein distances is that their sample complexity
can grow exponentially in the dimension of the underlying space $\R^d$, with the worst case being $O(n^{-1/d})$ \cite{dudley1969speed}, see also \cite{weed17,fournier2015rate,horowitz1994mean}. 
In \cite{genevay18sample}, the authors show that, as a consequence of the entropic regularization,
the Sinkhorn divergence between two probability measures with {\it bounded support} on $\R^d$ achieves sample complexity
$O((1+\ep^{-\floor{d/2}})n^{-1/2})$, that is the same as the Maximum Mean Discrepancy (MMD) for a fixed $\ep>0$.
However, 
the constant factor in 
the sample complexity in \cite{genevay18sample} depends exponentially on the diameter of the support.
In \cite{mena2019samplecomplexityEntropicOT}, the rate of convergence 
$O\left(\ep\left(1+\frac{\sigma^{\ceil{5d/2} + 6}}{\ep^{\ceil{5d/4} + 3}}\right)n^{-1/2}\right)$
was obtained for $\sigma^2$-subgaussian measures on $\R^d$.
In this work, we show that the Sinkhorn divergence in the RKHS setting achieves the rate of convergence
$O\left((1+\frac{1}{\ep})n^{-1/2}\right)$ for all $\ep > 0$, which is thus {\it dimension-independent}.
In particular, this applies to Sinkhorn divergence between Gaussian measures on Euclidean space as well as on an infinite-dimensional Hilbert space.

{\bf Contributions of this work}
\begin{enumerate}
	\item We show that on an infinite-dimensional Hilbert space,
	 a sequence of centered Gaussian measures
	converges in the Sinkhorn divergence if the corresponding covariance operators converge in the {\it Hilbert-Schmidt norm}.
    This is in contrast to the previous known result \cite{Bogachev:Gaussian,masarotto2019procrustes} that a sequence of centered Gaussian measures converges in the exact Wasserstein distance if and only if the covariance operators converge in the {\it trace class norm}.
    Thus convergence in the Sinkhorn divergence is {\it strictly weaker} than 
	convergence in the Wasserstein distance, i.e.
	convergence in the Wasserstein distance automatically implies convergence in the Sinkhorn divergence but the converse is false. This phenomenon is a distinctive feature of the infinite-dimensional setting,
	since for Gaussian measures on Euclidean space, convergences in exact Wasserstein distance and Sinkhorn divergence are equivalent.
	
	\item 
	In the  reproducing kernel Hilbert space (RKHS) setting, the {\it kernel Gaussian-Sinkhorn divergence},
	which is the Sinkhorn divergence between Gaussian measures defined on an RKHS,
	 is an interpolation between
	the MMD and kernel Wasserstein distance and admits closed forms via kernel Gram matrices.
	 For characteristic kernels on a Polish space $\X$,
	the RKHS Sinkhorn divergence defines a {\it semi-metric} on the set of Borel probability measures on $\X$.
	With the convergence in the Hilbert-Schmidt norm, we obtain {\it dimension-independent} convergence rates for 
	finite sample approximations of the kernel Gaussian-Sinkhorn divergence, of $O((1+\frac{1}{\ep})n^{-1/2})$.
	This is done via the application of the laws of large numbers for random variables with values in the Hilbert space of Hilbert-Schmidt operators. With the linear kernel, we then obtain sample complexity bounds for the Sinkhorn divergence
	between Gaussian measures on any separable Hilbert space, of finite or infinite dimension, with 
	$\R^d$ being a special case.
	
	\item As a consequence of the analysis in the RKHS setting, we obtain sample complexity for
	the exact $2$-Wasserstein distance between two Gaussian measures on $\R^d$,
	of the form $O\left((\frac{d}{n})^{1/4}\right)$. This is {\it exponentially faster}  than
	the worst case scenario $O(n^{-1/d})$ in \cite{dudley1969speed,weed2019sharp}.
   This analysis does {\it not}, however, extend to an infinite-dimensional Hilbert space. 
\end{enumerate}

\section{Background and previous work}
\label{section:background}

Let $(X,d)$ be a Polish (complete separable metric) space equipped with a lower semi-continuous \emph{cost function} $c:X\times X \to \mathbb{R}_{\geq 0}$. 
Let $\Pcal(X)$ denote the set of all probability measures on $X$.
The {\it optimal transport} (OT) problem between two probability measures $\nu_0, \nu_1 \in \Probs(X)$ is  
(see e.g. \cite{villani2016})
\begin{equation}
\OT(\nu_0, \nu_1) = \min_{\gamma\in \ADM(\nu_0,\nu_1)}\E_\gamma[c] = \min_{\gamma \in \ADM(\nu_0, \nu_1)}\int_{X \times X}c(x,y)d\gamma(x,y)
\label{equation:OT-exact}
\end{equation}
where $\ADM(\nu_0,\nu_1)$ is the set of joint probabilities with marginals $\nu_0$ and $\nu_1$.
%
For $1 \leq p < \infty$, let $\Pcal_p(X)$ denote the set of all probability measures $\mu$ on $X$ of finite moment of order $p$,
i.e. 
$\int_{X}d^p(x_0,x)d\mu(x) < \infty$ for some (and hence any) $x_0 \in X$.
The {\it $p$-Wasserstein distance} $W_p$ between $\nu_0$ and $\nu_1$ is defined as
\begin{equation}
W_p(\nu_0,\nu_1) = \OT_{d^p}(\nu_0, \nu_1)^{\frac{1}{p}}.
\end{equation}
This distance defines a metric on $\Pcal_p(X)$ (Theorem 7.3, \cite{villani2016}),
which metrizes the {\it weak convergence} on $\Pcal_p(X)$ (Theorem 7.12, \cite{villani2016}).
For two Gaussian measures on a separable Hilbert space 
$\nu_i=\N(m_i,C_i)$, $i=0,1$, 
$W_2(\nu_0, \nu_1)$ admits the following closed form \cite{Gelbrich:1990Wasserstein,cuesta1996:WassersteinHilbert} 
\begin{equation}
\label{equation:Gaussian-Wass-finite}
W_2^2(\nu_0, \nu_1) = \|m_0-m_1\|^2 + \Tr(C_0) + \Tr(C_1) - 2 \Tr\left(C_0^\frac{1}{2} C_1 C_0^\frac{1}{2}\right)^\frac{1}{2}.
\end{equation}
For $\H=\R^n$, this formula was obtained in \cite{givens84,dowson82,olkin82,knott84}.

{\bf Entropic regularization and Sinkhorn divergence.}
The exact OT problem \eqref{equation:OT-exact} is often computationally challenging and it is more numerically efficient to solve the following regularized optimization problem, for a given $\ep > 0$, 
\begin{equation}
\label{equation:OT-entropic}
\OT_c^\epsilon(\mu, \nu) = \min_{\gamma\in \ADM(\mu,\nu)}\left\lbrace\E_\gamma[c]
+ \epsilon \KL(\gamma || \mu \otimes \nu) \right\rbrace,
\end{equation}
where $\KL(\nu || \mu)$ denotes the Kullback-Leibler divergence between $\nu$ and $\mu$.
The KL
in \eqref{equation:OT-entropic} acts as a bias \cite{feydy18}, with the consequence that in general $\OT_c^\epsilon(\mu, \mu) \neq 0$. The following $p$-Sinkhorn divergence \cite{genevay17,feydy18} removes this bias
\begin{equation}
\label{equation:sinkhorn}
S_{d^p}^\epsilon(\mu, \nu) = \OT_{d^p}^\epsilon(\mu, \nu) - \frac{1}{2}(\OT_{d^p}^\epsilon(\mu,\mu) + \OT_{d^p}^\epsilon(\nu,\nu) ).
\end{equation}
In the case $X=\H$ is a separable Hilbert space and $\mu,\nu$ are Gaussian measures on $\H$,
both $\OT^{\ep}_{d^2}$ and $\Srm^{\ep}_{d^2}$ admit closed form expressions, as follows.
\begin{theorem}
	[\textbf{Entropic 2-Wasserstein distance and Sinkhorn divergence between Gaussian measures on Hilbert space},
	\cite{Minh2020:EntropicHilbert}, Theorems 3, 4, and 7]
	\label{theorem:OT-regularized-Gaussian}
	Let $\H$ be a separable Hilbert space and
	$\mu_0 = \Ncal(m_0, C_0)$, $\mu_1 = \Ncal(m_1, C_1)$ be two Gaussian measures on $\H$. For each fixed $\ep > 0$,
	\begin{align}
	\OT^{\ep}_{d^2}(\mu_0, \mu_1) &= ||m_0-m_1||^2 + \trace(C_0) + \trace(C_1) - \frac{\ep}{2}\trace(M^{\ep}_{01})
	\nonumber
	\\
	& \quad +\frac{\ep}{2}\log\det\left(I + \frac{1}{2}M^{\ep}_{01}\right).
	\label{equation:gauss-entropic-2-Wasserstein-infinite}
	\end{align}
	\begin{equation}
	\begin{aligned}
	\Srm^{\ep}_{d^2}(\mu_0, \mu_1) &= ||m_0 - m_1||^2 + \frac{\ep}{4}\trace\left[M^{\ep}_{00} - 2M^{\ep}_{01} + M^{\ep}_{11}\right] 
	\\
	& \quad + \frac{\ep}{4}\log\det\left[\frac{\left(I + \frac{1}{2}M^{\ep}_{01}\right)^2}{\left(I + \frac{1}{2}M^{\ep}_{00}\right)\left(I + \frac{1}{2}M^{\ep}_{11}\right)}\right].
	\end{aligned}
	\label{equation:gauss-sinkhorn-infinite}
	\end{equation}
	The optimal joint measure $\gamma^{\ep}$ is unique and is the Gaussian measure	
	\begin{align}
	\label{equation:minimizing-Gaussian-measure}
	\gamma^{\ep} &= \Ncal\left(\begin{pmatrix} m_0 \\ m_1 \end{pmatrix},
	\begin{pmatrix} C_0 & C_{XY}
	\\
	C_{XY}^{*} & C_1\end{pmatrix}
	\right),
	\\
	\text{where  } C_{XY} &= \frac{2}{\ep}C_0^{1/2}\left(I+\frac{1}{2}M^{\ep}_{01}\right)^{-1}
	C_0^{1/2}C_1.
	\end{align}
	Here $\det$ is the Fredholm determinant and
	$M^{\ep}_{ij}: \H \mapto \H$, 
	are defined by
	\begin{align}
	\label{equation:M-ep}
	M^{\ep}_{ij} = -I + \left(I + \frac{16}{\epsilon^2}C_i^{1/2}C_jC_i^{1/2}\right)^{1/2}, \;\;\; i,j=0,1.
	\end{align}
\end{theorem}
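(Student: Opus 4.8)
The plan is to reduce the problem \eqref{equation:OT-entropic} with quadratic cost $c(x,y)=\|x-y\|^2$ and Gaussian marginals to an operator-valued optimization over a single cross-covariance, solve the resulting critical-point equation with operator calculus, and specialize to obtain \eqref{equation:gauss-sinkhorn-infinite} from \eqref{equation:gauss-entropic-2-Wasserstein-infinite}. First, by the Schr\"odinger-bridge structure of \eqref{equation:OT-entropic}, any minimizer satisfies $d\gamma^{\ep}(x,y)=\exp\!\big(a(x)+b(y)-\|x-y\|^2/\ep\big)\,d(\mu_0\otimes\mu_1)(x,y)$ for measurable potentials $a,b$; since $\mu_0\otimes\mu_1=\Ncal\big(\binom{m_0}{m_1},\mathrm{diag}(C_0,C_1)\big)$ is Gaussian and the exponent is quadratic, $\gamma^{\ep}$ is Gaussian, and its marginals being $\mu_0,\mu_1$ force the block form \eqref{equation:minimizing-Gaussian-measure} with the only free parameter being the cross-covariance $C_{XY}$. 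To make this rigorous on an infinite-dimensional $\H$ --- where there is no Lebesgue reference measure and the finite-dimensional maximum-entropy argument of \cite{Mallasto2020entropyregularized,Janati2020entropicOT,barrio2020entropic} does not transfer verbatim --- one works with finite-rank truncations $P_nC_iP_n$, applies the known finite-dimensional identity there, and passes to the limit using weak lower semicontinuity of $\gamma\mapsto\E_\gamma[c]+\ep\,\KL(\gamma\,\|\,\mu_0\otimes\mu_1)$ together with tightness of the truncated plans; strict convexity of this functional on the convex set $\ADM(\mu_0,\mu_1)$ gives uniqueness of $\gamma^{\ep}$.

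Next I would check that the closed forms are well posed in infinite dimensions. Since each $C_i$ is trace class, $C_i^{1/2}C_jC_i^{1/2}$ is positive and trace class, so by operator monotonicity of $t\mapsto\sqrt{1+t}$ and the bound $\sqrt{1+t}-1\le t$, the operator $M^{\ep}_{ij}=-I+(I+\tfrac{16}{\ep^2}C_i^{1/2}C_jC_i^{1/2})^{1/2}$ is positive and trace class; hence $\trace(M^{\ep}_{ij})$ is finite and the Fredholm determinants $\det(I+\tfrac12 M^{\ep}_{ij})\ge 1$ are well defined, so the right-hand sides of \eqref{equation:gauss-entropic-2-Wasserstein-infinite}--\eqref{equation:gauss-sinkhorn-infinite} are meaningful. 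By the Feldman--H\'ajek theorem, finiteness of $\KL(\gamma\,\|\,\mu_0\otimes\mu_1)$ forces the cross block of the covariance to be a Hilbert--Schmidt perturbation of $0$, which is consistent with the claimed form of $C_{XY}$.

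The heart of the argument is the variational computation. Writing $\gamma=\Ncal\big(\binom{m_0}{m_1},\Sigma_K\big)$ with $\Sigma_K=\left(\begin{smallmatrix}C_0&K\\K^{*}&C_1\end{smallmatrix}\right)$, one has $\E_\gamma[\|x-y\|^2]=\|m_0-m_1\|^2+\trace(C_0)+\trace(C_1)-2\trace(K)$, and since $\gamma$ and $\mu_0\otimes\mu_1$ share the mean $\binom{m_0}{m_1}$ and the diagonal blocks $C_0,C_1$, the Gaussian KL formula collapses to $\ep\,\KL(\gamma\,\|\,\mu_0\otimes\mu_1)=-\tfrac{\ep}{2}\log\det\big(I-C_0^{-1}KC_1^{-1}K^{*}\big)$. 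Differentiating $-2\trace(K)-\tfrac{\ep}{2}\log\det(I-C_0^{-1}KC_1^{-1}K^{*})$ in $K$, setting the derivative to zero, and using the similarity identities relating $f(C_0^{1/2}C_1C_0^{1/2})$ to $f(C_0C_1)$ to move square roots past $C_0^{1/2}$, one solves for $K=C_{XY}=\tfrac{2}{\ep}C_0^{1/2}(I+\tfrac12 M^{\ep}_{01})^{-1}C_0^{1/2}C_1$. Substituting this back and simplifying the logdet with the identity $(I+\tfrac12 M^{\ep}_{01})^{2}=I+\tfrac12 M^{\ep}_{01}+\tfrac{4}{\ep^2}C_0^{1/2}C_1C_0^{1/2}$ yields \eqref{equation:gauss-entropic-2-Wasserstein-infinite}. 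Taking $\mu_0=\mu_1=\mu_i$ there gives $\OT^{\ep}_{d^2}(\mu_i,\mu_i)=2\trace(C_i)-\tfrac{\ep}{2}\trace(M^{\ep}_{ii})+\tfrac{\ep}{2}\log\det(I+\tfrac12 M^{\ep}_{ii})$, and plugging $\OT^{\ep}_{d^2}(\mu_0,\mu_1)$, $\OT^{\ep}_{d^2}(\mu_0,\mu_0)$, $\OT^{\ep}_{d^2}(\mu_1,\mu_1)$ into \eqref{equation:sinkhorn} makes the $\trace(C_i)$ terms cancel; grouping the remaining trace and logdet contributions produces \eqref{equation:gauss-sinkhorn-infinite}.

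The main obstacle is the rigorous proof that the entropic-optimal plan is Gaussian: the finite-dimensional argument rests on the maximum-entropy characterization of Gaussians in $\R^n$, which has no direct Hilbert-space analogue, so one must control the Schr\"odinger potentials --- equivalently, the covariance of the optimal plan --- uniformly along the finite-rank approximation and verify that the Fredholm determinants and trace-class norms converge in the limit. Everything after that is operator calculus.
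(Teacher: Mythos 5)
This statement is not proved in the paper at all: it is quoted verbatim as background from \cite{Minh2020:EntropicHilbert} (Theorems 3, 4, and 7), and the paper's Section 6 only proves the new results (Theorems 2--5 and the RKHS statements). So there is no in-paper proof to compare against; what follows assesses your reconstruction on its own terms.

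Your outline is the right one and matches the known derivation: restrict to Gaussian couplings with fixed diagonal blocks, minimize $-2\trace(K)-\tfrac{\ep}{2}\log\det(I-C_0^{-1}KC_1^{-1}K^{*})$ over the cross-covariance $K$, and use the quadratic identity $M^2=-2M+\tfrac{16}{\ep^2}C_0^{1/2}C_1C_0^{1/2}$ (equivalently $(I+\tfrac12 M)^2=I+\tfrac12 M+\tfrac{4}{\ep^2}C_0^{1/2}C_1C_0^{1/2}$) to simplify; the scalar sanity check confirms both the optimal $K=\tfrac{\ep}{4}M^{\ep}_{01}$-type identity and the cancellation producing \eqref{equation:gauss-entropic-2-Wasserstein-infinite}, and the passage to \eqref{equation:gauss-sinkhorn-infinite} via \eqref{equation:sinkhorn} is pure bookkeeping. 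Two gaps remain. First, the reduction to Gaussian plans: you correctly identify this as the hard part, but "the exponent is quadratic, hence $\gamma^{\ep}$ is Gaussian" is circular as stated, since the Schr\"odinger potentials $a,b$ are not a priori quadratic; one must either exhibit a quadratic solution of the Schr\"odinger system and invoke uniqueness of the bridge, or carry out the finite-rank approximation with uniform control, which you name but do not execute. Second, the theorem is asserted for arbitrary trace-class $C_0,C_1$, including \emph{singular} ones, whereas your KL formula uses $C_0^{-1}$ and $C_1^{-1}$; the truncations $P_nC_iP_n$ do not repair this, since they can still be singular on $\H_n$. The reference handles the singular case by a separate argument (the final formulas only involve $C_i^{1/2}C_jC_i^{1/2}$ and so extend by continuity, but that continuity is exactly what must be proved). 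As a blind reconstruction of the cited proof your sketch is faithful in structure and correct in all the explicit operator identities, but it is a sketch: the two points above are where the actual work in \cite{Minh2020:EntropicHilbert} lies.
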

In particular, when $\ep \approach 0$ and $\ep \approach \infty$,
\begin{align}
\lim_{\ep \approach 0}\OT^{\ep}_{d^2}(\mu_0, \mu_1) &= \lim_{\ep \approach 0}\Srm^{\ep}_{d^2}(\mu_0,\mu_1) = \W_2(\mu_0,\mu_1),
\\
\lim_{\ep \approach \infty}\Srm^{\ep}_{d^2}(\mu_0,\mu_1) &= ||m_0 - m_1||^2.
\end{align}
When $\dim(\H) < \infty$, we recover the finite-dimensional results in \cite{Mallasto2020entropyregularized,Janati2020entropicOT,barrio2020entropic}. 

{\bf Properties of the Sinkhorn divergence $\Srm^{\ep}_{d^2}$}.
\begin{enumerate}
 
\item $\Srm^{\ep}_{d^2}$ is a divergence function on $\Gauss(\H)$, the set of all Gaussian measures on $\H$, while $\OT^{\ep}_{d^2}$ is neither a distance nor a divergence, since generally $\OT^{\ep}_{d^2}(\mu, \mu) \neq 0$, as noted before.
\item  $\Srm^{\ep}_{d^2}(\mu_0, \mu_1)$ is Fr\'echet differentiable in each argument, whereas
the Wasserstein distance $\W_2(\mu_0, \mu_1)$ is {\it not } Fr\'echet differentiable when $\dim(\H) = \infty$.

\item $\Srm^{\ep}_{d^2}(\mu_0, \mu_1)$ is {\it strictly convex} in each argument and the corresponding barycenter equation always 
has a unique solution for any $\ep > 0$ fixed, valid in both singular and nonsingular settings.

\item In this work, we study the {\it convergence properties} of Gaussian measures in
Hilbert space with respect to the Sinkhorn divergence $\Srm^{\ep}_{d^2}$ and the {\it finite-dimensional
approximations} of $\Srm^{\ep}_{d^2}(\mu_0, \mu_1)$.
\end{enumerate}

{\bf Notation}.
Throughout the following, let $\H$ be a real, separable Hilbert space, with $\dim(\H) = \infty$ unless explicitly stated otherwise.
Let $\Lcal(\H)$ denote the set of bounded linear operators on $\H$, 
with norm $||A|| = \sup_{||x||\leq 1}||Ax||$.
%
Let $\Sym(\H) \subset \Lcal(\H)$ be the set of bounded, self-adjoint linear operators on $\H$. Let $\Sym^{+}(\H) \subset \Sym(\H)$ be the set of
self-adjoint, {\it positive} operators on $\H$, i.e. $A \in \Sym^{+}(\H) \equivalent \la Ax,x\ra \geq 0 \forall x \in \H$. 
Let $\Sym^{++}(\H)\subset \Sym^{+}(\H)$ be the set of self-adjoint, {\it strictly positive} operator on $\H$,
i.e $A \in \Sym^{++}(\H) \equivalent \la x, Ax\ra > 0$ $\forall x\in \H, x \neq 0$.
We write $A \geq 0$ for $A \in \Sym^{+}(\H)$ and $A > 0$ for $A \in \Sym^{++}(\H)$.
The Banach space $\Tr(\H)$  of trace class operators on $\H$ is defined by (see e.g. \cite{ReedSimon:Functional})
$\Tr(\H) = \{A \in \Lcal(\H): ||A||_{\tr} = \sum_{k=1}^{\infty}\la e_k, (A^{*}A)^{1/2}e_k\ra < \infty\}$,
for any orthonormal basis $\{e_k\}_{k \in \Nbb} \in \H$.
For $A \in \Tr(\H)$, its trace is defined by $\trace(A) = \sum_{k=1}^{\infty}\la e_k, Ae_k\ra$, which is independent of choice of $\{e_k\}_{k\in \Nbb}$. 
The Hilbert space $\HS(\H)$ of Hilbert-Schmidt operators on $\H$ is defined by 
$\HS(\H) = \{A \in \Lcal(\H):||A||^2_{\HS} = \trace(A^{*}A) =\sum_{k=1}^{\infty}||Ae_k||^2 < \infty\}$,
for any orthonormal basis $\{e_k\}_{k \in \Nbb}$ in $\H$,
with inner product $\la A,B\ra_{\HS}=\trace(A^{*}B)$. 
We have $\trace(\H) \subsetneq \HS(\H) \subsetneq \Lcal(\H)$ when $\dim(\H) = \infty$, with $||A||\leq ||A||_{\HS}\leq ||A||_{\tr}$.

\section{Convergence of entropic regularized distances}
\label{section:main}

Let $(X,d)$ be a Polish space (i.e. a complete, separable metric space). It is a well-established fact (see e.g. \cite{villani2016}) that the Wasserstein distance
$\W_p$ metrizes the {\it weak convergence} in $\Pcal_p(X)$. We recall that (see e.g. \cite{Bogachev:Gaussian}, Definition 3.8.1) a sequence $\{\mu_n\}_{n \in \Nbb}$ of Radon measures on a topological space $X$ is said to be {\it weakly convergent}
to a Radon measure $\mu$ if
\begin{align}
\lim_{n \approach \infty}\int_{X}f(x)d\mu_n(x) = \int_{X}f(x)d\mu(x), \;\;\;\forall f \in C_b(X),
\end{align}
where $C_b(X)$ is the set of all bounded, continuous functions on $X$. By Theorem 7.12 in \cite{villani2016},
\begin{align}
\label{equation:convergence-general-p-Wass}
\lim_{n \approach \infty}\W_p(\mu_n, \mu) = 0 \equivalent
\left\{
\begin{matrix}
\mu_n \xrightarrow[n \approach \infty]{}\mu \;\;\;\text{weakly},
\\
\int_{X}d(x_0,x)^pd\mu_n(x) \xrightarrow[n \approach \infty]{}\int_{X}d(x_0,x)d\mu(x) < \infty,
\end{matrix}
\right.
\end{align}
with the second limit holding for some (and hence any) $x_0 \in X$.
For Gaussian measures on a Hilbert space $X = \H$, the convergence
of $\W_2(m_n, \mu)$, where $\mu_n = \Ncal(m_n, C_n)$, $\mu = \Ncal(m, C)$, is completely characterized
by the convergence of the corresponding means and covariance operators. Specifically,
(\cite{Bogachev:Gaussian}, Examples 3.8.13 and 3.8.15, see also \cite{masarotto2019procrustes}, Proposition 4),
\begin{align}
\label{equation:convergence-exact-2-Wass}
\lim_{n \approach \infty}\W_2(\mu_n, \mu) = 0 &\equivalent 
\left\{
\begin{matrix}
\lim_{n \approach \infty}||m_n - m|| = 0,
\\
\lim_{n \approach \infty}||C_n - C||_{\tr} = 0,
\end{matrix}
\right.
\nonumber
\\
&\equivalent \mu_n \xrightarrow[n \approach \infty]{}\mu \;\;\text{weakly}.
\end{align}
In particular, for {\it centered} Gaussian measures, i.e. $\mu_n = \Ncal(0, C_n)$, $\mu = \Ncal(0,C)$,
\begin{align}
\label{equation:convergence-centered-Gaussian}
\lim_{n \approach \infty}\W_2(\mu_n, \mu) = 0 \equivalent \lim_{n \approach \infty}||C_n - C||_{\tr} = 0.
\end{align}
The convergence in the trace norm implies in particular the convergence of the trace, which 
is precisely convergence of the second moment, that is
\begin{align}
\label{equation:convergence-2nd-moment}
\lim_{n \approach \infty}\left[\trace(C_n) = \int_{\H}||x||^2d\mu_n(x)\right] = \trace(C) = \int_{\H}||x||^2d\mu(x).
\end{align}
The following shows quantitatively the continuity of $\W_2$ in trace norm.
\begin{proposition}
	\label{proposition:2Wasserstein-gaussian-upperbound}
	Let $\mu_1 = \Ncal(m_1, C_1)$, $\mu_2 = \Ncal(m_2, C_2)$ on $\H$. Then
	\begin{align}
	\W_2^2(\mu_1, \mu_2) &= ||m_1 - m_2||^2 + \trace(C_1) + \trace(C_2) - 2 \trace[(C_1^{1/2}C_2C_1^{1/2})^{1/2}]
	\\
	& \leq ||m_1-m_2||^2 + ||C_1^{1/2} - C_2^{1/2}||^2_{\HS}
	\label{equation:Wasserstein-upperbound-HS}
	\\
	& \leq ||m_1 - m_2||^2 + ||C_1 - C_2||_{\tr}.
	\label{equation:Wasserstein-upperbound-trace}
	\end{align}
	In particular, for $\H = \R^d$, $d \in \Nbb$,
	\begin{align}
	\W_2^2(\rho_1, \rho_2)& \leq ||m_1 - m_2||^2 + \sqrt{d}||C_1 - C_2||_{\HS}.
	\end{align}
	In \eqref{equation:Wasserstein-upperbound-HS}, equality occurs if $C_1$ and $C_2$ commute.
	In \eqref{equation:Wasserstein-upperbound-trace}, equality occurs if e.g. $C_1 = 0$ or $C_2 = 0$.
\end{proposition}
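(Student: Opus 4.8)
The displayed identity in the statement is just the Gaussian closed form \eqref{equation:Gaussian-Wass-finite} quoted in the excerpt, so my plan concentrates entirely on the two inequalities and their equality cases. For \eqref{equation:Wasserstein-upperbound-HS} I would first rewrite its right-hand side: since each $C_i$ is trace class, $C_i^{1/2}\in\HS(\H)$, hence $C_1^{1/2}C_2^{1/2}$ is trace class and
\[
\|C_1^{1/2}-C_2^{1/2}\|_{\HS}^2=\trace(C_1)+\trace(C_2)-2\,\trace(C_1^{1/2}C_2^{1/2}).
\]
Subtracting this from the closed form, \eqref{equation:Wasserstein-upperbound-HS} reduces to the single trace inequality $\trace(C_1^{1/2}C_2^{1/2})\le\trace\big[(C_1^{1/2}C_2C_1^{1/2})^{1/2}\big]$.

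To prove that, I would use the polar decomposition of $X:=C_2^{1/2}C_1^{1/2}$. Then $X^{*}X=C_1^{1/2}C_2C_1^{1/2}$, so $|X|=(C_1^{1/2}C_2C_1^{1/2})^{1/2}$ and the right-hand side above is exactly $\|X\|_{\tr}$. On the other hand $\trace(C_1^{1/2}C_2^{1/2})=\trace(X)$ by cyclicity, and it is real and in fact nonnegative (write it as $\trace(C_2^{1/4}C_1^{1/2}C_2^{1/4})$), so the elementary bound $\trace(X)\le|\trace(X)|\le\|X\|_{\tr}$ closes the argument. For the equality case: if $C_1$ and $C_2$ commute then $(C_1^{1/2}C_2C_1^{1/2})^{1/2}=(C_1C_2)^{1/2}=C_1^{1/2}C_2^{1/2}\ge0$, so the two sides coincide.

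For \eqref{equation:Wasserstein-upperbound-trace}, $\|C_1^{1/2}-C_2^{1/2}\|_{\HS}^2\le\|C_1-C_2\|_{\tr}$, the plan is to run the Powers--Størmer argument. Take the Jordan decomposition $C_1-C_2=P-N$ with $P,N\ge0$, $PN=0$ (so $\|C_1-C_2\|_{\tr}=\trace P+\trace N$), and set $S:=C_1+N=C_2+P$, a positive trace class operator dominating both $C_1$ and $C_2$. Operator monotonicity of $t\mapsto t^{1/2}$ gives $C_1^{1/2}\le S^{1/2}$ and $C_2^{1/2}\le S^{1/2}$; writing $E:=S^{1/2}-C_1^{1/2}\ge0$, $F:=S^{1/2}-C_2^{1/2}\ge0$ (both in $\HS(\H)$), we have $C_1^{1/2}-C_2^{1/2}=F-E$ and, since $\trace(EF)=\trace(F^{1/2}EF^{1/2})\ge0$,
\[
\|C_1^{1/2}-C_2^{1/2}\|_{\HS}^2=\|F-E\|_{\HS}^2\le\|E\|_{\HS}^2+\|F\|_{\HS}^2.
\]
Then $\|E\|_{\HS}^2=\trace S+\trace C_1-2\,\trace(S^{1/2}C_1^{1/2})$, and sandwiching $S^{1/2}\ge C_1^{1/2}$ by $C_1^{1/4}$ yields $\trace(S^{1/2}C_1^{1/2})=\trace(C_1^{1/4}S^{1/2}C_1^{1/4})\ge\trace C_1$, hence $\|E\|_{\HS}^2\le\trace S-\trace C_1=\trace N$; symmetrically $\|F\|_{\HS}^2\le\trace P$, and adding gives \eqref{equation:Wasserstein-upperbound-trace}. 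Equality when $C_1=0$ or $C_2=0$ is a one-line check on both closed forms. Finally, the $\R^d$ estimate follows by combining \eqref{equation:Wasserstein-upperbound-trace} with $\|C_1-C_2\|_{\tr}\le\sqrt{d}\,\|C_1-C_2\|_{\HS}$, i.e.\ Cauchy--Schwarz applied to the at most $d$ singular values of $C_1-C_2$.

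The step needing the most care is the Powers--Størmer argument in infinite dimensions: one must verify that operator monotonicity of the square root legitimately applies to $C_1\le S$ on $\H$ and that $E,F$ genuinely belong to $\HS(\H)$ so that every trace written above is finite. Both are fine here, because $C_1,C_2,N,P,S$ are all trace class, so each of their square roots is Hilbert--Schmidt and products of two such operators are trace class; everything else is a routine manipulation of the $\trace$ and $\HS$ norms.
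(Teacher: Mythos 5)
Your proposal is correct, but it follows a genuinely different route from the paper's. The paper proves inequality \eqref{equation:Wasserstein-upperbound-HS} first on $\R^d$ by invoking the Araki--Lieb--Thirring inequality $\trace[(A^{1/2}BA^{1/2})^{1/2}]\geq\trace(A^{1/2}B^{1/2})$, obtains \eqref{equation:Wasserstein-upperbound-trace} by citing the Powers--St{\o}rmer inequality (its Lemma \ref{lemma:norm-trace-HS-square-root}, quoted from Powers's 1970 paper), and then lifts both bounds to a general separable $\H$ by projecting onto $\H_N=\myspan\{e_k\}_{k=1}^N$, applying the finite-dimensional result to the matrix representations, and letting $N\to\infty$ using continuity of $\W_2$ and of the traces under trace-norm convergence. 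You instead work intrinsically in infinite dimensions from the start: the key trace inequality $\trace(C_1^{1/2}C_2^{1/2})\leq\trace[(C_1^{1/2}C_2C_1^{1/2})^{1/2}]$ falls out of the polar decomposition of $X=C_2^{1/2}C_1^{1/2}$ together with the elementary bound $\trace(X)\leq\|X\|_{\tr}$ (valid since $X$ is a product of two Hilbert--Schmidt operators, hence trace class), and you reprove Powers--St{\o}rmer via the Jordan decomposition $C_1-C_2=P-N$ and operator monotonicity of the square root, rather than citing it. Both arguments are sound; yours buys a shorter, self-contained proof that sidesteps the finite-rank approximation and the attendant limit interchange, while the paper's buys the sharper equality characterization ($AB=BA$ being necessary and sufficient in finite dimensions for Araki--Lieb--Thirring) and reuses machinery (Lemma \ref{lemma:norm-trace-HS-square-root}, the projection technique) that it needs elsewhere. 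Your handling of the remaining items --- the commuting equality case, the vanishing-covariance equality case, and the $\sqrt{d}$ bound via Cauchy--Schwarz on the singular values --- matches the paper's.
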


{\bf Convergence in Sinkhorn divergence}. In \cite{feydy18}, it was proved that convergence
in Sinkhorn divergence is equivalent to weak convergence if (i) $X$ is a compact metric space and 
$c(x,y)$ is a Lipschitz cost function; or (ii) $X = \R^D$, $c(x,y) = ||x-y||^p$, $p=1,2$, 
and the measures involved have {\it bounded support}.

Our first main result shows that for $X = \H$, with $\dim(\H) = \infty$, convergence of 
a sequence of Gaussian measures in Sinkhorn divergence is {\it strictly weaker} than convergence
in the exact $2$-Wasserstein distance.

\begin{theorem}
	[\textbf{Convergence in Sinkhorn divergence}]
	\label{theorem:convergence-Sinkhorn}
	Let $\{m_N\}_{N \in \Nbb}, m \in \H$, $\{A_N\}_{N\in \Nbb}, A \in \Sym^{+}(\H) \cap \Tr(\H)$. Then
	\begin{align}
	\Srm^{\ep}_{d^2}[\Ncal(0, A_N), \Ncal(0, A)] &\leq \frac{3}{\ep}[||A_N||_{\HS} + ||A||_{\HS}]||A_N-A||_{\HS}.
	\\
	\Srm^{\ep}_{d^2}[\Ncal(m_N, A_N), \Ncal(m, A)] &\leq ||m_N - m||^2 
	\nonumber
	\\
	&\quad + \frac{3}{\ep}[||A_N||_{\HS} + ||A||_{\HS}]||A_N-A||_{\HS}.
	\end{align}
	In particular,
	\begin{align}
	\lim_{N \approach \infty}||A_N - A||_{\HS} = 0 \imply \lim_{N \approach \infty}\Srm^{\ep}_{d^2}[\Ncal(0, A_N), \Ncal(0, A)] = 0.
	\\
	\left\{\begin{matrix} \lim_{N \approach \infty}||m_N - m|| = 0
	\\
	\lim_{N \approach \infty}||A_N - A||_{\HS} = 0
	\end{matrix}
	\right. \imply \lim_{N \approach \infty}\Srm^{\ep}_{d^2}[\Ncal(m_N, A_N), \Ncal(m, A)] = 0.
	\end{align}
\end{theorem}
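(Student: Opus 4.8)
\emph{Proof idea.} The plan is to argue directly from the closed form \eqref{equation:gauss-sinkhorn-infinite}--\eqref{equation:M-ep}. First, since the mean term in \eqref{equation:gauss-sinkhorn-infinite} is exactly $\|m_0-m_1\|^2$ and enters additively, the second estimate reduces to the first, so I would only treat centered measures. Put $C_0=A_N$, $C_1=A$ and abbreviate $Q_{ij}=\tfrac{16}{\ep^2}C_i^{1/2}C_jC_i^{1/2}$, $U_{ij}=(I+Q_{ij})^{1/2}$, so $M^{\ep}_{ij}=U_{ij}-I$ and $I+\tfrac12 M^{\ep}_{ij}=\tfrac12(I+U_{ij})$. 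Because $A_N,A\in\Tr(\H)$, each $Q_{ij}$ and hence each $M^{\ep}_{ij}$ is trace class, the $-\log 2$ contributions to the three log-determinants cancel, and \eqref{equation:gauss-sinkhorn-infinite} becomes
\[
\Srm^{\ep}_{d^2}[\Ncal(0,A_N),\Ncal(0,A)]=\frac{\ep}{4}\,\trace\bigl[\psi(U_{00})-2\psi(U_{01})+\psi(U_{11})\bigr],\qquad \psi(u)=u-\log(1+u),
\]
where the individual summands fail to be trace class but their combination does not. (Alternatively one can bound the trace term and the log-determinant term in \eqref{equation:gauss-sinkhorn-infinite} separately; the $\psi$-form is convenient for tracking constants since $\psi'\in[\tfrac12,1)$ on $[1,\infty)$.)

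Next I would write $\psi(U_{00})-2\psi(U_{01})+\psi(U_{11})=[\psi(U_{00})-\psi(U_{01})]+[\psi(U_{11})-\psi(U_{01})]$ and, in each bracket, convert the difference into an integral of a ``sandwiched'' difference of the $Q$'s using the resolvent representations $(I+X)^{1/2}-(I+Y)^{1/2}=\tfrac1\pi\int_0^\infty\sqrt s\,(sI+I+X)^{-1}(X-Y)(sI+I+Y)^{-1}\,ds$ and its analogue for $\log(I+\cdot)$ (with weight $1$). The useful algebra here is $Q_{00}-Q_{01}=\tfrac{16}{\ep^2}C_0^{1/2}(C_0-C_1)C_0^{1/2}$, and, for the cross term,
\[
Q_{11}-Q_{01}=\tfrac{16}{\ep^2}\bigl(C_1^2-C_0^{1/2}C_1C_0^{1/2}\bigr),\qquad C_1^2-C_0^{1/2}C_1C_0^{1/2}=C_1(C_1-C_0)+[\,C_1-C_0,\,C_0^{1/2}\,]\,C_0^{1/2},
\]
where I use $[C_1,C_0^{1/2}]=[C_1-C_0,C_0^{1/2}]$ (since $C_0$ commutes with $C_0^{1/2}$); this is the device that keeps any \emph{difference} of operator square roots out of the estimate, since such a difference is only H\"older-$\tfrac12$, not Lipschitz, in the Hilbert--Schmidt norm.

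The estimate is then assembled from three elementary facts: (A) for $E\ge0$ trace class and $B$ bounded, $\|E^{1/2}BE^{1/2}\|_{\HS}\le\|B\|\,\|E\|_{\HS}$, proved by diagonalizing $E=\sum_k\lambda_k\langle e_k,\cdot\rangle e_k$ and writing $\|E^{1/2}BE^{1/2}\|_{\HS}^2=\sum_{jk}\lambda_j\lambda_k|\langle e_j,Be_k\rangle|^2\le\tfrac12\sum_{jk}(\lambda_j^2+\lambda_k^2)|\langle e_j,Be_k\rangle|^2\le\|B\|^2\|E\|_{\HS}^2$; (B) the Schatten--H\"older bounds $|\trace(XY)|\le\|X\|_{\HS}\|Y\|_{\HS}$, $\|XYZ\|_{\HS}\le\|X\|\,\|Y\|_{\HS}\,\|Z\|$, and $\|E^{1/2}\|_{\HS}^2=\trace E$; (C) the resolvent bound $\|(sI+I+X)^{-1}\|\le(1+s)^{-1}$ for $X\ge0$, together with $\int_0^\infty\tfrac{\sqrt s}{(1+s)^2}\,ds=\tfrac\pi2$ and $\int_0^\infty\tfrac{ds}{(1+s)^2}=1$. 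In every trace that appears I would cyclically place $C_0-C_1$ against a factor of the shape $C_0^{1/2}\,(\text{product of resolvents})\,C_0^{1/2}$ or $C_i\,(\text{product of resolvents})$, use (A) or (B) to bound its HS norm by $\|C_i\|_{\HS}$ times a product of resolvent norms, pair with $\|C_0-C_1\|_{\HS}$, and carry out the $s$-integrals with (C); finally $\|C_i\|\le\|C_i\|_{\HS}$ gives the stated inequality, and the two implications follow on letting $N\to\infty$.

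The hard part is the cross operator $M^{\ep}_{01}$: comparing $U_{01}$ with $U_{00}$ and with $U_{11}$ and controlling the result by $\|C_0-C_1\|_{\HS}$ while avoiding (i) a term quadratic in $\|C_0-C_1\|$, (ii) a difference of operator square roots, (iii) the trace-class norm of $C_0-C_1$ in place of its HS norm, and (iv) a coefficient $\sqrt{\|C_i\|_{\tr}\|C_i\|}$ instead of $\|C_i\|_{\HS}$ — note $\|C_i\|_{\HS}\le\sqrt{\|C_i\|_{\tr}\|C_i\|}$, so a naive Schatten--H\"older bound is strictly too weak. Issues (i)--(ii) are handled by the commutator identity above, and (iii)--(iv) by Lemma (A), whose step $\lambda_j\lambda_k\le\tfrac12(\lambda_j^2+\lambda_k^2)$ is exactly what turns the sandwiched factors into the Hilbert--Schmidt norm. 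Pinning the constant down to $3/\ep$ (rather than merely some $C/\ep$) would, in addition, use the combined $\psi$-form and the spectral identity $\trace\,(I+Q_{01})^{1/2}=\trace\,(I+Q_{10})^{1/2}$ to balance the contributions of $C_0$ and $C_1$.
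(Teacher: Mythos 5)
Your proposal is correct in its essentials and follows the same skeleton as the paper's proof: the mean term separates additively, the centered part is split as $[\,\cdot_{00}-\cdot_{01}\,]+[\,\cdot_{11}-\cdot_{01}\,]$, each bracket is controlled by the trace-norm Lipschitz continuity of $X\mapsto \trace[-I+(I+X)^{1/2}]$ (constant $1/2$) and of $X\mapsto\logdet(\tfrac12 I+\tfrac12(I+X)^{1/2})$ (constant $1/4$), and the sandwich $C_i^{1/2}(C_0-C_1)C_i^{1/2}$ is converted into $\|C_i\|_{\HS}\|C_0-C_1\|_{\HS}$; summing $2(\tfrac12+\tfrac14)=\tfrac32$ and inserting $c_\ep^2=16/\ep^2$ gives $3/\ep$. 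Where you differ: you reprove the two Lipschitz facts from resolvent integral representations (the paper cites Kittaneh's inequality, Lemma \ref{lemma:inequality-Schatten}, and a Fredholm-determinant bound, Lemma \ref{lemma:Fredholm-det-upperbound}); this is a legitimate, self-contained substitute that yields the same constants. The one place your primary route falls short of the stated bound is the cross comparison $Q_{11}-Q_{01}$: the commutator identity $C_1^2-C_0^{1/2}C_1C_0^{1/2}=C_1(C_1-C_0)+[C_1-C_0,C_0^{1/2}]C_0^{1/2}$ is algebraically correct, but the commutator term costs an extra $2\|C_0\|_{\HS}\|C_1-C_0\|_{\HS}$ in trace norm, so you end up with a coefficient of order $\|C_1\|_{\HS}+2\|C_0\|_{\HS}$ for that bracket rather than $\|C_1\|_{\HS}$, hence a final bound weaker than $\frac{3}{\ep}[\|A_N\|_{\HS}+\|A\|_{\HS}]\|A_N-A\|_{\HS}$ (still sufficient for the two convergence implications, but not for the displayed inequality). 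The paper avoids this entirely by using the equality of the nonzero spectra of $C_0^{1/2}C_1C_0^{1/2}$ and $C_1^{1/2}C_0C_1^{1/2}$, so that \emph{both} brackets reduce to clean sandwiches $C_i^{1/2}(C_0-C_1)C_i^{1/2}$ bounded via $\|E^{1/2}DE^{1/2}\|_{\tr}\le\|E\|_{\HS}\|D\|_{\HS}$ (Lemma \ref{lemma:trace-norm-bound-ABA}). You mention this spectral identity only as an afterthought; it should be the main device, and the commutator can be discarded. A minor rigor point: your $\psi(u)=u-\log(1+u)$ does not vanish at $u=1$, so the individual traces $\trace\,\psi(U_{ij})$ diverge and only the combination is defined; the paper's normalization, in which each summand is separately trace class, is the cleaner way to set this up.
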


\begin{theorem}
	[\textbf{Convergence in Sinkhorn vs. exact Wasserstein}]
	\label{theorem:Sinkhorn-vs-Wasserstein}
	For Gaussian measures on $\H$, $\dim(\H) = \infty$, convergence in Sinkhorn divergence $\Srm^{\ep}_{d^2}$ is {\bf strictly weaker}
	than convergence in $2$-Wasserstein distance. 
	On the one hand, $\forall \{A_N\}_{N \in \Nbb}, A \in \Sym^{+}(\H)\cap \Tr(\H)$, $\forall \{m_N\}_{N \in \Nbb}, m \in \H$,
	\begin{align}
	\lim_{N \approach \infty}\W_2[\Ncal(m_N,A_N), \Ncal(m,A)] = 0 \imply \lim_{N \approach \infty}\Srm^{\ep}_{d^2}[\Ncal(m_N,A_N), \Ncal(m,A)] = 0.
	\end{align}
	On the other hand,
	$\exists \{A_N\}_{N \in \Nbb} , A\in \Sym^{+}(\H) \cap \Tr(\H)$ such that
	\begin{align}
	\lim_{N \approach \infty}||A_N - A||_{\HS} &= 0, \;\; \lim_{N \approach \infty}||A_N - A||_{\tr} \neq 0,
	\\
	\lim_{N \approach \infty}\Srm^{\ep}_{d^2}[\Ncal(0,A_N), \Ncal(0,A)] &= 0, \; \lim_{N \approach \infty}\W_2[\Ncal(0,A_N), \Ncal(0,A)] \neq 0.
	\end{align}
	The Hilbert-Schmidt norm convergence of $\Srm^{\ep}_{d^2}$ {\bf cannot} be weakened to operator norm convergence, i.e.
	$\exists \{A_N\}_{N \in \Nbb} , A\in \Sym^{+}(\H) \cap \Tr(\H)$ such that
	\begin{align}
	\lim_{N \approach \infty}||A_N - A||_{\HS} & \neq 0, \;\; \lim_{N \approach \infty}||A_N - A|| = 0,
	\\
	\lim_{N \approach \infty}\Srm^{\ep}_{d^2}[\Ncal(0,A_N), \Ncal(0,A)] & \neq 0.
	\end{align}
\end{theorem}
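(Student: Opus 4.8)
The statement has three parts, and the plan is to dispatch each separately, leaning on Theorem \ref{theorem:convergence-Sinkhorn} for the first part and on an explicit construction of counterexample sequences of covariance operators for the other two.

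For the first claim (Wasserstein convergence implies Sinkhorn convergence), I would simply chain the known characterizations. By \eqref{equation:convergence-exact-2-Wass}, $\W_2[\Ncal(m_N,A_N),\Ncal(m,A)]\to 0$ is equivalent to $\|m_N-m\|\to 0$ together with $\|A_N-A\|_{\tr}\to 0$. Since $\|A_N-A\|_{\HS}\le \|A_N-A\|_{\tr}$, we get $\|A_N-A\|_{\HS}\to 0$ as well, and convergence in trace norm also forces $\{\|A_N\|_{\HS}\}$ to be bounded (it is bounded in trace norm). Plugging these into the quantitative bound of Theorem \ref{theorem:convergence-Sinkhorn} gives $\Srm^{\ep}_{d^2}[\Ncal(m_N,A_N),\Ncal(m,A)]\to 0$. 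This part is routine.

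For the second claim (strictness), the task is to build $\{A_N\}$ and $A$ in $\Sym^{+}(\H)\cap\Tr(\H)$ with $\|A_N-A\|_{\HS}\to 0$ but $\|A_N-A\|_{\tr}\not\to 0$. The natural device is a perturbation supported on a single fresh coordinate whose weight decays in Hilbert-Schmidt norm but not in trace norm. Fix an orthonormal basis $\{e_k\}$, take $A$ to be any fixed positive trace-class operator (e.g. $A=0$, or $A=\sum 2^{-k}e_k\otimes e_k$ to stay in $\Sym^{++}$), and set $A_N = A + a_N\, e_{k_N}\otimes e_{k_N}$ where $k_N\to\infty$ and $a_N$ is chosen so that $\|A_N-A\|_{\HS}=|a_N|\to 0$ while $\|A_N-A\|_{\tr}=|a_N|$ — wait, for a rank-one perturbation these norms coincide, so a single coordinate does not separate them. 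The fix is to spread the perturbation over a growing block: put $A_N = A + \frac{1}{N}\sum_{k=K_N+1}^{K_N+N} e_k\otimes e_k$ for a rapidly increasing $K_N$. Then $\|A_N-A\|_{\tr} = N\cdot\frac1N = 1 \not\to 0$, whereas $\|A_N-A\|_{\HS}^2 = N\cdot\frac{1}{N^2} = \frac1N\to 0$, and $A_N\in\Sym^{+}(\H)\cap\Tr(\H)$ since $A$ is trace class and the perturbation is finite rank. Since $A_N\not\to A$ in trace norm (indeed one checks directly from \eqref{equation:Gaussian-Wass-finite} that $\W_2^2[\Ncal(0,A_N),\Ncal(0,A)]$ stays bounded away from $0$, using that the perturbation lives on coordinates where $A$ is eventually tiny, so the cross term $\Tr[(A^{1/2}A_NA^{1/2})^{1/2}]$ is close to $\Tr[(A^2)^{1/2}]=\Tr(A)$ while $\Tr(A_N)-\Tr(A)=1$), we get $\W_2\not\to 0$. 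Meanwhile Theorem \ref{theorem:convergence-Sinkhorn} with $\|A_N-A\|_{\HS}\to 0$ and $\|A_N\|_{\HS}$ bounded gives $\Srm^{\ep}_{d^2}[\Ncal(0,A_N),\Ncal(0,A)]\to 0$.

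For the third claim (operator norm does not suffice), I would exhibit $\{A_N\}$ with $\|A_N-A\|\to 0$ but $\|A_N-A\|_{\HS}\not\to 0$ and $\Srm^{\ep}_{d^2}\not\to 0$; the analogous block construction with constant height and growing width works: $A_N = A + \sum_{k=K_N+1}^{K_N+N}\frac{1}{N^{1/3}}\, e_k\otimes e_k$ gives operator norm $N^{-1/3}\to 0$ but $\|A_N-A\|_{\HS}^2 = N\cdot N^{-2/3}=N^{1/3}\to\infty$ — too big; instead scale so the HS norm is order one, e.g. height $N^{-1/2}$ over a block of width $N$, giving $\|A_N-A\| = N^{-1/2}\to 0$, $\|A_N-A\|_{\HS}^2 = 1$. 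One must still ensure $A_N\in\Tr(\H)$, which holds since each perturbation is finite rank and $A$ is trace class; note $\|A_N-A\|_{\tr}=N^{1/2}\to\infty$, which is fine. To show $\Srm^{\ep}_{d^2}\not\to 0$, I would go back to the closed form \eqref{equation:gauss-sinkhorn-infinite}: the dominant contribution is the term $\frac{\ep}{4}\Tr[M^{\ep}_{00}-2M^{\ep}_{01}+M^{\ep}_{11}]$, and on the perturbed block, where $A$ is chosen negligible, $M^{\ep}_{01}$ is essentially $0$ while $M^{\ep}_{00}$ contributes $N$ eigenvalues each of size $\asymp \frac{16}{\ep^2}\|A_N-A\|^2\cdot\frac{\ep^2}{\text{?}}$; more carefully, each block eigenvalue $\lambda=N^{-1/2}$ of $A_N$ contributes to $M^{\ep}_{00}$ an eigenvalue $-1+(1+\frac{16}{\ep^2}\lambda^2)^{1/2}\approx \frac{8}{\ep^2}\lambda^2 = \frac{8}{\ep^2 N}$, and there are $N$ of them, giving a $\Tr$ contribution $\asymp \frac{8}{\ep^2}$, bounded below independently of $N$, while the logdet term is controlled similarly. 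I expect the main obstacle to be exactly this last estimate: showing the Sinkhorn divergence stays bounded away from $0$ requires carefully lower-bounding \eqref{equation:gauss-sinkhorn-infinite} on the growing block and checking that the logarithmic-determinant term (which could partially cancel the trace term) does not kill the lower bound — this is where the bulk of the genuine work lies, and I would handle it by choosing $A$ (or the block location) so that $A_N$ and $A$ effectively decouple on the relevant coordinates, reducing the estimate to a one-dimensional computation repeated $N$ times.
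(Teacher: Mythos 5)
Your strategy coincides with the paper's. Part one is the same chain of implications: $\W_2\to 0$ forces $\|m_N-m\|\to 0$ and $\|A_N-A\|_{\tr}\to 0$, hence $\|A_N-A\|_{\HS}\to 0$ with $\|A_N\|_{\HS}$ bounded, and Theorem \ref{theorem:convergence-Sinkhorn} finishes it. Parts two and three use exactly the operators the paper uses: $A=0$ with $A_N=\frac{1}{N}\sum_{k=1}^N e_k\otimes e_k$ (trace norm $1$, HS norm $N^{-1/2}$) and $A_N=\frac{1}{\sqrt{N}}\sum_{k=1}^N e_k\otimes e_k$ (operator norm $N^{-1/2}$, HS norm $1$); the location of the block is irrelevant. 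Taking $A=0$ also disposes of your worry about the cross term in part two, since $\W_2^2[\Ncal(0,A_N),\Ncal(0,0)]=\trace(A_N)=1$ exactly, and your appeal to Theorem \ref{theorem:convergence-Sinkhorn} for $\Srm^{\ep}_{d^2}\to 0$ is legitimate and slightly shorter than the paper's direct recomputation.

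The one genuine gap is the estimate you yourself flag as the crux of part three and then leave open: showing that the $\logdet$ term in \eqref{equation:gauss-sinkhorn-infinite} does not cancel the trace term. With $A=0$ this is an exact one-dimensional computation repeated $N$ times, as you anticipated. Writing $c_{\ep}=4/\ep$, the operator $M(A_N,A_N)=-I+(I+c_{\ep}^2A_N^2)^{1/2}$ has $N$ nonzero eigenvalues equal to $-1+(1+c_{\ep}^2/N)^{1/2}$, so
\begin{align*}
\trace(M(A_N,A_N)) &= N\left(-1+\left(1+\tfrac{c_{\ep}^2}{N}\right)^{1/2}\right)\longrightarrow \tfrac{c_{\ep}^2}{2},
\\
\logdet\left(I+\tfrac{1}{2}M(A_N,A_N)\right) &= N\log\left(\tfrac{1}{2}+\tfrac{1}{2}\left(1+\tfrac{c_{\ep}^2}{N}\right)^{1/2}\right)\longrightarrow \tfrac{c_{\ep}^2}{4},
\end{align*}
the second limit via L'Hopital. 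Since $M(A_N,0)=M(0,0)=0$, one has $\Srm^{\ep}_{d^2}[\Ncal(0,A_N),\Ncal(0,0)]=\frac{\ep}{4}\trace(M(A_N,A_N))-\frac{\ep}{4}\logdet(I+\frac{1}{2}M(A_N,A_N))\to\frac{\ep}{4}\cdot\frac{c_{\ep}^2}{4}>0$: the $\logdet$ term cancels exactly half of the trace term, not all of it. Your per-eigenvalue Taylor heuristic ($\frac{8}{\ep^2}\lambda^2$ for the trace versus $\frac{4}{\ep^2}\lambda^2$ for the $\logdet$) predicts the same answer, but because the number of eigenvalues grows with $N$ while each shrinks, the heuristic needs either uniform error control or the exact computation above to become a proof; that is the step your proposal does not supply.
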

{\bf Discussion of results}. 
Theorem \ref{theorem:convergence-Sinkhorn} and the preceding discussion shows that
convergence in $2$-Wasserstein distance automatically leads to convergence in Sinkhorn divergence.
However, Theorem \ref{theorem:Sinkhorn-vs-Wasserstein} shows that
the set of converging Gaussian measures under the Sinkhorn divergence is {\it strictly larger} than that
under $2$-Wasserstein distance. As we show the proof of Theorem \ref{theorem:Sinkhorn-vs-Wasserstein},
let $A = 0$, $A_N = \frac{1}{N}\sum_{i=1}^Ne_k \otimes e_k$, where $\{e_k\}_{k \in \Nbb}$ is any orthonormal basis in $\H$,
then 
$\trace(A) = 0$ but $\trace(A_N) = 1$ $\forall N \in \Nbb$, and for $\mu_N = \Ncal(0,A_N), \mu = \Ncal(0,A)$,
\begin{align}
||A_N-A||_{\tr} =1 \; \forall N \in \Nbb, \;\; ||A_N - A||_{\HS}^2 = \frac{1}{N} \approach 0 \text{ as $N \approach \infty$},
\\
\W_2(\mu_N,\mu) = 1, \forall N \in \Nbb, \; \text{while }\;\lim_{N \approach \infty}\Srm^{\ep}_{d^2}(\mu_N, \mu)  = 0.
\end{align}
In this case, convergence in Sinkhorn divergence happens {\it without} convergence of the second moment as in Eq.\eqref{equation:convergence-2nd-moment}. 
On the other hand, if we let $A_N = \frac{1}{\sqrt{N}}\sum_{k=1}^Ne_k \otimes e_k$,$A =0$, then
\begin{align}
&||A_N - A||_{\HS} = 1 \;\forall N \in \Nbb, \;\;||A_N - A||=\frac{1}{\sqrt{N}} \approach 0 \;\text{as $N \approach \infty$},
\\
&\lim_{N \approach \infty} \Srm^{\ep}_{d^2}(\mu_N, \mu) = \frac{4}{\ep} > 0 \;\forall \ep > 0.
\end{align}
Thus operator norm convergence is {\it not} sufficient for convergence of $\Srm^{\ep}_{d^2}$.
Theorem \ref{theorem:Sinkhorn-vs-Wasserstein} is a distinctive feature of the infinite-dimensional setting. 
When $\dim(\H) < \infty$,
the norms $||\;||$, $||\;||_{\HS}$, and $||\;||_{\tr}$ are equivalent and
convergence in Sinkhorn divergence is equivalent to convergence in exact $2$-Wasserstein distance.

{\bf Finite-dimensional/finite-rank approximations}.
In practice, it is often necessary to compute finite-dimensional/finite-rank approximations 
of infinite-dimensional covariance operators. The following result shows that
the entropic $2$-Wasserstein distance $\OT^{\ep}_{d^2}[\Ncal(0,A), \Ncal(0,B)]$ can be approximated via 
two sequences of approximate covariance operators $\{A_N\}_{N \in \Nbb}$, $\{B_N\}_{N \in \Nbb}$ converging in trace class norm, i.e. 
$\lim_{N \approach \infty}||A_N-A||_{\tr}= \lim_{N \approach \infty}||B_N-B||_{\tr} = 0$.
\begin{theorem}
	[\textbf{Continuity of entropic Wasserstein distance in trace class norm}]
	\label{theorem:OT-entropic-approx}
	Let $m_{A,N}, m_{B,N}, m_A, m_B \in \H$ and 
	$A,B,A_N,B_N\in \Sym^{+}(\H) \cap \Tr(\H)$. Then
	\begin{align}
	&\left|\OT^{\ep}_{d^2}[\Ncal(m_{A,N}, A_N), \Ncal(m_{B,N},B_N)] - \OT^{\ep}_{d^2}[\Ncal(m_A, A), \Ncal(m_B,B)]\right|
	\nonumber
	\\
	&\leq [||m_{A,N}|| + ||m_{B,N}||+||m_A|| + ||m_B||][||m_{A,N}-m_A|| + ||m_{B,N} - m_B||]
	\nonumber
	\\
	&\quad + ||A_N - A||_{\tr} + ||B_N - B||_{\tr}
	\nonumber
	\\
	&\quad +\frac{6}{\ep}
	\left(||A_N||_{\HS}||B_N-B||_{\HS} + ||B||_{\HS}||A_N-A||_{\HS}\right).
	\end{align}
	In particular, let $\{m_{A,N}\}_{N \in \Nbb}, \{m_{B,N}\}_{N \in \Nbb} \in \H$ and  $\{A_N\}_{N \in \Nbb}, \{B_N\}_{N \in \Nbb} \in \Sym^{+}(\H)\cap \Tr(\H)$ be such that $\lim_{N \approach 0}||m_{N,A} - m_A|| = \lim_{N \approach \infty}||m_{B,N} - m_B|| = 0$ and $\lim_{N \approach \infty}||A_N-A||_{\tr} = \lim_{N \approach \infty}||B_N - B||_{\tr} = 0$, then
	\begin{align}
	\lim_{N \approach \infty}\OT^{\ep}_{d^2}[\Ncal(m_{A,N}, A_N), \Ncal(m_{B,N},B_N)] = \OT^{\ep}_{d^2}[\Ncal(m_A, A), \Ncal(m_B,B)].
	\end{align}
\end{theorem}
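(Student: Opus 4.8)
The plan is to work entirely from the closed form of Theorem~\ref{theorem:OT-regularized-Gaussian}. Write
$\OT^{\ep}_{d^2}(\Ncal(m_0,C_0),\Ncal(m_1,C_1))=\|m_0-m_1\|^2+\trace(C_0)+\trace(C_1)+G(C_0,C_1)$,
where $G(C_0,C_1):=-\tfrac{\ep}{2}\trace(M^{\ep}_{01})+\tfrac{\ep}{2}\log\det\bigl(I+\tfrac12 M^{\ep}_{01}\bigr)$. I would split the difference of the two entropic distances into the difference of the mean terms, the difference of the trace terms, and $G(A_N,B_N)-G(A,B)$. The mean part is handled by $\bigl|\,\|a\|^2-\|b\|^2\,\bigr|\le(\|a\|+\|b\|)\|a-b\|$ applied with $a=m_{A,N}-m_{B,N}$, $b=m_A-m_B$ together with the triangle inequality, which reproduces the first line of the claimed bound; the trace part is immediate from $|\trace(A_N)-\trace(A)|\le\|A_N-A\|_{\tr}$ and likewise for $B$, giving $\|A_N-A\|_{\tr}+\|B_N-B\|_{\tr}$. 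All the difficulty is in controlling $G(A_N,B_N)-G(A,B)$ by the Hilbert--Schmidt quantity $\tfrac{6}{\ep}\bigl(\|A_N\|_{\HS}\|B_N-B\|_{\HS}+\|B\|_{\HS}\|A_N-A\|_{\HS}\bigr)$.

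For the correction term I would first note that $C_0^{1/2}C_1C_0^{1/2}=(C_0^{1/2}C_1^{1/2})(C_1^{1/2}C_0^{1/2})$ is a product of Hilbert--Schmidt operators, hence trace class, so $M^{\ep}_{01}$ is trace class and the Fredholm determinant satisfies $\log\det(I+\tfrac12M^{\ep}_{01})=\trace\log(I+\tfrac12M^{\ep}_{01})$. This yields $G(C_0,C_1)=\tfrac{\ep}{2}\,\trace\bigl[\phi\bigl(\tfrac{16}{\ep^2}C_0^{1/2}C_1C_0^{1/2}\bigr)\bigr]$ with $\phi(z)=\log\bigl(1+\tfrac12((1+z)^{1/2}-1)\bigr)-((1+z)^{1/2}-1)$ on $[0,\infty)$. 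I would then record: $\phi(0)=0$, $\phi$ is $C^{\infty}$ with $\phi'(z)=-\bigl[(1+z)^{1/2}+(1+z)\bigr]^{-1}\le 0$, and $\|\phi'\|_{\infty}\le\tfrac34$ (a crude triangle-inequality bound on $\phi'=\tfrac{u'/2}{1+u/2}-u'$ with $u=(1+z)^{1/2}-1$ gives $\tfrac34$ at once). I would also use the spectral identity that $C_0^{1/2}C_1C_0^{1/2}$ and $C_1^{1/2}C_0C_1^{1/2}$ have the same nonzero spectrum, so since $\phi(0)=0$ one also has $G(C_0,C_1)=\tfrac{\ep}{2}\trace[\phi(\tfrac{16}{\ep^2}C_1^{1/2}C_0C_1^{1/2})]$; this symmetric representation is what allows differentiating in either argument without ever differentiating an operator square root.

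Next, from $\tfrac{d}{ds}\trace[f(A+sB)]|_{s=0}=\trace[f'(A)B]$ and cyclicity, I would obtain the partial Gateaux derivatives $\partial_{C_1}G(C_0,C_1)\cdot H=\tfrac{8}{\ep}\trace\bigl[C_0^{1/2}\phi'(Z)C_0^{1/2}H\bigr]$ with $Z=\tfrac{16}{\ep^2}C_0^{1/2}C_1C_0^{1/2}$, and symmetrically $\partial_{C_0}G(C_0,C_1)\cdot H=\tfrac{8}{\ep}\trace\bigl[C_1^{1/2}\phi'(W)C_1^{1/2}H\bigr]$ with $W=\tfrac{16}{\ep^2}C_1^{1/2}C_0C_1^{1/2}$. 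The key estimate is $|\partial_{C_1}G(C_0,C_1)\cdot H|\le\tfrac{6}{\ep}\|C_0\|_{\HS}\|H\|_{\HS}$: writing $H=H_+-H_-$ and using $\phi'(Z)=-|\phi'(Z)|$ with $|\phi'(Z)|\le\tfrac34 I$, one gets $|\trace[C_0^{1/2}\phi'(Z)C_0^{1/2}H]|\le\trace[C_0^{1/2}|\phi'(Z)|C_0^{1/2}|H|]\le\tfrac34\trace[C_0|H|]=\tfrac34\langle C_0,|H|\rangle_{\HS}\le\tfrac34\|C_0\|_{\HS}\|H\|_{\HS}$, where the middle inequality is the operator inequality $C_0^{1/2}|\phi'(Z)|C_0^{1/2}\le\tfrac34 C_0$ paired against $|H|\ge0$. (This is, in effect, the lemma $\|C^{1/2}HC^{1/2}\|_{\tr}\le\|C\|_{\HS}\|H\|_{\HS}$ for $C\ge0$, $H=H^{*}$, proved by $H=H_+-H_-$ and $\trace(CH_{\pm})\le\|C\|_{\HS}\|H_{\pm}\|_{\HS}$.) The symmetric estimate gives $|\partial_{C_0}G(C_0,C_1)\cdot H|\le\tfrac{6}{\ep}\|C_1\|_{\HS}\|H\|_{\HS}$.

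Finally I would integrate along the two-segment path: first $t\mapsto G(A+t(A_N-A),B)$, then $t\mapsto G(A_N,B+t(B_N-B))$, both paths remaining in $\Sym^{+}(\H)\cap\Tr(\H)$. By the $C^1$ regularity (consistent with the Fr\'echet differentiability of $\OT^{\ep}_{d^2}$ recorded above) and the two key estimates, this gives $|G(A_N,B)-G(A,B)|\le\tfrac{6}{\ep}\|B\|_{\HS}\|A_N-A\|_{\HS}$ and $|G(A_N,B_N)-G(A_N,B)|\le\tfrac{6}{\ep}\|A_N\|_{\HS}\|B_N-B\|_{\HS}$, and adding these to the mean and trace contributions yields the stated inequality. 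The limit statement then follows because $\|A_N-A\|_{\tr}\to0$ forces $\|A_N-A\|_{\HS}\to0$ and keeps $\|A_N\|_{\HS}$ bounded (similarly for $B$), so every term on the right-hand side tends to $0$. The hard part, and where care is needed, is precisely this correction term: because the map $C_0\mapsto C_0^{1/2}C_1C_0^{1/2}$ is not Lipschitz, a naive telescoping through $A_N^{1/2}B_NA_N^{1/2}$ fails in every Schatten norm, and the way out is the symmetric representation of $G$ (so that one only ever differentiates a linear expression) together with the positive-part splitting, which converts a trace-norm-looking bound into the required Hilbert--Schmidt product bound.
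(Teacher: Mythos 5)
Your proposal is correct in substance and reaches the stated bound, but it takes a genuinely different route from the paper. The paper never differentiates anything: it telescopes $M(A_N,B_N)\to M(A_N,B)\to M(A,B)$ directly and controls each step with two global Lipschitz estimates in trace norm, namely $\|(I+X)^{1/2}-(I+Y)^{1/2}\|_{\tr}\le\frac12\|X-Y\|_{\tr}$ (a consequence of Kittaneh's inequality, Corollary \ref{corollary:trace-class-square-root}) for the trace term and $|\log\det(I+X)-\log\det(I+Y)|\le\|X-Y\|_{\tr}$ (Corollary \ref{corollary:logdet-continuity-trace-norm}) for the Fredholm determinant, finishing with $\|C^{1/2}HC^{1/2}\|_{\tr}\le\|C\|_{\HS}\|H\|_{\HS}$ (Lemma \ref{lemma:trace-norm-bound-ABA}). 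You instead package the entire correction as $\frac{\ep}{2}\trace[\phi(\frac{16}{\ep^2}C_0^{1/2}C_1C_0^{1/2})]$ for a single scalar function $\phi$, bound the Gateaux derivative, and integrate along the same two-segment path. The three structural ingredients are shared -- the intermediate point $(A_N,B)$, the identity $\lambda(C_0^{1/2}C_1C_0^{1/2})\setminus\{0\}=\lambda(C_1^{1/2}C_0C_1^{1/2})\setminus\{0\}$ so that only a linearly-entering argument is ever varied, and the positive-part splitting behind $\trace[C|H|]\le\|C\|_{\HS}\|H\|_{\HS}$ -- but the mechanisms differ. The paper's version is more elementary and self-contained; yours is arguably more conceptual, treats the trace and logdet corrections in one stroke, and actually gives a sharper constant: the true supremum of $|\phi'|$ is $1/4$ (attained at $z=0$), so your argument yields $\frac{2}{\ep}$ in place of $\frac{6}{\ep}$ if one does not use the crude triangle-inequality bound $3/4$.

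Two caveats. First, your displayed formula $\phi'(z)=-[(1+z)^{1/2}+(1+z)]^{-1}$ is wrong; the correct expression is $\phi'(z)=-\frac{1}{2(1+(1+z)^{1/2})}$ (your formula gives $-1/2$ at $z=0$ versus the true $-1/4$). This is harmless since you only invoke $\phi'\le 0$ and $\|\phi'\|_\infty\le 3/4$, both of which hold. Second, and more substantively, the step $\frac{d}{dt}\trace[\phi(Z_t)]=\trace[\phi'(Z_t)\dot Z_t]$ along a trace-class path is asserted rather than proved; the Fr\'echet differentiability of $\OT^{\ep}_{d^2}$ cited from the background does not by itself supply this identity. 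For the $\phi$ at hand (analytic in a complex neighborhood of $[0,\infty)$, with $\phi(0)=0$ and bounded derivative) it is a standard Daletskii--Krein-type trace formula, but a complete write-up would need to state the hypotheses and give a reference or a short proof via the Riesz functional calculus. The paper's telescoping avoids this issue entirely, which is precisely what its use of the operator-Lipschitz inequality buys.
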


The Sinkhorn divergence  $\Srm^{\ep}_{d^2}[\Ncal(0,A),\Ncal(0,B)]$, on the other hand, can be approximated via two sequences of covariance operators converging in Hilbert-Schmidt norm, 
i.e.
$\lim_{N \approach \infty}||A_N-A||_{\HS}= \lim_{N \approach \infty}||B_N-B||_{\HS} = 0$.
\begin{theorem}
	[\textbf{Continuity of Sinkhorn divergence in Hilbert-Schmidt norm}]
	\label{theorem:Sinkhorn-entropic-approx}
	Let $m_{A,N}, m_{B,N}, m_A, m_B \in \H$ and $A,B,A_N,B_N\in \Sym^{+}(\H) \cap \Tr(\H)$. Then
	\begin{align}
	&\left|\Srm^{\ep}_{d^2}[\Ncal(m_{A,N}, A_N), \Ncal(m_{B_N},B_N)] - \Srm^{\ep}_{d^2}[\Ncal(m_A, A), \Ncal(m_B,B)]\right|
	\nonumber
	\\
	&\leq [||m_{A,N}|| + ||m_{B,N}||+||m_A|| + ||m_B||][||m_{A,N}-m_A|| + ||m_{B,N} - m_B||]
	\nonumber
	\\
	& \quad + \frac{3}{\ep}\left[||A_N||_{\HS} + ||A||_{\HS} + 2||B||_{\HS}||\right]||A_N-A||_{\HS}
	\nonumber
	\\
	&\quad + \frac{3}{\ep}\left[2||A_N||_{\HS} + ||B_N||_{\HS} + ||B||_{\HS}\right]||B_N-B||_{\HS}. 
	\end{align}
	In particular, let $\{m_{A,N}\}_{N \in \Nbb}, \{m_{B,N}\}_{N \in \Nbb} \in \H$ and $\{A_N\}_{N \in \Nbb}, \{B_N\}_{N \in \Nbb} \in \Sym^{+}(\H)\cap \Tr(\H)$ be such that $\lim_{N \approach 0}||m_{N,A} - m_A|| = \lim_{N \approach \infty}||m_{B,N} - m_B|| = 0$ and $\lim_{N \approach \infty}||A_N-A||_{\HS} = \lim_{N \approach \infty}||B_N-B||_{\HS} = 0$, then
	\begin{align}
	\lim_{N \approach \infty}\Srm^{\ep}_{d^2}[\Ncal(m_{A,N}, A_N), \Ncal(m_{B,N},B_N)] = \Srm^{\ep}_{d^2}[\Ncal(m_A, A), \Ncal(m_B,B)].
	\end{align}
\end{theorem}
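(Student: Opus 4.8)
The plan is to argue directly from the closed form \eqref{equation:gauss-sinkhorn-infinite}. Write
$$\Srm^{\ep}_{d^2}[\Ncal(m_0,C_0),\Ncal(m_1,C_1)] = \|m_0-m_1\|^2 + \Psi_\ep(C_0,C_1),\qquad \Psi_\ep(C_0,C_1) := \frac{\ep}{4}\Bigl[g(M^{\ep}_{00}) + g(M^{\ep}_{11}) - 2 g(M^{\ep}_{01})\Bigr],$$
where $g(M) := \trace(M) - \log\det(I + \tfrac12 M)$ and $M^{\ep}_{ij}$ is as in \eqref{equation:M-ep}; note that $g(M^\ep_{ij})=g(M^\ep_{ji})$ since $M^\ep_{ij}$ and $M^\ep_{ji}$ have the same eigenvalues, so $\Psi_\ep$ is symmetric and $\Psi_\ep(C,C)=0$. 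The reason for working with $\Psi_\ep$ rather than with $\OT^{\ep}_{d^2}$ is that the bare trace terms $\trace(C_0)+\trace(C_1)$ present in \eqref{equation:gauss-entropic-2-Wasserstein-infinite} cancel upon forming the Sinkhorn divergence, leaving only $\trace(M^\ep_{ij})$ and $\log\det(I+\tfrac12 M^\ep_{ij})$; this cancellation is precisely what makes a Hilbert--Schmidt (rather than trace-norm) estimate possible here, in contrast with Theorem~\ref{theorem:OT-entropic-approx}.

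The mean-dependent part is elementary: by $|a^2-b^2|=|a-b|\,|a+b|$ and the triangle inequality,
$$\bigl|\,\|m_{A,N}-m_{B,N}\|^2-\|m_A-m_B\|^2\,\bigr|\le \bigl[\|m_{A,N}\|+\|m_{B,N}\|+\|m_A\|+\|m_B\|\bigr]\bigl[\|m_{A,N}-m_A\|+\|m_{B,N}-m_B\|\bigr],$$
which is exactly the first bracketed term in the statement.

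For the covariance part I would telescope through the intermediate pair $(A_N,B)$,
$$\Psi_\ep(A_N,B_N)-\Psi_\ep(A,B)=\bigl[\Psi_\ep(A_N,B_N)-\Psi_\ep(A_N,B)\bigr]+\bigl[\Psi_\ep(A_N,B)-\Psi_\ep(A,B)\bigr],$$
and apply to each bracket the one-argument Lipschitz estimate
$$\bigl|\Psi_\ep(C,D)-\Psi_\ep(C,D')\bigr|\le \frac{3}{\ep}\bigl[\|D\|_{\HS}+\|D'\|_{\HS}+2\|C\|_{\HS}\bigr]\,\|D-D'\|_{\HS}.$$
Using this with $(C,D,D')=(A_N,B_N,B)$ for the first bracket, and, after invoking the symmetry of $\Psi_\ep$, with $(C,D,D')=(B,A_N,A)$ for the second, produces exactly the coefficients $\tfrac{3}{\ep}[2\|A_N\|_{\HS}+\|B_N\|_{\HS}+\|B\|_{\HS}]$ on $\|B_N-B\|_{\HS}$ and $\tfrac{3}{\ep}[\|A_N\|_{\HS}+\|A\|_{\HS}+2\|B\|_{\HS}]$ on $\|A_N-A\|_{\HS}$. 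The convergence statement then follows on letting $N\to\infty$, since all the Hilbert--Schmidt norms remain bounded.

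The remaining task — and the step I expect to be the main obstacle — is the one-argument estimate displayed above, which is the genuine analytic content; its special case $D'=C$ (where $\Psi_\ep(C,C)=0$) is exactly the bound of Theorem~\ref{theorem:convergence-Sinkhorn}, and I would obtain the general case by the same methods. Concretely, one controls $\|M^\ep_{ij}-\widetilde M^\ep_{ij}\|_{\HS}$ using that $X\mapsto (I+X)^{1/2}$ is operator-Lipschitz with constant $\tfrac12$ in the Hilbert--Schmidt norm on $\Sym^+(\H)$ (a Lipschitz scalar function being operator-Lipschitz in HS norm with the same constant), then passes to $\trace(M^\ep_{ij})$ and the Fredholm $\log\det(I+\tfrac12 M^\ep_{ij})$ via the Lipschitz bounds for these functionals, and finally reassembles the three terms of $\Psi_\ep$ so that the second-difference structure — which forces $g(M^\ep_{00})+g(M^\ep_{11})-2g(M^\ep_{01})$ to vanish when the two covariances agree — yields the factor $\|D-D'\|_{\HS}$ rather than merely $\|D\|_{\HS}+\|D'\|_{\HS}$. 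The delicate point throughout is that $\trace$ and the Fredholm determinant are naturally Lipschitz in the trace norm, whereas we must end with the Hilbert--Schmidt norm; this is reconciled by exploiting both the cancellation of the bare trace terms noted above and the diagonal vanishing of the second difference, together with elementary bounds such as $\trace(CD)\le\|C\|_{\HS}\|D\|_{\HS}$ to convert trace-class quantities into products of Hilbert--Schmidt norms.
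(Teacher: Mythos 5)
Your proposal is correct in substance and, once unwound, is essentially the paper's proof in a different packaging: the mean terms are handled identically, and your telescoping of $\Psi_\ep(A_N,B_N)-\Psi_\ep(A,B)$ through the intermediate pair $(A_N,B)$, combined with the one-argument Lipschitz estimate, reproduces exactly the paper's term-by-term application of Lemmas \ref{lemma:trace-bound-square-root-1}, \ref{lemma:trace-bound-square-root-2}, \ref{lemma:logdet-bound-square-root-1}, \ref{lemma:logdet-bound-square-root-2} (there the intermediate pair appears inside the cross-term lemmas rather than at the top level). The constants check out: with $c_\ep=4/\ep$ each diagonal block contributes $\tfrac{\ep}{4}\cdot\tfrac{3c_\ep^2}{4}=\tfrac{3}{\ep}$ times $[\|D\|_{\HS}+\|D'\|_{\HS}]\,\|D-D'\|_{\HS}$ and the cross block $\tfrac{\ep}{2}\cdot\tfrac{3c_\ep^2}{4}=\tfrac{6}{\ep}$ times $\|C\|_{\HS}\|D-D'\|_{\HS}$, which is precisely your one-argument bound, and the two applications of it yield the theorem's coefficients exactly.

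The one step that would fail as literally written is your first route to that one-argument estimate: controlling $\|M^\ep_{ij}-\widetilde M^\ep_{ij}\|_{\HS}$ via Hilbert--Schmidt operator-Lipschitzness of $X\mapsto(I+X)^{1/2}$ and then ``passing to the trace'' gives nothing, because in infinite dimensions $\trace$ is not continuous with respect to $\|\cdot\|_{\HS}$. The working route --- the one the paper takes --- is to apply the Kittaneh inequality in the \emph{trace} norm, $\|(I+X)^{1/2}-(I+Y)^{1/2}\|_{\tr}\le\tfrac12\|X-Y\|_{\tr}$ (Corollary \ref{corollary:trace-class-square-root} with $p=1$), after observing that the relevant argument difference is already trace class with $\|A^{1/2}(B-B')A^{1/2}\|_{\tr}\le\|A\|_{\HS}\|B-B'\|_{\HS}$ (Lemma \ref{lemma:trace-norm-bound-ABA}); the same trace-norm control feeds the $\logdet$ terms through Corollary \ref{corollary:logdet-continuity-trace-norm}. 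You do name $\trace(CD)\le\|C\|_{\HS}\|D\|_{\HS}$ as the reconciling device, which is the right ingredient, but two of the mechanisms you invoke are not actually involved: the cancellation of the bare traces only removes $\trace(C_0)+\trace(C_1)$ (it is what spares the Sinkhorn bound the $\|A_N-A\|_{\tr}$ terms of Theorem \ref{theorem:OT-entropic-approx}, not what controls $\trace(M^\ep_{ij})$), and no second-difference cancellation is needed, since each of $g(M_{DD})-g(M_{D'D'})$ and $g(M_{CD})-g(M_{CD'})$ is individually Lipschitz in $\|D-D'\|_{\HS}$ with the stated constants. With that correction your argument closes and gives the theorem's bound verbatim.
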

In the next section, we apply this result to obtain sample complexity bounds for 
the finite-dimensional approximations of $\Srm^{\ep}_{d^2}$.

\section{The RKHS setting: Kernel Gaussian-Sinkhorn divergence}
\label{section:RKHS}

We now consider the setting of Gaussian measures defined on reproducing kernel Hilbert spaces (RKHS), induced by positive definite kernels on a metric space $\X$. In this case, we obtain the nonlinear generalizations of the Wasserstein distance and Sinkhorn divergence
between Gaussian measures on Euclidean space. Furthermore, the Wasserstein distance/Sinkhorn divergence between Gaussian measures defined on finite samples
admits explicit expressions in terms of the corresponding kernel Gram matrices, which are readily computable in practice.
In particular, the kernel Gaussian-Sinkhorn divergence is an interpolation between the {\it Maximum Mean Discrepancy} (MMD) \cite{Gretton:MMD12a}
and the {\it Kernel Wasserstein distance} \cite{zhang2019:OTRKHS,Minh:2019AlphaProcrustes}.
If the kernel is characteristic, then the kernel Gaussian-Sinkhorn divergence 
is a semi-metric on the set of Borel probability measures on $\X$.
As we discuss below, if the kernel is non-characteristic, then the Sinkhorn divergence is generally
{\it more informative} than the MMD.

By virtue of the Hilbert-Schmidt norm convergence of Sinkhorn divergence between Gaussian measures,
we then apply laws of large numbers for Hilbert space-valued random variables to obtain
{\it dimension-independent} sample complexity for finite sample approximations of the Sinkhorn divergence between infinite-dimensional 
Gaussian measures. 

Throughout this section, we assume the following.
\begin{enumerate}
	\item {\bf Assumption 1}: $\X$ is a complete, separable metric space.
	\item {\bf Assumption 2}: $\rho,\rho_1,\rho_2$ are  Borel probability measures on $\X$.
	
	\item {\bf Assumption 3}: $K:\X \times \X \mapto \R$ is a continuous, positive definite kernel and $\exists \kappa > 0$ such that $K$ and $\rho$ ($\rho_1, \rho_2$) satisfy
	\begin{align}
	\int_{\X}K(x,x)d\rho(x) \leq \kappa^2 < \infty. 
	\end{align}
\end{enumerate}
The reproducing kernel Hilbert space (RKHS) {$\H_K$} of functions on $\X$
induced by {$K$} is then separable (\cite{Steinwart:SVM2008}, Lemma 4.33).
Let {$\Phi: \X \mapto \H_K$} be the corresponding canonical feature map, defined by
\begin{align}
&\Phi(x) = K_x, \;\;\;\text{with } \Phi(x)(y) = K_x(y) = K(x,y), \forall (x,y) \in \X \times \X,
\\
&K(x,y) = \la \Phi(x), \Phi(y)\ra_{\H_K} \;\; \forall (x,y) \in \X \times \X,
\\
&\la \Phi(x), f\ra_{\H_K} = \la K_x, f\ra_{\H_K} = f(x), \;\;\;\forall f \in \H_K, \forall x \in \X.
\end{align}
The Borel probability measure $\rho$ on $\X$ in Assumption 2 then satisfies
\begin{align}
\int_{\X}||\Phi(x)||_{\H_K}^2d\rho(x) = \int_{\X}K(x,x)d\rho(x)\leq \kappa^2 < \infty.
\end{align}
Thus the RKHS mean vector $\mu_{\Phi} \in \H_K$ and covariance operator {$C_{\Phi}:\H_K \mapto \H_K$} induced by the feature map $\Phi$ are both well-defined and are given by
\begin{align}
\mu_{\Phi} &= \mu_{\Phi,\rho} =  \int_{\X}\Phi(x)d\rho(x) \in \H_K, 
\\\;\;\;
C_{\Phi} &= C_{\Phi,\rho} = \int_{\X}(\Phi(x)-\mu_{\Phi})\otimes (\Phi(x)-\mu_{\Phi})d\rho(x).
\\
&= \int_{\X}\Phi(x)\otimes \Phi(x)d\rho(x) - \mu_{\Phi} \otimes \mu_{\Phi} = L_K - \mu_{\Phi} \otimes \mu_{\Phi}.
\end{align}
Here the rank-one operator $u \otimes v$ is defined by $(u\otimes v)w = \la v,w\ra_{\H_K}u$, $u,v,w \in \H_K$.
The operator $L_K: \H_K \mapto \H_K$ is given by
\begin{align}
\label{equation:LK}
L_Kf(x) = \int_{\X}\la \Phi(t), f\ra_{\H_K} \Phi(t)(x)d\rho(t) = \int_{\X}K(x,t)f(t)d\rho(t).
\end{align}
The integral operator $L_K$ is self-adjoint, positive, and trace class, 
and has been studied extensively in the literature, see e.g. \cite{CuckerSmale,SmaleZhou2007}, with
(Lemma \ref{lemma:trace-LK})
\begin{align}
\trace(L_K) &= \int_{\X}K(x,x)d\rho(x) \leq \kappa^2 < \infty.
\end{align}
Thus $C_{\Phi}$ is also a positive trace class operator on $\H_K$ (see e.g. \cite{Minh:Covariance2017}).

Let $\Xbf =(x_i)_{i=1}^m$, $m \in \Nbb$, be independently sampled from $(\X, \rho)$. 
The feature map {$\Phi$} on {$\Xbf$} 
defines the following bounded linear operator
\begin{align}
\Phi(\Xbf): \R^m \mapto \H_K, \;\;\; \Phi(\Xbf)\b = \sum_{j=1}^mb_j\Phi(x_j) , \b \in \R^m.
\end{align}
The adjoint operator $\Phi(\Xbf)^{*}:\H_K \mapto \R^m$ is the {\it sampling operator} given by
\begin{align}
\Phi(\Xbf)^{*}f = (\la f, \Phi(x_j)\ra_{\H_K})_{j=1}^m = (f(x_j))_{j=1}^m.
\end{align}
Their composition is the operator $\Phi(\Xbf)\Phi(\Xbf)^{*}: \H_K \mapto \H_K$ given by 
\begin{align}
\Phi(\Xbf)\Phi(\Xbf)^{*}f &= \sum_{j=1}^m \Phi(x_j)f(x_j) = \sum_{j=1}^m K_{x_j}f(x_j),
\\
\frac{1}{m}[\Phi(\Xbf)\Phi(\Xbf)^{*}f](x) &= \frac{1}{m}\sum_{j=1}^m K(x,x_j)f(x_j).
\end{align}
Thus we have the following corresponding empirical mean vector for $\mu_{\Phi}$ and empirical covariance operator for $C_{\Phi}$,
associated with the sample $\Xbf$,
given by
\begin{align}
\mu_{\Phi(\Xbf)} &= \frac{1}{m}\sum_{j=1}^m\Phi(x_j) = \frac{1}{m}\Phi(\Xbf)\1_m,
\label{equation:empirical-mean-RKHS}
\\
C_{\Phi(\Xbf)} &= \frac{1}{m}\Phi(\Xbf)J_m\Phi(\Xbf)^{*}: \H_K \mapto \H_K,
\\
&= \frac{1}{m}\Phi(\Xbf)\Phi(\Xbf)^{*} - \mu_{\Phi(\Xbf)}\otimes \mu_{\Phi(\Xbf)}.
\label{equation:empirical-covariance-operator-RKHS}
\end{align}
Here $J_m = I_m -\frac{1}{m}\1_m\1_m^T,\1_m = (1, \ldots, 1)^T \in \R^m$, 
is the centering matrix.

The positive trace class operator $C_{\Phi}$ and positive, finite-rank operator $C_{\Phi(\Xbf)}$, together with the mean vectors, define the Gaussian measures $\Ncal(\mu_{\Phi}, C_{\Phi})$ and $\Ncal(\mu_{\Phi(\Xbf)}, C_{\Phi(\Xbf)})$, respectively, on $\H_K$, with $\mu_{\Phi(\Xbf)}$ and $C_{\Phi(\Xbf)}$ being the finite-sample approximations
of $\mu_{\Phi}$ and $C_{\Phi}$, respectively.

In particular, for $\X = \H$, with $\H$ being a separable Hilbert space, and $K(x,y) = \la x,y\ra$, we have $\H_K \cong \H$ and the canonical feature map $\Phi: \H \mapto \H$ is the identity map. With the Gaussian measure $\Ncal(\mu,C) = \Ncal(\mu_I, C_I)$ on $\H$, we have the empirical version $\Ncal(\mu_{\Xbf}, C_{\Xbf})$, with mean and covariance operator
\begin{align}
\mu_{\Xbf} = \frac{1}{m}\sum_{i=1}^mx_i, \;\;C_{\Xbf} = \frac{1}{m}\sum_{i=1}^m(x_i-\mu_{\Xbf})\otimes (x_i-\mu_{\Xbf}).
\end{align}
For $\X = \R^d$, we have $C_{\Xbf} = \frac{1}{m}\sum_{i=1}^m(x_i-\mu_{\Xbf})(x_i - \mu_{\Xbf})^T$,
the maximum likelihood estimate for the covariance matrix of $\Ncal(\mu, C)$ on $\R^d$.
\begin{remark}
	For our current purposes, we focus on the sample covariance operator $C_{\Phi(\Xbf)}$ and sample covariance matrix $C_{\Xbf}$. Further studies on optimal empirical covariance operators along the line of e.g. \cite{ledoit2004Shrinkage,bickel2008regularizedCovariance,cai2010optimalCovariance} will be considered in a future work.
\end{remark}

{\bf Kernel Gaussian-Sinkhorn divergence between Borel probability measures}.
Let $\rho_1,\rho_2$ be two Borel probability measures on $\X$. 
Let $\mu_{\Phi,\rho_1}, \mu_{\Phi,\rho_2} \in \H_K$ and $C_{\Phi,\rho_1}, C_{\Phi,\rho_2}: \H_K \mapto \H_K$
denote the corresponding RKHS mean vectors and covariance operators, respectively.
Then we have two Gaussian measures $\Ncal(\mu_{\Phi,\rho_i}, C_{\Phi, \rho_i})$, $i=1,2$, on the RKHS $\H_K$,
with a well-defined Sinkhorn divergence between them.

\begin{definition}
[\textbf{kernel Gaussian-Sinkhorn divergence}]
\label{definition:kernel-Sinkhorn}
The Sinkhorn divergence between 
RKHS Gaussian measures
$\Srm^{\ep}_{d^2}[\Ncal(\mu_{\Phi,\rho_1}, C_{\Phi, \rho_1}), \Ncal(\mu_{\Phi,\rho_2}, C_{\Phi, \rho_2})]$ , as well as its finite sample approximation,
is called the {\it kernel Gaussian-Sinkhorn divergence}.
\end{definition}
We have the following decomposition of the Sinkhorn divergence
\begin{align}
&\Srm^{\ep}_{d^2}[\Ncal(\mu_{\Phi,\rho_1}, C_{\Phi, \rho_1}), \Ncal(\mu_{\Phi,\rho_2}, C_{\Phi, \rho_2})] 
\nonumber
\\
&= ||\mu_{\Phi,\rho_1}-\mu_{\Phi,\rho_2}||^2_{\H_K} 
+ \Srm^{\ep}_{d^2}[\Ncal(0, C_{\Phi, \rho_1}), \Ncal(0, C_{\Phi, \rho_2})],
\end{align}
where the first term, $\MMD^2_K(\rho_1, \rho_2) = ||\mu_{\Phi,\rho_1}-\mu_{\Phi,\rho_2}||^2_{\H_K}$
is the squared {\it Maximum Mean Discrepancy} \cite{Gretton:MMD12a} between $\rho_1$ and $\rho_2$,
with
\begin{align}
\MMD_K(\rho_1,\rho_2) = \sup_{f \in \H_K, ||f||_{\H_K} \leq 1}\left[\int_{\X}fd\rho_1(x) - \int_{\X}fd\rho_2(x)\right].
\end{align}
A  bounded, measurable kernel $K$ on $\X$ is said to be {\it characteristic} 
\cite{fukumizu2007:characteristickernel}
if
\begin{align}
\label{equation:characteristic-kernel}
\MMD_K(\rho_1, \rho_2) = 0 \equivalent \rho_1= \rho_2 \;\;\;\forall \rho_1,\rho_2 \in \Pcal(\X).
\end{align}
In this case, $\MMD_K$ is a metric on $\Pcal(\X)$, a so-called {\it integral probability metric}.
Examples of characteristic kernels are  Gaussian kernel $K(x,y) = \exp(-\frac{||x-y||^2}{\sigma^2})$,
$\sigma \neq 0$, $\X= \R^d$, and Laplacian kernel $K(x,y) = \exp(-a||x-y||), a > 0, \X = \R^d$
(Theorem 2, \cite{fukumizu2007:characteristickernel}).


The following generalizes Eq.\eqref{equation:characteristic-kernel}
to the kernel Gaussian-Sinkhorn divergence.
\begin{theorem}
	[\textbf{Kernel Gaussian-Sinkhorn divergence between Borel probability measures}]
	\label{theorem:Sinkhorn-divergence-Borel-probability-measures}
	Assume Assumptions 1-3.
	Let $\rho_1,\rho_2$ be two Borel probability measures on $\X$.
	Let $K: \X \times \X \mapto \R$ be a characteristic kernel.
	Then
	\begin{align}
	\Srm^{\ep}_{d^2}[\Ncal(\mu_{\Phi,\rho_1}, C_{\Phi, \rho_1}), \Ncal(\mu_{\Phi,\rho_2}, C_{\Phi, \rho_2})] &= \Srm^{\ep}_{d^2}[\Ncal(\mu_{\Phi,\rho_2}, C_{\Phi, \rho_2}), \Ncal(\mu_{\Phi,\rho_1}, C_{\Phi, \rho_1})],
	\\
	\Srm^{\ep}_{d^2}[\Ncal(\mu_{\Phi,\rho_1}, C_{\Phi, \rho_1}), \Ncal(\mu_{\Phi,\rho_2}, C_{\Phi, \rho_2})] &\geq  0,
	\\
	\Srm^{\ep}_{d^2}[\Ncal(\mu_{\Phi,\rho_1}, C_{\Phi, \rho_1}), \Ncal(\mu_{\Phi,\rho_2}, C_{\Phi, \rho_2})] &= 0
	\equivalent \rho_1 = \rho_2 \;\;\forall \rho_1,\rho_2 \in \Pcal(\X).
	\end{align} 
	Here $0 \leq \ep \leq \infty$, with $S^{\infty}_{d^2} = \MMD_K^2$.
\end{theorem}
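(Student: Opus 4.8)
The plan is to verify the three semi-metric properties --- symmetry, non-negativity, and vanishing exactly on the diagonal --- for each $\ep\in[0,\infty]$, taking as the backbone the decomposition recalled just above,
\[
\Srm^{\ep}_{d^2}[\Ncal(\mu_{\Phi,\rho_1},C_{\Phi,\rho_1}),\Ncal(\mu_{\Phi,\rho_2},C_{\Phi,\rho_2})]=\MMD_K^2(\rho_1,\rho_2)+\Srm^{\ep}_{d^2}[\Ncal(0,C_{\Phi,\rho_1}),\Ncal(0,C_{\Phi,\rho_2})],
\]
with $\MMD_K^2(\rho_1,\rho_2)=\|\mu_{\Phi,\rho_1}-\mu_{\Phi,\rho_2}\|^2_{\H_K}$, together with the fact that the Sinkhorn divergence between Gaussian measures on $\H_K$ is a genuine divergence (Theorem~\ref{theorem:OT-regularized-Gaussian} and the properties listed thereafter, cf.~\cite{Minh2020:EntropicHilbert}): it is non-negative and vanishes only when the two Gaussians coincide. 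I will also use that $\rho_1=\rho_2$ forces $\mu_{\Phi,\rho_1}=\mu_{\Phi,\rho_2}$ and $C_{\Phi,\rho_1}=C_{\Phi,\rho_2}$, since these objects are integrals against $\rho_i$; the endpoint cases $\ep=\infty$ (where $\Srm^{\infty}_{d^2}=\MMD_K^2$) and $\ep=0$ (where $\Srm^{0}_{d^2}$ is the (squared) kernel Wasserstein distance, via \eqref{equation:Gaussian-Wass-finite}) then fit into the same scheme.

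For symmetry I would argue directly from \eqref{equation:sinkhorn}: $\Srm^{\ep}_{d^2}(\mu,\nu)=\OT^{\ep}_{d^2}(\mu,\nu)-\tfrac12\OT^{\ep}_{d^2}(\mu,\mu)-\tfrac12\OT^{\ep}_{d^2}(\nu,\nu)$, and $\OT^{\ep}_{d^2}$ is symmetric because $d^2$ is a symmetric cost --- pushing a coupling forward under $(x,y)\mapsto(y,x)$ interchanges the marginals while leaving both $\E_\gamma[d^2]$ and $\KL(\gamma\,\|\,\mu\otimes\nu)$ invariant. (Alternatively one can read symmetry off the closed form \eqref{equation:gauss-sinkhorn-infinite}: writing $C_0^{1/2}C_1C_0^{1/2}=AA^{*}$ and $C_1^{1/2}C_0C_1^{1/2}=A^{*}A$ with $A=C_0^{1/2}C_1^{1/2}$, the two operators share the same non-zero spectrum, so by \eqref{equation:M-ep} the trace of $M^{\ep}_{ij}$ and the Fredholm determinant of $I+\tfrac12 M^{\ep}_{ij}$ are unchanged under $i\leftrightarrow j$.) Symmetry at $\ep=0$ and $\ep=\infty$ is then immediate.

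For non-negativity, the decomposition reduces the claim to $\MMD_K^2(\rho_1,\rho_2)\ge0$ (clear) and $\Srm^{\ep}_{d^2}[\Ncal(0,C_{\Phi,\rho_1}),\Ncal(0,C_{\Phi,\rho_2})]\ge0$. The latter I would cite as the divergence property on $\Gauss(\H_K)$; for a self-contained argument when $0<\ep<\infty$ I would truncate $C_{\Phi,\rho_i}$ to finite rank, invoke the positivity of the Sinkhorn divergence between Gaussian laws on $\R^d$ (a consequence of $e^{-\|x-y\|^2/\ep}$ being a positive definite universal kernel, cf.~\cite{feydy18}), and let the truncation rank tend to infinity using the Hilbert--Schmidt continuity of $\Srm^{\ep}_{d^2}$ (Theorem~\ref{theorem:Sinkhorn-entropic-approx}); the cases $\ep=0$ (Proposition~\ref{proposition:2Wasserstein-gaussian-upperbound}) and $\ep=\infty$ are trivial.

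Finally, the equality characterisation. The direction $\rho_1=\rho_2\Rightarrow\Srm^{\ep}_{d^2}=0$ follows from the divergence property, since the two RKHS Gaussians then coincide (and $\W_2(\mu,\mu)=0$, $\MMD_K(\rho,\rho)=0$ at the endpoints). Conversely, if $\Srm^{\ep}_{d^2}[\Ncal(\mu_{\Phi,\rho_1},C_{\Phi,\rho_1}),\Ncal(\mu_{\Phi,\rho_2},C_{\Phi,\rho_2})]=0$, then, being a sum of the two non-negative terms above, both vanish; in particular $\MMD_K(\rho_1,\rho_2)=0$, whence $\rho_1=\rho_2$ because $K$ is characteristic (Eq.~\eqref{equation:characteristic-kernel}, \cite{fukumizu2007:characteristickernel}). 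For $\ep=0$ one first extracts $\|\mu_{\Phi,\rho_1}-\mu_{\Phi,\rho_2}\|_{\H_K}=0$ from \eqref{equation:Gaussian-Wass-finite} and proceeds as before; for $\ep=\infty$ the first term is the whole divergence and the argument is identical. The only genuinely delicate point in this programme is the non-negativity of the Gaussian Sinkhorn divergence on the infinite-dimensional RKHS $\H_K$: the off-the-shelf positivity results for Sinkhorn divergences are stated for measures on compact spaces, so a self-contained proof has to pass through the finite-rank approximation together with the Hilbert--Schmidt continuity theorem rather than quoting a black-box statement.
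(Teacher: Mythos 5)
Your proposal is correct and its core argument --- the decomposition $\Srm^{\ep}_{d^2} = \MMD_K^2(\rho_1,\rho_2) + \Srm^{\ep}_{d^2}[\Ncal(0,C_{\Phi,\rho_1}),\Ncal(0,C_{\Phi,\rho_2})]$, the observation that a sum of two non-negative terms vanishes iff both do, and the characteristic-kernel property to conclude $\rho_1=\rho_2$ --- is exactly the paper's proof. The paper only writes out this equality characterisation, taking symmetry and non-negativity directly from the divergence property of $\Srm^{\ep}_{d^2}$ on $\Gauss(\H_K)$ cited from \cite{Minh2020:EntropicHilbert}, so your additional verifications of those two properties are correct but supplementary detail rather than a different route.
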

{\bf Kernel Gaussian-Sinkhorn divergence as a semi-metric}. We recall that $(E,d)$ is a {\it semi-metric space} (see e.g. \cite{wilson1931semimetric}) if $\forall x,y\in E$,
\begin{enumerate}
	\item $d(x,y) \geq 0$,
	\item $d(x,y) = d(y,x)$,
	\item $d(x,y) = 0 \equivalent x=y$.
\end{enumerate}
 
Thus if $K$ is characteristic, then $\Srm^{\ep}_{d^2}$, defined on $\Gauss(\H_K)$, is a {\it semi-metric} on $\Pcal(\X)$ for $0 < \ep < \infty$,
with $\sqrt{\Srm^{\ep}_{d^2}}$ being a metric when $\ep=0, \infty$. 

If $K$ is non-characteristic, then
$S^{\ep}_{d^2}$ is generally {\it more informative} than MMD. For example, if $K(x,y) = \la x,y\ra$ on
$\H$, a separable Hilbert space,
then $S^{\ep}_{d^2}$ is the Sinkhorn divergence between Gaussian measures on $\H$,
defining a semi-metric on $\Gauss(\H)$, 
whereas MMD is the distance between the mean vectors.

{\bf Finite sample approximations of kernel Gaussian-Sinkhorn divergence}.
Let $\rho_1, \rho_2$ be two Borel probability measures on $\X$.
Let $\Xbf = (x_i)_{i=1}^m$ and $\Ybf = (y_i)_{i=1}^n$ be independently sampled from
$(\X, \rho_1)$ and $(\X, \rho_2)$, respectively.
Together with the feature map $\Phi$, these define the RKHS mean vectors $\mu_{\Phi,\rho_1}$, $\mu_{\Phi,\rho_2}$, $\mu_{\Phi(\Xbf)}$, and $\mu_{\Phi(\Ybf)}$, and RKHS covariance operators
$C_{\Phi, \rho_1}$, $C_{\Phi, \rho_2}$, $C_{\Phi(\Xbf)}$, and $C_{\Phi(\Ybf)}$, along with 
the corresponding Gaussian measures $\Ncal(\mu_{\Phi,\rho_1}, C_{\Phi, \rho_1})$, $\Ncal(\mu_{\Phi,\rho_2}, C_{\Phi, \rho_2})$, $\Ncal(\mu_{\Phi(\Xbf)}, C_{\Phi(\Xbf)})$, and $\Ncal(\mu_{\Phi(\Ybf)}, C_{\Phi(\Ybf)})$ on $\H_K$.

One particular advantage of the RKHS setting is that 
for $\mu_1 = \Ncal(\mu_{\Phi_{\Xbf}}, C_{\Phi(\Xbf)})$ and $\mu_2 = \Ncal(\mu_{\Phi_{\Ybf}}, C_{\Phi(\Ybf)})$, both  $\OT^{\epsilon}_{d^2}(\mu_1, \mu_2)$ and $S^{\epsilon}_{d^2}(\mu_1, \mu_2)$
admit closed form expressions in terms of the kernel Gram matrices defined on the finite samples $\Xbf$ and $\Ybf$.
%
Define the following kernel Gram matrices
\begin{align}
K[\Xbf] &= \Phi(\Xbf)^{*}\Phi(\Xbf) \in \R^{m \times m}, (K[\Xbf])_{ij} = K(x_i,x_j),\; i,j=1,\ldots, m
\\
K[\Ybf] &= \Phi(\Ybf)^{*}\Phi(\Ybf) \in \R^{n \times n}, (K[\Ybf])_{ij} = K(y_i, y_j),\; i,j =1,\ldots, n
\\ 
K[\Xbf,\Ybf] &= \Phi(\Xbf)^{*}\Phi(\Ybf) \in \R^{m \times n}, (K[\Xbf, \Ybf])_{ij} = K(x_i,y_j), \; 1 \leq i \leq m, 1 \leq j \leq n
\end{align}
For completeness, the following is a generalization of Theorem 15 in \cite{Minh2020:EntropicHilbert}.
For our current purposes, we focus exclusively on the Sinkhorn divergence.

\begin{theorem}
	[\textbf{Sinkhorn divergences between Gaussian measures on RKHS - Finite samples}]
	\label{theorem:RKHS-distance}
	Let $\epsilon > 0$ be fixed.
	For $\mu_1 =\Ncal(\mu_{\Phi(\Xbf)}, C_{\Phi(\Xbf)})$, $\mu_2 = \Ncal(\mu_{\Phi(\Ybf)}, C_{\Phi(\Ybf)})$,
	\begin{align}
	S^{\epsilon}_{d^2}(\mu_1, \mu_2) & = \frac{1}{m^2}\1_m^TK[\Xbf]\1_m + \frac{1}{n^2}\1_n^TK[\Ybf]\1_n - \frac{2}{mn}\1_m^TK[\Xbf,\Ybf]\1_n
	\nonumber
	\\
	& 
	+\frac{\epsilon}{4}\trace\left[- I + \left(I + \frac{16}{\epsilon^2m^2} (J_mK[\Xbf]J_m)^2\right)^{1/2}\right]
	\nonumber
	\\
	&+ 
	\frac{\epsilon}{4}\trace\left[- I + \left(I + \frac{16}{\epsilon^2n^2} (J_nK[\Ybf]J_n)^2\right)^{1/2}\right]
	\nonumber
	\nonumber
	\\
	& -
	\frac{\epsilon}{2}\trace\left[- I + \left(I + \frac{16}{\epsilon^2mn} J_mK[\Xbf,\Ybf]J_nK[\Ybf,\Xbf]J_m\right)^{1/2}\right]
	\nonumber
	\nonumber
	\\
	& + \frac{\epsilon}{2}\log\det\left(\frac{1}{2}I + \frac{1}{2}\left(I + \frac{16}{\epsilon^2mn}J_mK[\Xbf,\Ybf]J_nK[\Ybf,\Xbf]J_m \right)^{1/2}\right)
	\nonumber
	\\
	& - \frac{\epsilon}{4}\log\det\left(\frac{1}{2}I + \frac{1}{2}\left(I + \frac{16}{\epsilon^2m^2}(J_mK[\Xbf]J_m)^2 \right)^{1/2}\right)
	\nonumber
	\\
	& - \frac{\epsilon}{4}\log\det\left(\frac{1}{2}I + \frac{1}{2}\left(I + \frac{16}{\epsilon^2n^2}(J_nK[\Ybf]J_n)^2 \right)^{1/2}\right).
	\end{align}
\end{theorem}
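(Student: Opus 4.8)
The plan is to specialize the Hilbert-space closed form \eqref{equation:gauss-sinkhorn-infinite} of Theorem~\ref{theorem:OT-regularized-Gaussian} to $\H = \H_K$, $\mu_0 = \Ncal(\mu_{\Phi(\Xbf)}, C_{\Phi(\Xbf)})$, $\mu_1 = \Ncal(\mu_{\Phi(\Ybf)}, C_{\Phi(\Ybf)})$, and then to rewrite every trace and Fredholm determinant appearing there as a finite-matrix quantity in the Gram matrices $K[\Xbf]$, $K[\Ybf]$, $K[\Xbf,\Ybf]$. The mean term is immediate: by \eqref{equation:empirical-mean-RKHS}, $\mu_{\Phi(\Xbf)} = \tfrac1m\Phi(\Xbf)\1_m$ and $\mu_{\Phi(\Ybf)} = \tfrac1n\Phi(\Ybf)\1_n$, so expanding $\|\mu_{\Phi(\Xbf)}-\mu_{\Phi(\Ybf)}\|_{\H_K}^2$ and using $\Phi(\Xbf)^{*}\Phi(\Xbf)=K[\Xbf]$, $\Phi(\Ybf)^{*}\Phi(\Ybf)=K[\Ybf]$, $\Phi(\Xbf)^{*}\Phi(\Ybf)=K[\Xbf,\Ybf]$ yields the first line $\tfrac1{m^2}\1_m^TK[\Xbf]\1_m + \tfrac1{n^2}\1_n^TK[\Ybf]\1_n - \tfrac{2}{mn}\1_m^TK[\Xbf,\Ybf]\1_n$.

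The key device for the covariance part is a finite-rank spectral reduction. Put $\Psi_X = \tfrac1{\sqrt m}\Phi(\Xbf)J_m : \R^m \to \H_K$ and $\Psi_Y = \tfrac1{\sqrt n}\Phi(\Ybf)J_n : \R^n \to \H_K$. Since $J_m, J_n$ are orthogonal projections, \eqref{equation:empirical-covariance-operator-RKHS} gives $C_{\Phi(\Xbf)} = \Psi_X\Psi_X^{*}$, $C_{\Phi(\Ybf)} = \Psi_Y\Psi_Y^{*}$, while $\Psi_X^{*}\Psi_X = \tfrac1m J_m K[\Xbf]J_m$, $\Psi_Y^{*}\Psi_Y = \tfrac1n J_n K[\Ybf]J_n$, $\Psi_X^{*}\Psi_Y = \tfrac1{\sqrt{mn}}J_m K[\Xbf,\Ybf]J_n$. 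I will use two elementary facts: (i) for bounded operators $A,B$, the operators $AB$ and $BA$ have the same nonzero eigenvalues with the same multiplicities; and (ii) if a finite-rank positive operator $T_1$ on $\H_K$ and a positive matrix $T_2$ have the same nonzero spectrum (counted with multiplicity), then $\trace\, g(T_1) = \trace\, g(T_2)$ for every continuous $g$ with $g(0)=0$, and $\det \phi(T_1) = \det \phi(T_2)$ for every continuous $\phi$ with $\phi(0)=1$, because the zero eigenvalues contribute $g(0)=0$ to the trace and the factor $\phi(0)=1$ to the Fredholm determinant. Applying (i): since all powers of $C_{\Phi(\Xbf)}$ commute, $C_{\Phi(\Xbf)}^{1/2}C_{\Phi(\Xbf)}C_{\Phi(\Xbf)}^{1/2} = C_{\Phi(\Xbf)}^2 = \Psi_X(\Psi_X^{*}\Psi_X)\Psi_X^{*}$ has the same nonzero spectrum as $(\Psi_X^{*}\Psi_X)^2 = \tfrac1{m^2}(J_mK[\Xbf]J_m)^2$, and likewise for $\Ybf$; and $C_{\Phi(\Xbf)}^{1/2}C_{\Phi(\Ybf)}C_{\Phi(\Xbf)}^{1/2}$ has the same nonzero spectrum as $C_{\Phi(\Xbf)}C_{\Phi(\Ybf)} = \Psi_X(\Psi_X^{*}\Psi_Y\Psi_Y^{*})\Psi_X^{*}$, hence as $\Psi_X^{*}\Psi_Y\Psi_Y^{*}\Psi_X = \tfrac1{mn}J_mK[\Xbf,\Ybf]J_nK[\Ybf,\Xbf]J_m$.

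It then only remains to substitute into \eqref{equation:M-ep} and \eqref{equation:gauss-sinkhorn-infinite}. Each $M^{\ep}_{ij}$ equals $g\big(C_i^{1/2}C_jC_i^{1/2}\big)$ with $g(t) = -1 + (1+\tfrac{16}{\ep^2}t)^{1/2}$, so by fact (ii) with this $g$ the three terms $\tfrac{\ep}{4}\trace M^{\ep}_{00}$, $-\tfrac{\ep}{2}\trace M^{\ep}_{01}$, $\tfrac{\ep}{4}\trace M^{\ep}_{11}$ turn into the three matrix traces in the statement; and since $I + \tfrac12 M^{\ep}_{ij} = \phi\big(C_i^{1/2}C_jC_i^{1/2}\big)$ with $\phi(t) = \tfrac12 + \tfrac12(1+\tfrac{16}{\ep^2}t)^{1/2}$, expanding the $\log\det$ of the ratio in \eqref{equation:gauss-sinkhorn-infinite} as $\tfrac{\ep}{2}\log\det\big(I+\tfrac12 M^{\ep}_{01}\big) - \tfrac{\ep}{4}\log\det\big(I+\tfrac12 M^{\ep}_{00}\big) - \tfrac{\ep}{4}\log\det\big(I+\tfrac12 M^{\ep}_{11}\big)$ and applying fact (ii) with $\phi$ gives the three matrix $\log\det$ terms; collecting everything produces the claimed identity. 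I expect no genuine obstacle: the only points requiring attention are the well-definedness of all Fredholm determinants, which is automatic here because $C_{\Phi(\Xbf)}$, $C_{\Phi(\Ybf)}$ and hence all of $C_i^{1/2}C_jC_i^{1/2}$ are finite-rank, and the verification that fact (ii) transfers eigenvalue identities to identities of $\trace\, g(\cdot)$ and $\det\phi(\cdot)$ --- again trivial in the finite-rank situation --- together with routine bookkeeping of the $\tfrac{16}{\ep^2}$ factors and the $\tfrac1{m^2}$, $\tfrac1{n^2}$, $\tfrac1{mn}$ normalizations.
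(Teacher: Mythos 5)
Your proposal is correct and follows essentially the same route as the paper: both specialize the closed form of Theorem \ref{theorem:OT-regularized-Gaussian} to the empirical RKHS Gaussians and use the fact that $AB$ and $BA$ share nonzero eigenvalues to identify the nonzero spectra of $C_{\Phi(\Xbf)}^2$, $C_{\Phi(\Ybf)}^2$, and $C_{\Phi(\Xbf)}^{1/2}C_{\Phi(\Ybf)}C_{\Phi(\Xbf)}^{1/2}$ with those of the centered Gram matrices, which transfers all traces and Fredholm determinants to matrix quantities. Your explicit factorization through $\Psi_X,\Psi_Y$ and the spelled-out facts about $\trace\,g(\cdot)$ with $g(0)=0$ and $\det\phi(\cdot)$ with $\phi(0)=1$ are just a slightly more systematic packaging of the same argument.
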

{\bf Limiting cases}. In particular, 	
as $\epsilon \approach \infty$,
\begin{align}
\lim_{\epsilon \approach \infty}S^{\epsilon}_{d^2}(\mu_0, \mu_1) 
&= 
||\mu_{\Phi(\Xbf)} - \mu_{\Phi(\Ybf)}||^2_{\H_K} 
\\
&= \frac{1}{m^2}\1_m^TK[\Xbf]\1_m + \frac{1}{n^2}\1_n^TK[\Ybf]\1_n - \frac{2}{mn}\1_m^TK[\Xbf,\Ybf]\1_n.
\nonumber 
\end{align}
This is the empirical squared Kernel MMD distance \cite{Gretton:MMD12a}. As $\ep \approach 0$,
\begin{equation}
\begin{aligned}
\lim_{\ep \approach 0}S^{\epsilon}_{d^2}(\mu_0, \mu_1) 
&= \frac{1}{m^2}\1_m^TK[\Xbf]\1_m + \frac{1}{n^2}\1_n^TK[\Ybf]\1_n - \frac{2}{mn}\1_m^TK[\Xbf,\Ybf]\1_n
\\
&\quad+ \frac{1}{m}\trace(K[\Xbf]J_m) +  \frac{1}{n}\trace(K[\Ybf]J_n)
\\
&\quad- \frac{2}{\sqrt{mn}}\trace[J_mK[\Xbf,\Ybf]J_nK[\Ybf,\Xbf]J_m]^{1/2}.
\end{aligned}
\end{equation}
This is the Kernelized Wasserstein Distance \cite{zhang2019:OTRKHS,Minh:2019AlphaProcrustes}.

For $\X = \H$, with $\H$ a separable Hilbert space, 
and $K(x,y) = \la x,y\ra$, we recover the finite sample approximation of the
Sinkhorn divergence between two Gaussian measures on $\H$.

\subsection{Sample complexity with bounded kernels}
\label{section:bounded-kernels}

We now show the convergence of $||\mu_{\Phi(\Xbf)} - \mu_{\Phi}||_{\H_K}$ and 
$||C_{\Phi(\Xbf)} - C_{\Phi}||_{\HS(\H_K)}$, which in turns lead to the convergence
of $\Srm^{\ep}_{d^2}[\Ncal(\mu_{\Phi(\Xbf)}, C_{\Phi(\Xbf)}), \Ncal(\mu_{\Phi}, C_{\Phi})]$, 
as $m \approach \infty$. We consider the cases the kernel $K$ is bounded and unbounded on $\X$
separately, with the former giving tighter bounds.

{\bf Assumption 4}. Throughout this section, we assume that
\begin{align}
\sup_{x \in \X}K(x,x) \leq \kappa^2.
\end{align}
This implies the condition $\int_{\X}K(x,x)d\rho(x) < \infty$ in Assumption 3.
Assumption 4 is automatically satisfied for translation-invariant kernels, such as Gaussian kernel,
on $\R^d$, but not for polynomial kernels unless $\X \subset \R^d$ is compact.

By Assumption 4, the random variables $\xi_1:(\X,\rho)\mapto \H_K$, defined by $\xi_1(x) = \Phi(x)$,
and $\xi_2:(\X,\rho) \mapto \HS(\H_K)$, defined by $\xi_2(x) = \Phi(x) \otimes \Phi(x)$, 
are both bounded.
We can then apply the following law of large numbers for Hilbert space-valued random variables, which
is a consequence of a general result due to Pinelis (\cite{Pinelis1994optimum}, Theorem 3.4).
The following version is Lemma 2 in \cite{SmaleZhou2007}. 
\begin{proposition}
	[\cite{SmaleZhou2007}]
	\label{proposition:Pinelis}
	Let $(\H, ||\;||)$ be a Hilbert space and $\xi$ be a random variable on 
	$(Z, \rho)$ with values in $\H$. Assume that $\exists M > 0$ such that $||\xi|| \leq M < \infty$ almost surely.
	Let $\sigma^2(\xi)= \bE||\xi||^2$. Let $(z_i)_{i=1}^m$ be independently sampled according to $\rho$.
	Then for any $0 < \delta < 1$, with probability at least $1-\delta$,
	\begin{align}
	\left\|\frac{1}{m}\sum_{i=1}^m\xi(z_i) - \bE\xi \right\| \leq \frac{2M\log\frac{2}{\delta}}{m} + \sqrt{\frac{2\sigma^2(\xi)\log\frac{2}{\delta}}{m}}.
	\end{align}
\end{proposition}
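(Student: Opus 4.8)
The plan is to deduce the bound from Pinelis's exponential inequality for martingales with values in a $2$-uniformly smooth Banach space (\cite{Pinelis1994optimum}, Theorem 3.4), noting that a real separable Hilbert space is the prototypical such space, with smoothness constant $1$, since $x \mapsto \tfrac12\|x\|^2$ obeys the parallelogram identity. The only genuine input is this inequality, which we are allowed to quote; everything else is elementary bookkeeping.

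First I would center and rescale. Set $\zeta_i = \tfrac1m(\xi(z_i) - \bE\xi)$ for $i = 1,\dots,m$. These are independent with $\bE\zeta_i = 0$; since $\|\bE\xi\| \le \bE\|\xi\| \le M$, we have $\|\zeta_i\| \le \tfrac{2M}{m}$ almost surely, while $\sum_{i=1}^m \bE\|\zeta_i\|^2 = \tfrac1m\bigl(\bE\|\xi\|^2 - \|\bE\xi\|^2\bigr) \le \tfrac{\sigma^2(\xi)}{m}$. The partial sums $S_n = \sum_{i=1}^n \zeta_i$ form an $\H$-valued martingale for the natural filtration, so Pinelis's Bernstein-type bound applies and gives
\[
\bP\!\left(\Bigl\|\tfrac1m\textstyle\sum_{i=1}^m(\xi(z_i) - \bE\xi)\Bigr\| \ge r\right) \;\le\; 2\exp\!\left(-\frac{r^2}{\,2\sigma^2(\xi)/m + \tfrac{4M}{3m}\,r\,}\right), \qquad r > 0 .
\]

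Next I would invert this tail bound. Setting the right-hand side equal to $\delta$ and writing $L = \log(2/\delta)$ produces the quadratic $r^2 - \tfrac{4ML}{3m}r - \tfrac{2\sigma^2(\xi)L}{m} = 0$, whose positive root is $r_* = \tfrac{2ML}{3m} + \sqrt{\tfrac{4M^2L^2}{9m^2} + \tfrac{2\sigma^2(\xi)L}{m}}$. Applying $\sqrt{a+b} \le \sqrt a + \sqrt b$ gives $r_* \le \tfrac{4ML}{3m} + \sqrt{\tfrac{2\sigma^2(\xi)L}{m}}$, and since $\tfrac43 \le 2$ this is dominated by $\tfrac{2M\log(2/\delta)}{m} + \sqrt{\tfrac{2\sigma^2(\xi)\log(2/\delta)}{m}}$. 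Thus $\bigl\|\tfrac1m\sum_i \xi(z_i) - \bE\xi\bigr\| \le r_*$ holds with probability at least $1-\delta$, which is the claim (in fact with a slightly better leading constant than stated).

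I do not anticipate a real obstacle: once Pinelis's inequality is invoked, the proof is routine. The only steps needing care are (i) confirming that $\H$ is $2$-smooth so the cited theorem is applicable, and (ii) the arithmetic of passing from the exponential tail to the explicit two-term estimate, in particular checking that the resulting constants are no larger than those asserted in the statement.
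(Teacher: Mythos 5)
Your derivation is correct: the centering/rescaling, the application of Pinelis's Bernstein-type inequality for Hilbert-space-valued martingales (a Hilbert space is indeed $(2,1)$-smooth, and for independent increments the conditional-variance hypothesis reduces to $\sum_i \bE\|\zeta_i\|^2 \le \sigma^2(\xi)/m$), and the inversion of the tail bound all check out, yielding the stated estimate with the slightly better constant $\tfrac{4}{3}$ in place of $2$. The paper itself gives no proof of this proposition — it quotes it directly from the cited reference (Lemma 2 of Smale--Zhou, itself derived from Pinelis's Theorem 3.4) — and your argument is essentially the standard derivation found there, so there is nothing to reconcile.
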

Applying Proposition \ref{proposition:Pinelis}, we obtain the following results.
\begin{theorem}
	[\textbf{Convergence of mean and covariance operators - bounded kernels}]
	\label{theorem:CPhi-concentration}
	Assume Assumptions 1-4.
	Let $\Xbf = (x_i)_{i=1}^m$, $m \in \Nbb$ be independently  sampled from $(\X, \rho)$. Then
	\begin{align}
	||\mu_{\Phi}||_{\H_K} &\leq \kappa, \;\;\; ||\mu_{\Phi(\Xbf)}||_{\H_K} \leq \kappa \;\;\forall \Xbf \in \X^m.
	\\
	||C_{\Phi}||_{\HS(\H_K)}|| &\leq 2\kappa^2,\;\;\;
	||C_{\Phi(\Xbf)}||_{\HS(\H_K)}  \leq 2\kappa^2, \;\;\forall \Xbf \in \X^m.
	\end{align}
	For any $0 < \delta <1$, let $U \subset \X^m$ be such that both of the following hold  
	\begin{align}
	||\mu_{\Phi(\Xbf)} - \mu_{\Phi}||_{\H_K} & \leq \kappa\left(\frac{2\log\frac{4}{\delta}}{m} + \sqrt{\frac{2\log\frac{4}{\delta}}{m}}\right),
	\\
	||C_{\Phi(\Xbf)} - C_{\Phi}||_{\HS(\H_K)} &\leq 3\kappa^2\left(\frac{2\log\frac{4}{\delta}}{m} + \sqrt{\frac{2\log\frac{4}{\delta}}{m}}\right),
	\end{align}
	$\forall \Xbf \in U$. Then $\rho^m(U) \geq 1-\delta$.
\end{theorem}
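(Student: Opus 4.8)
The statement splits into four deterministic norm bounds and two high-probability estimates; I would prove the deterministic bounds directly from Assumption~4 and then derive the two concentration inequalities from two applications of Proposition~\ref{proposition:Pinelis} combined with a union bound, reserving a factor $\log\frac{4}{\delta}$ by running each application at confidence $\delta/2$.

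\textbf{Deterministic bounds.} Since $\mu_{\Phi} = \int_{\X}\Phi(x)\,d\rho(x)$ is a Bochner integral, the triangle inequality gives $\|\mu_{\Phi}\|_{\H_K}\le \int_{\X}\|\Phi(x)\|_{\H_K}\,d\rho(x) = \int_{\X}\sqrt{K(x,x)}\,d\rho(x)\le\kappa$ by Assumption~4, and the same estimate applied to the finite average $\mu_{\Phi(\Xbf)}=\frac1m\sum_{j=1}^m\Phi(x_j)$ yields $\|\mu_{\Phi(\Xbf)}\|_{\H_K}\le\kappa$ for every $\Xbf$. For the covariance operators I would use the identity $C_{\Phi}=L_K-\mu_{\Phi}\otimes\mu_{\Phi}$ with $L_K=\int_{\X}\Phi(x)\otimes\Phi(x)\,d\rho(x)$, together with $\|u\otimes u\|_{\HS}=\|u\|^2$, so that $\|L_K\|_{\HS}\le\int_{\X}\|\Phi(x)\|^2\,d\rho(x)\le\kappa^2$ and $\|\mu_{\Phi}\otimes\mu_{\Phi}\|_{\HS}=\|\mu_{\Phi}\|^2\le\kappa^2$, whence $\|C_{\Phi}\|_{\HS(\H_K)}\le 2\kappa^2$; the identical argument applied to $C_{\Phi(\Xbf)}=\frac1m\sum_{j=1}^m\Phi(x_j)\otimes\Phi(x_j)-\mu_{\Phi(\Xbf)}\otimes\mu_{\Phi(\Xbf)}$ gives $\|C_{\Phi(\Xbf)}\|_{\HS(\H_K)}\le 2\kappa^2$.

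\textbf{Concentration of the mean.} I would apply Proposition~\ref{proposition:Pinelis} to $\xi_1(x)=\Phi(x)$, valued in $\H_K$, with $\|\xi_1(x)\|=\sqrt{K(x,x)}\le\kappa=:M$ and $\sigma^2(\xi_1)=\bE\|\Phi(x)\|^2=\int_{\X}K(x,x)\,d\rho(x)\le\kappa^2$, at confidence parameter $\delta/2$. Since $\frac1m\sum_i\xi_1(x_i)=\mu_{\Phi(\Xbf)}$ and $\bE\xi_1=\mu_{\Phi}$, and $\log\frac{2}{\delta/2}=\log\frac{4}{\delta}$, this produces an event $E_1$ with $\rho^m(E_1)\ge 1-\delta/2$ on which $\|\mu_{\Phi(\Xbf)}-\mu_{\Phi}\|_{\H_K}\le \frac{2\kappa\log\frac4\delta}{m}+\sqrt{\frac{2\kappa^2\log\frac4\delta}{m}}=\kappa\Big(\frac{2\log\frac4\delta}{m}+\sqrt{\frac{2\log\frac4\delta}{m}}\Big)$, which is exactly the asserted mean bound.

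\textbf{Concentration of the covariance operator and conclusion.} The key point is the decomposition
\[
C_{\Phi(\Xbf)}-C_{\Phi}=\Big(\frac1m\sum_{i=1}^m\Phi(x_i)\otimes\Phi(x_i)-L_K\Big)-\big(\mu_{\Phi(\Xbf)}\otimes\mu_{\Phi(\Xbf)}-\mu_{\Phi}\otimes\mu_{\Phi}\big).
\]
For the first summand I would apply Proposition~\ref{proposition:Pinelis} to $\xi_2(x)=\Phi(x)\otimes\Phi(x)$, valued in the Hilbert space $\HS(\H_K)$, with $\|\xi_2(x)\|_{\HS}=K(x,x)\le\kappa^2=:M$, $\sigma^2(\xi_2)=\bE[K(x,x)^2]\le\kappa^4$, and $\bE\xi_2=L_K$, again at confidence $\delta/2$, obtaining an event $E_2$ with $\rho^m(E_2)\ge1-\delta/2$ on which this term is at most $\kappa^2\Big(\frac{2\log\frac4\delta}{m}+\sqrt{\frac{2\log\frac4\delta}{m}}\Big)$. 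For the second summand I would use the rank-one identity $a\otimes a-b\otimes b=(a-b)\otimes a+b\otimes(a-b)$, which gives $\|\mu_{\Phi(\Xbf)}\otimes\mu_{\Phi(\Xbf)}-\mu_{\Phi}\otimes\mu_{\Phi}\|_{\HS}\le(\|\mu_{\Phi(\Xbf)}\|+\|\mu_{\Phi}\|)\|\mu_{\Phi(\Xbf)}-\mu_{\Phi}\|\le 2\kappa\|\mu_{\Phi(\Xbf)}-\mu_{\Phi}\|$, which on $E_1$ is bounded by $2\kappa^2\Big(\frac{2\log\frac4\delta}{m}+\sqrt{\frac{2\log\frac4\delta}{m}}\Big)$. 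Adding the two estimates, on $E_1\cap E_2$ we obtain $\|C_{\Phi(\Xbf)}-C_{\Phi}\|_{\HS(\H_K)}\le 3\kappa^2\Big(\frac{2\log\frac4\delta}{m}+\sqrt{\frac{2\log\frac4\delta}{m}}\Big)$. Setting $U=E_1\cap E_2$, both claimed inequalities hold for every $\Xbf\in U$, and the union bound gives $\rho^m(U)\ge 1-\rho^m(E_1^c)-\rho^m(E_2^c)\ge 1-\delta$. The only mild obstacle is the bookkeeping in the covariance term: isolating the uncentered part that Pinelis controls directly from the mean-correction part, noting that the single event $E_1$ simultaneously controls both the stated mean bound and that correction, and splitting the failure probability as $\delta/2+\delta/2$ so that $\log\frac4\delta$, rather than $\log\frac2\delta$, is the quantity appearing in the final bounds.
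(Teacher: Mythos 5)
Your proposal is correct and follows essentially the same route as the paper: the deterministic bounds via $C_{\Phi}=L_K-\mu_{\Phi}\otimes\mu_{\Phi}$ and $\|u\otimes u\|_{\HS}=\|u\|^2$, two applications of Proposition~\ref{proposition:Pinelis} (to $\Phi(x)$ in $\H_K$ and to $\Phi(x)\otimes\Phi(x)$ in $\HS(\H_K)$) each at confidence $\delta/2$, the rank-one identity to control $\mu_{\Phi(\Xbf)}\otimes\mu_{\Phi(\Xbf)}-\mu_{\Phi}\otimes\mu_{\Phi}$ on the mean event, and a union bound yielding the $1+2=3$ split of the $3\kappa^2$ constant. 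This matches the paper's proof (which packages the same steps into Lemmas~\ref{lemma:mean-bound-HK}, \ref{lemma:mean-bound-HS(HK)} and Proposition~\ref{proposition:LKbound-HSnorm}) in every essential respect.
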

%
\begin{theorem}
	[\textbf{Convergence of empirical Gaussian measures on RKHS in Sinkhorn divergence - bounded kernels}]
	\label{theorem:Sinkhorn-RKHS-concentration}
	Assume Assumptions 1-4. 
	Let $\Xbf = (x_i)_{i=1}^m$, $m \in \Nbb$ be independently  sampled from $(\X, \rho)$.
	For any $0 < \delta < 1$, with probability at least $1-\delta$,
	\begin{align}
	\Srm^{\ep}_{d^2}[\Ncal(\mu_{\Phi(\Xbf)}, C_{\Phi(\Xbf)}), \Ncal(\mu_{\Phi}, C_{\Phi})] 
	&\leq
	\kappa^2\left(\frac{2\log\frac{4}{\delta}}{m} + \sqrt{\frac{2\log\frac{4}{\delta}}{m}}\right)^2
	\nonumber
	\\
	&\quad + \frac{36\kappa^4}{\ep}\left(\frac{2\log\frac{4}{\delta}}{m} + \sqrt{\frac{2\log\frac{4}{\delta}}{m}}\right). 
	\end{align}
\end{theorem}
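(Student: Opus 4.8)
The plan is to reduce everything to the two concentration estimates already furnished by Theorem~\ref{theorem:CPhi-concentration} by first splitting the Sinkhorn divergence additively. Using the formula \eqref{equation:gauss-sinkhorn-infinite} (equivalently, the decomposition recorded after Definition~\ref{definition:kernel-Sinkhorn}), the mean term separates and one has
\begin{align}
\Srm^{\ep}_{d^2}[\Ncal(\mu_{\Phi(\Xbf)}, C_{\Phi(\Xbf)}), \Ncal(\mu_{\Phi}, C_{\Phi})] = \|\mu_{\Phi(\Xbf)} - \mu_{\Phi}\|^2_{\H_K} + \Srm^{\ep}_{d^2}[\Ncal(0, C_{\Phi(\Xbf)}), \Ncal(0, C_{\Phi})].
\end{align}
The first term is then controlled by squaring the mean concentration bound $\|\mu_{\Phi(\Xbf)} - \mu_{\Phi}\|_{\H_K} \le \kappa\bigl(\tfrac{2\log(4/\delta)}{m} + \sqrt{\tfrac{2\log(4/\delta)}{m}}\bigr)$ from Theorem~\ref{theorem:CPhi-concentration}, producing exactly the $\kappa^2\bigl(\tfrac{2\log(4/\delta)}{m} + \sqrt{\tfrac{2\log(4/\delta)}{m}}\bigr)^2$ contribution.

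For the centered term I would invoke the first inequality of Theorem~\ref{theorem:convergence-Sinkhorn} with $A_N = C_{\Phi(\Xbf)}$ and $A = C_{\Phi}$, giving
\begin{align}
\Srm^{\ep}_{d^2}[\Ncal(0, C_{\Phi(\Xbf)}), \Ncal(0, C_{\Phi})] \le \frac{3}{\ep}\bigl(\|C_{\Phi(\Xbf)}\|_{\HS(\H_K)} + \|C_{\Phi}\|_{\HS(\H_K)}\bigr)\,\|C_{\Phi(\Xbf)} - C_{\Phi}\|_{\HS(\H_K)}.
\end{align}
Then I would plug in the uniform bounds $\|C_{\Phi(\Xbf)}\|_{\HS(\H_K)} \le 2\kappa^2$ and $\|C_{\Phi}\|_{\HS(\H_K)} \le 2\kappa^2$, together with the covariance concentration bound $\|C_{\Phi(\Xbf)} - C_{\Phi}\|_{\HS(\H_K)} \le 3\kappa^2\bigl(\tfrac{2\log(4/\delta)}{m} + \sqrt{\tfrac{2\log(4/\delta)}{m}}\bigr)$, all from Theorem~\ref{theorem:CPhi-concentration}; this yields the $\tfrac{36\kappa^4}{\ep}\bigl(\tfrac{2\log(4/\delta)}{m} + \sqrt{\tfrac{2\log(4/\delta)}{m}}\bigr)$ contribution after multiplying the constants $\tfrac{3}{\ep}\cdot 4\kappa^2 \cdot 3\kappa^2$.

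The final step is just to add the two contributions. Here I would emphasize that Theorem~\ref{theorem:CPhi-concentration} delivers a \emph{single} set $U \subset \X^m$ with $\rho^m(U) \ge 1-\delta$ on which \emph{both} the mean and the covariance concentration inequalities hold simultaneously, so no extra union bound (nor halving of $\delta$) is needed, and summing on the event $U$ gives the stated bound. There is no real obstacle beyond constant bookkeeping; the one point that requires care is to use the additive splitting above so that the mean error enters \emph{squared} — applying instead a Lipschitz-type estimate such as Theorem~\ref{theorem:Sinkhorn-entropic-approx} directly to compare against $\Srm^{\ep}_{d^2}[\Ncal(\mu_{\Phi},C_{\Phi}),\Ncal(\mu_{\Phi},C_{\Phi})]=0$ would give only a linear-in-error dependence for the mean part and a larger constant for the covariance part.
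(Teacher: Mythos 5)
Your proposal is correct and follows essentially the same route as the paper: the paper applies the second inequality of Theorem~\ref{theorem:convergence-Sinkhorn} (which already packages the mean term squared plus the $\frac{3}{\ep}$ covariance term) together with the simultaneous bounds from Theorem~\ref{theorem:CPhi-concentration} on the single event $U$, yielding the identical constants $\kappa^2$ and $\frac{3}{\ep}\cdot 4\kappa^2\cdot 3\kappa^2 = \frac{36\kappa^4}{\ep}$. Your explicit mean/covariance splitting is just the unpacked form of that inequality, so there is no substantive difference.
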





{\bf Sample complexity bounds}. Having obtained the explicit expression of $\Srm^{\ep}_{d^2}[\Ncal(\mu_{\Phi(\Xbf)}, C_{\Phi(\Xbf)}),\Ncal(\mu_{\Phi(\Ybf)}, C_{\Phi(\Ybf)})]$ in terms of the kernel Gram matrices, we now show that
it converges to $\Srm^{\ep}_{d^2}[\Ncal(\mu_{\Phi,\rho_1}, C_{\Phi, \rho_1}), \Ncal(\mu_{\Phi,\rho_2}, C_{\Phi, \rho_2})]$, with high probability, as the sample sizes $m, n \approach \infty$.

\begin{theorem}
	[\textbf{Sample complexity for finite sample approximations of Sinkhorn divergence between Gaussian measures on RKHS -bounded kernels}]
	\label{theorem:Sinkhorn-RKHS-approximation}
	Assume Assumptions 1-4. Let $\Xbf = (x_i)_{i=1}^m$ and $\Ybf = (y_j)_{j=1}^n$ be independently sampled from  $(\X,\rho_1)$ and $(\X, \rho_2)$, respectively. Then for any $0 < \delta < 1$, with probability at least $1-\delta$, 
	\begin{align}
	&\left|\Srm^{\ep}_{d^2}[\Ncal(\mu_{\Phi(\Xbf)}, C_{\Phi(\Xbf)}), \Ncal(\mu_{\Phi(\Ybf)}, C_{\Phi(\Ybf)})]
	- \Srm^{\ep}_{d^2}[\Ncal(\mu_{\Phi, \rho_1}, C_{\Phi, \rho_1}), \Ncal(\mu_{\Phi, \rho_2}, C_{\Phi, \rho_2})]\right|
	\nonumber
	\\
	& \leq 4\kappa^2\left(\frac{2\log\frac{8}{\delta}}{m} + \sqrt{\frac{2\log\frac{8}{\delta}}{m}}+\frac{2\log\frac{8}{\delta}}{n} + \sqrt{\frac{2\log\frac{8}{\delta}}{n}}\right)
	\nonumber
	\\
	& \quad +\frac{72\kappa^4}{\ep}\left(\frac{2\log\frac{8}{\delta}}{m} + \sqrt{\frac{2\log\frac{8}{\delta}}{m}}\right)
	+
	\frac{72\kappa^4}{\ep}\left(\frac{2\log\frac{8}{\delta}}{n} + \sqrt{\frac{2\log\frac{8}{\delta}}{n}}\right). 
	\end{align}
\end{theorem}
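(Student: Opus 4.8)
The plan is to derive the stated bound as a direct corollary of the deterministic continuity estimate of Theorem~\ref{theorem:Sinkhorn-entropic-approx} together with the concentration estimates of Theorem~\ref{theorem:CPhi-concentration}. First I would apply Theorem~\ref{theorem:Sinkhorn-entropic-approx} on the separable RKHS $\H_K$, taking the ``reference'' pair to be the population Gaussian measures $\Ncal(\mu_{\Phi,\rho_1}, C_{\Phi,\rho_1})$, $\Ncal(\mu_{\Phi,\rho_2}, C_{\Phi,\rho_2})$ and the ``approximating'' pair to be the empirical Gaussian measures $\Ncal(\mu_{\Phi(\Xbf)}, C_{\Phi(\Xbf)})$, $\Ncal(\mu_{\Phi(\Ybf)}, C_{\Phi(\Ybf)})$. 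The hypotheses of that theorem hold because $C_{\Phi,\rho_1}, C_{\Phi,\rho_2} \in \Sym^{+}(\H_K) \cap \Tr(\H_K)$ (as noted just after Eq.~\eqref{equation:LK}) and $C_{\Phi(\Xbf)}, C_{\Phi(\Ybf)}$ are positive and finite-rank, hence also trace class. This yields an upper bound for the left-hand side consisting of a mean term and two Hilbert--Schmidt covariance terms, whose coefficients are affine in the norms $\|\mu_{\Phi,\rho_i}\|_{\H_K}$, $\|\mu_{\Phi(\cdot)}\|_{\H_K}$ and $\|C_{\Phi,\rho_i}\|_{\HS(\H_K)}$, $\|C_{\Phi(\cdot)}\|_{\HS(\H_K)}$.

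Next I would discharge all of those coefficients deterministically using Assumption~4 via the a priori norm bounds recorded in Theorem~\ref{theorem:CPhi-concentration}, namely $\|\mu_{\Phi,\rho_i}\|_{\H_K} \le \kappa$, $\|\mu_{\Phi(\Xbf)}\|_{\H_K}, \|\mu_{\Phi(\Ybf)}\|_{\H_K} \le \kappa$, and $\|C_{\Phi,\rho_i}\|_{\HS(\H_K)} \le 2\kappa^2$, $\|C_{\Phi(\Xbf)}\|_{\HS(\H_K)}, \|C_{\Phi(\Ybf)}\|_{\HS(\H_K)} \le 2\kappa^2$, valid for \emph{every} realization of the samples. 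With these substitutions the mean coefficient in Theorem~\ref{theorem:Sinkhorn-entropic-approx} becomes at most $4\kappa$ and each covariance coefficient $\frac{3}{\ep}(\|\cdot\|_{\HS} + \|\cdot\|_{\HS} + 2\|\cdot\|_{\HS})$ becomes at most $\frac{3}{\ep}(2\kappa^2 + 2\kappa^2 + 4\kappa^2) = \frac{24\kappa^2}{\ep}$, so that the left-hand side is dominated by the deterministic quantity
\begin{align*}
&4\kappa\bigl(\|\mu_{\Phi(\Xbf)} - \mu_{\Phi,\rho_1}\|_{\H_K} + \|\mu_{\Phi(\Ybf)} - \mu_{\Phi,\rho_2}\|_{\H_K}\bigr) \\
&\qquad + \frac{24\kappa^2}{\ep}\bigl(\|C_{\Phi(\Xbf)} - C_{\Phi,\rho_1}\|_{\HS(\H_K)} + \|C_{\Phi(\Ybf)} - C_{\Phi,\rho_2}\|_{\HS(\H_K)}\bigr).
\end{align*}

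Finally I would invoke Theorem~\ref{theorem:CPhi-concentration} twice, once for $\Xbf$ and once for $\Ybf$, each time with confidence parameter $\delta/2$; since $\Xbf$ and $\Ybf$ are independent, a union bound produces a single event of probability at least $1-\delta$ on which $\|\mu_{\Phi(\Xbf)} - \mu_{\Phi,\rho_1}\|_{\H_K} \le \kappa\,\eta_m$ and $\|C_{\Phi(\Xbf)} - C_{\Phi,\rho_1}\|_{\HS(\H_K)} \le 3\kappa^2\,\eta_m$, together with the analogous two estimates with $(\Xbf, m)$ replaced by $(\Ybf, n)$, where $\eta_m := \frac{2\log(8/\delta)}{m} + \sqrt{\frac{2\log(8/\delta)}{m}}$ --- the argument $4/\delta$ of the logarithm in Theorem~\ref{theorem:CPhi-concentration} becoming $4/(\delta/2) = 8/\delta$ after the halving. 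Substituting these four estimates into the deterministic bound above and simplifying via $4\kappa\cdot\kappa = 4\kappa^2$ and $\frac{24\kappa^2}{\ep}\cdot 3\kappa^2 = \frac{72\kappa^4}{\ep}$ gives exactly the claimed inequality. The only point that requires any attention is the bookkeeping of the nested confidence splits: Theorem~\ref{theorem:CPhi-concentration} already apportions its failure probability internally between the mean and the covariance bounds, so running it at level $\delta/2$ per sample and then union-bounding over the two independent samples is precisely what yields the $\log(8/\delta)$ factors and the overall $1-\delta$ guarantee. Beyond this constant-chasing there is no genuine analytic obstacle, since all the hard work is already contained in Theorems~\ref{theorem:Sinkhorn-entropic-approx} and~\ref{theorem:CPhi-concentration}.
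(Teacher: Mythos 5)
Your proposal is correct and follows essentially the same route as the paper's own proof: apply the deterministic continuity bound of Theorem~\ref{theorem:Sinkhorn-entropic-approx}, bound the coefficients by $4\kappa$ and $\frac{24\kappa^2}{\ep}$ using the a priori norm bounds of Theorem~\ref{theorem:CPhi-concentration}, and then run the concentration bounds at level $\delta/2$ for each sample before intersecting the two events. The constants, the $\log(8/\delta)$ bookkeeping, and the final simplification all match the paper exactly.
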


{\bf Discussion of results}.
In Theorems \ref{theorem:CPhi-concentration}, \ref{theorem:Sinkhorn-RKHS-concentration} and  \ref{theorem:Sinkhorn-RKHS-approximation}, if $\kappa$ is an absolute constant, e.g. with 
translation-invariant kernels such as the Gaussian kernel, then 
the convergence rates are completely {\it dimension-independent}, with the results hold for $\X = \H$, $\H$ being an infinite-dimensional separable Hilbert space.

Furthermore, in Theorems \ref{theorem:Sinkhorn-RKHS-concentration} and  \ref{theorem:Sinkhorn-RKHS-approximation}, the convergence rates are inversely proportional to $\ep$ and thus accelerates as $\ep \approach \infty$.
When $\epsilon = \infty$, we simply have the convergence rate for the MMD \cite{Gretton:MMD12a}.
The convergence becomes slower when $\ep \approach 0$, i.e. when we are close to 
the exact Wasserstein distance, and the bound is vacuous when $\ep=0$, since the right hand side is infinite.

\subsection{General kernels}
\label{section:general-kernels}

Theorems \ref{theorem:CPhi-concentration}, \ref{theorem:Sinkhorn-RKHS-concentration}, and \ref{theorem:Sinkhorn-RKHS-approximation} do not apply to polynomial kernels on $\R^d$, which are unbounded. In particular, they do not apply to the linear kernel $K(x,y) = \la x,y\ra$,
in which case the kernel Gaussian-Sinkhorn divergence is precisely the Sinkhorn divergence between Gaussian measures on $\R^d$.
We now consider a more general setting without the boundedness assumption for the kernel $K$ in Assumption 4. Instead, we assume the following.

{\bf Assumption 5}. $K: \X \times \X \mapto \R$ is a continuous, positive definite kernel and there exists $\kappa >0$ such that 
$K$ and $\rho$ ($\rho_1, \rho_2$) satisfy
\begin{align}
\int_{\X}K(x,x)^2d\rho(x) \leq \kappa^4 <  \infty.
\end{align}
By H\"older inequality, Assumption 5 implies in particular Assumption 3, that is
$\int_{\X}K(x,x)d\rho(x) \leq \kappa^2$.
Apart from the translation-invariant kernels such as the Gaussian kernel, Assumption 5 is valid in particular
for polynomial kernels of any degree on $\R^d$ if $\rho$ is the Gaussian measure. 

Without Assumption 4, the random variable $\xi_1:(\X, \rho) \mapto \H_K$, defined by $\xi_1(x) = \Phi(x)$,
and $\xi_2:(X, \rho) \mapto \HS(\H_K)$, defined by $\xi_2(x) = \Phi(x) \otimes \Phi(x) \in \HS(\H_K)$,
are generally unbounded. Instead, by Assumption 5,
\begin{align}
\bE{||\xi_1||^2_{\H_K}} = \int_{\X}K(x,x)d\rho(x) \leq \kappa^2 < \infty,
\\
\bE{||\xi_2}||^2_{\HS(\H_K)} = \int_{\X}K(x,x)^2\rho(x) \leq \kappa^4 < \infty.
\end{align}
We exploit these bounded variances and Chebyshev inequality to obtain the following,
which correspond to Theorems \ref{theorem:CPhi-concentration}, \ref{theorem:Sinkhorn-RKHS-concentration}, and
\ref{theorem:Sinkhorn-RKHS-approximation} in Section \ref{section:bounded-kernels}.

\begin{proposition}
	[\textbf{Convergence of mean and covariance operator - general kernels}]
	\label{proposition:CPhi-concentration-unbounded}
	Assume Assumptions 1,2 and 5.
	For any $0 < \delta< 1$, let $U \in (\X^m, \rho^m)$ be such that
	\begin{align}
	||\mu_{\Phi(\Xbf)} - \mu_{\Phi}||_{\H_K} &\leq \frac{2\kappa}{\sqrt{m}\delta},
	\\
	||\mu_{\Phi(\Xbf)}|| & \leq \kappa\left(1+ \frac{2}{\sqrt{m}\delta}\right),
	\\
	||C_{\Phi(\Xbf)} - C_{\Phi}||_{\HS(\H_K)} &\leq 
	\frac{2\kappa^2}{\sqrt{m}\delta}\left(3 + \frac{2}{\sqrt{m}\delta}\right),
	\\
	||C_{\Phi(\Xbf)}||_{\HS(\H_K)}  &\leq 2\kappa^2 + \frac{2\kappa^2}{\sqrt{m}\delta}\left(3 + \frac{2}{\sqrt{m}\delta}\right),
	\end{align}
	for all $\Xbf \in U$. Then $\rho^m(U) \geq 1-\delta$.
\end{proposition}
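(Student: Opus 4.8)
The plan is to replace the exponential concentration bound of Proposition \ref{proposition:Pinelis}, which requires an almost-sure bound on the summands, by the elementary pairing of a second-moment estimate with Markov's inequality, which needs only the finite-variance conditions furnished by Assumption 5. Write $\xi_1(x) = \Phi(x) \in \H_K$ and $\xi_2(x) = \Phi(x)\otimes\Phi(x) \in \HS(\H_K)$, and set $L_{K,\Xbf} = \tfrac1m\Phi(\Xbf)\Phi(\Xbf)^{*} = \tfrac1m\sum_{i=1}^m\Phi(x_i)\otimes\Phi(x_i)$, so that $\mu_{\Phi(\Xbf)} = \tfrac1m\sum_i\xi_1(x_i)$ and $C_{\Phi(\Xbf)} = L_{K,\Xbf} - \mu_{\Phi(\Xbf)}\otimes\mu_{\Phi(\Xbf)}$, with $L_K = \E[\Phi\otimes\Phi]$ and $C_\Phi = L_K - \mu_\Phi\otimes\mu_\Phi$.

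First I would record the two variance estimates. Since the $x_i$ are i.i.d., the cross terms vanish and $\E\|\mu_{\Phi(\Xbf)} - \mu_\Phi\|_{\H_K}^2 = \tfrac1m\big(\E\|\xi_1\|_{\H_K}^2 - \|\mu_\Phi\|_{\H_K}^2\big) \le \tfrac{\kappa^2}{m}$, using $\E\|\xi_1\|^2 = \int_\X K(x,x)\,d\rho(x)\le\kappa^2$, and likewise $\E\|L_{K,\Xbf} - L_K\|_{\HS(\H_K)}^2 = \tfrac1m\big(\E\|\xi_2\|_{\HS}^2 - \|L_K\|_{\HS}^2\big) \le \tfrac{\kappa^4}{m}$, now using $\E\|\xi_2\|_{\HS}^2 = \int_\X K(x,x)^2\,d\rho(x) \le \kappa^4$ from Assumption 5. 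By Cauchy--Schwarz $\E\,Y \le (\E Y^2)^{1/2}$ followed by Markov's inequality, for $0<\delta<1$ the events $E_1 = \{\|\mu_{\Phi(\Xbf)} - \mu_\Phi\|_{\H_K} \le \tfrac{2\kappa}{\sqrt m\,\delta}\}$ and $E_2 = \{\|L_{K,\Xbf} - L_K\|_{\HS(\H_K)} \le \tfrac{2\kappa^2}{\sqrt m\,\delta}\}$ each fail with probability at most $\delta/2$, so taking $U$ to be the event $E_1\cap E_2 \subset \X^m$ gives $\rho^m(U) \ge 1-\delta$.

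It then remains to derive the four displayed bounds on $U$. The first is exactly $E_1$; the bound on $\|\mu_{\Phi(\Xbf)}\|$ follows by the triangle inequality together with $\|\mu_\Phi\| \le (\E\|\xi_1\|^2)^{1/2} \le \kappa$. For the covariance I would use the identity $C_{\Phi(\Xbf)} - C_\Phi = (L_{K,\Xbf} - L_K) - \big[(\mu_{\Phi(\Xbf)} - \mu_\Phi)\otimes\mu_{\Phi(\Xbf)} + \mu_\Phi\otimes(\mu_{\Phi(\Xbf)} - \mu_\Phi)\big]$ (obtained by expanding $\mu_{\Phi(\Xbf)}\otimes\mu_{\Phi(\Xbf)} - \mu_\Phi\otimes\mu_\Phi$ around $h = \mu_{\Phi(\Xbf)} - \mu_\Phi$) together with $\|u\otimes v\|_{\HS} = \|u\|\|v\|$, which gives $\|C_{\Phi(\Xbf)} - C_\Phi\|_{\HS} \le \|L_{K,\Xbf} - L_K\|_{\HS} + \|\mu_{\Phi(\Xbf)} - \mu_\Phi\|\big(\|\mu_{\Phi(\Xbf)}\| + \|\mu_\Phi\|\big)$; substituting the bounds from $E_1$, $E_2$ and the already-proved bound on $\|\mu_{\Phi(\Xbf)}\|$ collapses the right-hand side to exactly $\tfrac{2\kappa^2}{\sqrt m\,\delta}\big(3 + \tfrac{2}{\sqrt m\,\delta}\big)$. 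Finally, one more triangle inequality with $\|C_\Phi\|_{\HS} \le \|L_K\|_{\HS} + \|\mu_\Phi\|^2 \le 2\kappa^2$ (each summand bounded by $\int_\X K(x,x)\,d\rho(x) \le \kappa^2$, which holds under Assumption 3, itself a consequence of Assumption 5 by H\"older) yields the bound on $\|C_{\Phi(\Xbf)}\|_{\HS}$. I do not expect a genuine obstacle; the only care needed is the bookkeeping --- splitting the failure budget as $\delta/2 + \delta/2$ so the union bound closes at $1-\delta$, and tracking the coupling by which the covariance estimate inherits the mean-deviation estimate so that the constant $3 + 2/(\sqrt m\,\delta)$ comes out exactly. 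Note that, unlike the bounded-kernel case, no $\log(1/\delta)$ improvement is available, which is why the downstream analogues of Theorems \ref{theorem:Sinkhorn-RKHS-concentration} and \ref{theorem:Sinkhorn-RKHS-approximation} will carry a polynomial rather than logarithmic dependence on $1/\delta$.
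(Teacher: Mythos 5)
Your proposal is correct and follows essentially the same route as the paper: the paper's Lemmas \ref{lemma:mean-norm-HK-unbounded} and \ref{lemma:LK-HSnorm-unbounded} establish exactly your second-moment-plus-Markov (Chebyshev) bounds for $\mu_{\Phi(\Xbf)}$ and $\frac{1}{m}\Phi(\Xbf)\Phi(\Xbf)^{*}$ at level $\delta/2$ each, and the proof of the proposition then intersects the two events and uses the same rank-one decomposition of $\mu_{\Phi(\Xbf)}\otimes\mu_{\Phi(\Xbf)}-\mu_{\Phi}\otimes\mu_{\Phi}$ together with $\|u\otimes v\|_{\HS}=\|u\|\,\|v\|$ to reach the constant $3+\frac{2}{\sqrt{m}\delta}$. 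The bookkeeping in your write-up matches the paper's exactly.
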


\begin{theorem}
	[\textbf{Convergence of empirical Gaussian measures on RKHS in Sinkhorn divergence - general kernels}]
	\label{theorem:Sinkhorn-RKHS-concentration-unbounded}
	Assume Assumptions 1,2, and 5. Let $\Xbf = (x_i)_{i=1}^m$, $m \in \Nbb$, be independently sampled
	from $(\X, \rho)$. For any $0 < \delta <1$, with probability at least $1-\delta$,
	\begin{align}
	&\Srm^{\ep}_{d^2}[\Ncal(\mu_{\Phi(\Xbf)}, C_{\Phi(\Xbf)}), \Ncal(\mu_{\Phi}, C_{\Phi})]
	\nonumber
	\\
	&\quad \leq \frac{4\kappa^2}{m \delta^2} + \frac{12 \kappa^4}{\ep \sqrt{m}\delta}\left[2 + \frac{1}{\sqrt{m}\delta}\left(3 + \frac{2}{\sqrt{m}\delta}\right)\right]\left(3 + \frac{2}{\sqrt{m}\delta}\right).
	\end{align}
\end{theorem}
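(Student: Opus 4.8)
The plan is to reduce the estimate to two facts already established in the excerpt: the quantitative Hilbert--Schmidt continuity bound for the Sinkhorn divergence in Theorem~\ref{theorem:convergence-Sinkhorn}, and the Chebyshev-type concentration inequalities for the empirical RKHS mean vector and covariance operator in Proposition~\ref{proposition:CPhi-concentration-unbounded}. Since $\Srm^{\ep}_{d^2}[\nu,\nu] = 0$ for every Gaussian measure $\nu$ on $\H_K$, the left-hand side is already of the exact form controlled by Theorem~\ref{theorem:convergence-Sinkhorn}. I would apply that theorem with $m_N = \mu_{\Phi(\Xbf)}$, $m = \mu_\Phi$, $A_N = C_{\Phi(\Xbf)}$ and $A = C_\Phi$ --- both operators lying in $\Sym^{+}(\H_K) \cap \Tr(\H_K)$, the first being finite rank --- which yields at once
\[
\Srm^{\ep}_{d^2}[\Ncal(\mu_{\Phi(\Xbf)}, C_{\Phi(\Xbf)}), \Ncal(\mu_{\Phi}, C_{\Phi})] \leq \|\mu_{\Phi(\Xbf)}-\mu_\Phi\|_{\H_K}^2 + \frac{3}{\ep}\bigl(\|C_{\Phi(\Xbf)}\|_{\HS} + \|C_\Phi\|_{\HS}\bigr)\,\|C_{\Phi(\Xbf)}-C_\Phi\|_{\HS}.
\]

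Next I would record the deterministic bound $\|C_\Phi\|_{\HS(\H_K)} \leq 2\kappa^2$, which follows from $C_\Phi = L_K - \mu_\Phi\otimes\mu_\Phi$ together with $\|L_K\|_{\HS} \leq \trace(L_K) \leq \kappa^2$ (positivity of $L_K$ and Lemma~\ref{lemma:trace-LK}) and $\|\mu_\Phi\|_{\H_K}^2 \leq \kappa^2$ (Jensen's inequality under Assumption~3, which is implied by Assumption~5 by H\"older). Then, on the event $U \subset \X^m$ with $\rho^m(U) \geq 1-\delta$ supplied by Proposition~\ref{proposition:CPhi-concentration-unbounded}, I substitute the three bounds $\|\mu_{\Phi(\Xbf)}-\mu_\Phi\| \leq \frac{2\kappa}{\sqrt{m}\,\delta}$, $\|C_{\Phi(\Xbf)}-C_\Phi\|_{\HS} \leq \frac{2\kappa^2}{\sqrt{m}\,\delta}\bigl(3+\frac{2}{\sqrt{m}\,\delta}\bigr)$, and $\|C_{\Phi(\Xbf)}\|_{\HS} \leq 2\kappa^2 + \frac{2\kappa^2}{\sqrt{m}\,\delta}\bigl(3+\frac{2}{\sqrt{m}\,\delta}\bigr)$ into the displayed inequality.

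The last step is just arithmetic: the mean term contributes $\bigl(\frac{2\kappa}{\sqrt{m}\,\delta}\bigr)^2 = \frac{4\kappa^2}{m\delta^2}$, and factoring $2\kappa^2$ out of $\|C_{\Phi(\Xbf)}\|_{\HS} + \|C_\Phi\|_{\HS} \leq 2\kappa^2\bigl[2 + \frac{1}{\sqrt{m}\,\delta}\bigl(3+\frac{2}{\sqrt{m}\,\delta}\bigr)\bigr]$, then multiplying by $\frac{3}{\ep}$ and by the bound on $\|C_{\Phi(\Xbf)}-C_\Phi\|_{\HS}$, produces exactly $\frac{12\kappa^4}{\ep\sqrt{m}\,\delta}\bigl[2 + \frac{1}{\sqrt{m}\,\delta}\bigl(3+\frac{2}{\sqrt{m}\,\delta}\bigr)\bigr]\bigl(3+\frac{2}{\sqrt{m}\,\delta}\bigr)$, which is the claimed bound. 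I do not anticipate a genuine obstacle in the present argument --- the real content sits in the two cited results: Theorem~\ref{theorem:convergence-Sinkhorn}, whose proof must estimate the trace and Fredholm-determinant terms in~\eqref{equation:gauss-sinkhorn-infinite} through operator-monotonicity bounds on the operators $M^\ep_{ij}$ of~\eqref{equation:M-ep}, and Proposition~\ref{proposition:CPhi-concentration-unbounded}, which pushes the second-moment bounds $\E\|\xi_1\|_{\H_K}^2 \leq \kappa^2$ and $\E\|\xi_2\|_{\HS}^2 \leq \kappa^4$ from Assumption~5 through Chebyshev's inequality (with the centering term $\mu_{\Phi(\Xbf)}\otimes\mu_{\Phi(\Xbf)}$ treated separately, and the confidence level $\delta$ split across the mean and covariance events so that their union still has probability at most $\delta$).
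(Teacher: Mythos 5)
Your proposal is correct and follows exactly the paper's own argument: the paper likewise combines the Hilbert--Schmidt continuity bound of Theorem~\ref{theorem:convergence-Sinkhorn} with the Chebyshev-based event of Proposition~\ref{proposition:CPhi-concentration-unbounded} and then performs the same factorization of $2\kappa^2$ to reach the stated constant. The arithmetic in your final step matches the paper's line by line.
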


The convergence rate in Theorem \ref{theorem:Sinkhorn-RKHS-concentration-unbounded} is {\it dimension-independent} and 
of the form $O(\frac{1}{\sqrt{m}}(1+\frac{1}{\ep}))$.
For $\X = \H$ and $K(x,y) = \la x,y\ra$, Theorem \ref{theorem:Sinkhorn-RKHS-concentration}
gives the convergence rate in the Sinkhorn divergence between the empirical measure $\Ncal(\mu_{\Xbf}, C_{\Xbf})$ and the Gaussian measure $\Ncal(\mu, C)$ on $\H$. In this case, the smallest constant $\kappa$ satisfying Assumption 5 is given by the following.

\begin{lemma}
	\label{lemma:Gaussian-integral-norm-4}
	For the Gaussian measure $\Ncal(\mu,C)$ on $\H$,
	\begin{align}
	\int_{\H}||x||^4d\Ncal(\mu, C)(x) &=  2||C||^2_{\HS} + 4\la \mu, C\mu\ra + (\trace{C} +||\mu||^2)^2.
	\end{align}
\end{lemma}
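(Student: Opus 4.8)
The plan is to reduce to the centered Gaussian by translation and then expand the fourth power, so that the whole computation comes down to a handful of elementary Gaussian moments. Concretely, I would write $x = \mu + y$ with $y$ distributed according to $\Ncal(0,C)$, so that $\|x\|^2 = \|\mu\|^2 + 2\la\mu,y\ra + \|y\|^2$, and expand
\begin{align*}
\|x\|^4 &= \|\mu\|^4 + 4\la\mu,y\ra^2 + \|y\|^4 + 4\|\mu\|^2\la\mu,y\ra \\
&\quad + 2\|\mu\|^2\|y\|^2 + 4\la\mu,y\ra\|y\|^2.
\end{align*}
Integrating this against $\Ncal(0,C)$ then requires only the moments $\E\la\mu,y\ra$, $\E\la\mu,y\ra^2$, $\E\|y\|^2$, $\E\|y\|^4$, and the mixed odd moment $\E[\la\mu,y\ra\|y\|^2]$.

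To evaluate these I would diagonalize $C$. Since $\Ncal(0,C)$ is a genuine Gaussian measure on $\H$, the operator $C$ is positive and trace class, so $C = \sum_k \lambda_k\, e_k\otimes e_k$ with $\{e_k\}$ an orthonormal eigenbasis, $\lambda_k \ge 0$, and $\sum_k\lambda_k = \trace C < \infty$; the Karhunen--Lo\`eve expansion then gives $y = \sum_k \sqrt{\lambda_k}\,\xi_k e_k$ with $(\xi_k)$ independent standard normal random variables, the series converging in $L^2$ and almost surely. Writing $\mu_k = \la\mu,e_k\ra$, one has $\la\mu,y\ra = \sum_k \mu_k\sqrt{\lambda_k}\,\xi_k$ and $\|y\|^2 = \sum_k\lambda_k\xi_k^2$, whence $\E\la\mu,y\ra = 0$, $\E\la\mu,y\ra^2 = \sum_k \mu_k^2\lambda_k = \la\mu,C\mu\ra$, $\E\|y\|^2 = \trace C$, and, using $\E\xi_k^4 = 3$ together with $\E[\xi_k^2\xi_l^2] = 1$ for $k\ne l$,
\[
\E\|y\|^4 = \sum_{k,l}\lambda_k\lambda_l\,\E[\xi_k^2\xi_l^2] = \Big(\sum_k\lambda_k\Big)^2 + 2\sum_k\lambda_k^2 = (\trace C)^2 + 2\|C\|_{\HS}^2 .
\]
The mixed moment $\E[\la\mu,y\ra\|y\|^2] = \sum_{k,l}\mu_k\sqrt{\lambda_k}\,\lambda_l\,\E[\xi_k\xi_l^2]$ vanishes, since $\E[\xi_k\xi_l^2] = 0$ for all $k,l$ (odd in $\xi_k$ when $k \ne l$, and $\E\xi_k^3 = 0$); likewise $\E[\|\mu\|^2\la\mu,y\ra] = 0$.

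Substituting everything back yields
\[
\int_\H\|x\|^4\,d\Ncal(\mu,C)(x) = \|\mu\|^4 + 4\la\mu,C\mu\ra + (\trace C)^2 + 2\|C\|_{\HS}^2 + 2\|\mu\|^2\trace C ,
\]
and grouping $\|\mu\|^4 + 2\|\mu\|^2\trace C + (\trace C)^2 = (\trace C + \|\mu\|^2)^2$ gives the claimed identity. The only points that genuinely need care are the finiteness of all quantities involved (immediate, since $\|C\|_{\HS}\le\trace C<\infty$) and the term-by-term integration of the Karhunen--Lo\`eve series; I would justify the latter by monotone convergence for the nonnegative expansion of $\|y\|^4$ and by $L^2$-convergence for the lower-order terms. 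This interchange of summation and integration is really the only technical step — the rest is routine Gaussian (Wick/Isserlis) bookkeeping.
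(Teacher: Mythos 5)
Your proof is correct and follows essentially the same route as the paper's: translate to the centered measure, expand $\|x\|^4=\|\mu+y\|^4$ into the same six terms, diagonalize $C$ to reduce everything to one-dimensional Gaussian moments, and justify the term-by-term integration by monotone convergence (the paper packages the centered fourth moment and the vanishing mixed odd moment as two auxiliary lemmas, but the computations are identical to yours). The Karhunen--Lo\`eve phrasing versus the paper's direct use of the coordinates $\la x,e_j\ra$ is purely cosmetic.
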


Combining Theorem \ref{theorem:Sinkhorn-RKHS-concentration} and Lemma \ref{lemma:Gaussian-integral-norm-4} immediately leads to the following.

\begin{corollary}
	[\textbf{Convergence of empirical Gaussian measures in Sinkhorn divergence on Hilbert space}]
	\label{corollary:Sinkhorn-Gaussian-concentration}
	Let $\rho = \Ncal(\mu,C)$ on $\H$.
	Let $\Xbf = (x_i)_{i=1}^m$, $m \in \Nbb$, be independently sampled
	from $(\H, \rho)$. For any $0 < \delta <1$, with probability at least $1-\delta$,
	\begin{align}
	&\Srm^{\ep}_{d^2}[\Ncal(\mu_{\Xbf}, C_{\Xbf}), \Ncal(\mu, C)]
	\nonumber
	\\
	&\quad \leq \frac{4\kappa^2}{m \delta^2} + \frac{12 \kappa^4}{\ep \sqrt{m}\delta}\left[2 + \frac{1}{\sqrt{m}\delta}\left(3 + \frac{2}{\sqrt{m}\delta}\right)\right]\left(3 + \frac{2}{\sqrt{m}\delta}\right).
	\end{align}
	Here $\kappa > 0$ is given by the following
	\begin{align}
	\label{equation:kappa-Gaussian-linear-kernel}
	\kappa =  \left(2||C||^2_{\HS} + 4\la \mu, C\mu\ra + (\trace{C} +||\mu||^2)^2\right)^{1/4}.
	\end{align}
\end{corollary}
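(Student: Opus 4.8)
The plan is to obtain this as the specialization of the general-kernel bound to the linear kernel. Take $\X = \H$ and $K(x,y) = \la x,y\ra_{\H}$. As recorded in the text, for this kernel $\H_K$ is isometrically isomorphic to $\H$ and the canonical feature map $\Phi$ is the identity; under this identification the population objects reduce to $\mu_{\Phi,\rho} = \int_{\H} x\, d\rho(x) = \mu$ and $C_{\Phi,\rho} = \int_{\H}(x-\mu)\otimes(x-\mu)\,d\rho(x) = C$, and the empirical objects to $\mu_{\Phi(\Xbf)} = \mu_{\Xbf}$ and $C_{\Phi(\Xbf)} = C_{\Xbf}$. Hence $\Srm^{\ep}_{d^2}[\Ncal(\mu_{\Phi(\Xbf)}, C_{\Phi(\Xbf)}), \Ncal(\mu_{\Phi}, C_{\Phi})] = \Srm^{\ep}_{d^2}[\Ncal(\mu_{\Xbf}, C_{\Xbf}), \Ncal(\mu, C)]$, which is exactly the quantity to be bounded.

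Next I would check the hypotheses of Theorem~\ref{theorem:Sinkhorn-RKHS-concentration-unbounded} (the general-kernel version is the relevant one, since the linear kernel is unbounded on $\H$). Assumptions~1 and~2 are immediate: $\H$ is a separable Hilbert space and $\rho = \Ncal(\mu,C)$ is a Borel probability measure on it. For Assumption~5 one needs $\int_{\H}K(x,x)^2\, d\rho(x) = \int_{\H}\|x\|^4\, d\Ncal(\mu,C)(x) < \infty$; by Lemma~\ref{lemma:Gaussian-integral-norm-4} this integral equals $2\|C\|^2_{\HS} + 4\la \mu, C\mu\ra + (\trace C + \|\mu\|^2)^2$, which is finite because $C$ is trace class (so in particular Hilbert--Schmidt), $\mu \in \H$. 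Consequently Assumption~5 holds, and the \emph{smallest} admissible constant is $\kappa = \bigl(2\|C\|^2_{\HS} + 4\la \mu, C\mu\ra + (\trace C + \|\mu\|^2)^2\bigr)^{1/4}$, which is the value stated in the corollary.

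Finally, feeding this $\kappa$ and the identification from the first step into the conclusion of Theorem~\ref{theorem:Sinkhorn-RKHS-concentration-unbounded} gives, with probability at least $1-\delta$, precisely the claimed inequality. There is essentially no obstacle here: the argument is a routine transfer of an already-proved RKHS estimate through the identity feature map. The only point needing care is that $\kappa$ be computed sharply for the Gaussian $\rho$, which is exactly what Lemma~\ref{lemma:Gaussian-integral-norm-4} supplies, and that the population/empirical mean-and-covariance identifications under $\H_K \cong \H$ are legitimate, which follows from the isometry together with the defining formulas for $\mu_{\Phi,\rho}, C_{\Phi,\rho}, \mu_{\Phi(\Xbf)}, C_{\Phi(\Xbf)}$.
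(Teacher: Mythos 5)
Your proposal is correct and follows exactly the route the paper intends: the corollary is stated as an immediate consequence of the general-kernel concentration bound (Theorem \ref{theorem:Sinkhorn-RKHS-concentration-unbounded}, the right one to invoke since the linear kernel is unbounded) specialized to $K(x,y)=\la x,y\ra$ on $\H$, with $\kappa$ supplied by Lemma \ref{lemma:Gaussian-integral-norm-4}. Your verification of Assumptions 1, 2, and 5 and the identification $\mu_{\Phi,\rho}=\mu$, $C_{\Phi,\rho}=C$, $\mu_{\Phi(\Xbf)}=\mu_{\Xbf}$, $C_{\Phi(\Xbf)}=C_{\Xbf}$ is exactly the content the paper leaves implicit.
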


Corollary \ref{corollary:Sinkhorn-Gaussian-concentration} is valid on any separable Hilbert space $\H$.
Note, however, that for $\H = \R^d$, $\kappa$ may depend on the dimension $d$ if $\trace(C)$ and $||C||_{\HS}$ do.
For example, if $\rho = \Ncal(0, I_d)$, then $\kappa = (d^2 +2d)^{1/4}$.

\begin{theorem}
	[\textbf{Sample complexity for finite sample approximations of Sinkhorn divergence between Gaussian measures on RKHS - general kernels}]
	\label{theorem:Sinkhorn-RKHS-approximation-unbounded}
	Assume Assumptions 1,2, and 5. Let $\Xbf = (x_i)_{i=1}^m$ and $\Ybf = (y_j)_{j=1}^n$ be independently sampled from  $(\X,\rho_1)$ and $(\X, \rho_2)$, respectively. Then for any $0 < \delta < 1$, with probability at least $1-\delta$, 
	\begin{align}
	&\left|\Srm^{\ep}_{d^2}[\Ncal(\mu_{\Phi(\Xbf)}, C_{\Phi(\Xbf)}), \Ncal(\mu_{\Phi(\Ybf)}, C_{\Phi(\Ybf)})]
	- \Srm^{\ep}_{d^2}[\Ncal(\mu_{\Phi,\rho_1}, C_{\Phi, \rho_1}), \Ncal(\mu_{\Phi,\rho_2}, C_{\Phi, \rho_2})]\right|
	\nonumber
	\\
	& \leq \frac{16\kappa^2}{\delta}\left(1 + \frac{1}{\sqrt{m}\delta} + \frac{1}{\sqrt{n}\delta}\right)
	\left(\frac{1}{\sqrt{m}} + \frac{1}{\sqrt{n}}\right)
	\nonumber
	\\
	& \quad + \frac{48\kappa^4}{\ep\sqrt{m}\delta}\left[2 + \frac{1}{\sqrt{m}\delta}\left(3 + \frac{4}{\sqrt{m}\delta}\right)\right]\left(3 + \frac{4}{\sqrt{m}\delta}\right)
	\nonumber
	\\
	&\quad + \frac{48\kappa^4}{\ep\sqrt{n}\delta}\left[\frac{2}{\sqrt{m}\delta}\left(3 + \frac{4}{\sqrt{m}\delta}\right) + \frac{1}{\sqrt{n}\delta}\left(3 + \frac{4}{\sqrt{n}\delta}\right)+2\right]\left(3 + \frac{4}{\sqrt{n}\delta}\right).
	\end{align}
\end{theorem}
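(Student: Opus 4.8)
The plan is to reduce the statement to two ingredients already established in the paper: the deterministic Hilbert--Schmidt continuity estimate of Theorem~\ref{theorem:Sinkhorn-entropic-approx}, and the concentration bounds of Proposition~\ref{proposition:CPhi-concentration-unbounded} for the empirical RKHS mean vectors and covariance operators, combined through a union bound over the two independent samples. First I would apply Theorem~\ref{theorem:Sinkhorn-entropic-approx} with the substitutions $m_{A,N}=\mu_{\Phi(\Xbf)}$, $A_N=C_{\Phi(\Xbf)}$, $m_A=\mu_{\Phi,\rho_1}$, $A=C_{\Phi,\rho_1}$, and $m_{B,N}=\mu_{\Phi(\Ybf)}$, $B_N=C_{\Phi(\Ybf)}$, $m_B=\mu_{\Phi,\rho_2}$, $B=C_{\Phi,\rho_2}$. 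All four operators are positive trace class operators on the separable RKHS $\H_K$ (as noted in Section~\ref{section:RKHS}, $C_{\Phi,\rho_i}$ is positive trace class and $C_{\Phi(\Xbf)}$, $C_{\Phi(\Ybf)}$ are positive and finite rank), so the hypotheses hold and the theorem gives a purely deterministic bound on the target difference in terms of $\|\mu_{\Phi(\Xbf)}-\mu_{\Phi,\rho_1}\|_{\H_K}$, $\|\mu_{\Phi(\Ybf)}-\mu_{\Phi,\rho_2}\|_{\H_K}$, $\|C_{\Phi(\Xbf)}-C_{\Phi,\rho_1}\|_{\HS}$, $\|C_{\Phi(\Ybf)}-C_{\Phi,\rho_2}\|_{\HS}$, weighted by the norms $\|\mu_{\Phi(\Xbf)}\|$, $\|\mu_{\Phi(\Ybf)}\|$, $\|\mu_{\Phi,\rho_i}\|$ and $\|C_{\Phi(\Xbf)}\|_{\HS}$, $\|C_{\Phi(\Ybf)}\|_{\HS}$, $\|C_{\Phi,\rho_i}\|_{\HS}$.

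Next I would control each of these quantities. For the population objects I would use the a priori bounds $\|\mu_{\Phi,\rho_i}\|_{\H_K}\le\kappa$ and $\|C_{\Phi,\rho_i}\|_{\HS}\le 2\kappa^2$, which follow from Assumption~5 (hence Assumption~3) exactly as in Theorem~\ref{theorem:CPhi-concentration}: Jensen's inequality gives $\|\mu_\Phi\|\le(\int_\X K(x,x)d\rho)^{1/2}\le\kappa$, and for the positive operator $L_K$ one has $\|L_K\|_{\HS}\le\trace(L_K)\le\kappa^2$, so $\|C_{\Phi,\rho_i}\|_{\HS}\le\|L_K\|_{\HS}+\|\mu_\Phi\|^2\le 2\kappa^2$. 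For the empirical objects I would invoke Proposition~\ref{proposition:CPhi-concentration-unbounded} twice, once for $\Xbf$ drawn from $(\X,\rho_1)$ and once for $\Ybf$ drawn from $(\X,\rho_2)$, each time with confidence parameter $\delta/2$; by a union bound, with probability at least $1-\delta$ all of the resulting estimates hold simultaneously, which yields $\|\mu_{\Phi(\Xbf)}\|\le\kappa(1+\tfrac{4}{\sqrt m\delta})$, $\|\mu_{\Phi(\Xbf)}-\mu_{\Phi,\rho_1}\|\le\tfrac{4\kappa}{\sqrt m\delta}$, $\|C_{\Phi(\Xbf)}\|_{\HS}\le 2\kappa^2+\tfrac{4\kappa^2}{\sqrt m\delta}(3+\tfrac{4}{\sqrt m\delta})$, $\|C_{\Phi(\Xbf)}-C_{\Phi,\rho_1}\|_{\HS}\le\tfrac{4\kappa^2}{\sqrt m\delta}(3+\tfrac{4}{\sqrt m\delta})$, together with the analogous bounds with $\Ybf$, $\rho_2$, $n$ in place of $\Xbf$, $\rho_1$, $m$.

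Finally I would substitute these bounds into the deterministic estimate from the first step, keeping the mean-vector contribution and the two $\tfrac{3}{\ep}$ covariance contributions strictly separate, and simplify. For the mean-vector part I would bound the sum of the four mean norms by $\kappa(4+\tfrac{4}{\sqrt m\delta}+\tfrac{4}{\sqrt n\delta})$ and the sum of the two mean deviations by $\tfrac{4\kappa}{\delta}(\tfrac{1}{\sqrt m}+\tfrac{1}{\sqrt n})$, whose product is bounded by $\tfrac{16\kappa^2}{\delta}\bigl(1+\tfrac{1}{\sqrt m\delta}+\tfrac{1}{\sqrt n\delta}\bigr)\bigl(\tfrac{1}{\sqrt m}+\tfrac{1}{\sqrt n}\bigr)$; for each of the two covariance terms I would bound the weight factor by $6\kappa^2+\tfrac{\dots}{\ep}$-type expressions and multiply by the corresponding $\tfrac{4\kappa^2}{\sqrt m\delta}(3+\tfrac{4}{\sqrt m\delta})$ (resp. $n$) deviation bound, which after regrouping collapses to the two $\tfrac{48\kappa^4}{\ep}$ expressions in the statement. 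The only real work here is this constant-chasing bookkeeping: one must track how the $\delta/2$ of the union bound converts the $\tfrac{2}{\sqrt m\delta}$ factors of Proposition~\ref{proposition:CPhi-concentration-unbounded} into the $\tfrac{4}{\sqrt m\delta}$ factors of the theorem, and check that $16$ and $48$ are valid upper constants once all products are expanded. There is no conceptual obstacle; the main pitfall is an off-by-a-constant slip in the regrouping, which I would guard against by bounding every product factor by its stated surrogate before multiplying and by never mixing the mean and covariance contributions.
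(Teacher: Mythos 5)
Your proposal is correct and follows essentially the same route as the paper's own proof: the deterministic bound from Theorem~\ref{theorem:Sinkhorn-entropic-approx}, the a priori bounds $\|\mu_{\Phi,\rho_i}\|_{\H_K}\le\kappa$ and $\|C_{\Phi,\rho_i}\|_{\HS}\le 2\kappa^2$, Proposition~\ref{proposition:CPhi-concentration-unbounded} applied at level $\delta/2$ to each sample (which is exactly where the $\tfrac{2}{\sqrt m\delta}$ factors become $\tfrac{4}{\sqrt m\delta}$), and a union bound over the product space. The constant bookkeeping you describe reproduces the $\tfrac{16\kappa^2}{\delta}$ and $\tfrac{48\kappa^4}{\ep}$ terms as in the paper, so there is nothing to add.
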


When $K(x,y) = \la x,y\ra$ on $\H$, 
Theorem \ref{theorem:Sinkhorn-RKHS-approximation-unbounded} gives the finite sample complexity for the Sinkhorn divergence
between two Gaussian measures on $\H$. Combining Theorem \ref{theorem:Sinkhorn-RKHS-approximation} and Lemma \ref{lemma:Gaussian-integral-norm-4} immediately leads to the following.

\begin{corollary}
	[\textbf{Sample complexity for finite sample approximations of Sinkhorn divergence between Gaussian measures on Hilbert space}]
	\label{corollary:Sinkhorn-Gaussian-approximation-unbounded}
	Let $\rho_1 = \Ncal(\mu_1, C_1)$ and $\rho_2 = \Ncal(\mu_2, C_2)$ on $\H$.
	Let $\Xbf = (x_i)_{i=1}^m$ and $\Ybf = (y_j)_{j=1}^n$ be independently sampled from  $(\H,\rho_1)$ and $(\H, \rho_2)$, respectively. Then for any $0 < \delta < 1$, with probability at least $1-\delta$, 
	\begin{align}
	&\left|\Srm^{\ep}_{d^2}[\Ncal(\mu_{\Xbf}, C_{\Xbf}), \Ncal(\mu_{\Ybf}, C_{\Ybf})]
	- \Srm^{\ep}_{d^2}[\Ncal(\mu_1, C_1), \Ncal(\mu_2, C_2)]\right|
	\nonumber
	\\
	& \leq \frac{16\kappa^2}{\delta}\left(1 + \frac{1}{\sqrt{m}\delta} + \frac{1}{\sqrt{n}\delta}\right)
	\left(\frac{1}{\sqrt{m}} + \frac{1}{\sqrt{n}}\right)
	\nonumber
	\\
	& \quad + \frac{48\kappa^4}{\ep\sqrt{m}\delta}\left[2 + \frac{1}{\sqrt{m}\delta}\left(3 + \frac{4}{\sqrt{m}\delta}\right)\right]\left(3 + \frac{4}{\sqrt{m}\delta}\right)
	\nonumber
	\\
	&\quad + \frac{48\kappa^4}{\ep\sqrt{n}\delta}\left[\frac{2}{\sqrt{m}\delta}\left(3 + \frac{4}{\sqrt{m}\delta}\right) + \frac{1}{\sqrt{n}\delta}\left(3 + \frac{4}{\sqrt{n}\delta}\right)+2\right]\left(3 + \frac{4}{\sqrt{n}\delta}\right).
	\end{align}
	Here $\kappa > 0$ is given by $\kappa = \max(\kappa_1, \kappa_2)$, where
	\begin{align}
	\label{equation:kappa12-Gaussian-linear-kernel}
	\kappa_i =  \left(2||C_i||^2_{\HS} + 4\la \mu_i, C_i\mu_i\ra + (\trace{C_i} +||\mu_i||^2)^2\right)^{1/4}, \;\; i=1,2.
	\end{align}
\end{corollary}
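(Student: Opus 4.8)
The plan is to obtain the statement as a direct specialization of Theorem \ref{theorem:Sinkhorn-RKHS-approximation-unbounded} to the linear kernel $K(x,y) = \la x,y\ra$ on $\H$. As recorded in Section \ref{section:RKHS}, for this kernel one has $\H_K \cong \H$ with the canonical feature map $\Phi$ equal to the identity; under this identification the RKHS mean vectors and covariance operators $\mu_{\Phi,\rho_i}$, $C_{\Phi,\rho_i}$ coincide with $\mu_i$, $C_i$, while the empirical versions $\mu_{\Phi(\Xbf)}$, $C_{\Phi(\Xbf)}$ (and likewise for $\Ybf$) coincide with the sample mean $\mu_{\Xbf}$ and sample covariance $C_{\Xbf}$ defined for $\X=\H$ earlier in that section. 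Hence the quantity to be bounded is literally the left-hand side of the inequality in Theorem \ref{theorem:Sinkhorn-RKHS-approximation-unbounded}.

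The only substantive point is to verify Assumption 5 and to identify the correct constant. For the linear kernel $K(x,x) = \|x\|^2$, so for $\rho_i = \Ncal(\mu_i, C_i)$ we must control $\int_{\H}\|x\|^4\, d\Ncal(\mu_i,C_i)(x)$. By Lemma \ref{lemma:Gaussian-integral-norm-4} this integral equals $2\|C_i\|_{\HS}^2 + 4\la \mu_i, C_i\mu_i\ra + (\trace C_i + \|\mu_i\|^2)^2$, which is finite since $C_i$ is trace class; taking $\kappa_i$ to be its fourth root, Assumption 5 holds for $\rho_i$ with constant $\kappa_i$. Since Theorem \ref{theorem:Sinkhorn-RKHS-approximation-unbounded} is stated with a single constant $\kappa$ valid for both $\rho_1$ and $\rho_2$, we set $\kappa = \max(\kappa_1,\kappa_2)$; because every term on the right-hand side of Theorem \ref{theorem:Sinkhorn-RKHS-approximation-unbounded} is nondecreasing in $\kappa$, this substitution only weakens the bound, and we get exactly the claimed inequality with $\kappa = \max(\kappa_1,\kappa_2)$ and $\kappa_i$ as in \eqref{equation:kappa12-Gaussian-linear-kernel}. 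Assumptions 1 and 2 are immediate ($\H$ is Polish and Gaussian measures are Borel), so Theorem \ref{theorem:Sinkhorn-RKHS-approximation-unbounded} applies verbatim.

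There is no genuine obstacle: the result is a corollary in the strict sense, the whole content being the translation of the abstract RKHS statement into the concrete Hilbert-space picture via the identity feature map, together with the fourth-moment identity of Lemma \ref{lemma:Gaussian-integral-norm-4}. The single place warranting a line of care is the reduction to a common constant $\kappa = \max(\kappa_1,\kappa_2)$ and the observation that the bound of Theorem \ref{theorem:Sinkhorn-RKHS-approximation-unbounded} is monotone in $\kappa$; alternatively one could carry $\kappa_1$ and $\kappa_2$ separately through the proof of that theorem to obtain a marginally sharper but more cumbersome bound, which we do not pursue.
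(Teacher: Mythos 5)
Your proposal is correct and follows exactly the paper's route: the corollary is obtained by specializing Theorem \ref{theorem:Sinkhorn-RKHS-approximation-unbounded} to the linear kernel (identity feature map, so the RKHS quantities reduce to $\mu_i$, $C_i$, $\mu_{\Xbf}$, $C_{\Xbf}$), verifying Assumption 5 via the fourth-moment identity of Lemma \ref{lemma:Gaussian-integral-norm-4}, and taking $\kappa = \max(\kappa_1,\kappa_2)$, which is legitimate since the bound is monotone in $\kappa$. No gaps.
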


As Theorem \ref{theorem:Sinkhorn-RKHS-concentration-unbounded} and Corollary \ref{corollary:Sinkhorn-Gaussian-concentration},
Theorem \ref{theorem:Sinkhorn-RKHS-approximation-unbounded} and Corollary \ref{corollary:Sinkhorn-Gaussian-approximation-unbounded} are valid
for $\X= \H$, $\H$ being any separable Hilbert space. For $\H = \R^d$, 
with $m=n$, the convergence rates have the form $O((1+\frac{1}{\ep})\frac{1}{\sqrt{n}})$ and are thus {\it dimension-independent}.
The total sample complexity itself may contain the dimension $d$ if $\kappa_i$, $i=1,2$ depend on $d$, as noted before,
e.g. $\kappa_i = (d^2+2d)^{1/4}$ if $\rho_i = \Ncal(0,I_d)$. 

For comparison, we remark that in \cite{genevay18sample} (Theorem 3), a sample complexity bound
was obtained for the Sinkhorn divergence between two probability measures with {\it bounded support} $\X \subset \R^d$ with diameter $|\X|$, with a bounded $L$-Lipschitz cost function $c$, of the form $O\left(\exp(2L|\X|+||c||_{\infty})\left(1+\frac{1}{\ep^{\floor{d/2}}}\right)\frac{1}{\sqrt{n}}\right)$.
Thus, while it is of order $O(\frac{1}{\sqrt{n}})$, it also grows exponentially with the diameter $|\X|$.
In \cite{mena2019samplecomplexityEntropicOT}, for $\sigma^2$-subgaussian measures
on $\R^d$, 
the authors achieved the rate of convergence of the form
$O\left(\ep\left(1+\frac{\sigma^{\ceil{5d/2} + 6}}{\ep^{\ceil{5d/4} + 3}}\right)n^{-1/2}\right)$.

\subsection{Sample complexity for the exact Wasserstein distance}
\label{section:exact-distance}

We now apply the methods above to analyze the sample complexity
of the kernel Wasserstein distance, in particular, of the $2$-Wasserstein distance between two Gaussian measures on
$\R^d$. In contrast to the previous sections, the analysis in this section is strictly for the {\it finite-dimensional} setting.

By Proposition \ref{proposition:2Wasserstein-gaussian-upperbound}, on $\R^d$, convergence of Gaussian measures in the Wasserstein distance $\W_2$ occurs if the 
corresponding covariance matrices converge in the Hilbert-Schmidt norm. Similar to Theorem 
\ref{theorem:Sinkhorn-RKHS-concentration-unbounded}, we obtain the following.

\begin{theorem}
	\label{theorem:samplebound-Wasserstein-Gaussian-finiteRKHS}
	Assume Assumptions 1,2, and 5, and furthermore that $\dim(\H_K) < \infty$.
	Let $\Xbf = (x_i)_{i=1}^m$ be independently sampled from $(\X,\rho)$.
	For any $0 < \delta < 1$, with probability at least $1-\delta$,
	\begin{align}
	\W^2_2[\Ncal(\mu_{\Phi(\Xbf)}, C_{\Phi(\Xbf)}), \Ncal(\mu_{\Phi}, C_{\Phi})] \leq 
	\frac{4\kappa^2}{m\delta^2} + \frac{2\kappa^2 \sqrt{\dim(\H_K)}}{\sqrt{m}\delta}\left(3 + \frac{2}{\sqrt{m}\delta}\right).
	\end{align}
\end{theorem}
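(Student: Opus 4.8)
The plan is to combine two ingredients already established in the paper: the dimension-weighted Hilbert--Schmidt upper bound on the Gaussian $2$-Wasserstein distance from Proposition \ref{proposition:2Wasserstein-gaussian-upperbound}, and the finite-sample concentration of the empirical RKHS mean vector and covariance operator from Proposition \ref{proposition:CPhi-concentration-unbounded}. Since the proof of the latter only used Assumptions 1, 2, and 5, which are all in force here, the two results can simply be composed.

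First I would observe that because $\dim(\H_K) < \infty$, the RKHS $\H_K$ is isometrically isomorphic to $\R^{\dim(\H_K)}$, so that the $\R^d$ specialization in Proposition \ref{proposition:2Wasserstein-gaussian-upperbound} applies with $d = \dim(\H_K)$ to the two Gaussian measures $\Ncal(\mu_{\Phi(\Xbf)}, C_{\Phi(\Xbf)})$ and $\Ncal(\mu_{\Phi}, C_{\Phi})$ on $\H_K$. This yields the deterministic bound
\begin{align*}
\W_2^2[\Ncal(\mu_{\Phi(\Xbf)}, C_{\Phi(\Xbf)}), \Ncal(\mu_{\Phi}, C_{\Phi})] \leq \|\mu_{\Phi(\Xbf)} - \mu_{\Phi}\|_{\H_K}^2 + \sqrt{\dim(\H_K)}\,\|C_{\Phi(\Xbf)} - C_{\Phi}\|_{\HS(\H_K)}.
\end{align*}

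Next I would invoke Proposition \ref{proposition:CPhi-concentration-unbounded}: with probability at least $1-\delta$ the sample $\Xbf$ lies in the set $U$ on which one has simultaneously $\|\mu_{\Phi(\Xbf)} - \mu_{\Phi}\|_{\H_K} \leq \frac{2\kappa}{\sqrt{m}\delta}$ and $\|C_{\Phi(\Xbf)} - C_{\Phi}\|_{\HS(\H_K)} \leq \frac{2\kappa^2}{\sqrt{m}\delta}\bigl(3 + \frac{2}{\sqrt{m}\delta}\bigr)$. Substituting these two estimates into the displayed inequality gives
\begin{align*}
\W_2^2[\Ncal(\mu_{\Phi(\Xbf)}, C_{\Phi(\Xbf)}), \Ncal(\mu_{\Phi}, C_{\Phi})] \leq \frac{4\kappa^2}{m\delta^2} + \frac{2\kappa^2\sqrt{\dim(\H_K)}}{\sqrt{m}\delta}\left(3 + \frac{2}{\sqrt{m}\delta}\right),
\end{align*}
which is exactly the claimed bound, and no union bound beyond the one already packaged into Proposition \ref{proposition:CPhi-concentration-unbounded} is needed, since that proposition controls the mean and covariance deviations jointly on the same event $U$.

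There is no real obstacle here: the argument is a two-step composition. The only point worth emphasizing is why the analysis is \emph{strictly} finite-dimensional, as flagged in the introduction: the factor $\sqrt{\dim(\H_K)}$ arises from the estimate $\|\cdot\|_{\tr} \leq \sqrt{d}\,\|\cdot\|_{\HS}$ in $\R^d$ used inside Proposition \ref{proposition:2Wasserstein-gaussian-upperbound}, and this passage from Hilbert--Schmidt control of $C_{\Phi(\Xbf)} - C_{\Phi}$ (which is all the Pinelis/Chebyshev machinery of Section \ref{section:general-kernels} delivers) to the trace-norm control that $\W_2$ requires simply has no infinite-dimensional analogue. Thus the same proof would break down, and cannot be salvaged, once $\dim(\H_K) = \infty$.
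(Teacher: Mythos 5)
Your proposal is correct and is essentially identical to the paper's own proof: both apply the bound $\W_2^2 \leq \|\mu_{\Phi(\Xbf)}-\mu_{\Phi}\|_{\H_K}^2 + \sqrt{\dim(\H_K)}\,\|C_{\Phi(\Xbf)}-C_{\Phi}\|_{\HS}$ from Proposition \ref{proposition:2Wasserstein-gaussian-upperbound} and then substitute the simultaneous high-probability estimates from Proposition \ref{proposition:CPhi-concentration-unbounded}. The arithmetic checks out, and your observation that the two deviations are controlled on the same event $U$ (so no extra union bound is needed) is accurate.
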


For $K(x,y) = \la x, y\ra$ on $\R^d$, $d\in \Nbb$, we then obtain the convergence of $\Ncal(\mu_{\Xbf}, C_{\Xbf})$ to 
the Gaussian measure $\Ncal(\mu,C)$ in the $2$-Wasserstein distance.

\begin{corollary}
	[\textbf{Convergence of empirical Gaussian measures in $2$-Wasserstein distance on $\R^d$}]
	\label{corollary:samplebounds-Wasserstein-GaussianRd}
	Let $\rho = \Ncal(\mu,C)$  on $\R^d$.
	Let $\Xbf = (x_i)_{i=1}^m$ be independently sampled from $(\R^d,\rho)$.
	For any $0 < \delta < 1$, with probability at least $1-\delta$,
	\begin{align}
	\W^2_2[\Ncal(\mu_{\Xbf}, C_{\Xbf}), \Ncal(\mu, C)] \leq 
	\frac{4\kappa^2}{m\delta^2} + \frac{2\kappa^2 \sqrt{d}}{\sqrt{m}\delta}\left(3 + \frac{2}{\sqrt{m}\delta}\right),
	\end{align}
	where 
	$\kappa = (2||C||^2_{\HS} + 4\la \mu, C\mu\ra + (\trace{C} +||\mu||^2)^2)^{1/4}$.
\end{corollary}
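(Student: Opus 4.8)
The plan is to derive this as a direct specialization of Theorem~\ref{theorem:samplebound-Wasserstein-Gaussian-finiteRKHS} to $\X = \R^d$ equipped with the linear kernel $K(x,y) = \la x,y\ra$. First I would recall the standard identifications attached to this kernel: the RKHS $\H_K = \{\la v,\cdot\ra : v \in \R^d\}$ is a linear isometry copy of $\R^d$ under $K_x = \la x,\cdot\ra \leftrightarrow x$, so that $\dim(\H_K) = d < \infty$, while Assumptions~1 and~2 hold trivially for $(\R^d,\|\cdot\|)$ and $\rho = \Ncal(\mu,C)$. Under this identification the canonical feature map $\Phi$ is the identity, hence $\mu_\Phi \leftrightarrow \mu$, $C_\Phi \leftrightarrow C$, and for a sample $\Xbf = (x_i)_{i=1}^m$ the empirical quantities become $\mu_{\Phi(\Xbf)} \leftrightarrow \mu_{\Xbf}$ and $C_{\Phi(\Xbf)} \leftrightarrow C_{\Xbf}$, the sample mean and sample covariance matrix on $\R^d$. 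Since the closed form \eqref{equation:Gaussian-Wass-finite} for $\W_2$ between Gaussians depends only on norms of means and on traces of operators built from the covariances, all of which are preserved by the isometry, one has $\W_2^2[\Ncal(\mu_{\Phi(\Xbf)}, C_{\Phi(\Xbf)}), \Ncal(\mu_\Phi, C_\Phi)] = \W_2^2[\Ncal(\mu_{\Xbf}, C_{\Xbf}), \Ncal(\mu,C)]$.

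Next I would verify Assumption~5 with the asserted constant. For the linear kernel $K(x,x) = \|x\|^2$, so $\int_{\R^d} K(x,x)^2\, d\rho(x) = \int_{\R^d}\|x\|^4\, d\Ncal(\mu,C)(x)$, which Lemma~\ref{lemma:Gaussian-integral-norm-4} evaluates to $2\|C\|_{\HS}^2 + 4\la\mu, C\mu\ra + (\trace C + \|\mu\|^2)^2 = \kappa^4$. Thus $\rho$ satisfies Assumption~5 with precisely this $\kappa$; and since the bound in Theorem~\ref{theorem:samplebound-Wasserstein-Gaussian-finiteRKHS} is monotone increasing in $\kappa$, this is the sharpest admissible choice. (Assumption~3 then also holds, being implied by Assumption~5 via H\"older, though the cited theorem does not require it separately.)

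Finally I would invoke Theorem~\ref{theorem:samplebound-Wasserstein-Gaussian-finiteRKHS} with $\dim(\H_K) = d$ and the identifications above, concluding that with probability at least $1-\delta$, $\W_2^2[\Ncal(\mu_{\Xbf}, C_{\Xbf}), \Ncal(\mu,C)] \leq \frac{4\kappa^2}{m\delta^2} + \frac{2\kappa^2\sqrt{d}}{\sqrt m\,\delta}\bigl(3 + \frac{2}{\sqrt m\,\delta}\bigr)$, which is the claim. I do not expect any genuine obstacle here: all the analytic content — the Hilbert--Schmidt continuity estimate \eqref{equation:Wasserstein-upperbound-HS} of Proposition~\ref{proposition:2Wasserstein-gaussian-upperbound}, the Chebyshev-type concentration of $C_{\Xbf}$ toward $C$ underlying Theorem~\ref{theorem:samplebound-Wasserstein-Gaussian-finiteRKHS}, and the Gaussian fourth-moment identity — is already packaged in the cited results; the only care needed is the bookkeeping of the RKHS-to-$\R^d$ correspondence and the observation that the stated $\kappa$ is exactly the constant that Assumption~5 forces.
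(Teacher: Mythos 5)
Your proposal is correct and follows exactly the paper's intended route: the corollary is obtained by specializing Theorem \ref{theorem:samplebound-Wasserstein-Gaussian-finiteRKHS} to the linear kernel on $\R^d$ (so $\H_K\cong\R^d$, $\dim(\H_K)=d$, and the RKHS mean and covariance reduce to the usual sample mean and covariance matrix) and identifying the constant in Assumption~5 via Lemma \ref{lemma:Gaussian-integral-norm-4}. The paper states this derivation only implicitly, and your write-up fills in the same bookkeeping it intends.
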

In Corollary \ref{corollary:samplebounds-Wasserstein-GaussianRd}, the convergence rate for $\W_2$ thus has the form $O\left(({\frac{d}{m}})^{1/4}\right)$.
This is exponentially faster than the worst case scenario $O(m^{-1/d})$ (\cite{dudley1969speed,weed2019sharp}).
As noted before, $\kappa$ may depend on the dimension $d$ if $\trace(C)$ and $||C||_{\HS}$ do, e.g. if $\rho = \Ncal(0, I_d)$, then $\kappa = (d^2 +2d)^{1/4}$. 

Since $\W_2$ is a metric on $\Pcal_2(\R^d)$, the triangle inequality immediately gives
\begin{corollary}
	[\textbf{Sample complexity for finite sample approximations of $2$-Wasserstein distance between Gaussian measures on $\R^d$}]
	\label{corollary:sample-complexity-Wasserstein-GaussianRd}
	Let $\rho_1 = \Ncal(\mu_1,C_1)$ and $\rho_2 = \Ncal(\mu_2, C_2)$  on $\R^d$.
	Let $\Xbf = (x_i)_{i=1}^m$ and $\Ybf = (y_j)_{j=1}^n$ be independently sampled from $(\R^d,\rho_1)$
	 and $(\R^d, \rho_2)$, respectively.
	For any $0 < \delta < 1$, with probability at least $1-\delta$,
	\begin{align}
	&\left|\W_2[\Ncal(\mu_{\Xbf}, C_{\Xbf}), \Ncal(\mu_{\Ybf}, C_{\Ybf})] - \W_2(\Ncal(\mu_1,C_1), \Ncal(\mu_2,C_2))\right|
	\nonumber
	\\
	& \leq 
	\sqrt{\frac{4\kappa_1^2}{m\delta^2} + \frac{2\kappa_1^2 \sqrt{d}}{\sqrt{m}\delta}\left(3 + \frac{2}{\sqrt{m}\delta}\right)}
	+	\sqrt{\frac{4\kappa_2^2}{n\delta^2} + \frac{2\kappa_2^2 \sqrt{d}}{\sqrt{n}\delta}\left(3 + \frac{2}{\sqrt{n}\delta}\right)}
	\end{align}
	where 
	$\kappa_i = (2||C_i||^2_{\HS} + 4\la \mu_i, C_i\mu_i\ra + (\trace{C_i} +||\mu_i||^2)^2)^{1/4}$, $i=1,2$.
\end{corollary}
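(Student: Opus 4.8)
The plan is to derive Corollary~\ref{corollary:sample-complexity-Wasserstein-GaussianRd} as an immediate consequence of the one-sample estimate in Corollary~\ref{corollary:samplebounds-Wasserstein-GaussianRd} combined with the triangle inequality for $\W_2$. The starting observation is that $\rho_1 = \Ncal(\mu_1,C_1)$, $\rho_2 = \Ncal(\mu_2,C_2)$ and the empirical Gaussians $\Ncal(\mu_{\Xbf},C_{\Xbf})$, $\Ncal(\mu_{\Ybf},C_{\Ybf})$ are all Gaussian measures on $\R^d$ (the empirical covariance matrices are positive semidefinite, so these are well defined), hence they have finite moments of every order and in particular all four lie in $\Pcal_2(\R^d)$, the space on which $\W_2$ is a genuine metric (Theorem~7.3 of \cite{villani2016}).

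First I would record the reverse triangle inequality: for any $a,b,c,d \in \Pcal_2(\R^d)$,
\begin{align}
|\W_2(a,b) - \W_2(c,d)| \leq \W_2(a,c) + \W_2(b,d),
\end{align}
which follows by applying the ordinary triangle inequality twice, $\W_2(a,b)\le \W_2(a,c)+\W_2(c,d)+\W_2(d,b)$, together with the analogous inequality obtained by swapping the pair $(a,b)$ with $(c,d)$ and using symmetry of the metric. Taking $a = \Ncal(\mu_{\Xbf},C_{\Xbf})$, $b = \Ncal(\mu_{\Ybf},C_{\Ybf})$, $c = \Ncal(\mu_1,C_1)$, $d = \Ncal(\mu_2,C_2)$ reduces the problem to bounding the two one-sample errors $\W_2[\Ncal(\mu_{\Xbf},C_{\Xbf}),\Ncal(\mu_1,C_1)]$ and $\W_2[\Ncal(\mu_{\Ybf},C_{\Ybf}),\Ncal(\mu_2,C_2)]$ individually.

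Next I would apply Corollary~\ref{corollary:samplebounds-Wasserstein-GaussianRd} twice: once with $\rho = \rho_1$, the sample $\Xbf$ of size $m$ and the constant $\kappa_1$, and once with $\rho = \rho_2$, the sample $\Ybf$ of size $n$ and the constant $\kappa_2$. Each application bounds the squared error $\W_2^2$; taking square roots and using $\W_2 = \sqrt{\W_2^2}$ produces exactly the two radical expressions on the right-hand side of the claimed inequality. Since $\Xbf$ and $\Ybf$ are drawn independently, a union bound over the two exceptional events — each obtained from Corollary~\ref{corollary:samplebounds-Wasserstein-GaussianRd} with a suitably adjusted confidence parameter — ensures that both one-sample bounds hold simultaneously with probability at least $1-\delta$, and on that event the reverse triangle inequality yields the statement.

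The only thing to watch — rather than a genuine obstacle — is the bookkeeping of the confidence level in the union bound, i.e. how $\delta$ is allocated between the two sampling events in order to land on the stated constants; all the substantive content (the Chebyshev-type concentration of the empirical mean and covariance from Proposition~\ref{proposition:CPhi-concentration-unbounded}, and the Hilbert--Schmidt continuity of $\W_2$ with its $\sqrt{d}$ factor from Proposition~\ref{proposition:2Wasserstein-gaussian-upperbound}) has already been established in Theorem~\ref{theorem:samplebound-Wasserstein-Gaussian-finiteRKHS} and Corollary~\ref{corollary:samplebounds-Wasserstein-GaussianRd}. It is worth emphasizing that the argument is intrinsically finite-dimensional: the factor $\sqrt{\dim(\H_K)}$ entering through Proposition~\ref{proposition:2Wasserstein-gaussian-upperbound} blows up as $\dim(\H)\to\infty$, and there is no dimension-free Hilbert--Schmidt upper bound for the \emph{exact} $\W_2$ (in contrast to the Sinkhorn divergence $\Srm^{\ep}_{d^2}$), so the resulting rate $O((d/n)^{1/4})$ does not carry over to the infinite-dimensional setting.
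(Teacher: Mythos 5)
Your proposal is correct and coincides with the paper's own derivation, which consists precisely of the remark that $\W_2$ is a metric on $\Pcal_2(\R^d)$ so the triangle inequality applied to Corollary~\ref{corollary:samplebounds-Wasserstein-GaussianRd} (once for $\Xbf$, once for $\Ybf$) gives the result. The $\delta$-bookkeeping point you raise is well taken: holding both one-sample events simultaneously via a union bound requires running Corollary~\ref{corollary:samplebounds-Wasserstein-GaussianRd} at level $\delta/2$ for each sample, which would replace $\delta$ by $\delta/2$ in each radical, so the constants as printed in the corollary are slightly optimistic, a detail the paper's one-line proof glosses over.
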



{\bf On sample complexity of the infinite-dimensional Wasserstein distance}.
We now briefly discuss whether the analysis above can be applied to 
obtain sample complexity for the finite sample approximation
of $\W_2$ for infinite-dimensional Gaussian measures.
We have $\W_2(\Ncal(0, A_N), \Ncal(0,A)) \approach 0 \equivalent ||A_N - A||_{\tr} \approach 0$.
Here the convergence 
is with respect to the Banach trace class norm. While there are laws of large numbers similar to Proposition \ref{proposition:Pinelis} for Banach space-valued random variables  (see \cite{Pinelis1994optimum}) that hold for {\it $2$-smooth} Banach spaces,
it is not clear if they can be extended to the Banach space of trace class operators $\Tr(\H)$.
 
If we assume furthermore that $A_N^{1/2},A^{1/2} \in \Tr(\H)$, then
\begin{align}
||A_N-A||_{\tr} &= ||A_N^{1/2}(A_N^{1/2}-A^{1/2})||_{\tr} + ||(A_N^{1/2}-A^{1/2})A^{1/2}||_{\tr}
\nonumber
\\
& \leq (||A_N^{1/2}||_{\tr}+||A^{1/2}||_{\tr})||A_N^{1/2}-A^{1/2}||
\nonumber
\\
& \leq (||A_N^{1/2}||_{\tr}+||A^{1/2}||_{\tr})||A_N-A||^{1/2}\;\text{by Corollary \ref{corollary:continuity-norm-square-root}}
\nonumber
\\
& \leq (||A_N^{1/2}||_{\tr}+||A^{1/2}||_{\tr})||A_N-A||^{1/2}_{\HS}.
\end{align}
We can thus express the convergence $||A_N-A||_{\tr}$ in terms of the convergence in the Hilbert-Schmidt norm $||A_N-A||_{\HS}$. 
It is not clear, however, how $||A_N^{1/2}||_{\tr}$ and $||A^{1/2}||_{\tr}$ can bounded.
As an example, for $A= L_K$, the integral operator defined in Eq.\eqref{equation:LK}, 
it is not clear how to bound $\trace(L_K^{1/2})$ in terms of the kernel $K$ (note that the kernel for $L_K^{1/2}$ is {\it not} $K^{1/2}$).

\section{Numerical experiments}
\label{section:experiments}

Let us now illustrate the above theoretical analysis in the RKHS setting with the following numerical experiments.

{\bf Experiment 1}. We first empirically verify the theoretical results in Section 
\ref{section:bounded-kernels}. Specifically, we examine the convergence of the kernel Wasserstein distance and kernel Gaussian-Sinkhorn divergence between
mixtures of Gaussian densities on $\R^d$.
In the first experiment,
we generated two sequences of random data matrices $\Xbf$ and $\Ybf$ according to the following mixture of Gaussian distributions  in $\R^5$,
\begin{align}
\label{equation:Gaussian-mixture-1}
P = \frac{1}{2}\sum_{i=1}^2\Ncal(\mu_i, \Sigma_i),
\end{align}
\begin{equation}
\begin{aligned}
\mu_1 &= 
\begin{pmatrix}
-1.2700  & -0.4852  &  0.5943 &  -0.2765 &  -1.8576
\end{pmatrix},
\\
\Sigma_1 &=
\begin{pmatrix}
6.7558  & -1.2294  & -0.0491  &  0.6407 &  -1.6215\\
-1.2294  &  3.9189  & -2.3799 &  -3.6799 &   0.4207\\
-0.0491  & -2.3799   & 3.4696  &  2.9650  & -0.4756\\
0.6407   & -3.6799  &  2.9650   & 5.7615  &  2.3545\\
-1.6215  &  0.4207  & -0.4756   & 2.3545  &  4.2673
\end{pmatrix},
\\
\mu_2 &= 
\begin{pmatrix}
0.0407  &  0.2830 &   0.0636 &   0.4334  &  0.4229
\end{pmatrix},
\\
\Sigma_2 &= 
\begin{pmatrix}
4.7066  &  1.2666  &  0.0573   & 1.8804  &  1.7179\\
1.2666  &  6.5954  &  2.2573   & 2.2448  &  0.4295\\
0.0573  &  2.2573  &  7.4165   & 0.7357  & -0.1879\\
1.8804  &  2.2448  &  0.7357   & 2.3251  & -1.1107\\
1.7179  &  0.4295  & -0.1879  & -1.1107  &  5.3569
\end{pmatrix}.
\end{aligned}
\end{equation}
Both $\Xbf$ and $\Ybf$ have size $d \times m$, where  $d = 5$ and $m$ is the number of samples, with $m = 10,20,30, \ldots, 500$.
Let $K$ be the Laplacian kernel
$K(x,y) = \exp(-a||x-y||)$ on $\R^5 \times \R^5$, where $a =1$.
We then computed the kernel Wasserstein distance ($\epsilon =0$), kernel Gaussian-Sinkhorn divergence
($\epsilon = 0.01$ and $\epsilon = 0.1$), and MMD ($\epsilon = \infty$)
between the Gaussian measures 
$\Ncal(\mu_{\Phi(\Xbf)}, C_{\Phi(\Xbf)})$ and $\Ncal(\mu_{\Phi(\Ybf)}, C_{\Phi(\Ybf)})$ induced by the kernel $K$ on the RKHS $\H_K$, 
according to Theorem \ref{theorem:RKHS-distance}. The results are plotted in Figure \ref{figure:Gaussian-mixture-1-plot} (top).

{\bf Experiment 2}.
We repeated the experiment above, except that now $\Ybf$ is generated from the following mixture of Gaussian distributions
\begin{align}
\label{equation:Gaussian-mixture-2}
Q = \frac{1}{2}\sum_{i=3,4}\Ncal(\mu_i, \Sigma_i),
\end{align}
\begin{equation}
\begin{aligned}
\mu_3 & = 
\begin{pmatrix}
0.4170  &  0.7203 &   0.0001  &  0.3023 &   0.1468
\end{pmatrix},
\\
\Sigma_3 & =
\begin{pmatrix}
0.6104  &  0.6002  &  0.4951  &  1.1095 &   0.6525\\
0.6002  &  1.4589  &  0.8099  &  1.6952 &   0.6145\\
0.4951  &  0.8099  &  0.9948  &  1.3868 &   0.8549\\
1.1095  &  1.6952  &  1.3868  &  2.9243 &   1.6981\\
0.6525  &  0.6145  &  0.8549  &  1.6981 &   1.6091
\end{pmatrix},
\\
\mu_4 &=
\begin{pmatrix}
0.0983  &  0.4211  &  0.9579 &   0.5332 &   0.6919
\end{pmatrix},
\\
\Sigma_4 &=
\begin{pmatrix}
1.8305  &  1.1516 &   1.1129 &   1.0224  &  1.0691\\
1.1516  &  2.2499  &  1.4456  &  0.8467  &  1.0608\\
1.1129  &  1.4456  &  1.2121  &  0.8279  &  0.8492\\
1.0224  &  0.8467  &  0.8279  &  0.8181  &  0.9223\\
1.0691  &  1.0608  &  0.8492  &  0.9223  &  1.1909
\end{pmatrix}.
\end{aligned}
\end{equation}
The results are plotted in Figure \ref{figure:Gaussian-mixture-1-plot} (bottom).

\begin{figure}
	\begin{center}
		\includegraphics[width = 0.7\textwidth]{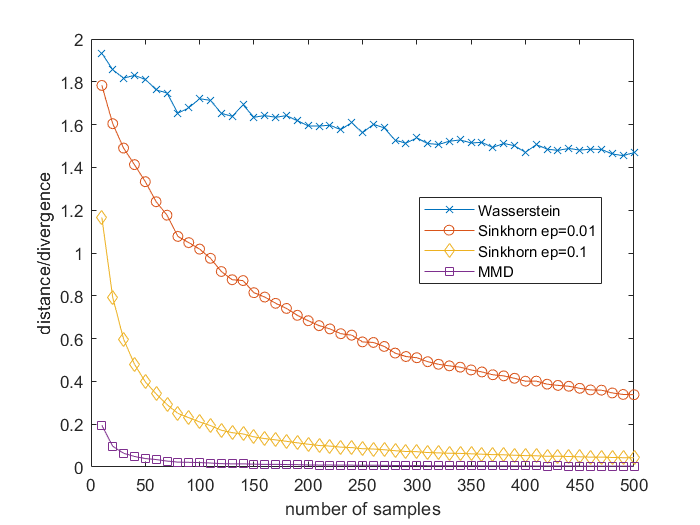}
		\includegraphics[width = 0.7\textwidth]{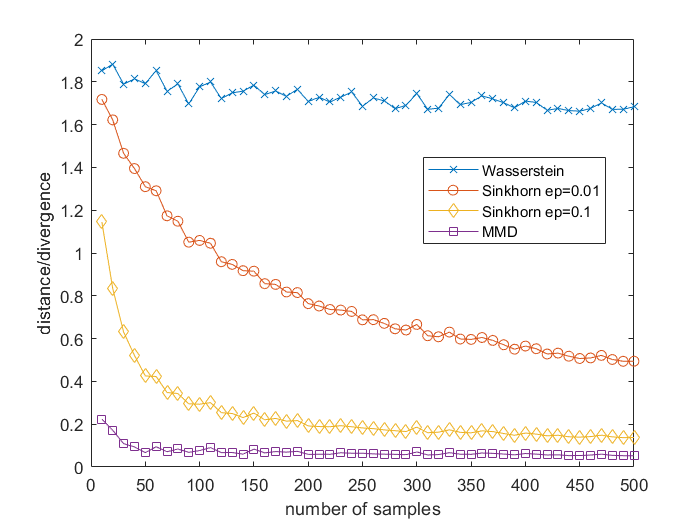}
		\caption{Distances/divergences between $\Ncal(\mu_{\Phi(\Xbf)}, C_{\Phi(\Xbf)})$ and $\Ncal(\mu_{\Phi(\Ybf)}, C_{\Phi(\Ybf)})$. 
			Top: $\Xbf,\Ybf$ are both generated by the mixture of Gaussian distributions in Eq.\eqref{equation:Gaussian-mixture-1}.
			Bottom: $\Xbf$ and $\Ybf$ are generated by two different mixtures of Gaussian distributions in Eq. \eqref{equation:Gaussian-mixture-1} and \eqref{equation:Gaussian-mixture-2}, respectively.
			Both cases use the {\it Laplacian kernel} $K(x,y) = \exp(-||x-y||)$ on $\R^5 \times \R^5$.	
		}
		\label{figure:Gaussian-mixture-1-plot}
	\end{center}
\end{figure}

{\bf Experiments 3}. We repeated Experiments 1 and 2, using the Gaussian kernel
$K(x,y) = \exp(-\frac{||x-y||^2}{\sigma^2})$, where $\sigma = 1$
(Figure \ref{figure:Gaussian-mixture-2-plot}).

\begin{figure}
	\begin{center}
		\includegraphics[width = 0.7\textwidth]{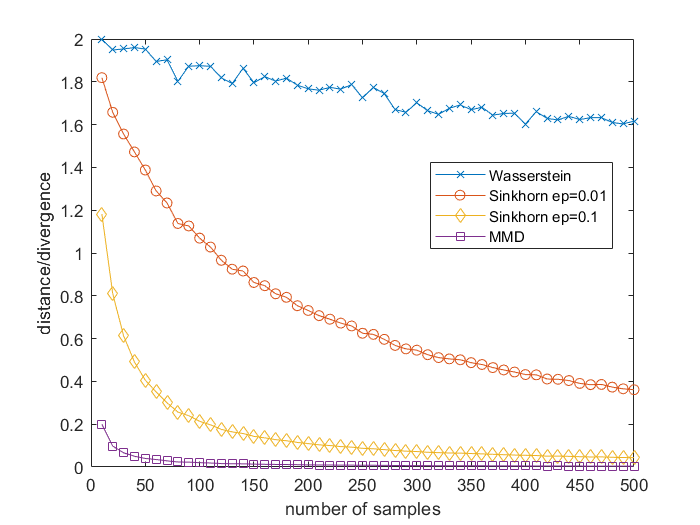}
		\includegraphics[width = 0.7\textwidth]{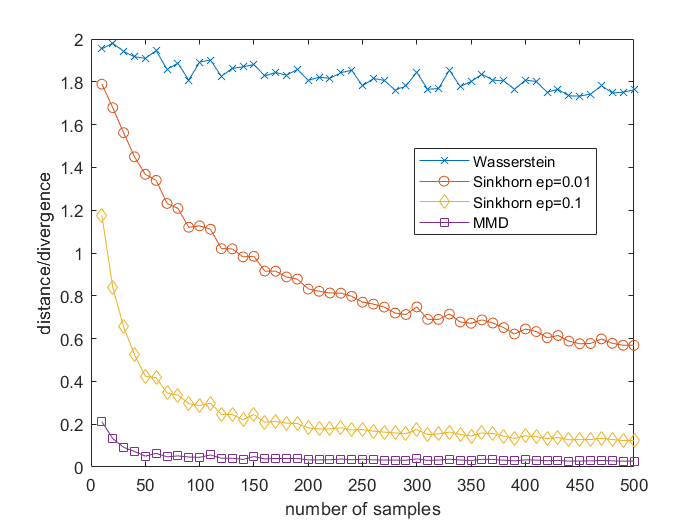}
		\caption{Distances/divergences between $\Ncal(\mu_{\Phi(\Xbf)}, C_{\Phi(\Xbf)})$ and $\Ncal(\mu_{\Phi(\Ybf)}, C_{\Phi(\Ybf)})$. 
			Top: $\Xbf,\Ybf$ are both generated by the mixture of Gaussian distributions in Eq.\eqref{equation:Gaussian-mixture-1}.
			Bottom: $\Xbf$ and $\Ybf$ are generated by two different mixtures of Gaussian distributions in Eq. \eqref{equation:Gaussian-mixture-1} and \eqref{equation:Gaussian-mixture-2}, respectively.
			Both cases use the {\it Gaussian kernel} $K(x,y) = \exp(-||x-y||^2)$ on $\R^5 \times \R^5$.	
		}
		\label{figure:Gaussian-mixture-2-plot}
	\end{center}
\end{figure}

{\bf Discussion of results}. In both Figure \ref{figure:Gaussian-mixture-1-plot} (top) and Figure \ref{figure:Gaussian-mixture-2-plot} (top), $\Xbf$ and $\Ybf$ are generated from the same probability distribution, the empirical distance/divergences are expected to converge to zero
as the sample size $m \approach \infty$. This is confirmed for all the distances/divergences plotted, with faster convergence 
for bigger values of $\epsilon$, in agreement with the theoretical analysis.
In particular, the kernel Gaussian-Sinkhorn divergence converges much more rapidly than the kernel Wasserstein divergence.
In Figure \ref{figure:Gaussian-mixture-1-plot} (bottom) and Figure \ref{figure:Gaussian-mixture-2-plot} (bottom),
since $\Xbf$ and $\Ybf$ are generated from two different probability distributions, the empirical distance/divergences are expected to converge to nonzero numbers
as 
$m \approach \infty$, as can be readily observed. We also see the dependence of the distances/divergences on the choice of kernels, as expected. 

{\bf Experiment 4}. 
We now empirically verify the results in Section \ref{section:general-kernels}. Specifically, 
we examine the convergence of the Wasserstein distance and Sinkhorn divergence between Gaussian measures on
$\R^d$. In the first set of experiments, we 
generated $\Xbf$ and $\Ybf$, each of size $d \times m$, where $d = 100$ and $m = 10,20,\ldots, 1000$,
by the same centered Gaussian distribution $\Ncal(0,C)$, on $\R^{100}$,
with $C = U^TU$, where the entries of $U$ are randomly generated by
$\Ncal(0,1)$
on $\R$. 
We then computed the $2$-Wasserstein distance and Sinkhorn divergence between $\Ncal(\mu_{\Xbf}, C_{\Xbf})$
and $\Ncal(\mu_{\Ybf}, C_{\Ybf})$. 
Since $||C||_{\HS}$ and $\trace(C)$ play an important role in the convergence rate in Corollary \ref{corollary:sample-complexity-Wasserstein-GaussianRd}, we ran a further experiment by normalizing $C$
so that $\trace(C) = 1$. The results are plotted in
Figure \ref{figure:Gaussians-Rd-plot-1}.

{\bf Experiment 5}. We repeated Experiment 4, except that $\Xbf$ and $\Ybf$ are generated by two different centered Gaussian distributions in $\R^{100}$
(Figure \ref{figure:Gaussians-Rd-plot-2}).

{\bf Experiment 6}. We repeated Experiment 4 in two scenarios: (i) $\Xbf,\Ybf$ are generated by the same centered Gaussian distributions in $\R^{100}$; (ii) $\Xbf, \Ybf$ are generated by the same centered Gaussian distributions on $\R^{500}$
(Figure \ref{figure:Gaussians-Rd-plot-3}).

{\bf Discussion of results}. 
Both Figures \ref{figure:Gaussians-Rd-plot-1} and \ref{figure:Gaussians-Rd-plot-2} show
that the Wasserstein distance and Sinkhorn divergence have similar convergence behavior when the sample
size $m$ is large. With $\trace(C)=1$, in Figure \ref{figure:Gaussians-Rd-plot-3} we observe the clear dimension-independent convergence behavior 
of the Sinkhorn divergence, in accordance with Corollary \ref{corollary:Sinkhorn-Gaussian-approximation-unbounded}, whereas the dimension-dependent convergence behavior
of the Wasserstein distance is 
in accordance with Corollary \ref{corollary:sample-complexity-Wasserstein-GaussianRd}.

\begin{figure}
	\begin{center}
		\includegraphics[width = 0.7\textwidth]{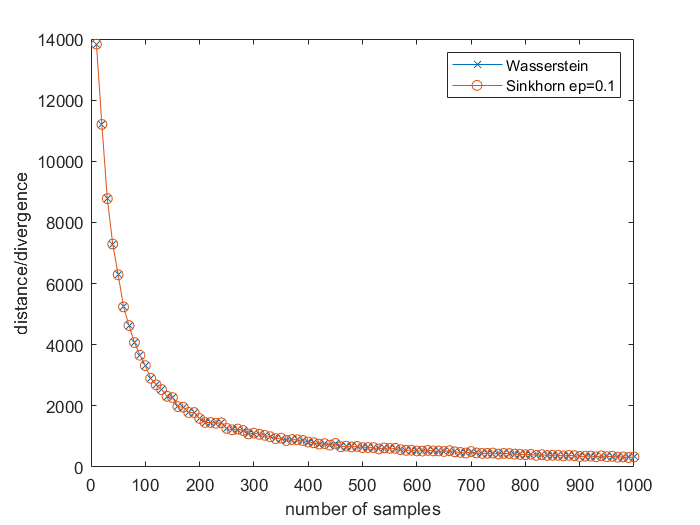}
		\includegraphics[width = 0.7\textwidth]{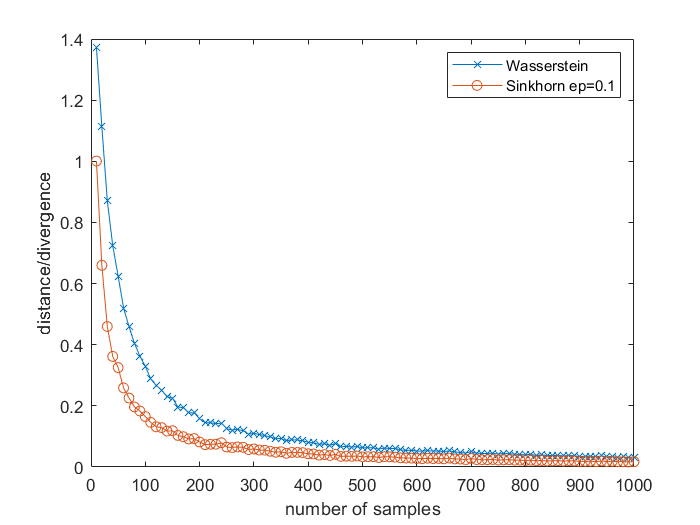}
		\caption{Distances/divergences between $\Ncal(\mu_{\Xbf}, C_{\Xbf})$ and $\Ncal(\mu_{\Ybf}, C_{\Ybf})$. 
			Top: $\Xbf,\Ybf$ are generated by the same centered Gaussian distribution in $\R^{100}$.
			Bottom: $\Xbf,\Ybf$ are generated by the same Gaussian distribution as above, but 
			with the trace of the covariance matrix normalized to $1$.
			Both cases are equivalent to using {\it linear kernel} $K(x,y) = \la x,y\ra$ on $\R^{100} \times \R^{100}$.	
		}
		\label{figure:Gaussians-Rd-plot-1}
	\end{center}
\end{figure}

\begin{figure}
	\begin{center}
		\includegraphics[width = 0.7\textwidth]{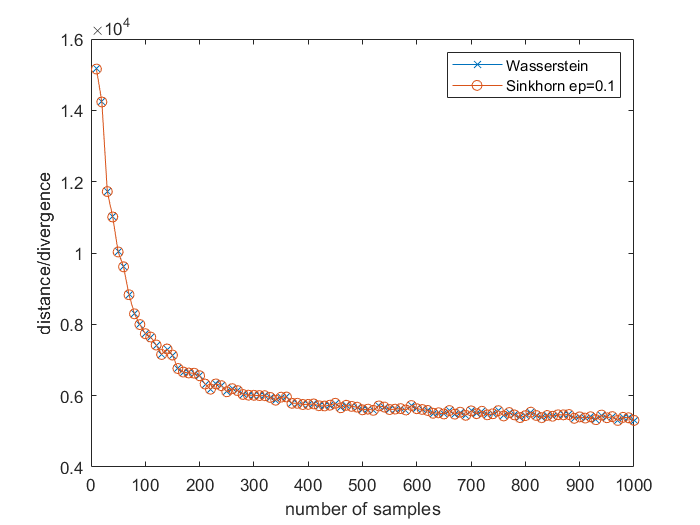}
		\includegraphics[width = 0.7\textwidth]{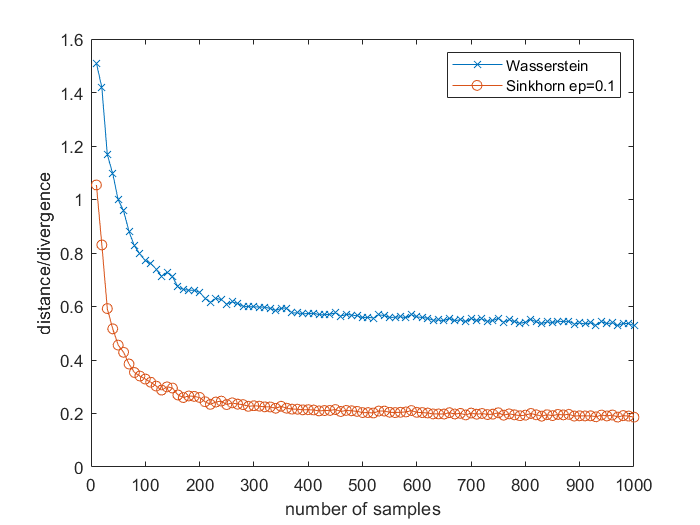}
		\caption{Distances/divergences between $\Ncal(\mu_{\Xbf}, C_{\Xbf})$ and $\Ncal(\mu_{\Ybf}, C_{\Ybf})$. 
			Top: $\Xbf,\Ybf$ are generated by the two centered Gaussian distributions in $\R^{100}$.
			Bottom: $\Xbf$ and $\Ybf$ are generated by the same centered Gaussian distributions in $\R^{100}$ as above, but 
			with the traces of the covariance matrices both normalized to $1$.
			Both cases are equivalent to using {\it linear kernel} $K(x,y) = \la x,y\ra$ on $\R^{100} \times \R^{100}$.	
		}
		\label{figure:Gaussians-Rd-plot-2}
	\end{center}
\end{figure}

\begin{figure}
	\begin{center}
		\includegraphics[width = 0.7\textwidth]{WassersteinSinkhorn_Gaussian_linearkernel_trace_1.png}
		\includegraphics[width = 0.7\textwidth]{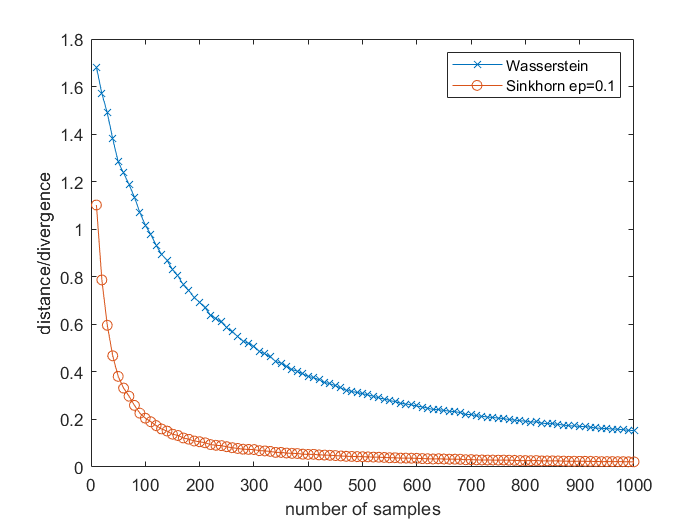}
		\caption{Distances/divergences between $\Ncal(\mu_{\Xbf}, C_{\Xbf})$ and $\Ncal(\mu_{\Ybf}, C_{\Ybf})$. 
			Top: $\Xbf,\Ybf$ are generated by the same centered Gaussian distribution in $\R^{100}$.
			Bottom: $\Xbf$ and $\Ybf$ are generated by the same centered Gaussian distribution in $\R^{500}$. In both case, 
			the traces of the covariance matrices are normalized to $1$.
			These cases are equivalent to using {\it linear kernel} $K(x,y) = \la x,y\ra$ on $\R^{100} \times \R^{100}$
			and $\R^{500} \times \R^{500}$, respectively.	
		}
		\label{figure:Gaussians-Rd-plot-3}
	\end{center}
\end{figure}

\section{Proofs of main results}
\label{section:proofs}

\subsection{Proofs for the general Hilbert space setting}
\label{section:proofs-Hilbert}

In this section, we prove Theorems \ref{theorem:convergence-Sinkhorn}, \ref{theorem:Sinkhorn-vs-Wasserstein},
\ref{theorem:OT-entropic-approx}, and
 \ref{theorem:Sinkhorn-entropic-approx}. We first need the following technical lemmas.

\begin{lemma}
	[\textbf{Corollary 3.2 in \cite{Kitta:InequalitiesV}}]
	\label{lemma:inequality-Schatten}
	For any two positive operators $A,B$ on $\H$ such that $A \geq cI > 0$, $B \geq cI > 0$,
	for any bounded operator $X$ on $\H$,
	\begin{equation}
	||A^rX - XB^r||_p\leq rc^{r-1}||AX-XB||_p, 0 < r \leq 1, 1 \leq p \leq \infty.
	\end{equation}
\end{lemma}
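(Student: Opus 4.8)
The case $r=1$ is an equality, so assume $0<r<1$. The plan is to start from the classical subordination identity for fractional powers,
\begin{equation}
a^r=\frac{\sin r\pi}{\pi}\int_0^\infty\frac{a}{a+s}\,s^{r-1}\,ds,\qquad a>0,
\end{equation}
and feed it into the Borel functional calculus for the positive operators $A,B$, obtaining $A^r=\frac{\sin r\pi}{\pi}\int_0^\infty s^{r-1}A(A+s)^{-1}\,ds$ and likewise for $B$; these converge as Bochner integrals in operator norm since $\|s^{r-1}A(A+s)^{-1}\|$ is $O(s^{r-1})$ as $s\to 0^+$ and $O(s^{r-2})$ as $s\to\infty$.

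First I would subtract the two representations and simplify the integrand. Writing $A(A+s)^{-1}=I-s(A+s)^{-1}$ and using the resolvent identity $(A+s)^{-1}X-X(B+s)^{-1}=-(A+s)^{-1}(AX-XB)(B+s)^{-1}$ (verified by multiplying on the left by $A+s$ and on the right by $B+s$), a short computation produces the key representation
\begin{equation}
\label{equation:sketch-key}
A^rX-XB^r=\frac{\sin r\pi}{\pi}\int_0^\infty s^{r}\,(A+s)^{-1}(AX-XB)(B+s)^{-1}\,ds,
\end{equation}
which exhibits $A^rX-XB^r$ as a resolvent-weighted integral of the single commutator-type operator $AX-XB$.

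Then I would pass to Schatten $p$-norms in \eqref{equation:sketch-key} (there is nothing to prove if $\|AX-XB\|_p=\infty$, so assume it is finite). Combining the Bochner estimate $\|\int(\cdot)\|_p\le\int\|(\cdot)\|_p$, the ideal inequality $\|MYN\|_p\le\|M\|\,\|Y\|_p\,\|N\|$, and the spectral bounds $\|(A+s)^{-1}\|\le(c+s)^{-1}$, $\|(B+s)^{-1}\|\le(c+s)^{-1}$ (which use $A,B\ge cI$) yields
\begin{equation}
\|A^rX-XB^r\|_p\le\|AX-XB\|_p\cdot\frac{\sin r\pi}{\pi}\int_0^\infty\frac{s^r}{(c+s)^2}\,ds.
\end{equation}
Finally the substitution $s=cu$ turns the integral into $c^{r-1}\int_0^\infty u^r(1+u)^{-2}\,du=c^{r-1}B(r+1,1-r)=c^{r-1}\Gamma(r+1)\Gamma(1-r)=c^{r-1}\,\frac{r\pi}{\sin r\pi}$ by the reflection formula, so the $\frac{\sin r\pi}{\pi}$ prefactor cancels and the bound becomes exactly $rc^{r-1}\|AX-XB\|_p$.

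The only genuinely routine-but-required work is justifying the Bochner-integral interchanges (left and right multiplication by $X$, and the passage to $\|\cdot\|_p$), all of which follow from dominated convergence via the $O(s^{r-1})$/$O(s^{r-2})$ integrability bounds; note that the argument uses only operator-norm resolvent estimates and the ideal property, hence it also covers $p=\infty$. I expect the one genuine idea to be the rewriting \eqref{equation:sketch-key}; everything after it is a single Beta-integral evaluation.
\qed
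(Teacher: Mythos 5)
Your proof is correct. Note that the paper does not prove this lemma at all: it is imported verbatim as Corollary 3.2 of the cited reference of Kittaneh, so there is no in-paper argument to compare against. Your derivation — the subordination formula $a^r=\frac{\sin r\pi}{\pi}\int_0^\infty s^{r-1}a(a+s)^{-1}\,ds$, the resolvent identity yielding $A^rX-XB^r=\frac{\sin r\pi}{\pi}\int_0^\infty s^{r}(A+s)^{-1}(AX-XB)(B+s)^{-1}\,ds$, the bounds $\|(A+s)^{-1}\|,\|(B+s)^{-1}\|\le (c+s)^{-1}$, and the Beta-integral evaluation $\int_0^\infty u^r(1+u)^{-2}\,du=\frac{r\pi}{\sin r\pi}$ — is precisely the standard argument behind the cited result, and all the constants and signs check out (including the trivial equality at $r=1$ and the $p=\infty$ case). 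The only point worth making explicit in a full write-up is the one you already flag: the identity is first established as an operator-norm Bochner integral, and when $\|AX-XB\|_p<\infty$ the same integral converges absolutely in $\Csc_p$, so the two limits agree and the triangle inequality for the $\Csc_p$-valued Bochner integral applies.
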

The following result is then immediate.
\begin{corollary}
	\label{corollary:trace-class-square-root}
	Let $\Csc_p(\H)$ denote the set of $p$th Schatten class operators on $\H$, $1 \leq p \leq \infty$.
	For two operators $A, B \in \Sym^{+}(\H) \cap \Csc_p(\H)$, 
	\begin{align}
	||(I+A)^{r} - (I+B)^{r}||_{p} \leq r||A-B||_{p}, 0\leq r \leq 1.
	\end{align}
\end{corollary}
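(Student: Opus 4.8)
The plan is to obtain Corollary~\ref{corollary:trace-class-square-root} as an immediate specialization of Lemma~\ref{lemma:inequality-Schatten}, using a shift by the identity and the trivial choice $X = I$. First I would dispose of the boundary case $r = 0$, for which $(I+A)^0 = (I+B)^0 = I$ and the asserted inequality reads $0 \le 0$. For $0 < r \le 1$, set $\widetilde{A} = I + A$ and $\widetilde{B} = I + B$. Since $A, B \in \Sym^{+}(\H)$, both $\widetilde{A}$ and $\widetilde{B}$ are bounded positive operators satisfying $\widetilde{A} \ge I$ and $\widetilde{B} \ge I$, so the hypotheses of Lemma~\ref{lemma:inequality-Schatten} are met with constant $c = 1$. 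Taking $X = I$ (a bounded operator on $\H$) in that lemma yields
\[
||\widetilde{A}^{\,r} - \widetilde{B}^{\,r}||_{p} \le r\,1^{\,r-1}\,||\widetilde{A} - \widetilde{B}||_{p} = r\,||\widetilde{A} - \widetilde{B}||_{p}.
\]
Now $\widetilde{A} - \widetilde{B} = (I+A) - (I+B) = A - B$, which lies in $\Csc_p(\H)$ since $A$ and $B$ do, so the right-hand side is finite; and $\widetilde{A}^{\,r} - \widetilde{B}^{\,r} = (I+A)^{r} - (I+B)^{r}$. This is exactly the claimed bound, and it also shows as a byproduct that $(I+A)^{r} - (I+B)^{r} \in \Csc_p(\H)$.

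There is essentially no genuine obstacle here: the whole content is packaged in Lemma~\ref{lemma:inequality-Schatten}, and the only point worth noting is that the shift by $I$ is precisely what supplies the strict lower bounds $\widetilde{A}, \widetilde{B} \ge cI > 0$ required by that lemma, even when $A$ or $B$ is merely positive semidefinite rather than strictly positive. One should also just remark that the functional calculus expressions $(I+A)^{r}$ and $(I+B)^{r}$ are well defined, which is immediate since $I+A$ and $I+B$ are bounded and positive.
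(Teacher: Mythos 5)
Your proof is correct and is exactly the derivation the paper intends: the paper states the corollary is ``immediate'' from Lemma~\ref{lemma:inequality-Schatten}, and the shift $\widetilde{A}=I+A$, $\widetilde{B}=I+B$ with $c=1$ and $X=I$ is precisely that immediate specialization. Your separate handling of $r=0$ and the remark that the identity shift supplies the required strict lower bound are both sensible and consistent with the paper's usage.
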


\begin{lemma}
	\label{lemma:trace-norm-bound-ABA}
	Let $A \in \Sym^{+}(\H) \cap \HS(\H)$ and $B \in \Sym(\H) \cap \HS(\H)$. Then
	$A^{1/2}BA^{1/2} \in \Tr(\H)$ and
	\begin{align}
	||A^{1/2}BA^{1/2}||_{\tr} \leq ||A||_{\HS}||B||_{\HS}.
	\end{align}
\end{lemma}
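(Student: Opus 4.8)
The plan is to reduce the claim to the noncommutative H\"older inequality for Schatten norms. First I would record the three-factor H\"older inequality: if $X\in\Csc_p(\H)$, $Y\in\Csc_q(\H)$, $Z\in\Csc_s(\H)$ with $\frac1p+\frac1q+\frac1s=\frac1r$, then $XYZ\in\Csc_r(\H)$ and $\|XYZ\|_r\le\|X\|_p\|Y\|_q\|Z\|_s$; this follows by iterating the standard two-factor estimate $\|UV\|_1\le\|U\|_2\|V\|_2$ (equivalently $\|UV\|_r\le\|U\|_{p}\|V\|_{q}$, $\frac1r=\frac1p+\frac1q$), which is exactly the inequality already used implicitly in the paper and available in any standard reference on operator ideals (e.g. \cite{ReedSimon:Functional}).

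The key observation is that the hypothesis $A\in\Sym^{+}(\H)\cap\HS(\H)$ forces $A^{1/2}$ to belong to the fourth Schatten class $\Csc_4(\H)$. Indeed, $A$ is a compact positive operator, so by the spectral theorem it has eigenvalues $(\lambda_k)_{k\in\Nbb}\subset[0,\infty)$ with $\sum_k\lambda_k^2=\|A\|_{\HS}^2<\infty$; the operator $A^{1/2}$ then has eigenvalues $(\lambda_k^{1/2})_{k\in\Nbb}$, whence
\begin{align}
\|A^{1/2}\|_4^4=\tr\left((A^{1/2})^4\right)=\tr(A^2)=\|A\|_{\HS}^2,\qquad\text{i.e.}\qquad\|A^{1/2}\|_4=\|A\|_{\HS}^{1/2}.
\end{align}
Since $A^{1/2}$ and $B$ are bounded, $A^{1/2}BA^{1/2}$ is a well-defined bounded operator, and it is self-adjoint because $B=B^{*}$.

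Applying the three-factor H\"older inequality with $X=Z=A^{1/2}\in\Csc_4(\H)$, $Y=B\in\Csc_2(\H)=\HS(\H)$ and $\frac14+\frac12+\frac14=1$ yields $A^{1/2}BA^{1/2}\in\Csc_1(\H)=\Tr(\H)$ together with
\begin{align}
\|A^{1/2}BA^{1/2}\|_{\tr}\le\|A^{1/2}\|_4\,\|B\|_{\HS}\,\|A^{1/2}\|_4=\|A\|_{\HS}^{1/2}\,\|B\|_{\HS}\,\|A\|_{\HS}^{1/2}=\|A\|_{\HS}\|B\|_{\HS},
\end{align}
which is the assertion.

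I do not expect a genuine obstacle here; the only points needing care are the precise statement and citation of the multi-factor H\"older inequality and the elementary spectral identification $\|A^{1/2}\|_4=\|A\|_{\HS}^{1/2}$. If one prefers to avoid Schatten-$4$ norms, an equivalent self-contained route is to use the spectral calculus to write $B=|B|^{1/2}\,\mathrm{sgn}(B)\,|B|^{1/2}$, so that $A^{1/2}BA^{1/2}=\bigl(A^{1/2}|B|^{1/2}\bigr)\bigl(\mathrm{sgn}(B)|B|^{1/2}A^{1/2}\bigr)$ is a product of two Hilbert-Schmidt operators; then $\|A^{1/2}|B|^{1/2}\|_{\HS}^2=\tr\bigl(|B|^{1/2}A|B|^{1/2}\bigr)=\la A,|B|\ra_{\HS}\le\|A\|_{\HS}\|B\|_{\HS}$ by Cauchy--Schwarz (and the same bound for the adjoint factor, using $\|\mathrm{sgn}(B)\|\le1$), and combining via $\|UV\|_{\tr}\le\|U\|_{\HS}\|V\|_{\HS}$ gives the same conclusion.
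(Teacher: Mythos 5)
Your proposal is correct, but it proves the lemma by a different route than the paper. The paper's argument is a Jordan-decomposition reduction: it first treats positive $B$, where $A^{1/2}BA^{1/2}\geq 0$ so that $||A^{1/2}BA^{1/2}||_{\tr}=\trace(AB)=\la A,B\ra_{\HS}\leq ||A||_{\HS}||B||_{\HS}$ is just Cauchy--Schwarz in $\HS(\H)$, and then writes a general self-adjoint $B$ as $B_1-B_2$ with $B_1=\frac{|B|+B}{2}$, $B_2=\frac{|B|-B}{2}$, applies the triangle inequality, and recombines via $B_1+B_2=|B|$ to land on $\trace(A|B|)\leq ||A||_{\HS}||B||_{\HS}$. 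Your main route instead invokes the three-factor H\"older inequality for Schatten classes together with the identity $||A^{1/2}||_4=||A||_{\HS}^{1/2}$; this is more direct (no case split, no positive/negative parts) and cleanly explains \emph{why} the exponents work out, at the cost of citing a slightly heavier tool than the paper needs. Your alternative route, factoring $A^{1/2}BA^{1/2}=\bigl(A^{1/2}|B|^{1/2}\bigr)\bigl(\mathrm{sgn}(B)|B|^{1/2}A^{1/2}\bigr)$ via the polar decomposition and using only $||UV||_{\tr}\leq ||U||_{\HS}||V||_{\HS}$ plus Cauchy--Schwarz, is closest in spirit to the paper (it replaces the Jordan decomposition $B=B_1-B_2$ by the polar one $B=|B|^{1/2}\mathrm{sgn}(B)|B|^{1/2}$) and is equally self-contained; note that the commutation of $\mathrm{sgn}(B)$ with $|B|^{1/2}$, which your factorization uses, holds precisely because $B$ is self-adjoint. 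All three arguments yield the same constant, and the self-adjointness of $B$ is genuinely needed in each (the paper's via $B=B_1-B_2$ with $B_1,B_2\geq 0$, yours via the symmetric polar factorization), so there is no loss of generality anywhere.
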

\begin{proof}
	If $B \in \Sym^{+}(\H)$, then $A^{1/2}BA^{1/2} \in \Sym^{+}(\H)$, so that
	\begin{align*}
	0 \leq ||A^{1/2}BA^{1/2}||_{\tr} = \trace(A^{1/2}BA^{1/2}) = \trace(AB) = \la A,B\ra_{\HS} \leq ||A||_{\HS}||B||_{\HS}.
	\end{align*}
	If $B \in \Sym(\H)$, then $B = B_1 - B_2$, where $B_1 = \frac{|B|+B}{2} \geq 0$, $B_2 = \frac{|B|-B}{2}\geq 0$.
	Applying the previous result gives
	\begin{align*}
	&||A^{1/2}BA^{1/2}||_{\tr} = ||A^{1/2}(B_1 - B_2)A^{1/2}||_{\tr} \leq ||A^{1/2}B_1A^{1/2}||_{\tr} + ||A^{1/2}B_2A^{1/2}||_{\tr}
	\\
	& = \trace(A^{1/2}B_1A^{1/2}) + \trace(A^{1/2}B_2A^{1/2}) = \trace(AB_1) + \trace(AB_2) = \trace(A(B_1 + B_2))
	\\
	& = \trace(A|B|) \leq ||A||_{\HS}|||B|||_{\HS} = ||A||_{\HS}||B||_{\HS}.
	\end{align*}
	\qed
\end{proof}

\begin{lemma}
	\label{lemma:trace-bound-square-root-1}
	Let $A_N,B_N, A,B \in \Sym^{+}(\H)\cap \HS(\H)$. Then for $c\in \R$,
	\begin{align}
	&|\trace[-I + (I+c^2A_N^{1/2}B_NA_N^{1/2})^{1/2}] - \trace[-I + (I+c^2 A^{1/2}BA^{1/2})^{1/2}]|
	\nonumber
	\\
	&\quad \leq \frac{c^2}{2}\left(||A_N||_{\HS}||B_N-B||_{\HS} + ||B||_{\HS}||A_N-A||_{\HS}\right).
	\end{align}
\end{lemma}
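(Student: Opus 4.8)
The plan is to introduce the functional $\phi(T):=\trace[-I+(I+T)^{1/2}]$, defined for $T\in\Sym^{+}(\H)\cap\Tr(\H)$, to show that it is $\tfrac12$-Lipschitz in the trace-class norm, and then to estimate $\phi(c^2A_N^{1/2}B_NA_N^{1/2})-\phi(c^2A^{1/2}BA^{1/2})$ by passing through the intermediate operator $c^2A_N^{1/2}BA_N^{1/2}$, replacing $B_N$ by $B$ in one step and $A_N$ by $A$ in the other.

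First I would verify finiteness and the Lipschitz estimate. By Lemma~\ref{lemma:trace-norm-bound-ABA}, each of $c^2A_N^{1/2}B_NA_N^{1/2}$, $c^2A_N^{1/2}BA_N^{1/2}$, $c^2B^{1/2}A_NB^{1/2}$, $c^2B^{1/2}AB^{1/2}$ and $c^2A^{1/2}BA^{1/2}$ belongs to $\Sym^{+}(\H)\cap\Tr(\H)$ (for $C\in\Sym^{+}(\H)\cap\HS(\H)$ one reads $A_N^{1/2}CA_N^{1/2}$ as $(A_N^{1/2}C^{1/2})(A_N^{1/2}C^{1/2})^{*}$, which is legitimate even though $A_N^{1/2}$ need only be bounded). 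Corollary~\ref{corollary:trace-class-square-root} with $r=\tfrac12$ (taking the second operator to be $0$) gives $||{-I}+(I+T)^{1/2}||_{\tr}\le\tfrac12||T||_{\tr}<\infty$, so $\phi(T)$ is finite; and for $T_1,T_2\in\Sym^{+}(\H)\cap\Tr(\H)$,
\[
|\phi(T_1)-\phi(T_2)|=\bigl|\trace\bigl[(I+T_1)^{1/2}-(I+T_2)^{1/2}\bigr]\bigr|\le||(I+T_1)^{1/2}-(I+T_2)^{1/2}||_{\tr}\le\tfrac12||T_1-T_2||_{\tr},
\]
using $|\trace(S)|\le||S||_{\tr}$ and Corollary~\ref{corollary:trace-class-square-root} with $r=\tfrac12$, $p=1$.

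The key step is a spectrum swap. Since $-1+\sqrt{1+0}=0$, the number $\phi(T)$ equals the sum of $\sqrt{1+\lambda}-1$ over the \emph{nonzero} eigenvalues $\lambda$ of $T$ counted with multiplicity, so $\phi$ is unchanged when $T$ is replaced by any positive trace-class operator with the same nonzero spectrum. Writing $Z=cA_N^{1/2}B^{1/2}$ one has $c^2A_N^{1/2}BA_N^{1/2}=ZZ^{*}$ and $c^2B^{1/2}A_NB^{1/2}=Z^{*}Z$, and $ZZ^{*}$ and $Z^{*}Z$ have the same nonzero eigenvalues with multiplicities; hence $\phi(c^2A_N^{1/2}BA_N^{1/2})=\phi(c^2B^{1/2}A_NB^{1/2})$, and similarly $\phi(c^2A^{1/2}BA^{1/2})=\phi(c^2B^{1/2}AB^{1/2})$. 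Inserting $c^2A_N^{1/2}BA_N^{1/2}$ and using the triangle inequality together with these two identities,
\[
|\phi(c^2A_N^{1/2}B_NA_N^{1/2})-\phi(c^2A^{1/2}BA^{1/2})|\le|\phi(c^2A_N^{1/2}B_NA_N^{1/2})-\phi(c^2A_N^{1/2}BA_N^{1/2})|+|\phi(c^2B^{1/2}A_NB^{1/2})-\phi(c^2B^{1/2}AB^{1/2})|.
\]
By the Lipschitz estimate the first term is at most $\tfrac{c^2}{2}||A_N^{1/2}(B_N-B)A_N^{1/2}||_{\tr}$ and the second at most $\tfrac{c^2}{2}||B^{1/2}(A_N-A)B^{1/2}||_{\tr}$; since $B_N-B,\,A_N-A\in\Sym(\H)\cap\HS(\H)$, Lemma~\ref{lemma:trace-norm-bound-ABA} bounds these by $\tfrac{c^2}{2}||A_N||_{\HS}||B_N-B||_{\HS}$ and $\tfrac{c^2}{2}||B||_{\HS}||A_N-A||_{\HS}$ respectively, and adding yields the stated inequality.

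The only non-mechanical point is the spectrum-swap step: one must check carefully that $\phi$ really factors through the nonzero spectrum (so that the --- possibly infinite-dimensional --- kernel contributes nothing) and then invoke the standard fact that $ZZ^{*}$ and $Z^{*}Z$ have identical nonzero eigenvalues with multiplicities. This is precisely what lets the ``$A_N\to A$'' error be measured by $||B||_{\HS}||A_N-A||_{\HS}$; differencing $A_N^{1/2}BA_N^{1/2}$ and $A^{1/2}BA^{1/2}$ directly would only give the much weaker operator-square-root bound. All remaining steps are routine applications of Corollary~\ref{corollary:trace-class-square-root} and Lemma~\ref{lemma:trace-norm-bound-ABA}.
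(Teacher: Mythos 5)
Your proof is correct and follows essentially the same route as the paper: insert the intermediate operator $c^2A_N^{1/2}BA_N^{1/2}$, use the equality of the nonzero spectra of $ZZ^{*}$ and $Z^{*}Z$ to rewrite the second difference in terms of $B^{1/2}A_NB^{1/2}$ and $B^{1/2}AB^{1/2}$, apply Corollary \ref{corollary:trace-class-square-root} with $r=1/2$, and finish with Lemma \ref{lemma:trace-norm-bound-ABA}. The only difference is presentational — you make explicit the spectrum-swap justification that the paper uses silently.
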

\begin{proof} By Corollary \ref{corollary:trace-class-square-root}, with $r=1/2$,
	\begin{align*}
	&|\trace[-I + (I+c^2A_N^{1/2}B_NA_N^{1/2})^{1/2}] - \trace[-I + (I+c^2 A^{1/2}BA^{1/2})^{1/2}]|
	\\
	& \leq |\trace[-I + (I+c^2A_N^{1/2}B_NA_N^{1/2})^{1/2}] - \trace[-I + (I+c^2 A_N^{1/2}BA_N^{1/2})^{1/2}]|
	\\
	&\quad + |\trace[-I + (I+c^2A_N^{1/2}BA_N^{1/2})^{1/2}] - \trace[-I + (I+c^2 A^{1/2}BA^{1/2})^{1/2}]|
	\\
	&\leq ||(I+c^2A_N^{1/2}B_NA_N^{1/2})^{1/2} - (I+c^2 A_N^{1/2}BA_N^{1/2})^{1/2}||_{\trace}
	\\
	& \quad + |\trace[-I + (I+c^2B^{1/2}A_NB^{1/2})^{1/2}] - \trace[-I + (I+c^2 B^{1/2}AB^{1/2})^{1/2}]|
	\\
	& \leq ||(I+c^2A_N^{1/2}B_NA_N^{1/2})^{1/2} - (I+c^2 A_N^{1/2}BA_N^{1/2})^{1/2}||_{\trace}
	\\
	& \quad + ||(I+c^2B^{1/2}A_NB^{1/2})^{1/2} - (I+c^2 B^{1/2}AB^{1/2})^{1/2}||_{\trace}
	\\
	& \leq \frac{c^2}{2}||A_N^{1/2}B_NA_N^{1/2} - A_N^{1/2}BA_N^{1/2}||_{\tr} 
	+ \frac{c^2}{2}||B^{1/2}A_NB^{1/2} - B^{1/2}AB^{1/2}||_{\tr}
	\\
	& \leq \frac{c^2}{2}||A_N||_{\HS}||B_N-B||_{\HS} + \frac{c^2}{2}||B||_{\HS}||A_N-A||_{\HS}
	\end{align*}
	where the last inequality follows from Lemma \ref{lemma:trace-norm-bound-ABA}.
	\qed
\end{proof}
\begin{lemma}
	[\textbf{Theorem 3.2 in \cite{Simon:1977}}]
	\label{lemma:Fredholm-det-upperbound}
	For any $A \in \Tr(\H)$,
	\begin{equation}
	|\det(I+A)| \leq \exp(||A||_{\tr}).
	\end{equation}
\end{lemma}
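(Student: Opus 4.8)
The plan is to reduce the bound to the elementary scalar inequality $1 + t \le e^{t}$ for $t \ge 0$, applied to the eigenvalues of $A$, with the eigenvalue moduli controlled by the trace norm via Weyl's inequality.

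First I would invoke the Lidskii-type product representation of the Fredholm determinant: for $A \in \Tr(\H)$, if $\{\lambda_j(A)\}_{j \ge 1}$ denotes the sequence of nonzero eigenvalues of $A$ repeated according to algebraic multiplicity, then
\begin{align*}
\det(I + A) = \prod_{j \ge 1}(1 + \lambda_j(A)),
\end{align*}
the product being absolutely convergent because $\sum_{j \ge 1}|\lambda_j(A)| < \infty$. The second ingredient is Weyl's inequality, valid for $A \in \Tr(\H)$,
\begin{align*}
\sum_{j \ge 1}|\lambda_j(A)| \le \sum_{j \ge 1}s_j(A) = ||A||_{\tr},
\end{align*}
where the $s_j(A)$ are the singular values of $A$. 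Granting these two facts, the estimate follows directly:
\begin{align*}
|\det(I + A)| &= \prod_{j \ge 1}|1 + \lambda_j(A)| \le \prod_{j \ge 1}\bigl(1 + |\lambda_j(A)|\bigr) \\
&\le \prod_{j \ge 1}e^{|\lambda_j(A)|} = \exp\Bigl(\sum_{j \ge 1}|\lambda_j(A)|\Bigr) \le \exp(||A||_{\tr}).
\end{align*}

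If one wishes to avoid invoking the infinite-dimensional eigenvalue theory wholesale, an alternative route is to first prove the inequality for finite-rank $A$ --- here one restricts to a finite-dimensional subspace containing the range of $A$, applies Schur triangularization to identify $\det(I + A)$ with $\prod_j(1 + \lambda_j(A))$ over the finitely many eigenvalues, and uses the finite-dimensional Weyl inequality $\sum_j |\lambda_j(A)| \le ||A||_{\tr}$ --- and then passes to general $A \in \Tr(\H)$ by approximating it in trace norm by the finite-rank truncations $A_n$ of its singular value decomposition, noting $||A_n||_{\tr} \le ||A||_{\tr}$ and using continuity of $A \mapsto \det(I + A)$ with respect to $||\cdot||_{\tr}$. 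The main obstacle, on either route, is not the algebra but these structural facts about Fredholm determinants --- absolute convergence of the eigenvalue product, the Lidskii identity, and trace-norm continuity of $\det$ --- which is exactly the content of the cited theorem of Simon; once they are in hand, the bound is just $1 + t \le e^{t}$ applied termwise.
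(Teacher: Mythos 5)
The paper does not prove this lemma at all: it is imported verbatim as Theorem 3.2 of the cited reference \cite{Simon:1977} and used as a black box (its only role here is to feed Corollary \ref{corollary:logdet-continuity-trace-norm}). So there is no internal proof to compare against, and the only question is whether your argument is a correct derivation of the stated bound. It is. The chain $|\det(I+A)| = \prod_j |1+\lambda_j(A)| \le \prod_j (1+|\lambda_j(A)|) \le \exp\bigl(\sum_j |\lambda_j(A)|\bigr) \le \exp(||A||_{\tr})$ is sound, with the two nontrivial inputs being exactly the ones you name: the Lidskii product representation of the Fredholm determinant (with absolute convergence of the product) and Weyl's inequality $\sum_j|\lambda_j(A)| \le \sum_j s_j(A)$. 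Your alternative finite-rank route is also fine, though one small point of care there: to identify $\det(I+A)$ with a finite-dimensional determinant and to compare trace norms you want the finite-dimensional subspace to contain the ranges of both $A$ and $A^{*}$ (so that the restriction is genuinely a compression that preserves the nonzero singular values), not merely the range of $A$. The honest caveat, which you state yourself, is that the structural facts you invoke --- Lidskii's identity and trace-norm continuity of $\det(I+\cdot)$ --- are of the same depth as the cited theorem, so this is a proof from the standard toolkit of Simon's theory rather than something more elementary; that is entirely appropriate for a lemma the paper itself delegates to the literature.
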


\begin{corollary}
	\label{corollary:logdet-continuity-trace-norm}
	For $A,B  \in \Sym^{+}(\H) \cap \Tr(\H)$. Then
	\begin{align}
	&|\log\det(I+A) - \log\det(I+B)| \leq ||A-B||_{\tr},
	\\
	&\left|\log\det\left(\frac{1}{2}I + \frac{1}{2}(I+A)^{1/2}\right) - \log\det\left(\frac{1}{2}I + \frac{1}{2}(I+B)^{1/2}\right)\right|
	\leq \frac{1}{4}||A-B||_{\tr}.
	\end{align}
\end{corollary}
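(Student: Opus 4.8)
The plan is to reduce both inequalities to the Fredholm determinant estimate of Lemma \ref{lemma:Fredholm-det-upperbound} by exploiting the multiplicativity of $\det$ on $\{I\}+\Tr(\H)$. For the first inequality, since $A,B\in\Sym^{+}(\H)$ we have $I+B\geq I$, so $I+B$ is invertible with $\|(I+B)^{-1}\|\leq 1$, and $T:=(I+B)^{-1}(A-B)$ is trace class (a bounded operator times a trace-class operator), with $(I+B)(I+T)=I+A$. Multiplicativity of the Fredholm determinant then gives $\det(I+A)=\det(I+B)\,\det\bigl(I+(I+B)^{-1}(A-B)\bigr)$, and since $\det(I+A),\det(I+B)\geq 1$ the quotient is positive; taking logarithms and invoking Lemma \ref{lemma:Fredholm-det-upperbound},
$$\log\det(I+A)-\log\det(I+B)=\log\det\bigl(I+(I+B)^{-1}(A-B)\bigr)\leq \|(I+B)^{-1}(A-B)\|_{\tr}\leq \|A-B\|_{\tr}.$$
Exchanging the roles of $A$ and $B$ (using $\|(I+A)^{-1}\|\leq 1$) yields the reverse bound, hence $|\log\det(I+A)-\log\det(I+B)|\leq\|A-B\|_{\tr}$.

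For the second inequality, write $\tfrac12 I+\tfrac12(I+A)^{1/2}=I+\tfrac12 P_A$ with $P_A:=(I+A)^{1/2}-I$, and similarly $P_B$. Since $A\geq 0$ implies $(I+A)^{1/2}\geq I$, we have $P_A\geq 0$; by Corollary \ref{corollary:trace-class-square-root} with $r=\tfrac12$ (taking $B=0$ there), $P_A\in\Tr(\H)$, so $\tfrac12 P_A\in\Sym^{+}(\H)\cap\Tr(\H)$. Applying the first inequality to $\tfrac12 P_A$ and $\tfrac12 P_B$,
$$\left|\log\det\bigl(\tfrac12 I+\tfrac12(I+A)^{1/2}\bigr)-\log\det\bigl(\tfrac12 I+\tfrac12(I+B)^{1/2}\bigr)\right|\leq \tfrac12\|P_A-P_B\|_{\tr}=\tfrac12\bigl\|(I+A)^{1/2}-(I+B)^{1/2}\bigr\|_{\tr}.$$
A second application of Corollary \ref{corollary:trace-class-square-root} with $r=\tfrac12$ gives $\|(I+A)^{1/2}-(I+B)^{1/2}\|_{\tr}\leq\tfrac12\|A-B\|_{\tr}$, and combining the two estimates produces the claimed factor $\tfrac14$.

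The only genuinely delicate point is the factorization $\det(I+A)=\det(I+B)\,\det(I+(I+B)^{-1}(A-B))$: one must verify that $(I+B)^{-1}(A-B)\in\Tr(\H)$ — immediate — and then invoke the standard multiplicativity of the Fredholm determinant on $\{I\}+\Tr(\H)$ (see e.g.\ \cite{Simon:1977}); the rest is bookkeeping together with $\|CT\|_{\tr}\leq\|C\|\,\|T\|_{\tr}$. If one prefers to avoid the factorization altogether, the first inequality can instead be obtained by differentiating $t\mapsto\log\det\bigl(I+(1-t)A+tB\bigr)=\trace\log\bigl(I+(1-t)A+tB\bigr)$ along the segment: its derivative $\trace\bigl[(I+(1-t)A+tB)^{-1}(B-A)\bigr]$ has absolute value at most $\|B-A\|_{\tr}$ because $(I+(1-t)A+tB)^{-1}\leq I$, and integrating over $t\in[0,1]$ gives the same bound.
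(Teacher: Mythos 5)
Your proof is correct and follows essentially the same route as the paper: reduce the logdet difference to $\log\det(I+(I+B)^{-1}(A-B))$ via multiplicativity of the Fredholm determinant, bound it by Lemma \ref{lemma:Fredholm-det-upperbound} together with $\|CT\|_{\tr}\leq\|C\|\,\|T\|_{\tr}$ and $\|(I+B)^{-1}\|\leq 1$, symmetrize, and then obtain the second inequality by applying the first to $\tfrac12\bigl((I+A)^{1/2}-I\bigr)$ and $\tfrac12\bigl((I+B)^{1/2}-I\bigr)$ followed by Corollary \ref{corollary:trace-class-square-root}. The only cosmetic difference is that the paper factors $(I+B)^{-1}$ on the right rather than the left; your integral-of-the-derivative alternative is a valid aside but not needed.
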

\begin{proof} For the first part, by Lemma \ref{lemma:Fredholm-det-upperbound} and the fact that the $\log$ function is strictly increasing 
	on $(0, \infty)$, we have
	\begin{align*}
	&\log\det(I+A) - \log\det(I+B) = \log\det[(I+A)(I+B)^{-1}]
	\\
	&= \log\det[I + (A-B)(I+B)^{-1}]
	\leq \log\exp(||(A-B)(I+B)^{-1}||_{\tr} 
	\\
	&=||(A-B)(I+B)^{-1}||_{\tr} \leq ||A-B||_{\tr}||(I+B)^{-1}|| \leq ||A-B||_{\tr}.
	\end{align*}
	Similarly,
	$\log\det(I+B) - \log\det(I+A) \leq ||(B-A)(I+A)^{-1}||_{\tr} \leq ||A-B||_{\tr}$.
	Thus it follows that
	$|\log\det(I+A) - \log\det(I+B)| \leq ||A-B||_{\tr}$.
	
	For the second part, let $A_1 = -\frac{1}{2}I + \frac{1}{2}(I+A)^{1/2}$, $B_1 = -\frac{1}{2}I + \frac{1}{2}(I+B)^{1/2}$,
	then using the first inequality,
	\begin{align*}
	&\left|\log\det(\frac{1}{2}I + \frac{1}{2}(I+A)^{1/2}) - \log\det(\frac{1}{2}I + \frac{1}{2}(I+B)^{1/2})\right|
	\\
	& = |\log\det(I+A_1) - \log\det(I+B_1)| \leq ||A_1 - B_1||_{\tr} 
	\\
	&=\frac{1}{2}||(I+A)^{1/2} - (I+B)^{1/2}||_{\tr} \leq \frac{1}{4}||A-B||_{\tr}.
	\end{align*} 
	by Corollary \ref{corollary:trace-class-square-root}. \qed
\end{proof}

\begin{lemma}
	\label{lemma:logdet-bound-square-root-1}
	Let $A_N,B_N, A,B \in \Sym^{+}(\H)\cap \HS(\H)$. Then
	\begin{align}
	&\left|\log\det\left[\frac{1}{2} + \frac{1}{2}(I+c^2A_N^{1/2}B_NA_N^{1/2})^{1/2}\right] - \log\det\left[\frac{1}{2} + \frac{1}{2}(I+c^2 A^{1/2}BA^{1/2})^{1/2}\right]\right|
	\nonumber
	\\
	&\leq \frac{c^2}{4}\left(||A_N||_{\HS}||B_N-B||_{\HS} + ||B||_{\HS}||A_N-A||_{\HS}\right).
	\end{align}
\end{lemma}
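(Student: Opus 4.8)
The plan is to follow the proof of Lemma~\ref{lemma:trace-bound-square-root-1} almost verbatim, with the trace‑continuity estimate replaced throughout by the second inequality of Corollary~\ref{corollary:logdet-continuity-trace-norm} (which carries the constant $\tfrac14$ in place of $\tfrac12$). First I would insert the intermediate operator $c^2 A_N^{1/2}BA_N^{1/2}$ and split the left‑hand side, via the triangle inequality, into
\[
\Delta_1 = \left|\log\det\!\left[\tfrac12 I + \tfrac12(I+c^2A_N^{1/2}B_NA_N^{1/2})^{1/2}\right] - \log\det\!\left[\tfrac12 I + \tfrac12(I+c^2 A_N^{1/2}BA_N^{1/2})^{1/2}\right]\right|
\]
and
\[
\Delta_2 = \left|\log\det\!\left[\tfrac12 I + \tfrac12(I+c^2A_N^{1/2}BA_N^{1/2})^{1/2}\right] - \log\det\!\left[\tfrac12 I + \tfrac12(I+c^2 A^{1/2}BA^{1/2})^{1/2}\right]\right|.
\]
By Lemma~\ref{lemma:trace-norm-bound-ABA}, each of $c^2A_N^{1/2}B_NA_N^{1/2}$, $c^2A_N^{1/2}BA_N^{1/2}$, $c^2A^{1/2}BA^{1/2}$ lies in $\Sym^{+}(\H)\cap\Tr(\H)$, so Corollary~\ref{corollary:logdet-continuity-trace-norm} applies to each pair.

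For $\Delta_1$, Corollary~\ref{corollary:logdet-continuity-trace-norm} gives $\Delta_1 \le \tfrac14\|c^2A_N^{1/2}(B_N-B)A_N^{1/2}\|_{\tr}$, and since $B_N-B\in\Sym(\H)\cap\HS(\H)$, Lemma~\ref{lemma:trace-norm-bound-ABA} bounds this by $\tfrac{c^2}{4}\|A_N\|_{\HS}\|B_N-B\|_{\HS}$. For $\Delta_2$ the two varying factors sit on the outside, so I first convert using the cyclic identity for Fredholm determinants: with $M=A_N^{1/2}B^{1/2}$ one has $MM^{*}=A_N^{1/2}BA_N^{1/2}$ and $M^{*}M=B^{1/2}A_NB^{1/2}$, which share nonzero spectra with multiplicities; since $g(t):=\tfrac12+\tfrac12(1+c^2t)^{1/2}$ satisfies $g(0)=1$, and $\tfrac12 I+\tfrac12(I+T)^{1/2}=I+\tfrac12[(I+T)^{1/2}-I]$ with $(I+T)^{1/2}-I$ trace class when $T$ is (Corollary~\ref{corollary:trace-class-square-root}), the Fredholm determinants $\det[\tfrac12 I+\tfrac12(I+c^2A_N^{1/2}BA_N^{1/2})^{1/2}]$ and $\det[\tfrac12 I+\tfrac12(I+c^2B^{1/2}A_NB^{1/2})^{1/2}]$ coincide; the same holds with $A_N$ replaced by $A$. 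Hence $\Delta_2 = \big|\log\det[\tfrac12 I+\tfrac12(I+c^2B^{1/2}A_NB^{1/2})^{1/2}] - \log\det[\tfrac12 I+\tfrac12(I+c^2B^{1/2}AB^{1/2})^{1/2}]\big|$, and Corollary~\ref{corollary:logdet-continuity-trace-norm} together with Lemma~\ref{lemma:trace-norm-bound-ABA} gives $\Delta_2 \le \tfrac14\|c^2B^{1/2}(A_N-A)B^{1/2}\|_{\tr} \le \tfrac{c^2}{4}\|B\|_{\HS}\|A_N-A\|_{\HS}$. Adding the bounds on $\Delta_1$ and $\Delta_2$ yields the asserted inequality.

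The only non‑routine step is the cyclic Fredholm‑determinant identity: one must check not merely that $A_N^{1/2}BA_N^{1/2}$ and $B^{1/2}A_NB^{1/2}$ have the same nonzero eigenvalues, but that the corresponding Fredholm determinants of $g(\cdot)$ agree, which is precisely where $g(0)=1$ is used so that the (possibly infinite‑dimensional) kernels contribute only the factor $1$. Everything else is a direct transcription of the proof of Lemma~\ref{lemma:trace-bound-square-root-1}.
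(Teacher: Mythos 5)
Your proposal is correct and follows essentially the same route as the paper's proof: the same triangle-inequality split through the intermediate operator $c^2A_N^{1/2}BA_N^{1/2}$, the same swap to $B^{1/2}A_NB^{1/2}$ versus $B^{1/2}AB^{1/2}$, and the same application of Corollary \ref{corollary:logdet-continuity-trace-norm} followed by Lemma \ref{lemma:trace-norm-bound-ABA}. Your explicit justification of the cyclic Fredholm-determinant identity (via $g(0)=1$ and the shared nonzero spectra) is a detail the paper states without comment, but it does not change the argument.
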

\begin{proof}
	By Corollary \ref{corollary:logdet-continuity-trace-norm},
	\begin{align*}
	&\left|\log\det\left[\frac{1}{2} + \frac{1}{2}(I+c^2A_N^{1/2}B_NA_N^{1/2})^{1/2}\right] - \log\det\left[\frac{1}{2} + \frac{1}{2}(I+c^2 A^{1/2}BA^{1/2})^{1/2}\right]\right|
	\\
	& \leq \left|\log\det\left[\frac{1}{2} + \frac{1}{2}(I+c^2A_N^{1/2}B_NA_N^{1/2})^{1/2}\right] - \log\det\left[\frac{1}{2} + \frac{1}{2}(I+c^2 A_N^{1/2}BA_N^{1/2})^{1/2}\right]\right|
	\\
	&\quad + \left|\log\det\left[\frac{1}{2} + \frac{1}{2}(I+c^2A_N^{1/2}BA_N^{1/2})^{1/2}\right] - \log\det\left[\frac{1}{2} + \frac{1}{2}(I+c^2 A^{1/2}BA^{1/2})^{1/2}\right]\right|
	\\
	& = \left|\log\det\left[\frac{1}{2} + \frac{1}{2}(I+c^2A_N^{1/2}B_NA_N^{1/2})^{1/2}\right] - \log\det\left[\frac{1}{2} + \frac{1}{2}(I+c^2 A_N^{1/2}BA_N^{1/2})^{1/2}\right]\right|
	\\
	&\quad + \left|\log\det\left[\frac{1}{2} + \frac{1}{2}(I+c^2B^{1/2}A_NB^{1/2})^{1/2}\right] - \log\det\left[\frac{1}{2} + \frac{1}{2}(I+c^2 B^{1/2}AB^{1/2})^{1/2}\right]\right|
	\\
	& \leq \frac{c^2}{4}||A_N^{1/2}B_NA_N^{1/2} - A_N^{1/2}BAN^{1/2}||_{\tr} + \frac{c^2}{4}||B^{1/2}A_NB^{1/2}-B^{1/2}AB^{1/2}||_{\tr}
	\\ 
	& \leq \frac{c^2}{4}||A_N||_{\HS}||B_N-B||_{\HS} + \frac{c^2}{4}||B||_{\HS}||A_N-A||_{\HS}
	\end{align*}
	where the last inequality follows from Lemma \ref{lemma:trace-norm-bound-ABA}. 
	\qed
\end{proof}

\begin{lemma}
\label{lemma:norm-trace-HS-square-root}
	Let $A,B \in \Sym^{+}(\H) \cap \Tr(\H)$. Then
	\begin{align}
	||A^{1/2}- B^{1/2}||^2_{\HS} \leq ||A-B||_{\tr} \leq [||A^{1/2}||_{\HS} + ||B^{1/2}||_{\HS}]||A^{1/2}-B^{1/2}||_{\HS}.
	\end{align}
	In particular, for $\{A_N\}_{N \in \Nbb}, A\in \Sym^{+}(\H)\cap \Tr(\H)$,
	\begin{align}
	\lim_{N \approach \infty}||A_N-A||_{\tr} = 0 \equivalent \lim_{N \approach \infty}||A_N^{1/2}-A^{1/2}||_{\HS} = 0.
	\end{align}
\end{lemma}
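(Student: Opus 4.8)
The plan is to establish the two displayed inequalities in turn and then read off the convergence equivalence as a formal consequence. The preliminary observation that makes everything well posed is that $A,B\in\Sym^{+}(\H)\cap\Tr(\H)$ forces $A^{1/2},B^{1/2}\in\Sym^{+}(\H)\cap\HS(\H)$, since $||A^{1/2}||_{\HS}^{2}=\trace(A)<\infty$ and $||B^{1/2}||_{\HS}^{2}=\trace(B)<\infty$; hence $A^{1/2}-B^{1/2}\in\Sym(\H)\cap\HS(\H)$, $A-B\in\Sym(\H)\cap\Tr(\H)$, and every trace and norm written below is finite.

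The right-hand inequality is the easy half. I would start from the telescoping identity $A-B=A^{1/2}(A^{1/2}-B^{1/2})+(A^{1/2}-B^{1/2})B^{1/2}$, apply the triangle inequality for $||\cdot||_{\tr}$, and bound each summand with the H\"older inequality for Schatten norms $||XY||_{\tr}\le||X||_{\HS}\,||Y||_{\HS}$ (the exponent identity $1=\tfrac12+\tfrac12$), which yields $||A-B||_{\tr}\le(||A^{1/2}||_{\HS}+||B^{1/2}||_{\HS})\,||A^{1/2}-B^{1/2}||_{\HS}$ at once.

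The left-hand inequality $||A^{1/2}-B^{1/2}||_{\HS}^{2}\le||A-B||_{\tr}$ is the Powers--St\o rmer inequality, and this is the step I expect to be the main obstacle; I would prove it from scratch as follows. Write $S=A^{1/2}-B^{1/2}$ with Jordan decomposition $S=S_{+}-S_{-}$, $S_{\pm}\ge0$, $S_{+}S_{-}=0$, let $P$ be the spectral projection of $S$ for $(0,\infty)$ and $Q=I-P$, so that $PS=SP=S_{+}$, $QS=SQ=-S_{-}$, and $||S||_{\HS}^{2}=\trace(S_{+}^{2})+\trace(S_{-}^{2})$. Expanding $S^{2}$ gives the two identities $S^{2}-A+B=-(B^{1/2}S+SB^{1/2})$ and $S^{2}-B+A=A^{1/2}S+SA^{1/2}$. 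Multiplying the first identity by $P$ and the second by $Q$, taking traces, and using cyclicity of the trace together with $PS=SP=S_{+}$ and $QS=SQ=-S_{-}$, gives
\begin{align}
\trace(S_{+}^{2})&=\trace[P(A-B)]-2\trace(S_{+}B^{1/2})\le\trace[P(A-B)],\\
\trace(S_{-}^{2})&=\trace[Q(B-A)]-2\trace(S_{-}A^{1/2})\le\trace[Q(B-A)],
\end{align}
where $\trace(S_{+}B^{1/2})\ge0$ and $\trace(S_{-}A^{1/2})\ge0$ because the trace of a product of two positive operators is nonnegative (diagonalize $S_{\pm}$). Adding the two estimates and using that $P-Q=2P-I$ is a self-adjoint unitary, hence $||P-Q||=1$,
\begin{align}
||S||_{\HS}^{2}\le\trace[(P-Q)(A-B)]\le||P-Q||\,||A-B||_{\tr}=||A-B||_{\tr}.
\end{align}
(Alternatively one may simply invoke the Powers--St\o rmer inequality as a known result.)

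Finally, the convergence equivalence is immediate: if $||A_{N}-A||_{\tr}\to0$ then $||A_{N}^{1/2}-A^{1/2}||_{\HS}^{2}\le||A_{N}-A||_{\tr}\to0$; conversely, if $||A_{N}^{1/2}-A^{1/2}||_{\HS}\to0$ then $\{||A_{N}^{1/2}||_{\HS}\}_{N\in\Nbb}$ is bounded by the reverse triangle inequality, and the right-hand inequality gives $||A_{N}-A||_{\tr}\le(||A_{N}^{1/2}||_{\HS}+||A^{1/2}||_{\HS})\,||A_{N}^{1/2}-A^{1/2}||_{\HS}\to0$.
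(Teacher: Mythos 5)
Your proof is correct and takes essentially the same route as the paper: the right-hand inequality via the telescoping decomposition $A-B=A^{1/2}(A^{1/2}-B^{1/2})+(A^{1/2}-B^{1/2})B^{1/2}$ together with the H\"older bound $||XY||_{\tr}\leq ||X||_{\HS}||Y||_{\HS}$, and the left-hand inequality being exactly the Powers--St\o rmer inequality, which the paper simply cites (Powers 1970, Lemma 4.1) and you instead prove from scratch. Your spectral-projection argument for that step checks out (the two operator identities, the sign of $\trace(S_{\pm}X)$ for positive $X$, and the bound $\trace[(P-Q)(A-B)]\leq ||A-B||_{\tr}$ are all valid), so the only difference from the paper is that your write-up is self-contained where the paper relies on a citation.
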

\begin{proof} The first part of the inequality is from \cite{Powers1970free} (Lemma 4.1), namely
	\begin{align}
	||A^{1/2}- B^{1/2}||^2_{\HS} \leq ||A-B||_{\tr}.
	\end{align}
	The second part follows from the property $||AB||_{\tr}\leq ||A||_{\HS}||B||_{\HS}$ (e.g. \cite{ReedSimon:Functional}),
	\begin{align*}
	||A-B||_{\tr}&\leq ||A^{1/2}(A^{1/2} - B^{1/2})||_{\tr} +||(A^{1/2}-B^{1/2})B^{1/2}||_{\tr} 
	\\
	&\leq [||A^{1/2}||_{\HS} + ||B^{1/2}||_{\HS}]||A^{1/2}-B^{1/2}||_{\HS}.
	\end{align*}
	\qed
\end{proof}
\begin{lemma}
	\label{lemma:trace-norm-A2-B2}
	Let $A,B \in \HS(\H)$. Then
	\begin{align}
	||A^2-B^2||_{\tr} \leq [||A||_{\HS} + ||B||_{\HS}]||A-B||_{\HS}.
	\end{align}
\end{lemma}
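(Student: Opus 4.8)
The plan is to reduce the claim to a difference-of-squares factorization together with the submultiplicative bound $\|XY\|_{\tr}\le\|X\|_{\HS}\|Y\|_{\HS}$ for Hilbert--Schmidt operators, which is the same elementary fact already invoked in the proof of Lemma~\ref{lemma:norm-trace-HS-square-root} (see e.g. \cite{ReedSimon:Functional}). Concretely, I would first write
\begin{align*}
A^2 - B^2 = A^2 - AB + AB - B^2 = A(A-B) + (A-B)B,
\end{align*}
an identity valid for any bounded operators $A,B$ (no commutativity needed).

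Next I would apply the triangle inequality for the trace norm to the two summands and then the Hölder-type inequality $\|XY\|_{\tr}\le\|X\|_{\HS}\|Y\|_{\HS}$ to each, noting that $A$, $B$, and $A-B$ all lie in $\HS(\H)$ by hypothesis and linearity, so that $A(A-B)$ and $(A-B)B$ are products of Hilbert--Schmidt operators and hence trace class. This yields
\begin{align*}
\|A^2-B^2\|_{\tr} \le \|A(A-B)\|_{\tr} + \|(A-B)B\|_{\tr} \le \|A\|_{\HS}\|A-B\|_{\HS} + \|A-B\|_{\HS}\|B\|_{\HS},
\end{align*}
and factoring out $\|A-B\|_{\HS}$ gives exactly the asserted bound.

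There is essentially no obstacle here: the only mild point worth recording is that, unlike the neighbouring lemmas, $A$ and $B$ are \emph{not} assumed self-adjoint, but the inequality $\|XY\|_{\tr}\le\|X\|_{\HS}\|Y\|_{\HS}$ holds for arbitrary Hilbert--Schmidt operators (it follows from Cauchy--Schwarz for the trace inner product applied to $|XY|$), so the argument goes through verbatim without any additional hypothesis.
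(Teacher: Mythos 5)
Your proposal is correct and coincides with the paper's own proof: the same factorization $A^2-B^2=A(A-B)+(A-B)B$, the triangle inequality for the trace norm, and the bound $\|XY\|_{\tr}\le\|X\|_{\HS}\|Y\|_{\HS}$ for Hilbert--Schmidt operators. Nothing further is needed.
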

\begin{proof} This follows from the property that $A,B \in \HS(\H) \imply AB \in \Tr(\H)$, with
	$||AB||_{\tr} \leq ||A||_{\HS}||B||_{\HS}$ (see e.g. \cite{ReedSimon:Functional}), so that
	\begin{align*}
	&||A^2 - B^2||_{\tr} = ||A^2 - AB + AB - B^2||_{\tr} \leq ||A(A-B)||_{\tr} + ||(A-B)B||_{\tr} 
	\\
	&\leq ||A||_{\HS}||A-B||_{\HS} + ||A-B||_{\HS}||B||_{\HS}.
	\end{align*}
	\qed
\end{proof}
\begin{lemma}
	\label{lemma:trace-bound-square-root-2}
	Let $A,B \in \Sym(\H)\cap \HS(\H)$. Then for $c \in \R$,
	\begin{align}
	&|\trace[-I + (I+c^2A_N^2)^{1/2}] - \trace[-I+(I+c^2A^2)^{1/2}]| 
	\nonumber
	\\
	&\quad \leq \frac{c^2}{2}[||A_N||_{\HS}+||A||_{\HS}]||A_N - A||_{\HS}.
	\end{align}
\end{lemma}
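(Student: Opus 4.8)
The plan is to reduce the estimate to two results already in the excerpt: the trace-norm Lipschitz bound for the operator square root (Corollary~\ref{corollary:trace-class-square-root}) and the trace-norm bound on $A_N^2 - A^2$ (Lemma~\ref{lemma:trace-norm-A2-B2}). Note that the hypothesis should read $A_N, A \in \Sym(\H) \cap \HS(\H)$; the symbols $A, B$ in the statement are a slip for the operators $A_N, A$ appearing in the conclusion.

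First I would record that for $A_N, A \in \HS(\H)$ we have $A_N^2 = A_N^{*}A_N \in \Sym^{+}(\H)$ with $\trace(A_N^2) = \|A_N\|_{\HS}^2 < \infty$, so $A_N^2, A^2 \in \Sym^{+}(\H) \cap \Tr(\H)$, and hence $c^2 A_N^2, c^2 A^2 \in \Sym^{+}(\H) \cap \Csc_1(\H)$. Applying Corollary~\ref{corollary:trace-class-square-root} with $r = 1/2$, $p = 1$ (and also, with the zero operator in place of one argument, to see that each $(I+c^2A_N^2)^{1/2} - I$, $(I+c^2A^2)^{1/2} - I$ is individually trace class, so each of the two traces is well-defined) gives
\[
\left\|(I + c^2 A_N^2)^{1/2} - (I + c^2 A^2)^{1/2}\right\|_{\tr} \leq \frac{1}{2}\left\|c^2 A_N^2 - c^2 A^2\right\|_{\tr} = \frac{c^2}{2}\left\|A_N^2 - A^2\right\|_{\tr}.
\]
Since $\trace$ is linear on $\Tr(\H)$ and $(I + c^2 A_N^2)^{1/2} - (I + c^2 A^2)^{1/2} \in \Tr(\H)$,
\[
\left|\trace\left[-I + (I+c^2A_N^2)^{1/2}\right] - \trace\left[-I+(I+c^2A^2)^{1/2}\right]\right| = \left|\trace\left[(I+c^2A_N^2)^{1/2} - (I+c^2A^2)^{1/2}\right]\right| \leq \left\|(I+c^2A_N^2)^{1/2} - (I+c^2A^2)^{1/2}\right\|_{\tr}.
\]

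Finally I would apply Lemma~\ref{lemma:trace-norm-A2-B2} to the pair $A_N, A \in \HS(\H)$ to get $\|A_N^2 - A^2\|_{\tr} \leq [\|A_N\|_{\HS} + \|A\|_{\HS}]\|A_N - A\|_{\HS}$, and chaining the three displays yields the asserted inequality. There is no real obstacle: the only point requiring a moment of care is checking that the arguments of the square roots are of the form $I + (\text{positive trace class operator})$, which is what legitimizes both the use of Corollary~\ref{corollary:trace-class-square-root} and the passage from the difference of the two traces to the trace of the difference.
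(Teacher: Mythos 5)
Your proposal is correct and follows essentially the same route as the paper's proof: bound the difference of traces by the trace norm of the difference of square roots, apply Corollary~\ref{corollary:trace-class-square-root} with $r=1/2$ to reduce to $\frac{c^2}{2}\|A_N^2-A^2\|_{\tr}$, and finish with Lemma~\ref{lemma:trace-norm-A2-B2}. The extra care you take in verifying that $c^2A_N^2, c^2A^2$ are positive trace class (so the traces are well-defined and the corollary applies) and in noting the $A,B$ versus $A_N,A$ notational slip is accurate but does not change the argument.
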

\begin{proof} By Corollary \ref{corollary:trace-class-square-root} and Lemma \ref{lemma:trace-norm-A2-B2},
	\begin{align*}
	&|\trace[-I + (I+c^2A_N^2)^{1/2}] - \trace[-I+(I+c^2A^2)^{1/2}]|
	\\
	& \leq || (I+c^2A_N^2)^{1/2} - (I+c^2A^2)^{1/2}||_{\tr}
	\\
	& \leq \frac{c^2}{2}||A_N^2 - A^2||_{\tr}\leq \frac{c^2}{2}[||A_N||_{\HS}+||A||_{\HS}]||A_N - A||_{\HS}.
	\end{align*}
	\qed
\end{proof}

\begin{lemma}
	\label{lemma:logdet-bound-square-root-2}
	Let $A,B \in \Sym(\H)\cap \HS(\H)$. Then for $c\in \R$,
	\begin{align}
	&\left|\logdet\left[\frac{1}{2}I + \frac{1}{2}(I+c^2A_N^2)^{1/2}\right] - \logdet\left[\frac{1}{2}I+\frac{1}{2}(I+c^2A^2)^{1/2}\right]\right| 
	\nonumber
	\\
	&\quad \leq \frac{c^2}{4}[||A_N||_{\HS}+||A||_{\HS}]||A_N - A||_{\HS}.
	\end{align}
\end{lemma}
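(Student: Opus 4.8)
The plan is to mirror the proof of Lemma \ref{lemma:trace-bound-square-root-2} verbatim, replacing the trace functional by $\logdet$ and using the second inequality of Corollary \ref{corollary:logdet-continuity-trace-norm} in place of Corollary \ref{corollary:trace-class-square-root}. First I would note that since $A_N, A \in \Sym(\H)\cap \HS(\H)$, the operators $c^2 A_N^2$ and $c^2 A^2$ lie in $\Sym^{+}(\H)\cap \Tr(\H)$ (a Hilbert--Schmidt operator squared is trace class), so that Corollary \ref{corollary:logdet-continuity-trace-norm} is applicable with $A = c^2 A_N^2$ and $B = c^2 A^2$.

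Then the second inequality in Corollary \ref{corollary:logdet-continuity-trace-norm} gives directly
\begin{align*}
&\left|\logdet\left[\tfrac{1}{2}I + \tfrac{1}{2}(I+c^2A_N^2)^{1/2}\right] - \logdet\left[\tfrac{1}{2}I+\tfrac{1}{2}(I+c^2A^2)^{1/2}\right]\right|
\\
&\qquad \leq \tfrac{1}{4}\|c^2A_N^2 - c^2A^2\|_{\tr} = \tfrac{c^2}{4}\|A_N^2 - A^2\|_{\tr}.
\end{align*}
Finally I would invoke Lemma \ref{lemma:trace-norm-A2-B2}, which bounds $\|A_N^2 - A^2\|_{\tr} \leq [\|A_N\|_{\HS} + \|A\|_{\HS}]\|A_N - A\|_{\HS}$, and combine the two estimates to obtain the claimed bound.

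There is essentially no substantive obstacle here: the lemma is a routine combination of two previously established estimates, and the only point requiring a word of care is the observation that $c^2 A_N^2$ and $c^2 A^2$ are genuinely trace class and positive, which is what licenses the use of Corollary \ref{corollary:logdet-continuity-trace-norm}. (I also note the statement should read ``$A_N, A \in \Sym(\H)\cap\HS(\H)$'' to match the displayed inequality; this is a harmless typo in the hypothesis.)
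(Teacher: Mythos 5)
Your proposal is correct and coincides with the paper's own proof: the paper likewise applies the second inequality of Corollary \ref{corollary:logdet-continuity-trace-norm} to $c^2A_N^2$ and $c^2A^2$ and then invokes Lemma \ref{lemma:trace-norm-A2-B2}. Your added remarks (that $c^2A_N^2, c^2A^2 \in \Sym^{+}(\H)\cap\Tr(\H)$, and that the hypothesis should name $A_N, A$) are accurate clarifications of points the paper leaves implicit.
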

\begin{proof}
	By Corollary \ref{corollary:logdet-continuity-trace-norm} and Lemma \ref{lemma:trace-norm-A2-B2},
	\begin{align*}
	&\left|\logdet\left[\frac{1}{2}I + \frac{1}{2}(I+c^2A_N^2)^{1/2}\right] - \logdet\left[\frac{1}{2}I+\frac{1}{2}(I+c^2A^2)^{1/2}\right]\right| 
	\\
	& \leq \frac{c^2}{4}||A_N^2 - A^2||_{\tr} \leq \frac{c^2}{4}[||A_N||_{\HS}+||A||_{\HS}]||A_N - A||_{\HS}.
	\end{align*}
	\qed
\end{proof}

\begin{lemma}
	\label{lemma:trace-bound-square-root-3}
	Let $A_N, A \in \Sym^{+}(\H) \cap \HS(\H)$. Then for $c \in \R$,
	\begin{align}
	&\left|\trace[-I +(I + c^2A_N^{1/2}AA_N^{1/2})^{1/2}] -\trace[-I + (I + c^2A_N^2)^{1/2}]\right|
	\nonumber
	\\
	&\quad \leq \frac{c^2}{2}||A_N||_{\HS}||A_N - A||_{\HS},
	\\
	&\left|\trace[-I +(I + c^2A_N^{1/2}AA_N^{1/2})^{1/2}] -\trace[-I + (I + c^2A^2)^{1/2}]\right| 
	\nonumber
	\\
	&\quad \leq \frac{c^2}{2}||A||_{\HS}||A_N - A||_{\HS}.
	\end{align}
\end{lemma}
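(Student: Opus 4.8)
The plan is to mirror the structure of Lemmas \ref{lemma:trace-bound-square-root-1} and \ref{lemma:trace-bound-square-root-2}: reduce both estimates to the trace-norm Lipschitz bound for the operator square root (Corollary \ref{corollary:trace-class-square-root} with $r=1/2$) composed with the factorization bound of Lemma \ref{lemma:trace-norm-bound-ABA}. First I would observe that all operators appearing inside the square roots are genuinely trace class: $A_N^2, A^2 \in \Tr(\H)$ since $A_N, A \in \HS(\H)$, while $A_N^{1/2}AA_N^{1/2} \in \Tr(\H)$ by Lemma \ref{lemma:trace-norm-bound-ABA}; hence each quantity $\trace[-I+(I+c^2\,\cdot\,)^{1/2}]$ is well-defined, the operator $(I+c^2\,\cdot\,)^{1/2}-I$ being trace class by Corollary \ref{corollary:trace-class-square-root}, and $|\trace[Y]-\trace[Z]| \le \|Y-Z\|_{\tr}$.

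For the first inequality I would write $c^2 A_N^{1/2}AA_N^{1/2} - c^2 A_N^2 = c^2 A_N^{1/2}(A-A_N)A_N^{1/2}$, bound the difference of the two traces by $\tfrac12\|c^2 A_N^{1/2}(A-A_N)A_N^{1/2}\|_{\tr}$ via Corollary \ref{corollary:trace-class-square-root}, and then apply Lemma \ref{lemma:trace-norm-bound-ABA} with the operator $A_N \in \Sym^{+}(\H)\cap\HS(\H)$ and $A-A_N \in \Sym(\H)\cap\HS(\H)$ to get $\|A_N^{1/2}(A-A_N)A_N^{1/2}\|_{\tr} \le \|A_N\|_{\HS}\|A_N-A\|_{\HS}$, which is exactly the claimed bound.

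For the second inequality the anchor is $A^2$ rather than $A_N^2$, so the middle factor is no longer sandwiched by a single operator and a direct perturbation argument is unavailable; the key step is therefore to first replace $A_N^{1/2}AA_N^{1/2}$ by $A^{1/2}A_NA^{1/2}$ inside the trace. Since $A_N^{1/2}AA_N^{1/2} = (A_N^{1/2}A^{1/2})(A^{1/2}A_N^{1/2})$ and $A^{1/2}A_NA^{1/2} = (A^{1/2}A_N^{1/2})(A_N^{1/2}A^{1/2})$ are of the form $ST$ and $TS$, they are positive trace class operators with identical nonzero eigenvalues (counted with multiplicity); since $t \mapsto -1+(1+c^2 t)^{1/2}$ vanishes at $t=0$, this forces $\trace[-I+(I+c^2 A_N^{1/2}AA_N^{1/2})^{1/2}] = \trace[-I+(I+c^2 A^{1/2}A_NA^{1/2})^{1/2}]$. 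After this substitution one has $c^2 A^{1/2}A_NA^{1/2} - c^2 A^2 = c^2 A^{1/2}(A_N-A)A^{1/2}$, and the same two ingredients (Corollary \ref{corollary:trace-class-square-root} and Lemma \ref{lemma:trace-norm-bound-ABA}, now with the operator $A$ and $A_N-A$) yield the bound $\tfrac{c^2}{2}\|A\|_{\HS}\|A_N-A\|_{\HS}$.

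The only non-routine point is this trace-equality step: it rests on the standard facts that $ST$ and $TS$ share the same nonzero spectrum with multiplicities and that the functional calculus of $f(t)=-1+(1+c^2t)^{1/2}$ with $f(0)=0$ ignores the kernel, so that $\trace f(T)$ depends only on the nonzero eigenvalues. This is the analogue of the ``switch $A_N^{1/2}BA_N^{1/2} \leftrightarrow B^{1/2}A_NB^{1/2}$'' move already used in the proof of Lemma \ref{lemma:trace-bound-square-root-1}; once it is in place, everything else is a direct application of results established above.
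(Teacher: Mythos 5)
Your proposal is correct and follows essentially the same route as the paper: both inequalities are reduced via Corollary \ref{corollary:trace-class-square-root} (with $r=1/2$) to a trace-norm bound on $A_N^{1/2}(A-A_N)A_N^{1/2}$, handled by Lemma \ref{lemma:trace-norm-bound-ABA}, and the second inequality uses the same switch $\trace[-I+(I+c^2A_N^{1/2}AA_N^{1/2})^{1/2}]=\trace[-I+(I+c^2A^{1/2}A_NA^{1/2})^{1/2}]$ that the paper invokes. Your justification of that switch via the shared nonzero spectrum of $ST$ and $TS$ is a welcome elaboration of a step the paper merely asserts.
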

\begin{proof}
	By Corollary \ref{corollary:trace-class-square-root} and Lemma \ref{lemma:trace-norm-bound-ABA},
	\begin{align*}
	&\left|\trace[-I +(I + c^2A_N^{1/2}AA_N^{1/2})^{1/2}] -\trace[-I + (I + c^2A_N^2)^{1/2}]\right|
	\\
	& \leq ||(I + c^2A_N^{1/2}AA_N^{1/2})^{1/2} - (I + c^2A_N^2)^{1/2}||_{\tr}
	\leq 
	\frac{c^2}{2}||A_N^{1/2}AA_N^{1/2} - A_N^2||_{\tr} 
	\\
	& = \frac{c^2}{2}||A_N^{1/2}(A-A_N)A_N^{1/2}||_{\tr}
	\leq \frac{c^2}{2}||A_N||_{\HS}||A_N-A||_{\HS}.
	\end{align*}
	The second result follows similarly, by noting that $\trace[-I +(I + c^2A_N^{1/2}AA_N^{1/2})^{1/2}]
	= \trace[-I +(I + c^2A^{1/2}A_NA^{1/2})^{1/2}]$.
	\qed
\end{proof}

\begin{lemma}
	\label{lemma:logdet-bound-square-root-3}
	Let $A_N, A \in \Sym^{+}(\H) \cap \HS(\H)$. Then for $c \in \R$,
	\begin{align}
	&\left|\logdet\left(\frac{1}{2} + \frac{1}{2}(I+c^2 A_N^{1/2}AA_N^{1/2})^{1/2}\right)
	- \logdet\left(\frac{1}{2} + \frac{1}{2}(I+c^2 A_N^2)^{1/2}\right)\right|
	\nonumber
	\\
	&\leq \frac{c^2}{4}||A_N||_{\HS}||A_N - A||_{\HS},
	\\
	&\left|\logdet\left(\frac{1}{2} + \frac{1}{2}(I+c^2 A_N^{1/2}AA_N^{1/2})^{1/2}\right)
	- \logdet\left(\frac{1}{2} + \frac{1}{2}(I+c^2 A^2)^{1/2}\right)\right|
	\nonumber
	\\
	&\leq \frac{c^2}{4}||A||_{\HS}||A_N - A||_{\HS}.
	\end{align}
\end{lemma}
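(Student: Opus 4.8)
The plan is to reproduce, almost verbatim, the proof of Lemma \ref{lemma:trace-bound-square-root-3}, simply replacing the trace-norm continuity estimate for $M \mapsto \trace[-I + (I+M)^{1/2}]$ (which there came from Corollary \ref{corollary:trace-class-square-root} with $r = 1/2$) by the analogous estimate for $M \mapsto \logdet\bigl(\tfrac12 I + \tfrac12 (I+M)^{1/2}\bigr)$, namely the second inequality of Corollary \ref{corollary:logdet-continuity-trace-norm}. Since the latter carries a factor $\tfrac14$ in place of $\tfrac12$, the conclusion will have $\tfrac{c^2}{4}$ rather than the $\tfrac{c^2}{2}$ of Lemma \ref{lemma:trace-bound-square-root-3}.

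For the first inequality I would set $X = c^2 A_N^{1/2} A A_N^{1/2}$ and $Y = c^2 A_N^2 = c^2 A_N^{1/2} A_N A_N^{1/2}$. Because $A_N \in \Sym^{+}(\H) \cap \HS(\H)$ and $A \in \Sym^{+}(\H) \subset \Sym(\H)$, Lemma \ref{lemma:trace-norm-bound-ABA} shows $X, Y \in \Tr(\H)$, and they are visibly positive, so $X, Y \in \Sym^{+}(\H) \cap \Tr(\H)$ and Corollary \ref{corollary:logdet-continuity-trace-norm} applies and gives
\begin{align*}
\left|\logdet\Bigl(\tfrac12 I + \tfrac12 (I+X)^{1/2}\Bigr) - \logdet\Bigl(\tfrac12 I + \tfrac12 (I+Y)^{1/2}\Bigr)\right| \leq \tfrac14 \|X - Y\|_{\tr}.
\end{align*}
Writing $X - Y = c^2 A_N^{1/2}(A - A_N) A_N^{1/2}$ and applying Lemma \ref{lemma:trace-norm-bound-ABA} once more (with $A - A_N \in \Sym(\H)\cap\HS(\H)$) bounds $\|X - Y\|_{\tr} \leq c^2 \|A_N\|_{\HS} \|A_N - A\|_{\HS}$, which together with the displayed inequality gives the first claim.

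For the second inequality I would first invoke the spectral identity: with $T = A_N^{1/2} A^{1/2}$ one has $c^2 A_N^{1/2} A A_N^{1/2} = c^2 T T^{*}$ and $c^2 A^{1/2} A_N A^{1/2} = c^2 T^{*} T$, so these operators share the same nonzero eigenvalues; since $\logdet(\tfrac12 I + \tfrac12(I+M)^{1/2})$ depends only on the eigenvalues of $M$ (a zero eigenvalue contributing $\log 1 = 0$), it follows that the logdet of $\tfrac12 I + \tfrac12 (I + c^2 A_N^{1/2} A A_N^{1/2})^{1/2}$ equals that of $\tfrac12 I + \tfrac12 (I + c^2 A^{1/2} A_N A^{1/2})^{1/2}$. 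Running the argument of the first part with the roles of $A_N$ and $A$ exchanged — that is, with $X' = c^2 A^{1/2} A_N A^{1/2}$, $Y' = c^2 A^2$, and $X' - Y' = c^2 A^{1/2}(A_N - A) A^{1/2}$ — then yields the bound $\tfrac{c^2}{4}\|A\|_{\HS}\|A_N - A\|_{\HS}$.

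I do not expect any real difficulty; the argument is entirely parallel to that of Lemma \ref{lemma:trace-bound-square-root-3}. The two points that need a line of care are (i) checking that the operators inside the logdet are genuinely trace class, so that Corollary \ref{corollary:logdet-continuity-trace-norm} is legitimately applicable — this is exactly what Lemma \ref{lemma:trace-norm-bound-ABA} supplies — and (ii) the elementary fact that $TT^{*}$ and $T^{*}T$ have the same nonzero spectrum, which is what reduces the second inequality to (a relabelling of) the first.
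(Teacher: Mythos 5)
Your proposal is correct and follows essentially the same route as the paper's own proof: apply the second inequality of Corollary \ref{corollary:logdet-continuity-trace-norm} to $c^2A_N^{1/2}AA_N^{1/2}$ versus $c^2A_N^2$, bound $\|A_N^{1/2}(A-A_N)A_N^{1/2}\|_{\tr}$ via Lemma \ref{lemma:trace-norm-bound-ABA}, and reduce the second inequality to the first through the identity $\logdet\bigl(\tfrac12 I+\tfrac12(I+c^2A_N^{1/2}AA_N^{1/2})^{1/2}\bigr)=\logdet\bigl(\tfrac12 I+\tfrac12(I+c^2A^{1/2}A_NA^{1/2})^{1/2}\bigr)$. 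The only difference is that you spell out the trace-class and spectral-equality justifications that the paper leaves implicit.
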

\begin{proof}
	By Corollary \ref{corollary:logdet-continuity-trace-norm} and Lemma \ref{lemma:trace-norm-bound-ABA},
	\begin{align*}
	&\left|\logdet\left(\frac{1}{2} + \frac{1}{2}(I+c^2 A_N^{1/2}AA_N^{1/2})^{1/2}\right)
	- \logdet\left(\frac{1}{2} + \frac{1}{2}(I+c^2 A_N^2)^{1/2}\right)\right|
	\\
	& \leq \frac{c^2}{4}||A_N^{1/2}AA_N^{1/2} - A_N^2||_{\tr} \leq \frac{c^2}{4}||A_N||_{\HS}||A_N - A||_{\HS}.
	\end{align*}
	Since $\logdet\left(\frac{1}{2} + \frac{1}{2}(I+c^2 A_N^{1/2}AA_N^{1/2})^{1/2}\right) = \logdet\left(\frac{1}{2} + \frac{1}{2}(I+c^2 A^{1/2}A_NA{1/2})^{1/2}\right)$, the second inequality follows similarly. 
	\qed
\end{proof}

\begin{theorem}
	[\textbf{Theorem 2.3 in \cite{Kitta:InequalitiesV}}]
	Let $A,B$ be two positive operators on $\H$ and $f$ any operator monotone function with $f(0) = 0$. 
	Then
	\begin{equation}
	||f(A) - f(B)|| \leq f(||A-B||).
	\end{equation}
\end{theorem}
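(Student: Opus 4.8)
The plan is to derive the operator-norm bound from the two-sided operator inequality
\begin{equation}
-f(||A-B||)\, I \leq f(A) - f(B) \leq f(||A-B||)\, I,
\end{equation}
which is equivalent to the assertion, since $f(A) - f(B) \in \Sym(\H)$ and for a self-adjoint operator $T$ one has $\|T\| \leq c \iff -cI \leq T \leq cI$.

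Before that, I would record two scalar facts about $f$ on $[0,\infty)$. Being operator monotone, $f$ is in particular monotone nondecreasing, so $f(t) \geq f(0) = 0$ for every $t \geq 0$; and an operator monotone function on $[0,\infty)$ is concave. This last point is classical, following e.g. from the L\"owner integral representation $f(t) = \gamma t + \int_{(0,\infty)} \frac{t\lambda}{t+\lambda}\, d\mu(\lambda)$ with $\gamma \geq 0$ and $\mu$ a positive measure, each summand being concave; equivalently, it is the operator concavity of operator monotone functions. A concave function vanishing at the origin is subadditive, hence $f(s+t) \leq f(s) + f(t)$ for all $s,t \geq 0$.

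Now set $r = ||A-B||$, so that $A - B \leq rI$ and therefore $A \leq B + rI$, with both operators positive. Operator monotonicity of $f$ gives $f(A) \leq f(B + rI)$. Since $B$ commutes with $rI$, applying the spectral theorem to $B$ together with the pointwise estimate $f(\lambda + r) \leq f(\lambda) + f(r)$, valid for $\lambda$ in the spectrum of $B$ (which lies in $[0,\infty)$ because $B \geq 0$), yields $f(B + rI) \leq f(B) + f(r) I$. Combining, $f(A) - f(B) \leq f(r) I = f(||A-B||)\, I$, which is the upper half of the two-sided bound. Interchanging the roles of $A$ and $B$ and using $||A-B|| = ||B-A||$ gives $f(B) - f(A) \leq f(||A-B||)\, I$, i.e. the lower half, and the conclusion follows.

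The only step that is not completely routine functional calculus is the passage from operator monotonicity to scalar concavity (hence subadditivity) of $f$; for this I would invoke L\"owner's theorem / the integral representation above, or equivalently the operator concavity of operator monotone functions with $f(0)=0$. Positivity of $A$ and $B$ enters only to ensure their spectra lie in $[0,\infty)$, the domain on which $f$ is assumed operator monotone and where the subadditivity holds, so that all of the functional calculus above is legitimate.
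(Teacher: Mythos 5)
The paper does not prove this statement at all: it is imported verbatim as Theorem 2.3 of the cited Kittaneh reference, so there is no in-paper argument to compare against. Your proof is correct and is, in fact, the standard argument for this inequality (it is essentially the proof of Theorem X.1.1 in Bhatia's \emph{Matrix Analysis}, extended to bounded operators): reduce the norm bound to the two-sided order bound $-f(\|A-B\|)I \leq f(A)-f(B) \leq f(\|A-B\|)I$, use $A \leq B + \|A-B\|I$ together with operator monotonicity, and then exploit the subadditivity of $f$ (which follows from concavity plus $f(0)=0$, concavity being a consequence of the L\"owner representation or of the operator concavity of operator monotone functions on $[0,\infty)$). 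Every step checks out; the only implicit assumption worth making explicit is that $f$ is continuous on $[0,\infty)$ so that the continuous functional calculus applies at the possibly nontrivial spectral point $0$ of $A$ or $B$, but this is part of the usual definition of operator monotonicity on $[0,\infty)$ and is harmless here.
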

The following result is then immediate.
\begin{corollary}
	\label{corollary:continuity-norm-square-root}
	Let $A,B$ be two positive operators on $\H$. Then
	\begin{equation}
	||A^{r} - B^{r}|| \leq ||A-B||^{r}, \;\;\; 0 < r \leq 1.
	\end{equation}
\end{corollary}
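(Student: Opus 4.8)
The plan is to apply the preceding theorem (Theorem 2.3 in \cite{Kitta:InequalitiesV}) with the specific choice $f(t) = t^r$ on $[0,\infty)$, and simply check that this $f$ satisfies the two hypotheses required there.

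First I would observe that $f(0) = 0^r = 0$, so the normalization hypothesis is trivially met. The only substantive point is that $f(t) = t^r$ is an operator monotone function on $[0,\infty)$ whenever $0 < r \le 1$: this is exactly the Löwner--Heinz inequality, namely that $0 \le A \le B$ implies $A^r \le B^r$ for $0 \le r \le 1$, a classical fact I would cite rather than reprove. (For $r = 1$ the claimed inequality $\|A - B\| \le \|A - B\|$ is an equality and there is nothing to show, so one may assume $0 < r < 1$.) Since $A, B$ are positive operators by hypothesis, the theorem applies directly and yields
\begin{equation}
\|A^r - B^r\| = \|f(A) - f(B)\| \le f(\|A-B\|) = \|A-B\|^r, \qquad 0 < r \le 1,
\end{equation}
which is the assertion.

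There is essentially no obstacle here: the statement is an immediate specialization of the cited theorem once operator monotonicity of $t \mapsto t^r$ is invoked. The only thing one must be slightly careful about is that the cited theorem is stated for positive operators and uses $f$ defined on the spectrum, so it is enough that $A, B \ge 0$ (guaranteed by hypothesis) and that $t^r$ is defined and operator monotone on $[0,\infty)$, both of which hold. Hence the corollary follows with no further work.
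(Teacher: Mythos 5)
Your proof is correct and is exactly the argument the paper intends: the corollary is stated immediately after Theorem 2.3 of \cite{Kitta:InequalitiesV} and the paper simply declares it ``immediate,'' i.e.\ one applies that theorem with $f(t)=t^r$, which satisfies $f(0)=0$ and is operator monotone on $[0,\infty)$ for $0<r\leq 1$ by L\"owner--Heinz. Your write-up just makes the implicit verification explicit; nothing further is needed.
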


\begin{theorem}
	\label{theorem:FS-AB-HS}
	Define the following function $F_S: \Sym^{+}(\H) \cap \HS(\H) \times \Sym^{+}(\H) \cap \HS(\H) \mapto \R$ by
	\begin{align}
	F_S(A,B) &= \trace[M(A,A) - 2M(A,B) + M(B,B)] 
	\nonumber
	\\
	&\quad +\logdet\left[\frac{(I +\frac{1}{2}M(A,B))^2}{(I+\frac{1}{2}M(A,A))(I+\frac{1}{2}M(B,B))}\right],
	\end{align}
	where $M(A,B) = -I + (I + c^2 A^{1/2}BA^{1/2})^{1/2}$, $c \in \R$.
	Then 
	\begin{align}
	|F_S(A,B)| \leq \frac{3c^2}{4}[||A||_{\HS} + ||B||_{\HS}]||A-B||_{\HS}.
	\end{align}
\end{theorem}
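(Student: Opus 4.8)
The plan is to exploit the algebraic fact that $F_S$ vanishes on the diagonal: when $B=A$ one has $A^{1/2}BA^{1/2}=A^2$, so the bracket $M(A,A)-2M(A,B)+M(B,B)$ is identically $0$ and the argument of $\logdet$ is the identity, whence $F_S(A,A)=0$. Consequently $F_S(A,B)$ is a sum of two pieces, a trace piece and a $\logdet$ piece, each of which can be written as a difference of a mixed term $M(A,B)$ against the two pure terms $M(A,A),M(B,B)$, and the estimate will follow by feeding these differences into the perturbation bounds already proved, namely Lemma~\ref{lemma:trace-bound-square-root-3} for the trace piece and Lemma~\ref{lemma:logdet-bound-square-root-3} for the $\logdet$ piece, combined with two triangle inequalities.

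First I would check that every term is well defined. Since $A,B\in\Sym^{+}(\H)\cap\HS(\H)$, Lemma~\ref{lemma:trace-norm-bound-ABA} gives $A^{1/2}BA^{1/2},A^{2},B^{2}\in\Tr(\H)$, and then Corollary~\ref{corollary:trace-class-square-root} (with second operator $0$ and $r=1/2$) shows $M(A,B),M(A,A),M(B,B)\in\Tr(\H)$; moreover each such $M$ is $\geq 0$, since $c^{2}A^{1/2}BA^{1/2}\geq 0$ forces $(I+c^{2}A^{1/2}BA^{1/2})^{1/2}\geq I$, so $I+\tfrac12 M\geq I$ is boundedly invertible and the Fredholm determinant is multiplicative on these factors, giving $\logdet\big[(I+\tfrac12 M(A,B))^{2}/\big((I+\tfrac12 M(A,A))(I+\tfrac12 M(B,B))\big)\big]=2\logdet(I+\tfrac12 M(A,B))-\logdet(I+\tfrac12 M(A,A))-\logdet(I+\tfrac12 M(B,B))$. (If $c=0$ everything is $0$ and the claim is trivial, so assume $c\neq 0$.)

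For the trace piece, write $\trace[M(A,A)]-2\trace[M(A,B)]+\trace[M(B,B)]=\big(\trace[M(A,A)]-\trace[M(A,B)]\big)+\big(\trace[M(B,B)]-\trace[M(A,B)]\big)$; since $M(A,A)=-I+(I+c^{2}A^{2})^{1/2}$ and $M(B,B)=-I+(I+c^{2}B^{2})^{1/2}$, the two inequalities of Lemma~\ref{lemma:trace-bound-square-root-3} applied with $(A_N,A)\mapsto(A,B)$ bound these summands by $\tfrac{c^{2}}{2}\|A\|_{\HS}\|A-B\|_{\HS}$ and $\tfrac{c^{2}}{2}\|B\|_{\HS}\|A-B\|_{\HS}$, for a total of $\tfrac{c^{2}}{2}[\|A\|_{\HS}+\|B\|_{\HS}]\|A-B\|_{\HS}$. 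Likewise, using $I+\tfrac12 M(A,B)=\tfrac12 I+\tfrac12(I+c^{2}A^{1/2}BA^{1/2})^{1/2}$ and the analogous identities for the pure terms, I would split the $\logdet$ piece as $\big(\logdet(\tfrac12 I+\tfrac12(I+c^{2}A^{1/2}BA^{1/2})^{1/2})-\logdet(\tfrac12 I+\tfrac12(I+c^{2}A^{2})^{1/2})\big)+\big(\logdet(\tfrac12 I+\tfrac12(I+c^{2}A^{1/2}BA^{1/2})^{1/2})-\logdet(\tfrac12 I+\tfrac12(I+c^{2}B^{2})^{1/2})\big)$ and apply the two inequalities of Lemma~\ref{lemma:logdet-bound-square-root-3} with $(A_N,A)\mapsto(A,B)$ to bound it by $\tfrac{c^{2}}{4}[\|A\|_{\HS}+\|B\|_{\HS}]\|A-B\|_{\HS}$. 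Adding the two contributions yields $|F_S(A,B)|\leq \tfrac{3c^{2}}{4}[\|A\|_{\HS}+\|B\|_{\HS}]\|A-B\|_{\HS}$. There is no substantive obstacle here: the result is essentially a bookkeeping assembly of the preceding lemmas, and the only point needing a line of care is the well-definedness and additivity of the Fredholm $\logdet$ on the three factors, which is immediate once one observes each $\tfrac12 M$ is a positive trace-class perturbation of $I$; the standing $\Sym^{+}$ hypothesis is exactly what lets one invoke the sharper Lemmas~\ref{lemma:trace-bound-square-root-3} and \ref{lemma:logdet-bound-square-root-3} rather than the more general Lemmas~\ref{lemma:trace-bound-square-root-1} and \ref{lemma:logdet-bound-square-root-1}.
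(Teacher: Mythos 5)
Your proposal is correct and follows essentially the same route as the paper: split the trace term and the $\logdet$ term each into two differences pairing the mixed term $M(A,B)$ against the pure terms $M(A,A)$ and $M(B,B)$, then apply Lemmas \ref{lemma:trace-bound-square-root-3} and \ref{lemma:logdet-bound-square-root-3} and sum the four bounds. The only addition is your explicit check of trace-class membership, positivity of the $M$'s, and multiplicativity of the Fredholm determinant, which the paper leaves implicit.
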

\begin{proof}
	[\textbf{Proof of Theorem \ref{theorem:FS-AB-HS}}]
	Combining Lemmas \ref{lemma:trace-bound-square-root-3} and \ref{lemma:logdet-bound-square-root-3}, we have
	\begin{align*}
	&|F_S(A,B)| \leq |\trace(M(A,A)) - \trace(M(A,B))| + |\trace(M(A,B)) - \trace(M(B,B))|
	\\
	&\quad + |\logdet(I+\frac{1}{2}M(A,B)) - \logdet(I+\frac{1}{2}M(A,A))|
	\\
	&\quad +  |\logdet(I+\frac{1}{2}M(A,B)) - \logdet(I+\frac{1}{2}M(B,B))|
	\\
	& \leq \frac{c^2}{2}||A||_{\HS}||A-B||_{\HS} + \frac{c^2}{2}||B||_{\HS}||A-B||_{\HS}
	\\
	& \quad + \frac{c^2}{4}||A||_{\HS}||A-B||_{\HS} + \frac{c^2}{4}||B||_{\HS}||A-B||_{\HS}
	\\
	& = \frac{3c^2}{4}[||A||_{\HS}+||B||_{\HS}]||A-B||_{\HS}. 
	\end{align*}
\end{proof}

\begin{proof}
	[\textbf{Proof of Theorem \ref{theorem:convergence-Sinkhorn}}]
	This follows from Theorem \ref{theorem:FS-AB-HS}, where $\Srm^{\ep}_{d^2}[\Ncal(0,A_N), \Ncal(0,A)] = 
	\frac{\ep}{4}F_S(A_N,A)$, with $c = c_{\ep} = \frac{4}{\ep}$.
	\qed
\end{proof}

\begin{proof}
	[\textbf{Proof of Theorem \ref{theorem:Sinkhorn-vs-Wasserstein}}]
	It suffices to focus on the case $m_N = m= 0$.
	Then $\lim_{N \approach \infty}\W_2(\Ncal(0,A_N), \Ncal(0,A)) = 0 \equivalent \lim_{N \approach \infty}||A_N - A||_{\tr} = 0
	\imply \lim_{N \approach \infty}||A_N -A||_{\HS} = 0 \imply \lim_{N \approach \infty}\Srm^{\ep}_{d^2}(\Ncal(0,A_N), \Ncal(0,A)) = 0$.
	
	(i) Let us now construct a sequence $\{A_N\}_{N \in \Nbb} , A\in \Sym^{+}(\H) \cap \Tr(\H)$
	such that 
	\begin{align*}
	\lim_{N \approach \infty}||A_N - A||_{\HS} &= 0, \;\; \lim_{N \approach \infty}||A_N - A||_{\tr} \neq 0,
	\\
	\lim_{N \approach \infty}\Srm^{\ep}_{d^2}[\Ncal(0,A_N), \Ncal(0,A)] &= 0, \; \lim_{N \approach \infty}\W_2[\Ncal(0,A_N), |\Ncal(0,A)] \neq 0.
	\end{align*}
	Let $\{e_k\}_{k \in \Nbb}$ be any orthonormal basis in $\H$.
	Let $A_N = \frac{1}{N}\sum_{i=1}^Ne_k \otimes e_k$, then $A_N \in \Tr(\H)$ with $\trace(A_N) = 1$ $\forall N \in \Nbb$.
	Let $A=0$, then
	\begin{align}
	||A_N-A||_{\tr} =1 \; \forall N \in \Nbb, ||A_N - A||_{\HS}^2 = \frac{1}{N} \approach 0 \text{ as $N \approach \infty$}.
	\end{align}
	The exact $2$-Wasserstein distance between $\Ncal(0,A_N)$ and $\Ncal(0,A)$ is, $\forall N \in \Nbb$,
	\begin{align}
	\W^2_2[\Ncal(0,A_N), \Ncal(0,A)] = \trace(A_N)+\trace(A) - 2\trace((A_N^{1/2}AA_N^{1/2})^{1/2}) = 1.
	\end{align}
	With $M(A,B)$ as
	in Theorem \ref{theorem:FS-AB-HS}, we have $M(A_N,A) = -I + (I+c_{\ep}^2A_N^{1/2}AA_N^{1/2})^{1/2} = 0$, $c_{\ep} = \frac{4}{\ep}$, thus
	the entropic $2$-Wasserstein distance is, $\forall N \in \Nbb$,
	\begin{align}
	&\OT^{\ep}_{d^2}[\Ncal(0,A_N), \Ncal(0,A)]
	\\
	& =\trace(A_N) + \trace(A) -\frac{\ep}{2}\trace(M(A_N,A)) + \frac{\ep}{2}\logdet\left(I + \frac{1}{2}M(A_N,A)\right) = 1. 
	\nonumber
	\end{align}
	Since $M(A_N,A) = M(A,A) = 0$, we have
	\begin{align*}
	\Srm^{\ep}_{d^2}[\Ncal(0,A_N), \Ncal(0,A)] = \frac{\ep}{4}\trace[M(A_N,A_N)] - \frac{\ep}{4}\logdet\left(I+\frac{1}{2}M(A_N,A_N)\right).
	\end{align*}
	With $M(A_N,A_N) = -I +(I +c_{\ep}^2A_N^2)^{1/2}$ having $N$ nonzero eigenvalues,
	\begin{align*}
	&\trace(M(A_N,A_N)) = N\left(-1+\left(1+\frac{c_{\ep}^2}{N^2}\right)^{1/2}\right) = \frac{c_{\ep}^2}{N}\left(1+\left(1+\frac{c_{\ep}^2}{N^2}\right)^{1/2}\right)^{-1},
	\\
	&\logdet\left(I + \frac{1}{2}M(A_N,A_N)\right) = N\log\left(\frac{1}{2} + \frac{1}{2}\left(1+\frac{c_{\ep}^2}{N^2}\right)^{1/2}\right).
	\end{align*}
	It is clear that $\lim_{N \approach \infty}\trace(M(A_N,A_N)) = 0$.
	Applying L'Hopital's rule gives $\lim_{N \approach \infty}\logdet\left(I + \frac{1}{2}M(A_N,A_N)\right) = 0$.
	Thus
	$\lim_{N \approach \infty}\Srm^{\ep}_{d^2}(\Ncal(0,A_N), \Ncal(0,A)) = 0$ as we claimed.
	
	(ii) Next, we construct a sequence $\{A_N\}_{N \in \Nbb} , A\in \Sym^{+}(\H) \cap \Tr(\H)$
	such that 
	\begin{align*}
	\lim_{N \approach \infty}||A_N - A||_{\HS} &\neq  0, \;\; \lim_{N \approach \infty}||A_N - A|| = 0,
	\\
	\lim_{N \approach \infty}\Srm^{\ep}_{d^2}[\Ncal(0,A_N), \Ncal(0,A)] &\neq 0.
	\end{align*}
	We proceed as in part (i), with $A=0$, but $A_N = \frac{1}{\sqrt{N}}\sum_{k=1}^Ne_k \otimes e_k$.
	Then $||A_N-A||= \frac{1}{\sqrt{N}} \approach 0$ but $||A_N - A||_{\HS} = 1$ $\forall N \in \Nbb$. As $N \approach \infty$,
	\begin{align*}
	&\trace(M(A_N,A_N)) = N\left(-1+\left(1+\frac{c_{\ep}^2}{N}\right)^{1/2}\right) = {c_{\ep}^2}\left(1+\left(1+\frac{c_{\ep}^2}{N}\right)^{1/2}\right)^{-1} \approach \frac{c_{\ep}^2}{2},
	\\
	&\logdet\left(I + \frac{1}{2}M(A_N,A_N)\right) = N\log\left(\frac{1}{2} + \frac{1}{2}\left(1+\frac{c_{\ep}^2}{N}\right)^{1/2}\right) \approach \frac{c_{\ep}^2}{4},
	\end{align*}
	with the second limit following from L'Hopital's rule. Thus
	\begin{align*}
	\lim_{N \approach \infty}\Srm^{\ep}_{d^2}[\Ncal(0,A_N), \Ncal(0,A)]
	 = \frac{\ep}{4}\frac{c^2_{\ep}}{4} = \frac{4}{\ep} > 0
	\end{align*}
	for all $\ep > 0$. 
	\qed
\end{proof}

\begin{proof}
	[\textbf{Proof of Theorem \ref{theorem:OT-entropic-approx}}]
	We have
	\begin{align*}
	&\left|\OT^{\ep}_{d^2}[\Ncal(m_{A,N}, A_N), \Ncal(m_{B,N},B_N)] - \OT^{\ep}_{d^2}[\Ncal(m_A, A), \Ncal(m_B,B)]\right|
	\\ 
	&\leq \left|||m_{A,N} - m_{B,N}||^2 - ||m_A-m_B||^2\right|
	\\
	& \quad +\left|\OT^{\ep}_{d^2}[\Ncal(0, A_N), \Ncal(0,B_N)] - \OT^{\ep}_{d^2}[\Ncal(0, A), \Ncal(0,B)]\right|.
	\end{align*}
	For the first term involving the means, by the triangle inequality
	\begin{align*}
	&\left|||m_{A,N} - m_{B,N}||^2 - ||m_A-m_B||^2\right|
	\\ 
	&= \left|||m_{A,N}- m_{B,N}|| - ||m_A - m_B||\right|[||m_{A,N} - m_{B,N}|| + ||m_A - m_{B}||]
	\\
	& \leq ||(m_{A,N} - m_{B,N}) - (m_A - m_B)||\;[||m_{A,N}|| + ||m_{B,N}|| + ||m_A|| + ||m_B||]
	\\
	& \leq [||m_{A,N} - m_A|| + ||m_{B,N} - m_{B}||]\;[||m_{A,N}|| + ||m_{B,N}|| + ||m_A|| + ||m_B||].
	\end{align*}
	
	For the second terms with the centered Gaussians,
	let $M(A,B) = -I + (I+ c_{\ep}^2A^{1/2}BA^{1/2})^{1/2}$, $c_{\ep} = \frac{4}{\ep}$, then we have
	\begin{align*}
	&\left|\OT^{\ep}_{d^2}[\Ncal(0, A_N), \Ncal(0,B_N)] - \OT^{\ep}_{d^2}[\Ncal(0, A), \Ncal(0,B)]\right|
	\\
	&\leq |\trace(A_N) - \trace(A)| + |\trace(B_N) - \trace(B)| + \frac{\ep}{2}|\trace(M(A_N,B_N)) - \trace(M(A,B))|
	\\
	&+ \frac{\ep}{2}\left|\logdet\left(I + \frac{1}{2}M(A_N,B_N)\right) - \logdet\left(I + \frac{1}{2}M(A,B)\right)\right|.
	\end{align*}
	For the first two terms, we have $|\trace(A_N) - \trace(A)| = |\trace(A_N-A)| \leq \trace|A_N-A| = ||A_N-A||_{\tr}$
	and similarly $|\trace(B_N)-\trace(B)|\leq ||B_N-B||_{\tr}$.
	For the third term, by Lemma \ref{lemma:trace-bound-square-root-1},
	\begin{align*}
	&|\trace(M(A_N,B_N)) - \trace(M(A,B))|
	\\ 
	&\leq \frac{c_{\ep}^2}{2}
	\left(||A_N||_{\HS}||B_N-B||_{\HS} + ||B||_{\HS}||A_N-A||_{\HS}\right).
	\end{align*}
	For the fourth term, by Lemma \ref{lemma:logdet-bound-square-root-1},
	\begin{align*}
	&\left|\logdet\left(I + \frac{1}{2}M(A_N,B_N)\right) - \logdet\left(I + \frac{1}{2}M(A,B)\right)\right|
	\\
	&\leq \frac{c_{\ep}^2}{4}
	\left(||A_N||_{\HS}||B_N-B||_{\HS} + ||B||_{\HS}||A_N-A||_{\HS}\right).
	\end{align*}
	Combining all these expressions give the desired result.
	\qed
\end{proof}

\begin{proof}
	[\textbf{Proof of Theorem \ref{theorem:Sinkhorn-entropic-approx}}]
	As in the proof of Theorem \ref{theorem:OT-entropic-approx}, 
	we have
	\begin{align*}
	&\left|\Srm^{\ep}_{d^2}[\Ncal(m_{A,N}, A_N), \Ncal(m_{B,N},B_N)] - \Srm^{\ep}_{d^2}[\Ncal(m_A, A), \Ncal(m_B,B)]\right|
	\\ 
	&\leq \left|||m_{A,N} - m_{B,N}||^2 - ||m_A-m_B||^2\right|
	\\
	& \quad +\left|\Srm^{\ep}_{d^2}[\Ncal(0, A_N), \Ncal(0,B_N)] - \Srm^{\ep}_{d^2}[\Ncal(0, A), \Ncal(0,B)]\right|.
	\end{align*}
	The mean terms are the same as in Theorem \ref{theorem:OT-entropic-approx}. For the terms involving the centered Gaussians,
	combining Lemmas \ref{lemma:trace-bound-square-root-1}, \ref{lemma:trace-bound-square-root-2}, \ref{lemma:logdet-bound-square-root-1}, \ref{lemma:logdet-bound-square-root-2}, 
	with $c = c_{\ep} = \frac{4}{\ep}$, $M(A,B) = -I + (I+c^2A^{1/2}BA^{1/2})^{1/2}$, we obtain
	\begin{align*}
	&\left|\Srm^{\ep}_{d^2}[\Ncal(0, A_N), \Ncal(0,B_N)] - \Srm^{\ep}_{d^2}[\Ncal(0, A), \Ncal(0,B)]\right|
	\\
	& \leq \frac{\ep}{4}|\trace(M(A_N,A_N)) - \trace(M(A,A))| + \frac{\ep}{4}|\trace(M(B_N,B_N)) - \trace(M(B,B))|
	\\
	& \quad +\frac{\ep}{4}\left|\logdet\left(I + \frac{1}{2}M(A_N,A_N)\right) - \logdet\left(I + \frac{1}{2}M(A,A)\right)\right| 
	\\
	& \quad + \frac{\ep}{4}\left|\logdet\left(I + \frac{1}{2}M(B_N,B_N)\right) - \logdet\left(I + \frac{1}{2}M(B,B\right)\right|
	\\
	&\quad +\frac{\ep}{2}|\trace(M(A_N,B_N)) - \trace(M(A,B))|
	\\
	& \quad + \frac{\ep}{2}\left|\logdet\left(I + \frac{1}{2}M(A_N,B_N)\right) - \logdet\left(I + \frac{1}{2}M(A,B\right)\right|
	\\
	&\leq \frac{\ep}{4}\frac{3c_{\ep}^2}{4}[||A_N||_{\HS} + ||A||_{\HS}]||A_N-A||_{\HS} + \frac{\ep}{4}\frac{3c_{\ep}^2}{4}
	[||B_N||_{\HS} + ||B||_{\HS}]||B_N-B||_{\HS}
	\\
	& \quad + \frac{\ep}{2}\frac{3c_{\ep}^2}{4}[||A_N||_{\HS}||B_N-B||_{\HS} + ||B||_{\HS}||A_N-A||_{\HS}]
	\\
	& \leq \frac{3}{\ep}[||A_N||_{\HS} + ||A||_{\HS}]||A_N-A||_{\HS} + \frac{3}{\ep}[||B_N||_{\HS} + ||B||_{\HS}]||B_N-B||_{\HS}
	\\
	&\quad +\frac{6}{\ep}[||A_N||_{\HS}||B_N-B||_{\HS} + ||B||_{\HS}||A_N-A||_{\HS}] 
	\\
	& = \frac{3}{\ep}[||A_N||_{\HS} + ||A||_{\HS} + 2||B||_{\HS}||]||A_N-A||_{\HS}
	\\
	&\quad + \frac{3}{\ep}[2||A_N||_{\HS} + ||B_N||_{\HS} + ||B||_{\HS}]||B_N-B||_{\HS}.
	\end{align*}
	\qed
\end{proof}

\subsection{Proofs for the RKHS setting}
\label{section:proofs-RKHS}

\begin{lemma}
\label{lemma:muPhi-norm-HK}
	Assume Assumptions 1-3. Then $\mu_{\Phi} = \int_{\X}\Phi(x)d\rho(x) \in \H_K$, with $||\mu_{\Phi}||_{\H_K} \leq \kappa$.
	\end{lemma}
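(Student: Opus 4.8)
The plan is to realize $\mu_{\Phi}$ as a Bochner integral of the canonical feature map $\Phi$ and then bound its norm using the Cauchy--Schwarz inequality together with the fact that $\rho$ is a probability measure.

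First I would check that $\Phi:\X\mapto\H_K$, $\Phi(x)=K_x$, is Bochner integrable with respect to $\rho$. Since $K$ is continuous on $\X\times\X$ (Assumption 3) and $\H_K$ is separable (Assumption 1 and Lemma 4.33 in \cite{Steinwart:SVM2008}), the map $x\mapsto\Phi(x)$ is continuous, hence strongly $\rho$-measurable. Because $\|\Phi(x)\|_{\H_K}^2=\la K_x,K_x\ra_{\H_K}=K(x,x)$, Jensen's inequality applied to the concave function $\sqrt{\cdot}$ on the probability space $(\X,\rho)$ gives $\int_{\X}\|\Phi(x)\|_{\H_K}\,d\rho(x)=\int_{\X}\sqrt{K(x,x)}\,d\rho(x)\le\bigl(\int_{\X}K(x,x)\,d\rho(x)\bigr)^{1/2}\le\kappa<\infty$ by Assumption 3. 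Hence the Bochner integral $\mu_{\Phi}=\int_{\X}\Phi(x)\,d\rho(x)$ exists in $\H_K$.

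For the norm bound I would invoke the triangle inequality for the Bochner integral, $\|\mu_{\Phi}\|_{\H_K}\le\int_{\X}\|\Phi(x)\|_{\H_K}\,d\rho(x)\le\kappa$, using the estimate just obtained. As an alternative one can use the reproducing identity $\la\mu_{\Phi},f\ra_{\H_K}=\int_{\X}f(x)\,d\rho(x)$ with $f=\mu_{\Phi}$, together with Fubini and $|K(x,y)|\le\sqrt{K(x,x)}\sqrt{K(y,y)}$, to obtain $\|\mu_{\Phi}\|_{\H_K}^2=\int_{\X}\int_{\X}K(x,y)\,d\rho(x)\,d\rho(y)\le\bigl(\int_{\X}\sqrt{K(x,x)}\,d\rho(x)\bigr)^2\le\int_{\X}K(x,x)\,d\rho(x)\le\kappa^2$.

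I do not anticipate any genuine obstacle: the statement is essentially a packaging of Bochner's theorem, and the only steps needing a line of justification are the strong measurability of $\Phi$ (from continuity of $K$ and separability of $\H_K$) and the inequality $\int_{\X}\sqrt{K(x,x)}\,d\rho(x)\le\bigl(\int_{\X}K(x,x)\,d\rho(x)\bigr)^{1/2}$, which is just Jensen on a probability space.
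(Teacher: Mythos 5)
Your proposal is correct and follows essentially the same route as the paper: both bound $||\mu_{\Phi}||_{\H_K}$ by $\int_{\X}||\Phi(x)||_{\H_K}d\rho(x) \leq \left(\int_{\X}K(x,x)d\rho(x)\right)^{1/2} \leq \kappa$, i.e.\ the chain $||\bE\xi|| \leq \bE||\xi|| \leq \sqrt{\bE||\xi||^2}$. The only difference is that you spell out the strong measurability and Bochner integrability of $\Phi$, which the paper leaves implicit.
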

\begin{proof}
	Define the random variable $\xi:(\X, \rho) \mapto \H_K$ by $\xi(x) = \Phi(x)$. Then $\mu_{\Phi} = \bE{\xi}$ and
	\begin{align*}
	\bE||\xi||^2_{\H_K} = \int_{\X}||\Phi(x)||^2_{\H_K}d\rho(x) = \int_{\X}K(x,x)d\rho(x) \leq \kappa^2.
	\end{align*}
	Thus $||\mu_{\Phi}||_{\H_K} = ||\bE{\xi}||_{\H_K} \leq \sqrt{\bE||\xi||^2_{\H_K}} \leq \kappa$.
	\qed
\end{proof}

\begin{lemma}
	\label{lemma:trace-LK}
	Assume Assumptions 1-3. Then the operator $L_K:\H_K \mapto \H_K$ as defined in Eq.\eqref{equation:LK} is positive, trace class, with
	\begin{align}
	\trace(L_K) &= \int_{\X}K(x,x)d\rho(x) \leq \kappa^2,
	\\
	||L_K||_{\HS(\H_K)} &\leq \kappa^2.
	\end{align}
	\end{lemma}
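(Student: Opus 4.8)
The plan is to exhibit $L_K$ as the expectation (Bochner integral) of a Hilbert--Schmidt-operator-valued random variable, in direct analogy with the treatment of $\mu_\Phi$ in Lemma \ref{lemma:muPhi-norm-HK}. Define $\eta:(\X,\rho)\mapto \HS(\H_K)$ by $\eta(x) = \Phi(x)\otimes\Phi(x)$. Because a rank-one operator $u\otimes u$ (with the paper's convention $(u\otimes v)w = \la v,w\ra_{\H_K}u$) has Hilbert--Schmidt norm $\|u\|^2_{\H_K}$, we get $\|\eta(x)\|_{\HS(\H_K)} = \|\Phi(x)\|^2_{\H_K} = K(x,x)$. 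Continuity of $K$ makes $x\mapsto\Phi(x)$ continuous into $\H_K$ (since $\|\Phi(x)-\Phi(y)\|^2_{\H_K} = K(x,x)-2K(x,y)+K(y,y)$), hence $\eta$ is continuous, thus Borel measurable, into the separable Hilbert space $\HS(\H_K)$; and Assumption 3 gives $\bE\|\eta\|_{\HS(\H_K)} = \int_\X K(x,x)\,d\rho(x)\le\kappa^2<\infty$. Therefore $\eta$ is Bochner integrable, $L_K = \bE\,\eta = \int_\X \Phi(x)\otimes\Phi(x)\,d\rho(x)$ (matching \eqref{equation:LK}), and $\|L_K\|_{\HS(\H_K)} = \|\bE\,\eta\|_{\HS(\H_K)} \le \bE\|\eta\|_{\HS(\H_K)} \le \kappa^2$.

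Next I would record self-adjointness and positivity of $L_K$: each summand $\Phi(x)\otimes\Phi(x)$ is self-adjoint and positive, and both properties pass to the norm-convergent Bochner integral; equivalently, for every $f\in\H_K$,
\[
\la L_K f, f\ra_{\H_K} = \int_\X |\la\Phi(x), f\ra_{\H_K}|^2\,d\rho(x) \ge 0 .
\]

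Finally, to compute the trace I would use that $L_K\ge 0$, fix an orthonormal basis $\{e_k\}_{k\in\Nbb}$ of $\H_K$, and apply Tonelli's theorem to interchange the nonnegative sum and integral:
\[
\trace(L_K) = \sum_k \la e_k, L_K e_k\ra_{\H_K} = \sum_k \int_\X |\la\Phi(x), e_k\ra_{\H_K}|^2\,d\rho(x) = \int_\X \|\Phi(x)\|^2_{\H_K}\,d\rho(x) = \int_\X K(x,x)\,d\rho(x) \le \kappa^2 ,
\]
which at once shows $L_K\in\Tr(\H_K)$ and yields the stated value of $\trace(L_K)$; the Hilbert--Schmidt bound can alternatively be recovered a posteriori from $\|L_K\|_{\HS(\H_K)}\le\|L_K\|_{\tr}=\trace(L_K)\le\kappa^2$. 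This statement is essentially classical (cf.\ \cite{CuckerSmale,SmaleZhou2007,Minh:Covariance2017}); the only steps that call for a word of justification are the Bochner integrability/measurability of $\eta$ and the sum--integral interchange, and both are immediate from Assumptions 1--3, so there is no genuine obstacle here.
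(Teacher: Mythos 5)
Your proof is correct and follows essentially the same route as the paper's: the trace is computed by interchanging the nonnegative sum over an orthonormal basis with the integral (you invoke Tonelli, the paper invokes the Lebesgue Monotone Convergence Theorem --- the same step), and finiteness together with positivity gives the trace class property. The only cosmetic difference is in the Hilbert--Schmidt bound, which you obtain primarily from $\|\bE\,\eta\|_{\HS}\le\bE\|\eta\|_{\HS}$ for the Bochner integral, whereas the paper uses $\sum_k\lambda_k^2\le(\sum_k\lambda_k)^2$ for the nonnegative eigenvalues of $L_K$; your fallback $\|L_K\|_{\HS}\le\|L_K\|_{\tr}=\trace(L_K)$ is that same inequality in disguise, so both arguments are sound and interchangeable.
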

\begin{proof}
	Let $\{e_k\}_{k\in \Nbb}$ be any orthonormal basis in $\H_K$. 
	By Lebesgue Monotone Convergence Theorem,
	\begin{align*}
	&\trace(L_K) = \sum_{k=1}^{\infty}\la e_k, L_Ke_k\ra_{\H_K} = \sum_{k=1}^{\infty}\int_{\X}|\la \Phi(x), e_k\ra_{\H_K}|^2d\rho(x)
	\\
	& = \int_{\X}\sum_{k=1}^{\infty}|\la \Phi(x), e_k\ra_{\H_K}|^2d\rho(x) = \int_{\X}||\Phi(x)||^2_{\H_K}d\rho(x) 
	= \int_{\X}K(x,x)d\rho(x) \leq \kappa^2.
	\end{align*}
	Since $L_K$ is positive, this implies that $L_K$ is trace class, hence compact.
	Let $\{\lambda_k\}_{k \in \Nbb}$ be the eigenvalues of $L_K$, then $\lambda_k \geq 0$ $\forall k \in \Nbb$ and
	\begin{align*}
	||L_K||^2_{\HS(\H_K)} = \sum_{k=1}^{\infty}\lambda_k^2 \leq (\sum_{k=1}^{\infty}\lambda_k)^2 = [\trace(L_K)]^2 \leq \kappa^4.
	\end{align*}
	\qed
\end{proof}

\begin{lemma}
	\label{lemma:HSnorm-rankone-ab}
	For any pair $a,b\in \H$, $||a\otimes b||_{\HS(\H)} = ||a||\;||b||$.
\end{lemma}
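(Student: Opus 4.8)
\textbf{Proof of Lemma \ref{lemma:HSnorm-rankone-ab} (plan).} The statement is elementary, so the plan is a direct computation from the definition of the Hilbert--Schmidt norm, being careful only about the convention for the rank-one operator $a\otimes b$. Recall from the earlier definition that $(a\otimes b)w = \la b,w\ra a$ for all $w\in\H$. If either $a=0$ or $b=0$ the operator is the zero operator and both sides vanish, so assume $a,b\neq 0$.

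First I would fix an arbitrary orthonormal basis $\{e_k\}_{k\in\Nbb}$ of $\H$ and evaluate
\begin{align*}
\|a\otimes b\|_{\HS(\H)}^2 = \sum_{k=1}^{\infty}\|(a\otimes b)e_k\|^2 = \sum_{k=1}^{\infty}\|\la b,e_k\ra a\|^2 = \|a\|^2\sum_{k=1}^{\infty}|\la b,e_k\ra|^2 = \|a\|^2\|b\|^2,
\end{align*}
where the last equality is Parseval's identity. Taking square roots gives $\|a\otimes b\|_{\HS(\H)} = \|a\|\,\|b\|$. As a consistency check (and an alternative route one could present instead), one may note that $(a\otimes b)^{*} = b\otimes a$ and that $(b\otimes a)(a\otimes b) = \|a\|^2\,(b\otimes b)$, whence $\|a\otimes b\|_{\HS(\H)}^2 = \trace\big((a\otimes b)^{*}(a\otimes b)\big) = \|a\|^2\trace(b\otimes b) = \|a\|^2\|b\|^2$.

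There is essentially no obstacle here; the only point requiring minor care is the placement of the inner product (i.e.\ whether $a\otimes b$ acts via $\la b,\cdot\ra$ or $\la a,\cdot\ra$), but since the two conventions differ only by swapping the roles of $a$ and $b$, the final identity is symmetric in $\|a\|$ and $\|b\|$ and is unaffected. I would also remark that the same computation shows $a\otimes b\in\HS(\H)$ in the first place, which is all that is needed for the subsequent use of this lemma in bounding $\|C_{\Phi(\Xbf)}-C_{\Phi}\|_{\HS(\H_K)}$.
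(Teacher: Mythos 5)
Your proof is correct and follows essentially the same route as the paper: fix an orthonormal basis, expand $\|(a\otimes b)e_k\|^2=|\la b,e_k\ra|^2\|a\|^2$, and sum via Parseval. The extra trace-based consistency check is fine but not needed.
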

\begin{proof}
	Let $\{e_k\}_{k \in \Nbb}$ be any orthonormal basis in $\H$, then
	\begin{align*}
	||a\otimes b||_{\HS}^2 = \sum_{k=1}^{\infty}||(a\otimes b)e_k||^2 = \sum_{k=1}^{\infty}||\la b, e_k\ra a|||^2 
	= ||a||^2\sum_{k=1}^{\infty}|\la b,e_k\ra|^2 = ||a||^2||b||^2.
	\end{align*}
	\qed
\end{proof}

\begin{proof}
[\textbf{Proof of Theorem \ref{theorem:Sinkhorn-divergence-Borel-probability-measures}}]
We need only to show that 
\begin{align*}
\Srm^{\ep}_{d^2}[\Ncal(\mu_{\Phi,\rho_1}, C_{\Phi, \rho_1}), \Ncal(\mu_{\Phi,\rho_2}, C_{\Phi, \rho_2})] = 0 \equivalent \rho_1 = \rho_2.
\end{align*}
When $\ep = \infty$, $\Srm^{\infty}_{d^2} = \MMD_K^2$ and the desired property is already valid. Assume now $0 \leq \ep < \infty$.
We have the decomposition
\begin{align*}
&\Srm^{\ep}_{d^2}[\Ncal(\mu_{\Phi,\rho_1}, C_{\Phi, \rho_1}), \Ncal(\mu_{\Phi,\rho_2}, C_{\Phi, \rho_2})] 
\\
&= ||\mu_{\Phi,\rho_1}-\mu_{\Phi,\rho_2}||^2_{\H_K} 
+ \Srm^{\ep}_{d^2}[\Ncal(0, C_{\Phi, \rho_1}), \Ncal(0, C_{\Phi, \rho_2})]
\\
& = \MMD_K^2(\rho_1, \rho_2) +  \Srm^{\ep}_{d^2}[\Ncal(0, C_{\Phi, \rho_1}), \Ncal(0, C_{\Phi, \rho_2})].
\end{align*}
Thus $\Srm^{\ep}_{d^2}[\Ncal(\mu_{\Phi,\rho_1}, C_{\Phi, \rho_1}), \Ncal(\mu_{\Phi,\rho_2}, C_{\Phi, \rho_2})] = 0
$ if and only if
\begin{align*}
\MMD_K(\rho_1,\rho_2) = 0
\;\text{ and }\Srm^{\ep}_{d^2}[\Ncal(0, C_{\Phi, \rho_1}), \Ncal(0, C_{\Phi, \rho_2})] = 0.
\end{align*}
Since $K$
is a characteristic kernel on
$\X$, $\MMD_K(\rho_1,\rho_2) = 0 \equivalent \rho_1 = \rho_2$, the latter also 
implying $\Srm^{\ep}_{d^2}[\Ncal(0, C_{\Phi, \rho_1}), \Ncal(0, C_{\Phi, \rho_2})] = 0$.
\qed
\end{proof}

\begin{proof}
	[\textbf{Proof of Theorem \ref{theorem:RKHS-distance}}]
	For the mean terms, we have
	\begin{align*}
	&||\mu_{\Phi(\Xbf)} - \mu_{\Phi(\Ybf)}||^2_{\H_K} = 
	\left\|\frac{1}{m}\sum_{i=1}^m\Phi(x_i) - \frac{1}{n}\sum_{j=1}^n\Phi(y_j) \right\|^2_{\H_K}
	\\
	& = \frac{1}{m^2}\sum_{i,j=1}^m\la \Phi(x_i), \Phi(x_j)\ra_{\H_K} + \frac{1}{n^2}\sum_{i,j=1}^n \la \Phi(y_i), \Phi(y_j)\ra_{\H_K} - \frac{2}{mn}\sum_{i=1}^m\sum_{j=1}^n\la\Phi(x_i), \Phi(y_j)\ra_{\H_K} 
	\\
	& = \frac{1}{m^2}\1_m^TK[\Xbf]\1_m + \frac{1}{n^2}\1_n^TK[\Ybf]\1_n - \frac{2}{mn}\1_m^TK[\Xbf,\Ybf]\1_n.
	\end{align*}
	Let $\lambda(A)$ denote the set of nonzero eigenvalues of a compact operator $A$. Since the nonzero eigenvalues of $AB$ are the same as those of $BA$, we obtain
	\begin{align*}
	\lambda(C_{\Phi(\Xbf)}) & = \frac{1}{m}\lambda([\Phi(\Xbf) J_m \Phi(\Xbf)^{*}]) = \frac{1}{m}\lambda([\Phi(\Xbf)^{*}\Phi(\Xbf) J_m ])
	\\
	& = \frac{1}{m}\lambda(K[\Xbf]J_m) = \frac{1}{m}\lambda(J_mK[\Xbf]J_m), \;\;\text{since $J_m^2 = J_m$}
	\\
	\lambda(C_{\Phi(\Ybf)}) &= \frac{1}{n}\lambda(K[\Ybf]J_n) = \frac{1}{n}\lambda(J_nK[\Ybf]J_n),
	\\
	\lambda(C_{\Phi(\Xbf)}^2) &= \frac{1}{m^2}\lambda([\Phi(\Xbf) J_m \Phi(\Xbf)^{*}][\Phi(\Xbf) J_m \Phi(\Xbf)^{*}])
	\\
	&= \frac{1}{m^2}\lambda([\Phi(\Xbf)^{*}\Phi(\Xbf) J_m \Phi(\Xbf)^{*}\Phi(\Xbf) J_m ]) = \frac{1}{m^2}\lambda[(K[\Xbf]J_m)^2]
	\\
	& = \frac{1}{m^2}\lambda[(J_mK[\Xbf]J_m)^2],
	\\
	\lambda(C_{\Phi(\Ybf)}^2) &=\frac{1}{n^2}\lambda[(K[\Ybf]J_n)^2] = \frac{1}{n^2}\lambda[(J_nK[\Ybf]J_n)^2],
	\\
	\lambda(C_{\Phi(\Ybf)}^{1/2}C_{\Phi(\Xbf)}C_{\Phi(\Ybf)}^{1/2}) &=\lambda(C_{\Phi(\Xbf)}^{1/2}C_{\Phi(\Ybf)}C_{\Phi(\Xbf)}^{1/2}) = \lambda(C_{\Phi(\Ybf)}C_{\Phi(\Xbf)}) 
	\\
	&= \frac{1}{mn}\lambda([\Phi(\Ybf) J_n \Phi(\Ybf)^{*}][\Phi(\Xbf) J_m \Phi(\Xbf)^{*}])
	\\
	&= \frac{1}{mn}\lambda(K[\Xbf,\Ybf]J_nK[\Ybf,\Xbf]J_m)
	\\
	& = \frac{1}{mn}\lambda(J_mK[\Xbf,\Ybf]J_nK[\Ybf,\Xbf]J_m)
	\end{align*}
	Combining these with the expressions for $\OT^{\ep}_{d^2}$ and $\Srm^{\ep}_{d^2}$ in Theorem
	\ref{theorem:OT-regularized-Gaussian}
	gives the desired results. 
	\qed
\end{proof}

\subsection{Proof for the bounded kernel setting}

\begin{lemma}
	\label{lemma:mean-bound-HK}
	Assume Assumptions 1-4, then
	\begin{align}
	||\mu_{\Phi}||_{\H_K} \leq \kappa, \;\;\; ||\mu_{\Phi(\Xbf)}||_{\H_K} &\leq \kappa \;\;\;\forall \Xbf \in \X^m.
	\end{align}
	For any $0 < \delta < 1$, with probability at least $1-\delta$,
	\begin{align}
	||\mu_{\Phi(\Xbf)} - \mu_{\Phi}||_{\H_K} \leq \kappa \left( \frac{2\log\frac{2}{\delta}}{m} + \sqrt{\frac{2\log\frac{2}{\delta}}{m}}\right). 
	\end{align}
\end{lemma}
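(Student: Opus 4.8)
The plan is to dispatch the three assertions separately. The first, $\|\mu_{\Phi}\|_{\H_K} \leq \kappa$, is precisely Lemma~\ref{lemma:muPhi-norm-HK}, whose proof uses only Assumption 3; since Assumption 4 ($\sup_{x\in\X}K(x,x)\leq\kappa^2$) implies Assumption 3, nothing further is needed. For the empirical mean bound, I would use the definition $\mu_{\Phi(\Xbf)} = \frac1m\sum_{j=1}^m\Phi(x_j)$ from \eqref{equation:empirical-mean-RKHS} together with convexity of $\|\cdot\|_{\H_K}^2$ (Jensen's inequality), giving
\begin{align*}
\|\mu_{\Phi(\Xbf)}\|^2_{\H_K} \leq \frac1m\sum_{j=1}^m\|\Phi(x_j)\|^2_{\H_K} = \frac1m\sum_{j=1}^m K(x_j,x_j) \leq \kappa^2,
\end{align*}
where the last step is Assumption 4; note this holds for \emph{every} $\Xbf\in\X^m$, not merely almost surely.

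For the concentration inequality, the approach is to apply Proposition~\ref{proposition:Pinelis} to the random variable $\xi_1:(\X,\rho)\mapto\H_K$ defined by $\xi_1(x) = \Phi(x)$. One checks that $\bE\xi_1 = \mu_{\Phi}$ and $\frac1m\sum_{i=1}^m\xi_1(x_i) = \mu_{\Phi(\Xbf)}$. By Assumption 4, $\|\xi_1(x)\|^2_{\H_K} = K(x,x)\leq\kappa^2$ for all $x\in\X$, so we may take $M = \kappa$; and $\sigma^2(\xi_1) = \bE\|\xi_1\|^2_{\H_K} = \int_\X K(x,x)\d\rho(x) \leq \kappa^2$ by Assumption 3. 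Substituting $M=\kappa$ and $\sigma^2(\xi_1)\leq\kappa^2$ into Proposition~\ref{proposition:Pinelis} yields, with probability at least $1-\delta$,
\begin{align*}
\|\mu_{\Phi(\Xbf)} - \mu_{\Phi}\|_{\H_K} \leq \frac{2\kappa\log\frac{2}{\delta}}{m} + \sqrt{\frac{2\kappa^2\log\frac{2}{\delta}}{m}} = \kappa\left(\frac{2\log\frac{2}{\delta}}{m} + \sqrt{\frac{2\log\frac{2}{\delta}}{m}}\right),
\end{align*}
which is the claimed bound.

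I do not anticipate a genuine obstacle here: the statement is essentially a direct instantiation of Proposition~\ref{proposition:Pinelis}. The only points that require a line of care are (i) distinguishing the roles of Assumptions 3 and 4 — Assumption 4 is what supplies the sure bound $\|\Phi(x)\|_{\H_K}\leq\kappa$ needed to set $M=\kappa$, whereas Assumption 3 suffices for the variance estimate — and (ii) verifying that the two terms output by Pinelis's inequality factor cleanly through $\kappa$ into the stated form, using $M=\kappa$ and $\sigma^2(\xi_1)\leq\kappa^2$. Both are routine.
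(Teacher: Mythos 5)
Your proposal is correct and follows essentially the same route as the paper: the first bound via Lemma \ref{lemma:muPhi-norm-HK}, the empirical mean bound pointwise from $K(x,x)\leq\kappa^2$ (the paper uses the triangle inequality on $\|\cdot\|_{\H_K}$ rather than Jensen on $\|\cdot\|^2_{\H_K}$, an immaterial difference), and the concentration bound by instantiating Proposition \ref{proposition:Pinelis} with $\xi(x)=\Phi(x)$, $M=\kappa$, $\sigma^2(\xi)\leq\kappa^2$. No gaps.
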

\begin{proof}
	By Lemma \ref{lemma:muPhi-norm-HK}, $||\mu_{\Phi}||_{\H_K} \leq \kappa$. 
	Define the random variable $\xi: (\X, \rho) \mapto \H_K$ by $\xi(x) = \Phi(x)$. Then
	$\mu_{\Phi(\Xbf)} = \frac{1}{m}\sum_{j=1}^m\xi(x_j)$ and $\mu_{\Phi} = \bE(\xi)$, with 
	\begin{align*}
	&||\xi(x)||_{\H_K} = ||\Phi(x)||_{\H_K} = \sqrt{\la \Phi(x), \Phi(x)\ra_{\H_K}} = \sqrt{K(x,x)} \leq \kappa\;\forall x \in \X, 
	\\
	&\sigma^2(\xi) = \bE||\xi||_{\H_K}^2 = \int_{\X}\la \Phi(x), \Phi(x)\ra_{\H_K} = \int_{\X}K(x,x)d\rho(x) \leq \kappa^2 < \infty,
	\\
	&||\mu_{\Phi(\Xbf)}||_{\H_K} \leq \frac{1}{m}\sum_{j=1}^m ||\Phi(x_j)||_{\H_K} = \frac{1}{m}\sum_{j=1}^m\sqrt{K(x_j,x_j)} \leq \kappa \;\forall \Xbf \in \X^m.
	\end{align*}
	The desired result then follows by invoking Proposition \ref{proposition:Pinelis}.
	\qed
\end{proof}

\begin{lemma}
	\label{lemma:mean-bound-HS(HK)}
	Assume Assumptions 1-4, then
	\begin{align}
	||\mu_{\Phi(\Xbf)} \otimes \mu_{\Phi(\Xbf)}||_{\HS(\H_K)} &\leq \kappa^2 \;\;\;\forall \Xbf \in \X^m,
	\\
	||\mu_{\Phi}\otimes \mu_{\Phi}||_{\HS(\H_K)} &\leq \kappa^2.
	\end{align}
	For any $0< \delta < 1$, with probability at least $1-\delta$,
	\begin{align}
	||\mu_{\Phi(\Xbf)} \otimes \mu_{\Phi(\Xbf)} - \mu_{\Phi}\otimes \mu_{\Phi}||_{\HS(H_K)} \leq 
	2\kappa^2\left(\frac{2\log\frac{2}{\delta}}{m} + \sqrt{\frac{2\log\frac{2}{\delta}}{m}}\right).
	\end{align}
\end{lemma}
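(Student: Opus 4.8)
The plan is to reduce the statement entirely to Lemma~\ref{lemma:HSnorm-rankone-ab} (the Hilbert--Schmidt norm of a rank-one operator factorizes, $\|a\otimes b\|_{\HS(\H_K)}=\|a\|_{\H_K}\|b\|_{\H_K}$) together with the mean estimates already established in Lemma~\ref{lemma:mean-bound-HK}. First I would dispatch the two deterministic bounds: applying Lemma~\ref{lemma:HSnorm-rankone-ab} with $a=b=\mu_{\Phi(\Xbf)}$ gives $\|\mu_{\Phi(\Xbf)}\otimes\mu_{\Phi(\Xbf)}\|_{\HS(\H_K)}=\|\mu_{\Phi(\Xbf)}\|_{\H_K}^2$, and the bound $\|\mu_{\Phi(\Xbf)}\|_{\H_K}\le\kappa$ from Lemma~\ref{lemma:mean-bound-HK} yields $\|\mu_{\Phi(\Xbf)}\otimes\mu_{\Phi(\Xbf)}\|_{\HS(\H_K)}\le\kappa^2$ for every $\Xbf\in\X^m$. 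The identical argument with $\mu_{\Phi}$ in place of $\mu_{\Phi(\Xbf)}$, again using $\|\mu_{\Phi}\|_{\H_K}\le\kappa$, gives $\|\mu_{\Phi}\otimes\mu_{\Phi}\|_{\HS(\H_K)}\le\kappa^2$.

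Next, for the difference I would exploit that the outer product $(u,v)\mapsto u\otimes v$ is linear in each slot (since $(u\otimes v)w=\la v,w\ra_{\H_K}u$ on a real Hilbert space), so adding and subtracting $\mu_{\Phi(\Xbf)}\otimes\mu_{\Phi}$ yields the telescoping identity
\begin{align*}
\mu_{\Phi(\Xbf)}\otimes\mu_{\Phi(\Xbf)} - \mu_{\Phi}\otimes\mu_{\Phi}
= \mu_{\Phi(\Xbf)}\otimes(\mu_{\Phi(\Xbf)}-\mu_{\Phi}) + (\mu_{\Phi(\Xbf)}-\mu_{\Phi})\otimes\mu_{\Phi}.
\end{align*}
Taking $\|\cdot\|_{\HS(\H_K)}$, using the triangle inequality together with Lemma~\ref{lemma:HSnorm-rankone-ab} on each of the two rank-one terms, and then bounding $\|\mu_{\Phi(\Xbf)}\|_{\H_K}$ and $\|\mu_{\Phi}\|_{\H_K}$ by $\kappa$, I obtain
\begin{align*}
\|\mu_{\Phi(\Xbf)}\otimes\mu_{\Phi(\Xbf)} - \mu_{\Phi}\otimes\mu_{\Phi}\|_{\HS(\H_K)}
\le \bigl(\|\mu_{\Phi(\Xbf)}\|_{\H_K}+\|\mu_{\Phi}\|_{\H_K}\bigr)\,\|\mu_{\Phi(\Xbf)}-\mu_{\Phi}\|_{\H_K}
\le 2\kappa\,\|\mu_{\Phi(\Xbf)}-\mu_{\Phi}\|_{\H_K}.
\end{align*}
Finally I would invoke the high-probability estimate from Lemma~\ref{lemma:mean-bound-HK}: on an event of probability at least $1-\delta$ one has $\|\mu_{\Phi(\Xbf)}-\mu_{\Phi}\|_{\H_K}\le\kappa\bigl(\frac{2\log\frac{2}{\delta}}{m}+\sqrt{\frac{2\log\frac{2}{\delta}}{m}}\bigr)$, and substituting this into the previous display gives exactly $2\kappa^2\bigl(\frac{2\log\frac{2}{\delta}}{m}+\sqrt{\frac{2\log\frac{2}{\delta}}{m}}\bigr)$, as claimed.

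There is essentially no serious obstacle here; the only point requiring a moment's care is the bilinearity of the outer product that legitimizes the add-and-subtract step, and ensuring that the high-probability event is precisely the one on which the Lemma~\ref{lemma:mean-bound-HK} bound holds, so that the same $\delta$ is carried through unchanged. Everything else is a direct application of the two preceding lemmas.
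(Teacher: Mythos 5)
Your proposal is correct and follows essentially the same route as the paper's own proof: the rank-one factorization $\|a\otimes b\|_{\HS}=\|a\|\,\|b\|$, the same add-and-subtract decomposition of $\mu_{\Phi(\Xbf)}\otimes\mu_{\Phi(\Xbf)}-\mu_{\Phi}\otimes\mu_{\Phi}$, and the same invocation of the high-probability mean bound. The only cosmetic difference is that the paper re-derives $\|\mu_{\Phi(\Xbf)}\|_{\H_K}\le\kappa$ directly from Assumption 4 rather than citing Lemma~\ref{lemma:mean-bound-HK}.
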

\begin{proof}
	By Lemma \ref{lemma:HSnorm-rankone-ab}, $\forall \Xbf \in \X^m$,
	\begin{align*}
	&||\mu_{\Phi(\Xbf)}\otimes \mu_{\Phi(\Xbf)}||_{\HS(\H_K)} = ||\mu_{\Phi(\Xbf)}||^2_{\H_K}
	= \frac{1}{m^2}||\sum_{j=1}^m\Phi(x_j)||^2_{\H_K} 
	\\
	&= \frac{1}{m^2}\sum_{j,k=1}^m\la \Phi(x_j), \Phi(x_k)\ra_{\H_K} = \frac{1}{m^2}\sum_{j,k=1}^mK(x_j,x_k) \leq \kappa^2.
	\\
	&||\mu_{\Phi(\Xbf)}\otimes \mu_{\Phi(\Xbf)} - \mu_{\Phi}\otimes \mu_{\Phi}||_{\HS} 
	\\
	&\leq ||\mu_{\Phi(\Xbf)}\otimes (\mu_{\Phi(\Xbf)}-\mu_{\Phi})||_{\HS} +||(\mu_{\Phi(\Xbf)}-\mu_{\Phi})\otimes \mu_{\Phi}||_{\HS}
	\\
	& =\left( ||\mu_{\Phi(\Xbf)}||_{\H_K} + ||\mu_{\Phi}||_{\H_K}\right)||\mu_{\Phi(\Xbf)}-\mu_{\Phi}||_{\H_K}.
	\end{align*}
	As in the proof of Lemma \ref{lemma:mean-bound-HK},
	define the random variable $\xi: (\X, \rho) \mapto \H_K$ by $\xi(x) = \Phi(x)$. Then
	$\mu_{\Phi(\Xbf)} = \frac{1}{m}\sum_{j=1}^m\xi(x_j)$ and $\mu_{\Phi} = \bE(\xi)$, with
	\begin{align*}
	&||\mu_{\Phi(\Xbf)}||_{\H_K} \leq \frac{1}{m}\sum_{j=1}^m ||\Phi(x_j)||_{\H_K} = \frac{1}{m}\sum_{j=1}^m\sqrt{K(x_j,x_j)} \leq \kappa \;\forall \Xbf \in \X^m,
	\\
	&||\mu_{\Phi}\otimes \mu_{\Phi}||_{\HS(\H_K)}  = ||\mu_{\Phi}||^2_{\H_K} \leq \kappa^2, \;\;\text{by Lemma \ref{lemma:muPhi-norm-HK}}.
	\end{align*} 
	The desired result then follows by invoking Lemma \ref{lemma:mean-bound-HK}.
	\qed
\end{proof}

\begin{proposition}
	\label{proposition:LKbound-HSnorm}
	Assume Assumptions 1-4, then
	\begin{align}
	\left\|\frac{1}{m}\Phi(\Xbf)\Phi(\Xbf)^{*}\right\|_{\HS(\H_K)} &\leq \kappa^2 \;\;\forall \Xbf \in \X^m,
	\\
	||L_K||_{\HS(\H_K)} &\leq \kappa^2.
	\end{align}
	For any $0 < \delta < 1$, with probability at least $1-\delta$,
	\begin{align}
	\left\| \frac{1}{m}\Phi(\Xbf)\Phi(\Xbf)^{*} - L_K\right\|_{\HS(\H_K)} \leq \kappa^2\left(\frac{2\log\frac{2}{\delta}}{m} + \sqrt{\frac{2\log\frac{2}{\delta}}{m}}\right). 
	\end{align}
\end{proposition}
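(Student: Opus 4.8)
The plan is to realize $\frac{1}{m}\Phi(\Xbf)\Phi(\Xbf)^{*}$ as the empirical average of independent $\HS(\H_K)$-valued random variables whose expectation is $L_K$, and then to invoke the Hilbert-space law of large numbers in Proposition \ref{proposition:Pinelis} \emph{with the ambient Hilbert space taken to be $\HS(\H_K)$}. This is the exact analogue, one level up, of the argument used for $\mu_{\Phi(\Xbf)}$ in Lemma \ref{lemma:mean-bound-HK}.

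Concretely, I would set $\xi_2 : (\X, \rho) \mapto \HS(\H_K)$, $\xi_2(x) = \Phi(x)\otimes\Phi(x)$, which is measurable since $\Phi$ is continuous and $\H_K$ is separable. Using $(u\otimes v)w = \la v,w\ra_{\H_K}u$ one checks $\Phi(\Xbf)\Phi(\Xbf)^{*} = \sum_{j=1}^m \Phi(x_j)\otimes\Phi(x_j)$, so $\frac{1}{m}\Phi(\Xbf)\Phi(\Xbf)^{*} = \frac{1}{m}\sum_{j=1}^m \xi_2(x_j)$, while $L_K = \int_{\X}\Phi(x)\otimes\Phi(x)\,d\rho(x) = \bE[\xi_2]$. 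By Lemma \ref{lemma:HSnorm-rankone-ab} and Assumption 4,
\begin{align*}
\|\xi_2(x)\|_{\HS(\H_K)} = \|\Phi(x)\|_{\H_K}^2 = K(x,x) \le \kappa^2, \qquad x \in \X,
\end{align*}
so $\xi_2$ is almost surely bounded by $M = \kappa^2$, and $\sigma^2(\xi_2) = \bE\|\xi_2\|_{\HS(\H_K)}^2 = \int_{\X}K(x,x)^2\,d\rho(x) \le \kappa^4$.

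The two deterministic bounds then follow immediately: $\|\frac{1}{m}\Phi(\Xbf)\Phi(\Xbf)^{*}\|_{\HS(\H_K)} \le \frac{1}{m}\sum_{j=1}^m\|\xi_2(x_j)\|_{\HS(\H_K)} \le \kappa^2$ by the triangle inequality, and $\|L_K\|_{\HS(\H_K)} = \|\bE[\xi_2]\|_{\HS(\H_K)} \le \bE\|\xi_2\|_{\HS(\H_K)} \le \kappa^2$ (this is also part of Lemma \ref{lemma:trace-LK}). For the high-probability estimate I would apply Proposition \ref{proposition:Pinelis} to $\xi_2$ with $M = \kappa^2$ and $\sigma^2(\xi_2) \le \kappa^4$, giving, with probability at least $1-\delta$,
\begin{align*}
\left\|\frac{1}{m}\Phi(\Xbf)\Phi(\Xbf)^{*} - L_K\right\|_{\HS(\H_K)} \le \frac{2\kappa^2\log\frac{2}{\delta}}{m} + \sqrt{\frac{2\kappa^4\log\frac{2}{\delta}}{m}} = \kappa^2\left(\frac{2\log\frac{2}{\delta}}{m} + \sqrt{\frac{2\log\frac{2}{\delta}}{m}}\right).
\end{align*}

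I do not anticipate a real obstacle here: the only technical points are the measurability of $\xi_2$ as an $\HS(\H_K)$-valued map and the rank-one Hilbert--Schmidt identity $\|\Phi(x)\otimes\Phi(x)\|_{\HS(\H_K)} = K(x,x)$, and the latter is already recorded in Lemma \ref{lemma:HSnorm-rankone-ab}. The one thing to be careful about is to apply Proposition \ref{proposition:Pinelis} with the correct Hilbert space (namely $\HS(\H_K)$, not $\H_K$) and with $\sigma^2$ being the bound $\kappa^4$ on $\bE\|\xi_2\|^2_{\HS(\H_K)}$ rather than on $\|\bE\xi_2\|^2_{\HS(\H_K)}$.
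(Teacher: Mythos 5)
Your proposal is correct and follows essentially the same route as the paper: the paper's proof likewise defines $\xi(x) = \Phi(x)\otimes\Phi(x)$ as an $\HS(\H_K)$-valued random variable with $\bE\xi = L_K$, bounds $\|\xi(x)\|_{\HS(\H_K)} = K(x,x) \leq \kappa^2$ and $\sigma^2(\xi) \leq \kappa^4$ via Lemma \ref{lemma:HSnorm-rankone-ab} and Assumption 4, and invokes Proposition \ref{proposition:Pinelis} in $\HS(\H_K)$. The only cosmetic difference is that the paper cites Lemma \ref{lemma:trace-LK} for $\|L_K\|_{\HS(\H_K)} \leq \kappa^2$ whereas you rederive it by Jensen's inequality, which is equally valid.
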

\begin{proof}
	We have $||L_K||_{\HS(\H_K)} \leq \kappa^2$ by Lemma \ref{lemma:trace-LK}.
	Define the random variable $\xi: (\X, \rho) \mapto \HS(\H_K)$ by $\xi(x) = \Phi(x) \otimes \Phi(x)$,
	then,
	\begin{align*}
	\frac{1}{m}\Phi(\Xbf)\Phi(\Xbf)^{*} &= \frac{1}{m}\sum_{j=1}^m(\Phi(x_j) \otimes \Phi(x_j)) = \frac{1}{m}\sum_{j=1}^m\xi(x_j),
	\\
	L_K &= \int_{\X}\Phi(x)\otimes \Phi(x) d\rho(x) = \bE(\xi).
	\end{align*}
	By Lemma \ref{lemma:HSnorm-rankone-ab}, 
	\begin{align*}
	||\xi(x)||_{\HS(\H_K)} &= ||\Phi(x)||^2_{\H_K} = K(x,x) \leq \kappa^2 \;\forall x \in \X,
	\\
	\sigma^2(\xi) &= \bE||\xi||^2_{\HS(\H_K)} = \int_{\X}K(x,x)^2d\rho(x) \leq \kappa^4.  
	\end{align*}
	The desired result then follows by invoking Proposition \ref{proposition:Pinelis}.
	\qed
\end{proof}

\begin{proof}
	[\textbf{Proof of Theorem \ref{theorem:CPhi-concentration}}]
	Combining Lemma \ref{lemma:mean-bound-HS(HK)} and Proposition \ref{proposition:LKbound-HSnorm},
	\begin{align*}
	&||C_{\Phi(\Xbf)}||_{\HS} = \left\|\frac{1}{m}\Phi(\Xbf)\Phi(\Xbf)^{*} - \mu_{\Phi(\Xbf)}\otimes \mu_{\Phi(\Xbf)}\right\|_{\HS} 
	\\
	&\leq  \left\|\frac{1}{m}\Phi(\Xbf)\Phi(\Xbf)^{*}\right\|_{\HS} +||\mu_{\Phi(\Xbf)}\otimes \mu_{\Phi(\Xbf)}||_{\HS}
	\leq \kappa^2 + \kappa^2 = 2 \kappa^2.
	\\
	&||C_{\Phi}||_{\HS} = ||L_K - \mu_{\Phi}\otimes \mu_{\Phi}||_{\HS} \leq ||L_K||_{\HS} + ||\mu_{\Phi}\otimes \mu_{\Phi}||_{\HS}
	\leq \kappa^2 + \kappa^2 = 2\kappa^2.
	\end{align*}
	For any $0 < \delta < 1$, by Lemma \ref{lemma:mean-bound-HK}, let $U_1 \subset \X^m$ be defined by
	\begin{align*}
	U_1 = \left\{\Xbf \in \X^m: ||\mu_{\Phi(\Xbf)} - \mu_{\Phi}||_{\H_K} \leq \kappa\left(\frac{2\log\frac{4}{\delta}}{m} + \sqrt{\frac{2\log\frac{4}{\delta}}{m}}\right) \right\},
	\end{align*}
	then $\rho^m(U_1) \geq 1-\frac{\delta}{2}$. By the proof of Lemma \ref{lemma:mean-bound-HS(HK)}, $\forall \Xbf \in U_1$,
	\begin{align*}
	||\mu_{\Phi(\Xbf)}\otimes \mu_{\Phi(\Xbf)} - \mu_{\Phi}\otimes \mu_{\Phi}||_{\HS} \leq 
	2\kappa^2\left(\frac{2\log\frac{4}{\delta}}{m} + \sqrt{\frac{2\log\frac{4}{\delta}}{m}}\right).
	\end{align*}
	By Proposition \ref{proposition:LKbound-HSnorm}, let $U_2 \subset \X^m$ be defined by
	\begin{align*}
	U_2 = \left\{\Xbf \in \X^m: \left\|\frac{1}{m}\Phi(\Xbf)\Phi(\Xbf)^{*}-L_K\right\|_{\HS} \leq \kappa^2\left(\frac{2\log\frac{4}{\delta}}{m} + \sqrt{\frac{2\log\frac{4}{\delta}}{m}}\right)\right\},
	\end{align*}
	then $\rho^m(U_2) \geq 1 - \frac{\delta}{2}$. The intersection set $U_1 \cap U_2$ satisfies
	\begin{align*}
	\rho^m(U_1 \cap U_2) = \rho^m(U_1) + \rho^m(U_2) - \rho^m(U_1 \cup U_2) \geq 2(1-\frac{\delta}{2}) -1 = 1-\delta.
	\end{align*}
	Thus for $\Xbf \in U_1 \cap U_2$, with measure at least $1-\delta$,
	\begin{align*}
	||C_{\Phi(\Xbf)} - C_{\Phi}||_{\HS} &\leq \left\|\frac{1}{m}\Phi(\Xbf)\Phi(\Xbf)^{*}-L_K\right\|_{\HS} + ||\mu_{\Phi(\Xbf)}\otimes \mu_{\Phi(\Xbf)} - \mu_{\Phi}\otimes \mu_{\Phi}||_{\HS}
	\\
	&\leq 3\kappa^2\left(\frac{2\log\frac{4}{\delta}}{m} + \sqrt{\frac{2\log\frac{4}{\delta}}{m}}\right).
	\end{align*}
	\qed
\end{proof}

\begin{proof}
	[\textbf{Proof of Theorem \ref{theorem:Sinkhorn-RKHS-concentration}}]
	By Theorems \ref{theorem:convergence-Sinkhorn} and \ref{theorem:CPhi-concentration}, for any $0 < \delta < 1$,
	\begin{align*}
	&\Srm^{\ep}_{d^2}[\Ncal(\mu_{\Phi(\Xbf)}, C_{\Phi(\Xbf)}), \Ncal(\mu_{\Phi}, C_{\Phi})]
	\\
	& \leq ||\mu_{\Phi(\Xbf)} - \mu_{\Phi}||^2_{\H_K}
	+  \frac{3}{\ep}[||C_{\Phi(\Xbf)}||_{\HS(\H_K)} + ||C_{\Phi}||_{\HS(\H_K)}]||C_{\Phi(\Xbf)}- C_{\Phi}||_{\HS(\H_K)}
	\\
	& \leq \kappa^2\left(\frac{2\log\frac{4}{\delta}}{m} + \sqrt{\frac{2\log\frac{4}{\delta}}{m}}\right)^2
	+ \frac{3}{\ep}[2\kappa^2 + 2\kappa^2]3\kappa^2\left(\frac{2\log\frac{4}{\delta}}{m} + \sqrt{\frac{2\log\frac{4}{\delta}}{m}}\right)
	\\
	& = \kappa^2\left(\frac{2\log\frac{4}{\delta}}{m} + \sqrt{\frac{2\log\frac{4}{\delta}}{m}}\right)^2
	+ \frac{36\kappa^4}{\ep}\left(\frac{2\log\frac{4}{\delta}}{m} + \sqrt{\frac{2\log\frac{4}{\delta}}{m}}\right),
	\end{align*} 
	with probability at least $1-\delta$. 
	\qed
\end{proof}

\begin{proof}
	[\textbf{Proof of Theorem \ref{theorem:Sinkhorn-RKHS-approximation}}]
	By Theorem \ref{theorem:Sinkhorn-entropic-approx}, Lemma \ref{lemma:mean-bound-HK}, and Theorem \ref{theorem:CPhi-concentration},
	\begin{align*}
	&\left|\Srm^{\ep}_{d^2}[\Ncal(\mu_{\Phi(\Xbf)}, C_{\Phi(\Xbf)}), \Ncal(\mu_{\Phi(\Ybf)}, C_{\Phi(\Ybf)})]
	- \Srm^{\ep}_{d^2}[\Ncal(\mu_{\Phi,\rho_1}, C_{\Phi, \rho_1}), \Ncal(\mu_{\Phi,\rho_2}, C_{\Phi, \rho_2})]\right|
	\\
	& \leq \left[||\mu_{\Phi(\Xbf)}||_{\H_K} + ||\mu_{\Phi(\Ybf)}||_{\H_K} + ||\mu_{\Phi, \rho_1}||_{\H_K} + ||\mu_{\Phi, \rho_2}||_{\H_K} \right]
	\\
	& \quad \times \left[||\mu_{\Phi(\Xbf)} - \mu_{\Phi, \rho_1}||_{\H_K} + ||\mu_{\Phi(\Ybf)} - \mu_{\rho,2}||_{\H_K}\right]
	\\
	& \quad + \frac{3}{\ep}\left[||C_{\Phi(\Xbf)}||_{\HS(\H_K)} + ||C_{\Phi,\rho_1}||_{\HS(\H_K)} + 2||C_{\Phi, \rho_2}||_{\HS(\H_K)}\right]
	||C_{\Phi(\Xbf)} - C_{\Phi, \rho,1}||_{\HS(\H_K)}
	\\
	&\quad + \frac{3}{\ep}\left[2||C_{\Phi(\Xbf)}||_{\HS(\H_K)} + ||C_{\Phi(\Ybf)}||_{\HS(\H_K)} + ||C_{\Phi, \rho_2}||_{\HS(\H_K)}\right]
	||C_{\Phi(\Ybf)} - C_{\Phi, \rho_2}||_{\HS(\H_K)}
	\\
	& \leq 4\kappa\left[||\mu_{\Phi(\Xbf)} - \mu_{\Phi, \rho_1}||_{\H_K} + ||\mu_{\Phi(\Ybf)} - \mu_{\rho,2}||_{\H_K}\right]
	\\
	& \quad + \frac{24\kappa^2}{\ep}||C_{\Phi(\Xbf)} - C_{\Phi, \rho,1}||_{\HS(\H_K)} + \frac{24\kappa^2}{\ep}||C_{\Phi(\Ybf)} - C_{\Phi, \rho_2}||_{\HS(\H_K)}.
	\end{align*}
	By Theorems \ref{theorem:CPhi-concentration}, for any $0 < \delta < 1$, 
	let $U_1 \subset (\X, \rho_1)^m$ be such that $\forall \Xbf \in U_1$,
	\begin{align*}
	||\mu_{\Phi(\Xbf)} - \mu_{\Phi, \rho_1}||_{\H_K} & \leq \kappa\left(\frac{2\log\frac{8}{\delta}}{m} + \sqrt{\frac{2\log\frac{8}{\delta}}{m}}\right),
	\\
	\text{and } ||C_{\Phi(\Xbf)} - C_{\Phi, \rho_1}||_{\HS(\H_K)} &\leq 3\kappa^2\left(\frac{2\log\frac{8}{\delta}}{m} + \sqrt{\frac{2\log\frac{8}{\delta}}{m}}\right),
	\end{align*}
	then $\rho^m(U_1) \geq 1 -\frac{\delta}{2}$.
	Similarly, let $U_2 \subset (\X, \rho_2)^n$ be such that $\forall \Ybf \in U_2$, 
	\begin{align*}
	||\mu_{\Phi(\Ybf)} - \mu_{\Phi, \rho_2}||_{\H_K} & \leq \kappa\left(\frac{2\log\frac{8}{\delta}}{n} + \sqrt{\frac{2\log\frac{8}{\delta}}{n}}\right),
	\\
	\text{and } ||C_{\Phi(\Ybf)} - C_{\Phi, \rho_2}||_{\HS(\H_K)} &\leq 3\kappa^2\left(\frac{2\log\frac{8}{\delta}}{n} + \sqrt{\frac{2\log\frac{8}{\delta}}{n}}\right),
	\end{align*}
	then $\rho^n(U_2) \geq 1-\frac{\delta}{2}$. Let now $Z = (U_1 \times (\X,\rho_2)^n) \cap ((\X, \rho_1)^m \times U_2)
	\subset (\X,\rho_1)^m \times (\X,\rho_2)^n$, then $(\rho_1^m \times \rho_2^n)(Z) \geq 1 -\delta$.
	Then $\forall (\Xbf,\Ybf) \in Z$, we have
	\begin{align*}
	&\left|\Srm^{\ep}_{d^2}[\Ncal(\mu_{\Phi(\Xbf)}, C_{\Phi(\Xbf)}), \Ncal(\mu_{\Phi(\Ybf)}, C_{\Phi(\Ybf)})]
	- \Srm^{\ep}_{d^2}[\Ncal(\mu_{\Phi,\rho_1}, C_{\Phi, \rho_1}), \Ncal(\mu_{\Phi,\rho_2}, C_{\Phi, \rho_2})]\right|
	\\
	& 
	\leq 4\kappa^2\left(\frac{2\log\frac{8}{\delta}}{m} + \sqrt{\frac{2\log\frac{8}{\delta}}{m}}+\frac{2\log\frac{8}{\delta}}{n} + \sqrt{\frac{2\log\frac{8}{\delta}}{n}}\right)
	\\
	& \quad +\frac{72\kappa^4}{\ep}\left(\frac{2\log\frac{8}{\delta}}{m} + \sqrt{\frac{2\log\frac{8}{\delta}}{m}}\right)
	+
	\frac{72\kappa^4}{\ep}\left(\frac{2\log\frac{8}{\delta}}{n} + \sqrt{\frac{2\log\frac{8}{\delta}}{n}}\right).
	\end{align*}
	\qed
\end{proof}

\subsection{Proofs for the general kernel case}

\begin{lemma}
	\label{lemma:mean-norm-HK-unbounded}
	Assume Assumptions 1-3. Let $\Xbf = (x_i)_{i=1}^m$ be independently sampled from $(\X, \rho)$. 
	For any $0 < \delta < 1$, with probability at least $1-\delta$,
	\begin{align}
	||\mu_{\Phi(\Xbf)} - \mu_{\Phi}||_{\H_K} &\leq \frac{\kappa}{\sqrt{m}\delta},
	\\
	||\mu_{\Phi(\Xbf)}||_{\H_K} &\leq \kappa \left(1 + \frac{1}{\sqrt{m}\delta}\right),
	\\
	||\mu_{\Phi(\Xbf)} \otimes \mu_{\Phi(\Xbf)} - \mu_{\Phi}\otimes \mu_{\Phi}||_{\HS(\H_K)} &\leq \frac{\kappa^2}{\sqrt{m}\delta}\left(2 + \frac{1}{\sqrt{m}\delta}\right).
	\end{align}
\end{lemma}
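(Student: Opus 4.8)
The plan is to reduce all three inequalities to a single high-probability bound on $\|\mu_{\Phi(\Xbf)} - \mu_\Phi\|_{\H_K}$, obtained from a second-moment argument. This mirrors exactly the bounded-kernel Lemma~\ref{lemma:mean-bound-HK}, except that since $\xi(x) = \Phi(x)$ is no longer assumed bounded (only $\int_\X K(x,x)\,d\rho(x)\le\kappa^2$ from Assumption~3), Pinelis's Bernstein-type estimate (Proposition~\ref{proposition:Pinelis}) is replaced by Markov's inequality applied to a variance bound.

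First I would introduce the $\H_K$-valued random variable $\xi(x)=\Phi(x)$ on $(\X,\rho)$, so that $\mu_\Phi=\bE\xi$ (well-defined, in $\H_K$, with $\|\mu_\Phi\|_{\H_K}\le\kappa$ by Lemma~\ref{lemma:muPhi-norm-HK}) and $\mu_{\Phi(\Xbf)}=\frac1m\sum_{j=1}^m\xi(x_j)$. By Assumption~3, $\bE\|\xi\|_{\H_K}^2=\int_\X K(x,x)\,d\rho(x)\le\kappa^2$. Because the $x_j$ are i.i.d.\ and the centered summands $\xi(x_j)-\bE\xi$ are pairwise orthogonal in expectation, the variance of the empirical mean telescopes:
\[
\bE\bigl\|\mu_{\Phi(\Xbf)} - \mu_\Phi\bigr\|_{\H_K}^2 = \tfrac1m\bigl(\bE\|\xi\|_{\H_K}^2 - \|\bE\xi\|_{\H_K}^2\bigr) \le \tfrac{\kappa^2}{m}.
\]
Jensen's inequality then gives $\bE\|\mu_{\Phi(\Xbf)} - \mu_\Phi\|_{\H_K}\le\kappa/\sqrt m$, and Markov's inequality yields, for any $0<\delta<1$, that $\|\mu_{\Phi(\Xbf)} - \mu_\Phi\|_{\H_K}\le\kappa/(\sqrt m\,\delta)$ on a set $U\subset\X^m$ with $\rho^m(U)\ge1-\delta$, which is the first claim.

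On this same event $U$ the other two bounds follow by deterministic manipulation. The triangle inequality with $\|\mu_\Phi\|_{\H_K}\le\kappa$ gives $\|\mu_{\Phi(\Xbf)}\|_{\H_K}\le\kappa\bigl(1+\tfrac1{\sqrt m\,\delta}\bigr)$; and writing $\mu_{\Phi(\Xbf)}\otimes\mu_{\Phi(\Xbf)}-\mu_\Phi\otimes\mu_\Phi=\mu_{\Phi(\Xbf)}\otimes(\mu_{\Phi(\Xbf)}-\mu_\Phi)+(\mu_{\Phi(\Xbf)}-\mu_\Phi)\otimes\mu_\Phi$ and invoking Lemma~\ref{lemma:HSnorm-rankone-ab} gives
\[
\|\mu_{\Phi(\Xbf)}\otimes\mu_{\Phi(\Xbf)}-\mu_\Phi\otimes\mu_\Phi\|_{\HS(\H_K)} \le \bigl(\|\mu_{\Phi(\Xbf)}\|_{\H_K}+\|\mu_\Phi\|_{\H_K}\bigr)\|\mu_{\Phi(\Xbf)}-\mu_\Phi\|_{\H_K} \le \tfrac{\kappa^2}{\sqrt m\,\delta}\Bigl(2+\tfrac1{\sqrt m\,\delta}\Bigr),
\]
as claimed. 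The only step requiring care is the variance identity for the Hilbert-space-valued sum --- one must check that $\bE\la\xi(x_i)-\bE\xi,\ \xi(x_j)-\bE\xi\ra_{\H_K}=0$ for $i\ne j$ (from independence and zero mean) and that $\bE\|\xi\|_{\H_K}^2<\infty$ legitimizes the Bochner integral $\bE\xi$ and the use of Jensen's inequality. Everything else is routine bookkeeping with results already in hand, and no dimension of $\X$ or of $\H_K$ enters, so the bounds are dimension-independent.
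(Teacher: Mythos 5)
Your proposal is correct and follows essentially the same route as the paper: the paper also bounds $\bE\|\mu_{\Phi(\Xbf)}-\mu_{\Phi}\|_{\H_K}^2 \le \kappa^2/m$ via the i.i.d.\ variance computation, passes to the first moment and applies Markov/Chebyshev to get the first inequality, and then derives the second and third deterministically on the same event via the triangle inequality and the rank-one decomposition with Lemma~\ref{lemma:HSnorm-rankone-ab}. The points you flag as requiring care (orthogonality of the centered summands and integrability justifying the Bochner mean) are exactly the ones the paper's computation relies on.
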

\begin{proof}
	Let $Y_j$, $1 \leq j \leq m$, be IID $\H_K$-valued random variables defined on $(\X^m, \rho^m)$ by
	$Y_j(\Xbf) = \Phi(x_j)$, where $\Xbf = (x_1, \ldots, x_m)$ is sampled independently from $(\X, \rho)$. 
	Then $\bE{Y_j} = \int_{\X}\Phi(x)d\rho(x) = \mu_{\Phi}$ and
	\begin{align*} 
	\bE{||Y_j||^2_{\H_K}} = \int_{\X}||\Phi(x)||^2_{\H_K}d\rho(x) = \int_{\X}K(x,x)d\rho(x)  \leq \kappa^2.
	\end{align*}
	Let $\xi:(\X^m, \rho^m) \mapto \R$ be the random variable defined by
	\begin{align*}
	\xi(\Xbf) = \left\|\frac{1}{m}\sum_{j=1}^mY_j(\Xbf) - \mu_{\Phi}\right\|_{\H_K} = \left\|\frac{1}{m}\sum_{j=1}^m\Phi(x_j) - \mu_{\Phi}\right\|_{\H_K} = ||\mu_{\Phi(\Xbf)} - \mu_{\Phi}||_{\H_K}.
	\end{align*}
	Since the $Y_j$'s are independent, identically distributed, with $\bE{Y_j} = \mu_{\Phi}$,
	\begin{align*}
	\bE{\xi^2} &= \frac{1}{m^2}\bE\left\|\sum_{j=1}^m(Y_j -\mu_{\Phi})\right\|^2_{\H_K} = \frac{1}{m^2}\sum_{j=1}^m\bE||Y_j -\mu_{\Phi}||^2_{\H_K}
	\\
	&=
	\frac{1}{m^2}\sum_{j=1}^m\left(\bE||Y_j||^2_{\H_K} - 2\la \bE[Y_j], \mu_{\Phi}\ra_{\H_K} + ||\mu_{\Phi}||^2_{\H_K}\right)
	\\
	& = \frac{1}{m^2}\sum_{j=1}^m\bE\left(||Y_j||^2_{\H_K} - ||\mu_{\Phi}||^2_{\H_K}\right)
	\leq \frac{1}{m}\kappa^2.
	\end{align*}
	By the Chebyshev inequality, for any $t > 0$,
	\begin{align*}
	\bP(\xi \geq t) \leq \frac{\bE\xi}{t} \leq \frac{\sqrt{\bE{\xi^2}}}{t}\leq \frac{\kappa}{\sqrt{m}t}.
	\end{align*}
	Let $\delta = \frac{\kappa}{\sqrt{m}t } \equivalent t = \frac{\kappa}{\sqrt{m}\delta}$, then with probability at least $1-\delta$,
	\begin{align*}
	||\mu_{\Phi(\Xbf)} - \mu_{\Phi}||_{\H_K} = \xi(\Xbf) \leq t = \frac{\kappa}{\sqrt{m}\delta}.
	\end{align*}
	This gives the first inequality. Then, with probability at least $1-\delta$,
	\begin{align*}
	||\mu_{\Phi(\Xbf)}||_{\H_K} \leq ||\mu_{\Phi(\Xbf)} - \mu_{\Phi}||_{\H_K} + ||\mu_{\Phi}||_{\H_K} \leq \kappa (1 + \frac{1}{\sqrt{m}\delta}). 
	\end{align*}
	For the third inequality, as in the proof of Lemma \ref{lemma:mean-bound-HS(HK)},
	\begin{align*}
	||\mu_{\Phi(\Xbf)}\otimes \mu_{\Phi(\Xbf)} - \mu_{\Phi}\otimes \mu_{\Phi}||_{\HS(\H_K)} &\leq [||\mu_{\Phi(\Xbf)}||_{\H_K} + ||\mu_{\Phi}||_{\H_K}]||\mu_{\Phi(\Xbf)} - \mu_{\Phi}||_{\H_K}
	\\
	&\leq \frac{\kappa^2}{\sqrt{m}\delta}(2 + \frac{1}{\sqrt{m}\delta}).
	\end{align*}
	\qed
\end{proof}

\begin{lemma}
	\label{lemma:LK-HSnorm-unbounded}
	Assume Assumptions 1,2 and 5. 
	Let $\Xbf =(x_i)_{i=1}^m$ be sampled independently from $(\X, \rho)$.
	For any $0 <\delta < 1$, with probability at least $1-\delta$,
	\begin{align}
	\left\|\frac{1}{m}\Phi(\Xbf)\Phi(\Xbf)^{*} - L_K\right\|_{\HS(\H_K)} \leq \frac{\kappa^2}{\sqrt{m}\delta}.
	\end{align}
\end{lemma}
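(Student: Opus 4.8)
The plan is to mimic the proof of Proposition \ref{proposition:LKbound-HSnorm}, but to replace the Pinelis-type concentration bound (which relies on the uniform bound of Assumption 4) by Chebyshev's inequality applied to an $\HS(\H_K)$-valued empirical mean, since under Assumption 5 we only have control of a second moment. Concretely, I would introduce the random variable $\xi:(\X,\rho)\mapto \HS(\H_K)$ defined by $\xi(x)=\Phi(x)\otimes\Phi(x)$, so that $\frac{1}{m}\Phi(\Xbf)\Phi(\Xbf)^{*}=\frac{1}{m}\sum_{j=1}^m\xi(x_j)$ and $L_K=\bE\xi$. By Lemma \ref{lemma:HSnorm-rankone-ab}, $\|\xi(x)\|_{\HS(\H_K)}=\|\Phi(x)\|_{\H_K}^2=K(x,x)$, hence Assumption 5 gives $\bE\|\xi\|_{\HS(\H_K)}^2=\int_\X K(x,x)^2 d\rho(x)\leq \kappa^4<\infty$, and in particular $L_K\in\HS(\H_K)$ is well defined.

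Next I would define $\eta:(\X^m,\rho^m)\mapto\R$ by $\eta(\Xbf)=\bigl\|\frac{1}{m}\sum_{j=1}^m\xi(x_j)-L_K\bigr\|_{\HS(\H_K)}$ and compute its second moment. Since $\HS(\H_K)$ is itself a Hilbert space and the summands $\xi(x_j)$ are i.i.d.\ with $\bE\xi(x_j)=L_K$, expanding the squared norm makes all cross terms vanish, exactly as in the variance computation in the proof of Lemma \ref{lemma:mean-norm-HK-unbounded}, giving $\bE\eta^2=\frac{1}{m^2}\sum_{j=1}^m\bE\|\xi(x_j)-L_K\|_{\HS}^2=\frac{1}{m}\bigl(\bE\|\xi\|_{\HS}^2-\|L_K\|_{\HS}^2\bigr)\leq \frac{\kappa^4}{m}$. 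Then Jensen's inequality and Chebyshev's inequality yield, for any $t>0$, $\bP(\eta\geq t)\leq \bE\eta/t\leq \sqrt{\bE\eta^2}/t\leq \kappa^2/(\sqrt{m}\,t)$; setting $t=\kappa^2/(\sqrt{m}\delta)$ shows that with probability at least $1-\delta$ one has $\bigl\|\frac{1}{m}\Phi(\Xbf)\Phi(\Xbf)^{*}-L_K\bigr\|_{\HS(\H_K)}=\eta(\Xbf)\leq \kappa^2/(\sqrt{m}\delta)$, which is the claim.

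This argument is essentially routine and there is no genuine obstacle; the only step that merits a moment of care is the variance bookkeeping, namely checking that $\bE\bigl\|\frac{1}{m}\sum_{j=1}^m(\xi(x_j)-L_K)\bigr\|_{\HS}^2$ collapses to $\frac{1}{m}\bigl(\bE\|\xi\|_{\HS}^2-\|L_K\|_{\HS}^2\bigr)$ because of independence and $\bE\xi=L_K$, which is legitimate since $\HS(\H_K)$ is a separable Hilbert space and the standard Hilbert-space-valued $L^2$ orthogonality applies verbatim.
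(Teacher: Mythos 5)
Your proof is correct and follows essentially the same route as the paper: the paper likewise sets $Y_j=\Phi(x_j)\otimes\Phi(x_j)$ with $\bE Y_j=L_K$, uses $\|\Phi(x)\otimes\Phi(x)\|_{\HS(\H_K)}=K(x,x)$ and Assumption 5 to bound the second moment by $\kappa^4$, collapses the variance of the empirical mean to $\kappa^4/m$ via independence, and concludes by Chebyshev with $t=\kappa^2/(\sqrt{m}\delta)$. No differences worth noting.
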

\begin{proof}
	Let $Y_j$, $1 \leq j \leq m$, be IID $\HS(\H_K)$-valued random variables defined on $(\X^m, \rho^m)$ by
	$Y_j(\Xbf) = \Phi(x_j)\otimes \Phi(x_j)$, where $\Xbf = (x_1, \ldots, x_m)$ is any IID random sample from $(\X, \rho)$. 
	Then $\bE{Y_j} = \int_{\X}\Phi(x)\otimes \Phi(x) d\rho(x) = L_K$ and
	\begin{align*} 
	\bE{||Y_j||^2_{\HS(\H_K)}} &= \int_{\X}||\Phi(x) \otimes \Phi(x)||^2_{\HS(\H_K)} = \int_{\X}||\Phi(x)||^4_{\H_K}d\rho(x)
	\\
	& = \int_{\X}K(x,x)^2d\rho(x)  \leq \kappa^4.
	\end{align*}
	Let $\xi:(\X^m, \rho^m) \mapto \R$ be the random variable defined by
	\begin{align*}
	\xi(\Xbf) = \left\|\frac{1}{m}\sum_{j=1}^mY_j - L_K\right\|_{\HS(\H_K)} &= \left\|\frac{1}{m}\sum_{j=1}^m\Phi(x_j) \otimes \Phi(x_j) - L_K\right\|_{\HS(\H_K)} 
	\\
	&= 
	\left\|\frac{1}{m}\Phi(\Xbf)\Phi(\Xbf)^{*} - L_K\right\|_{\HS(\H_K)}.
	\end{align*}
	Since the $Y_j$'s are independent, identically distributed, with $\bE{Y_j} = L_K$,
	\begin{align*}
	\bE{\xi^2} &= 
	\frac{1}{m^2}\bE\left\|\sum_{j=1}^m(Y_j - L_K)\right\|_{\HS(\H_K)}^2 = \frac{1}{m^2}\sum_{j=1}^m\bE||Y_j-L_K||_{\HS(\H_K)}^2
	\\
	& = \frac{1}{m^2}\sum_{j=1}^m(\bE||Y_j||^2_{\HS(\H_K)} - ||L_K||^2_{\HS(\H_K)})
	\leq \frac{1}{m}\kappa^4.
	\end{align*}
	By the Chebyshev inequality, for any $t > 0$,
	\begin{align*}
	\bP(\xi \geq t) \leq \frac{\bE\xi}{t} \leq \frac{\sqrt{\bE{\xi^2}}}{t}\leq \frac{\kappa^2}{\sqrt{m}t}.
	\end{align*}
	Let $\delta = \frac{\kappa^2}{\sqrt{m}t } \equivalent t = \frac{\kappa^2}{\sqrt{m}\delta}$, then with probability at least $1-\delta$,
	\begin{align*}
	\left\|\frac{1}{m}\Phi(\Xbf)\Phi(\Xbf)^{*} - L_K\right\|_{\HS(\H_K)} = \xi(\Xbf) \leq t = \frac{\kappa^2}{\sqrt{m}\delta}.
	\end{align*}
	\qed
\end{proof}

\begin{proof}
	[\textbf{Proof of  Proposition \ref{proposition:CPhi-concentration-unbounded}}]
	Let $U_1 \subset \X^m$ be defined by
	\begin{align*}
	U_1 &= \left\{\Xbf \in \X^m:||\mu_{\Phi(\Xbf)} - \mu_{\Phi}||_{\H_K} \leq \frac{2\kappa}{\sqrt{m}\delta},
	||\mu_{\Phi(\Xbf)}||  \leq \kappa\left(1+ \frac{2}{\sqrt{m}\delta}\right)\right\}.
	\end{align*}
	Then by Lemma \ref{lemma:mean-norm-HK-unbounded}, $\rho^m(U_1)\geq 1-\frac{\delta}{2}$ and for all $\Xbf \in \X^m$, 
	\begin{align*}
	||\mu_{\Phi(\Xbf)}\otimes \mu_{\Phi(\Xbf)} - \mu_{\Phi}\otimes \mu_{\Phi}||_{\HS} 
	\leq \frac{4\kappa^2}{\sqrt{m}\delta}\left(1+\frac{1}{\sqrt{m}\delta}\right).
	\end{align*}
	Similarly, let $U_2 \subset \X^m$ be defined by
	\begin{align*}
	U_2 &= \left\{\Xbf \in \X^m: \left\|\frac{1}{m}\Phi(\Xbf)\Phi(\Xbf)^{*} - L_K\right\|_{\HS} 
	\leq \frac{2\kappa^2}{\sqrt{m}\delta}\right\}.
	\end{align*}
	By Lemma \ref{lemma:LK-HSnorm-unbounded}, $\rho^m(U_2) \geq - \frac{\delta}{2}$. For $\Xbf \in U_1 \cap U_2$,
	with measure at least $1-\delta$,
	\begin{align*}
	&||C_{\Phi(\Xbf)} - C_{\Phi}||_{\HS} 
	\leq ||\mu_{\Phi(\Xbf)}\otimes \mu_{\Phi(\Xbf)} - \mu_{\Phi}\otimes \mu_{\Phi}||_{\HS}
	+ \left\|\frac{1}{m}\Phi(\Xbf)\Phi(\Xbf)^{*} - L_K\right\|_{\HS}
	\\
	& \leq \frac{2\kappa^2}{\sqrt{m}\delta}\left(3 + \frac{2}{\sqrt{m}\delta}\right).
	\end{align*}
	Since $||C_{\Phi}||_{\HS} \leq ||L_K||_{\HS} + ||\mu_{\Phi} \otimes \mu_{\Phi}||_{\HS} \leq \kappa^2 + \kappa^2 = 2 \kappa^2$,
	we then have
	\begin{align*}
	||C_{\Phi(\Xbf)}||_{\HS} \leq ||C_{\Phi(\Xbf)} - C_{\Phi}||_{\HS} + ||C_{\Phi}||_{\HS} \leq 2\kappa^2 + \frac{2\kappa^2}{\sqrt{m}\delta}\left(3 + \frac{2}{\sqrt{m}\delta}\right)
	\end{align*}
	for all $\Xbf \in U_1 \cap U_2$.
	\qed
\end{proof}

\begin{proof}
	[\textbf{Proof of Theorem \ref{theorem:Sinkhorn-RKHS-concentration-unbounded}}]
	As in the proof of Theorem \ref{theorem:Sinkhorn-RKHS-concentration},
	by Theorem \ref{theorem:convergence-Sinkhorn} and Proposition \ref{proposition:CPhi-concentration-unbounded}, for any $0 < \delta < 1$,
	\begin{align*}
	&\Srm^{\ep}_{d^2}[\Ncal(\mu_{\Phi(\Xbf)}, C_{\Phi(\Xbf)}), \Ncal(\mu_{\Phi}, C_{\Phi})]
	\\
	& \leq ||\mu_{\Phi(\Xbf)} - \mu_{\Phi}||^2_{\H_K}
	+  \frac{3}{\ep}[||C_{\Phi(\Xbf)}||_{\HS(\H_K)} + ||C_{\Phi}||_{\HS(\H_K)}]||C_{\Phi(\Xbf)}- C_{\Phi}||_{\HS(\H_K)}
	\\
	&\leq \frac{4\kappa^2}{m \delta^2} + \frac{3}{\ep}\left[4\kappa^2 + \frac{2\kappa^2}{\sqrt{m}\delta}\left(3 + \frac{2}{\sqrt{m}\delta}\right)\right]\frac{2\kappa^2}{\sqrt{m}\delta}\left(3 + \frac{2}{\sqrt{m}\delta}\right)
	\\
	& = \frac{4\kappa^2}{m \delta^2} + \frac{12 \kappa^4}{\ep \sqrt{m}\delta}\left[2 + \frac{1}{\sqrt{m}\delta}\left(3 + \frac{2}{\sqrt{m}\delta}\right)\right]\left(3 + \frac{2}{\sqrt{m}\delta}\right)
	\end{align*} 
	with probability at least $1-\delta$. 
	\qed
\end{proof}

\begin{proof}
	[\textbf{Proof of Theorem \ref{theorem:Sinkhorn-RKHS-approximation-unbounded}}]
	As in the proof of Theorem \ref{theorem:Sinkhorn-RKHS-approximation},
	by Theorem \ref{theorem:Sinkhorn-entropic-approx}, Lemma \ref{lemma:mean-norm-HK-unbounded}, and Proposition \ref{proposition:CPhi-concentration-unbounded},
	\begin{align*}
	&\left|\Srm^{\ep}_{d^2}[\Ncal(\mu_{\Phi(\Xbf)}, C_{\Phi(\Xbf)}), \Ncal(\mu_{\Phi(\Ybf)}, C_{\Phi(\Ybf)})]
	- \Srm^{\ep}_{d^2}[\Ncal(\mu_{\Phi,\rho_1}, C_{\Phi, \rho_1}), \Ncal(\mu_{\Phi,\rho_2}, C_{\Phi, \rho_2})]\right|
	\\
	& \leq \left[||\mu_{\Phi(\Xbf)}||_{\H_K} + ||\mu_{\Phi(\Ybf)}||_{\H_K} + ||\mu_{\Phi, \rho_1}||_{\H_K} + ||\mu_{\Phi, \rho_2}||_{\H_K} \right]
	\\
	& \quad \times \left[||\mu_{\Phi(\Xbf)} - \mu_{\Phi, \rho_1}||_{\H_K} + ||\mu_{\Phi(\Ybf)} - \mu_{\rho,2}||_{\H_K}\right]
	\\
	& \quad + \frac{3}{\ep}\left[||C_{\Phi(\Xbf)}||_{\HS(\H_K)} + ||C_{\Phi,\rho_1}||_{\HS(\H_K)} + 2||C_{\Phi, \rho_2}||_{\HS(\H_K)}\right]
	||C_{\Phi(\Xbf)} - C_{\Phi, \rho,1}||_{\HS(\H_K)}
	\\
	&\quad + \frac{3}{\ep}\left[2||C_{\Phi(\Xbf)}||_{\HS(\H_K)} + ||C_{\Phi(\Ybf)}||_{\HS(\H_K)} + ||C_{\Phi, \rho_2}||_{\HS(\H_K)}\right]
	||C_{\Phi(\Ybf)} - C_{\Phi, \rho_2}||_{\HS(\H_K)}.
	\end{align*}
	By Proposition \ref{proposition:CPhi-concentration-unbounded}, for any $0 < \delta < 1$, 
	let $U_1 \subset (\X, \rho_1)^m$ be such that $\forall \Xbf \in U_1$,
	\begin{align*}
	||\mu_{\Phi(\Xbf)} - \mu_{\Phi}||_{\H_K} &\leq \frac{4\kappa}{\sqrt{m}\delta}, \; ||\mu_{\Phi(\Xbf)}||  \leq \kappa\left(1+ \frac{4}{\sqrt{m}\delta}\right),
	\\
	||C_{\Phi(\Xbf)} - C_{\Phi}||_{\HS(\H_K)} &\leq 
	\frac{4\kappa^2}{\sqrt{m}\delta}\left(3 + \frac{4}{\sqrt{m}\delta}\right),
	\\
	||C_{\Phi(\Xbf)}||_{\HS(\H_K)}  &\leq 2\kappa^2 + \frac{4\kappa^2}{\sqrt{m}\delta}\left(3 + \frac{4}{\sqrt{m}\delta}\right),
	\end{align*}
	then $\rho^m(U_1) \geq 1 -\frac{\delta}{2}$.
	Similarly, let $U_2 \subset (\X, \rho_2)^n$ be such that $\forall \Ybf \in U_2$, 
	\begin{align*}
	||\mu_{\Phi(\Ybf)} - \mu_{\Phi}||_{\H_K} &\leq \frac{4\kappa}{\sqrt{n}\delta}, \; ||\mu_{\Phi(\Ybf)}||  \leq \kappa\left(1+ \frac{4}{\sqrt{n}\delta}\right),
	\\
	||C_{\Phi(\Ybf)} - C_{\Phi}||_{\HS(\H_K)} &\leq 
	\frac{4\kappa^2}{\sqrt{n}\delta}\left(3 + \frac{4}{\sqrt{n}\delta}\right),
	\\
	||C_{\Phi(\Ybf)}||_{\HS(\H_K)}  &\leq 2\kappa^2 + \frac{4\kappa^2}{\sqrt{n}\delta}\left(3 + \frac{4}{\sqrt{n}\delta}\right),
	\end{align*}
	then $\rho^n(U_2) \geq 1-\frac{\delta}{2}$. Let now $Z = (U_1 \times (\X,\rho_2)^n) \cap ((\X, \rho_1)^m \times U_2)
	\subset (\X,\rho_1)^m \times (\X,\rho_2)^n$, then $(\rho_1^m \times \rho_2^n)(Z) \geq 1 -\delta$.
	Then $\forall (\Xbf,\Ybf) \in Z$, we have
	\begin{align*}
	&\left|\Srm^{\ep}_{d^2}[\Ncal(\mu_{\Phi(\Xbf)}, C_{\Phi(\Xbf)}), \Ncal(\mu_{\Phi(\Ybf)}, C_{\Phi(\Ybf)})]
	- \Srm^{\ep}_{d^2}[\Ncal(\mu_{\Phi,\rho_1}, C_{\Phi, \rho_1}), \Ncal(\mu_{\Phi,\rho_2}, C_{\Phi, \rho_2})]\right|
	\\
	& \leq \left[2\kappa + \kappa\left(1+ \frac{4}{\sqrt{m}\delta}\right) +\kappa\left(1+ \frac{4}{\sqrt{n}\delta}\right)\right]\left[\frac{4\kappa}{\sqrt{m}\delta}+\frac{4\kappa}{\sqrt{n}\delta}\right]
	\\
	& \quad + \frac{3}{\ep}\left[2\kappa^2 + \frac{4\kappa^2}{\sqrt{m}\delta}\left(3 + \frac{4}{\sqrt{m}\delta}\right) + 6\kappa^2\right]\frac{4\kappa^2}{\sqrt{m}\delta}\left(3 + \frac{4}{\sqrt{m}\delta}\right)
	\\
	&\quad + \frac{3}{\ep}\left[4\kappa^2+\frac{8\kappa^2}{\sqrt{m}\delta}\left(3 + \frac{4}{\sqrt{m}\delta}\right) +2\kappa^2 + \frac{4\kappa^2}{\sqrt{n}\delta}\left(3 + \frac{4}{\sqrt{n}\delta}\right) + 2\kappa^2\right]\frac{4\kappa^2}{\sqrt{n}\delta}\left(3 + \frac{4}{\sqrt{n}\delta}\right)
	\\
	& \leq \frac{16\kappa^2}{\delta}\left(1 + \frac{1}{\sqrt{m}\delta} + \frac{1}{\sqrt{n}\delta}\right)
	\left(\frac{1}{\sqrt{m}} + \frac{1}{\sqrt{n}}\right)
	\\
	& \quad + \frac{48\kappa^4}{\ep\sqrt{m}\delta}\left[2 + \frac{1}{\sqrt{m}\delta}\left(3 + \frac{4}{\sqrt{m}\delta}\right)\right]\left(3 + \frac{4}{\sqrt{m}\delta}\right)
	\\
	&\quad + \frac{48\kappa^4}{\ep\sqrt{n}\delta}\left[\frac{2}{\sqrt{m}\delta}\left(3 + \frac{4}{\sqrt{m}\delta}\right) + \frac{1}{\sqrt{n}\delta}\left(3 + \frac{4}{\sqrt{n}\delta}\right)+2\right]\left(3 + \frac{4}{\sqrt{n}\delta}\right).
	\end{align*}
	\qed
\end{proof}

\begin{proof}
	[\textbf{Proof of Proposition \ref{proposition:2Wasserstein-gaussian-upperbound}}]
	(i) Consider first the case $\H= \R^d$, $d \in \Nbb$.
	By the Araki-Lieb-Thirring inequality \cite{Wang1995trace},
	for any pair $A,B \in \Sym^{+}(d)$,
	\begin{align}
	\trace[(A^{1/2}BA^{1/2})^{1/2}] \geq \trace(A^{1/2}B^{1/2}),
	\end{align}
	with equality if and only if $AB = BA$.
	Thus it follows that
	\begin{align*}
	&\W_2(\Ncal(0, C_1), \Ncal(0, C_2)) = \trace(C_1) + \trace(C_2) - 2\trace[(C_1^{1/2}C_2C_1^{1/2})^{1/2}]
	\\
	&\leq \trace(C_1) + \trace(C_2) - 2 \trace(C_1^{1/2}C_2^{1/2}) = ||C_1^{1/2}-C_2^{1/2}||^2_{\HS}
	\\
	& \leq ||C_1 - C_2||_{\tr} \;\;\;\text{by Lemma \ref{lemma:norm-trace-HS-square-root}}
	\\
	& \leq \sqrt{d}||C_1-C_2||_{\HS}.
	\end{align*}
	(ii) Consider now the general separable Hilbert space $\H$.
	Let $\{e_k\}_{k \in \Nbb}$ be any orthonormal basis in $\H$. For $N \in \Nbb$, let $\H_N = \myspan\{e_k\}_{k=1}^N$.
	Let $P_N = \sum_{k=1}^Ne_k \otimes e_k$ be the orthogonal projection operator onto $\H_N$.
	Let $C_{i,N} = P_NC_iP_N$, $i=1,2$, then $C_{i,N}: \H_N \mapto \H_N$ and $C_{i,N}|_{\H_N^{\perp}} = 0$.
	Let $\Cbf_{i,N}$
	be the matrix representation of the operator $C_{i,N}|_{\H_N}$
	on the $N$-dimensional subspace $\H_N$ in the basis $\{e_k\}_{k=1}^N$. Then
	\begin{align*}
	\W_2^2[\Ncal(0, C_{1,N}), \Ncal(0,C_{2,N})] &= \trace(C_{1,N}) + \trace(C_{2,N}) - \trace[(C_{1,N}^{1/2}C_{2,N}C_{1,N}^{1/2})^{1/2}]
	\\
	& = \trace(\Cbf_{1,N}) + \trace(\Cbf_{2,N}) - \trace[(\Cbf_{1,N}^{1/2}\Cbf_{2,N} \Cbf_{1,N}^{1/2})^{1/2}]
	\\
	& \leq ||\Cbf_{1,N}^{1/2} - \Cbf_{2,N}^{1/2}||_{\HS}^2 \leq ||\Cbf_{1,N} - \Cbf_{2,N}||_{\tr}
	\end{align*}
	by part (i).
	It thus follows that
	\begin{align*}
	\W_2^2[\Ncal(0, C_{1,N}), \Ncal(0,C_{2,N})]  \leq ||C_{1,N}^{1/2}-C_{2,N}^{1/2}||^2_{\HS} \leq ||C_{1,N} - C_{2,N}||_{\tr}.
	\end{align*}
	Since $\lim_{N \approach \infty}||C_{i,_N}-C_i||_{\tr} = \lim_{N \approach 0}||C_{i,N}^{1/2} - C_i^{1/2}||_{\HS} = 0$
	and the Wasserstein distance is continuous in trace norm, letting $N \approach \infty$ on both sides gives
	\begin{align*}
	\W_2^2[\Ncal(0, C_1), \Ncal(0,C_2)] \leq ||C_1^{1/2}-C_2^{1/2}||^2_{\HS} \leq ||C_1-C_2||_{\tr}.
	\end{align*}
	\qed
\end{proof}

\begin{proof}
	[\textbf{of Theorem \ref{theorem:samplebound-Wasserstein-Gaussian-finiteRKHS}}]
	Since $\dim(\H_K) < \infty$, by Propositions \ref{proposition:2Wasserstein-gaussian-upperbound}
	and \ref{proposition:CPhi-concentration-unbounded},
	\begin{align*}
	&\W^2_2[\Ncal(\mu_{\Phi(\Xbf)}, C_{\Phi(\Xbf)}), \Ncal(\mu_{\Phi}, C_{\Phi})]
	\\
	& \leq ||\mu_{\Phi(\Xbf)} - \mu_{\Phi}||^2_{\H_K} + \sqrt{\dim(\H_K)}||C_{\Phi(\Xbf)} - C_{\Phi}||_{\HS}
	\\
	& \leq \frac{4\kappa^2}{m\delta^2} + \frac{2\kappa^2 \sqrt{\dim(\H_K)}}{\sqrt{m}\delta}\left(3 + \frac{2}{\sqrt{m}\delta}\right),
	\end{align*} 
	with probability at least $1-\delta$, for any $0 < \delta < 1$.
	\qed
\end{proof}

The proofs for Lemmas \ref{lemma:integral-Gaussian-norm-4-meanzero} and \ref{lemma:integral-Gaussian-norm-3-meanzero} below are included for completeness.

\begin{lemma}
	\label{lemma:integral-Gaussian-norm-4-meanzero}
	For the Gaussian measure $\Ncal(0,C)$ on $\H$,
	\begin{align}
	\int_{\H}||x||^4d\Ncal(0,C)(x) = 2||C||^2_{\HS} + (\trace{C})^2.
	\end{align}
\end{lemma}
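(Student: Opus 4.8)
The plan is to diagonalize $C$ and reduce everything to the classical moment identities for independent real Gaussians. First, invoking the spectral theorem for the trace-class operator $C \in \Sym^{+}(\H)$, I would fix an orthonormal eigenbasis $\{e_k\}_{k \in \Nbb}$ of $\H$ with eigenvalues $\lambda_k \geq 0$, so that $\sum_{k=1}^{\infty}\lambda_k = \trace(C) < \infty$ and $\sum_{k=1}^{\infty}\lambda_k^2 = \|C\|_{\HS}^2 < \infty$. For $x \sim \Ncal(0,C)$ the coordinates $x_k := \la x, e_k\ra$ are then independent, with $x_k \sim \Ncal(0,\lambda_k)$, and $\|x\|^2 = \sum_{k=1}^{\infty}x_k^2$, where the series converges almost surely and in $L^1$ because $\bE\|x\|^2 = \sum_{k}\lambda_k = \trace(C) < \infty$.

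Next I would expand $\|x\|^4 = \big(\sum_{k=1}^{\infty}x_k^2\big)^2 = \sum_{j,k=1}^{\infty} x_j^2 x_k^2$ and integrate term by term. Since every summand is nonnegative, Tonelli's theorem justifies the interchange of summation and expectation, giving
\[
\int_{\H}\|x\|^4\,d\Ncal(0,C)(x) = \sum_{j,k=1}^{\infty}\bE[x_j^2 x_k^2].
\]
If a fully self-contained argument is preferred over quoting the Karhunen--Lo\`eve expansion, the same identity can be reached by truncation: prove the finite-dimensional version for $\|P_N x\|^4$, where $P_N$ is the orthogonal projection onto $\myspan\{e_1,\dots,e_N\}$, and then let $N \to \infty$ using monotone convergence, since $\|P_N x\|^4 \uparrow \|x\|^4$ pointwise.

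The final step is the elementary evaluation of $\bE[x_j^2 x_k^2]$: for $j \neq k$ independence gives $\bE[x_j^2 x_k^2] = \lambda_j\lambda_k$, while for $j = k$ the Gaussian fourth-moment formula gives $\bE[x_k^4] = 3\lambda_k^2$. Summing,
\[
\sum_{j,k}\bE[x_j^2 x_k^2] = \sum_{j \neq k}\lambda_j\lambda_k + 3\sum_{k}\lambda_k^2 = \Big(\sum_k \lambda_k\Big)^2 + 2\sum_k \lambda_k^2 = (\trace C)^2 + 2\|C\|_{\HS}^2,
\]
which is the claim. I do not expect a genuine obstacle here; the only points requiring care are the almost-sure finiteness of $\sum_k x_k^2$ (ensured by $\trace(C) < \infty$) and the legitimacy of exchanging summation with expectation (immediate by nonnegativity, or via the truncation argument above), so the write-up should be short.
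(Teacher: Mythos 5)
Your proposal is correct and follows essentially the same route as the paper's proof: diagonalize $C$ in an orthonormal eigenbasis, expand $\|x\|^4=\big(\sum_k x_k^2\big)^2$, interchange sum and integral by nonnegativity (the paper cites monotone convergence, you cite Tonelli), and evaluate via $\bE[x_k^4]=3\lambda_k^2$ and independence for the cross terms. The only difference is cosmetic bookkeeping of the diagonal versus off-diagonal terms, and both arrive at $2\|C\|_{\HS}^2+(\trace C)^2$ identically.
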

\begin{proof}
	Let $\{\lambda_j\}_{j\in \Nbb}$ be the eigenvalues of $C$, with corresponding orthonormal eigenvectors
	$\{e_j\}_{j \in \Nbb}$. Write $x_j = \la x, e_j\ra$, $j \in \Nbb$, then
	$x = \sum_{j=1}^{\infty}x_je_j$. By Lebesgue Monotone Convergence Theorem,
	\begin{align*}
	&\int_{\H}||x||^4d\Ncal(0,C)(x) =\int_{\H}(\sum_{j=1}^{\infty}x_j^2)^2d\Ncal(0,C)(x)
	\\
	& = \int_{\H}\sum_{j=1}^{\infty}x_j^4d\Ncal(0,C)(x) + 2\int_{\H}\sum_{j \neq k}x_j^2x_k^2d\Ncal(0,C)(x) 
	\\
	& = \sum_{j=1}^{\infty}\int_{\R}x_j^4d\Ncal(0,\lambda_j)(x) + 2\sum_{j\neq k}\int_{\R}x_j^2d\Ncal(0,\lambda_j)(x)\int_{\R}x_k^2d\Ncal(0,\lambda_k)(x)
	\\
	& = 3\sum_{j=1}^{\infty}\lambda_j^2 + 2\sum_{j\neq k}\lambda_j\lambda_k = 2\sum_{j=1}^{\infty}\lambda_j^2 + (\sum_{j=1}^{\infty}\lambda_j)^2 = 2||C||^2_{\HS} + (\trace{C})^2.
	\end{align*}
	Here we have used the formulas $\int_{\R}t^2d\Ncal(0, \lambda)(t) = \lambda$ and 
	$\int_{\R}t^4d\Ncal(0,\lambda)(t) = 3\lambda^2$, see e.g. (\cite{Handbook:1972}, Formula 7.4.4).
	\qed
\end{proof}

\begin{lemma}
	\label{lemma:integral-Gaussian-norm-3-meanzero}
	For the Gaussian measure $\Ncal(0,C)$ on $\H$,
	\begin{align}
	\int_{\H}||x||^2\la x, \mu\ra d\Ncal(0,C)(x) = 0, \;\;\;\forall \mu \in \H.
	\end{align}
\end{lemma}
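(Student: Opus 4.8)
The plan is to exploit the reflection symmetry of the centered Gaussian measure $\Ncal(0,C)$, after first checking that the integrand is absolutely integrable. For the latter, I would note that $\left|\,||x||^2\la x,\mu\ra\,\right| \leq ||\mu||\cdot||x||^3$ and invoke Hölder's inequality together with Lemma \ref{lemma:integral-Gaussian-norm-4-meanzero}:
\begin{align*}
\int_{\H}||x||^3 d\Ncal(0,C)(x) \leq \left(\int_{\H}||x||^4 d\Ncal(0,C)(x)\right)^{3/4} = \left(2||C||^2_{\HS} + (\trace{C})^2\right)^{3/4} < \infty,
\end{align*}
so $x \mapsto ||x||^2\la x,\mu\ra$ lies in $L^1(\H,\Ncal(0,C))$.

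Next I would use that $\Ncal(0,C)$ is invariant under the reflection $x \mapsto -x$, while the integrand is odd: $||-x||^2\la -x,\mu\ra = -||x||^2\la x,\mu\ra$. Performing the change of variables $x \mapsto -x$ then gives
\begin{align*}
\int_{\H}||x||^2\la x,\mu\ra d\Ncal(0,C)(x) = \int_{\H}||-x||^2\la -x,\mu\ra d\Ncal(0,C)(x) = -\int_{\H}||x||^2\la x,\mu\ra d\Ncal(0,C)(x),
\end{align*}
so the integral equals its own negative and must vanish.

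As an alternative consistent with the computation in Lemma \ref{lemma:integral-Gaussian-norm-4-meanzero}, I could diagonalize $C$ with eigenvalues $\{\lambda_j\}_{j \in \Nbb}$ and orthonormal eigenvectors $\{e_j\}_{j \in \Nbb}$, write $x_j = \la x,e_j\ra$ and $\mu_j = \la \mu,e_j\ra$, expand $||x||^2\la x,\mu\ra = \sum_{j,k}x_k^2 x_j \mu_j$, and integrate term by term using independence of the coordinates: for $j = k$ the term is $\mu_j\int_{\R}t^3 d\Ncal(0,\lambda_j)(t) = 0$, and for $j \neq k$ it is $\mu_j\lambda_k\int_{\R}t\,d\Ncal(0,\lambda_j)(t) = 0$, all odd Gaussian moments vanishing. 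In either route the only point needing care is the justification of integrability (and, in the second route, of the term-by-term integration via dominated convergence), which is precisely what the third-moment bound above supplies; beyond this bookkeeping there is no real obstacle.
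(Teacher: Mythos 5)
Your proposal is correct, and your primary route is genuinely different from the paper's. The paper proves this lemma exactly as in your \emph{alternative} route: it diagonalizes $C$, expands $||x||^2\la x,\mu\ra = (\sum_j x_j^2)(\sum_k \mu_k x_k)$ in the eigenbasis, and observes that every resulting term is of the form $x_j^2 x_k$ with $j\neq k$ or $x_j^3$, hence integrates to zero by oddness of Gaussian moments (the paper does not spell out the integrability or the interchange of sum and integral, simply saying ``by symmetry''). Your main argument via the reflection $x\mapsto -x$ is cleaner and more robust: it avoids the eigenbasis expansion entirely, requires no term-by-term justification, and works for any centered symmetric measure with a finite third moment, not just Gaussians. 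The price is that you must verify integrability up front, which you do correctly via $|\,||x||^2\la x,\mu\ra\,|\leq ||\mu||\,||x||^3$ and H\"older against the fourth moment computed in Lemma \ref{lemma:integral-Gaussian-norm-4-meanzero}; this step is in fact also needed (implicitly) to make the paper's own ``by symmetry'' argument rigorous, so your write-up is, if anything, more complete than the original.
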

\begin{proof}
	We proceed as in Lemma \ref{lemma:integral-Gaussian-norm-4-meanzero}. Write
	$\mu = \sum_{j=1}^{\infty}\mu_je_j$,$x = \sum_{j=1}^{\infty}x_je_j$, then
	\begin{align*}
	&\int_{\H}||x||^2\la x,\mu\ra d\Ncal(0,C)(x) = \int_{\H}(\sum_{j=1}^{\infty}x_j^2)(\sum_{k=1}^{\infty}\mu_k x_k)d\Ncal(0,C)(x) = 0
	\end{align*}
	by symmetry, since each term in the integral is of either form $x_j^2x_k$, $j\neq k$, or $x_j^3$ $\forall j,k \in \Nbb$.
	\qed
\end{proof}

\begin{proof}
	[\textbf{Proof of Lemma \ref{lemma:Gaussian-integral-norm-4}}]
	We apply the formula $\int_{\H}||x||^2d\Ncal(0,C)(x) = \trace(C)$, along with Lemmas 
	\ref{lemma:integral-Gaussian-norm-4-meanzero} and \ref{lemma:integral-Gaussian-norm-3-meanzero} to obtain
	\begin{align*}
	&\int_{\H}||x||^4d\Ncal(\mu,C)(x) = \int_{\H}||(x-\mu) + \mu||^4d\Ncal(\mu, C)(x)
	\\
	& = \int_{\H}||x+\mu||^4d\Ncal(0,C)(x)= \int_{\H}[||x||^2 + 2 \la x, \mu\ra + ||\mu||^2]^2d\Ncal(0, C)(x)
	\\
	& = \int_{\H}[||x||^4 + 4\la x, \mu\ra^2 + ||\mu||^4 + 4||x||^2\la x, \mu\ra + 4\la x, \mu\ra ||\mu||^2 + 2 ||x||^2||\mu||^2]d\Ncal(0, C)(x)
	\\
	& = \int_{\H}[||x||^4 +4||x||^2\la x, \mu\ra] d\Ncal(0,C)(x) + 4\la \mu, C\mu\ra + ||\mu||^4  + 2 ||\mu||^2\trace(C)
	\\
	& = 2||C||^2_{\HS}  + (\trace{C})^2 + 4\la \mu, C\mu\ra + ||\mu||^4 + 2 ||\mu||^2\trace(C)
	\\
	& = 2||C||^2_{\HS} + 4\la \mu, C\mu\ra + (\trace{C} +||\mu||^2)^2.
	\end{align*}
	\qed
\end{proof}

\bibliographystyle{plain}
\bibliography{bib_infinite}
\end{document}